\documentclass[11pt]{article}
\usepackage[left=2.1cm, right=2.1cm, top=2cm, bottom=2cm]{geometry}
\usepackage{microtype}

\usepackage[utf8]{inputenc}
\usepackage[T1]{fontenc}
\usepackage{natbib}
\usepackage{hyperref}
\usepackage{url}
\usepackage{booktabs, tabularx, array}
\usepackage{amsfonts}
\usepackage{nicefrac}
\usepackage{microtype}
\usepackage[dvipsnames]{xcolor}

\usepackage{mathrsfs}
\usepackage{mathtools}
\usepackage{amsmath}
\usepackage{amsthm}
\usepackage{amssymb}
\usepackage{subcaption}
\usepackage{upgreek}
\usepackage{enumitem}
\usepackage[capitalize,noabbrev]{cleveref}
\usepackage{comment}
\usepackage{multirow}
\usepackage{graphicx}
\usepackage{bbm}

\theoremstyle{plain}
\newtheorem{thm}{\protect\theoremname}[section]
\newtheorem{lem}[thm]{\protect\lemmaname}
\newtheorem{prop}[thm]{\protect\propositionname}

\theoremstyle{definition}
\newtheorem{definition}[thm]{\protect\definitionname}
\newtheorem{example}[thm]{\protect\examplename}

\theoremstyle{remark}

\providecommand{\corollaryname}{Corollary}
\providecommand{\definitionname}{Definition}
\providecommand{\examplename}{Example}
\providecommand{\lemmaname}{Lemma}
\providecommand{\propositionname}{Proposition}
\providecommand{\remarkname}{Remark}
\providecommand{\theoremname}{Theorem}
\crefname{cor}{Corollary}{Corollaries}
\crefname{definition}{Definition}{Definitions}
\crefname{lem}{Lemma}{Lemmas}
\crefname{prop}{Proposition}{Propositions}
\crefname{rem}{Remark}{Remarks}
\crefname{thm}{Theorem}{Theorems}

\DeclareMathOperator*{\argmin}{arg\,min}

\newcolumntype{Y}{>{\raggedright\arraybackslash}X}

\hypersetup{
    colorlinks=true,
    linkcolor=blue!50!black,
    citecolor=blue!50!black,
    filecolor=magenta,
    urlcolor=cyan
}

\usepackage{authblk}

\title{Expected Batch Optimal Transport Plans \\ and Consequences for Flow Matching}
\author[1]{Samuel Boïté}
\author[1,2]{Julie Delon}
\author[3]{Kimia Nadjahi}
\affil[1]{DMA, ENS Paris}
\affil[2]{Université Paris Cité}
\affil[3]{DI, ENS Paris, CNRS}

\begin{document}

\maketitle

\begin{abstract}
Solving optimal transport (OT) on random minibatches is a common surrogate for exact OT in large-scale
learning. In flow matching (FM), this surrogate is used to obtain OT-like couplings that can straighten
probability paths and reduce numerical integration cost. Yet, the population-level coupling induced by repeated minibatch OT remains
only partially understood. We formalize this coupling as the expected batch OT
plan $\overline{\pi}_{k}$, obtained by averaging empirical OT plans over
independent minibatches of size $k$. We then establish its large-batch consistency and, in the semidiscrete
case relevant to generative modeling, derive rates for both the transport-cost bias and the convergence of
$\overline{\pi}_{k}$ to the OT plan. For FM, this yields a population coupling whose induced velocity
field is regular enough to define a unique flow from the source to the discrete target. We finally quantify how OT batch size
interacts with numerical integration in a tractable two-atom model and in synthetic and image experiments.
\end{abstract}

\section{Introduction}

Optimal transport (OT) provides a geometrically meaningful way to compare probability measures and
has become a useful principle for designing machine learning methods, with successful applications
in generative modeling, among others. In large-scale settings, exactly solving the OT problem is too
costly, both computationally and statistically. This has motivated several tractable OT-based
alternatives~\citep{peyre}. In particular, minibatch OT was introduced by \citet{fatras20a} to avoid
solving a single large OT problem between two empirical measures: instead, one subsamples smaller
batches from the two discrete measures, solves the corresponding OT problems, and averages the
resulting solutions. This strategy was then applied to generative adversarial network training, color
transfer, and domain adaptation~\citep{fatras21a,fatras}. 

More recently, minibatch OT has attracted renewed attention through flow matching (FM), a class of
generative models that learns a velocity field transporting a simple source distribution $\mu$ to a
target data distribution $\nu$~\citep{liu2022flow,lipman,albergo2024stochastic}. The FM objective
depends on a coupling between source and target, that is, a joint distribution with marginals $\mu$
and $\nu$, which determines both the probability path and the associated velocity field. Different
choices of coupling therefore lead to different tradeoffs between training and inference. The
independent coupling $\mu \otimes \nu$ is simple to implement but may lead to costly inference,
whereas the OT plan between $\mu$ and $\nu$ would yield straight trajectories and allow efficient
inference, but is intractable. Minibatch OT offers a practical compromise: it can produce straighter
flows than the independent coupling, while being more affordable than exact OT~\citep{pooladian,tong}. 

Many theoretical questions on minibatch OT remain open. In particular, the bias of minibatch OT relative to exact OT has not been precisely quantified,
either for the transport cost or plan. Then, regarding its use in FM, it is not clear
whether the velocity field induced by minibatch OT defines a well-posed flow, since a coupling is
not necessarily rectifiable~\citep{hertrich}. Finally, recent benchmarks suggest that the benefits
of minibatch OT-FM may only appear at much larger batch sizes~\citep{zhang}, while
semidiscrete OT can also improve FM performance~\citep{mousavi}. Since these
strategies make training more expensive, the practical tradeoff remains unclear: should one favor
larger transport batches during training, or a larger inference budget at sampling time?

\paragraph{Contributions.}
\begin{enumerate}[leftmargin=*]
\item In \cref{sec:expected-batch-ot-plan}, we study the expected batch optimal transport plan
      $\overline{\pi}_k$, obtained by averaging empirical OT plans over random batches of size $k$,
      complementing the results of~\citet{pooladian}.
      This plan converges to $\pi^\star$ when the OT plan is unique. 
      In the semidiscrete setting relevant to FM, we further quantify this convergence: the cost bias decays as
      $O(k^{-1/2})$ under finite-moment assumptions, and improves to $O(k^{-1})$ under compact
      support. In the Gaussian-to-discrete case, this yields the plan-convergence rate
      $W_2^2(\overline{\pi}_k,\pi^\star)=O(k^{-1/4})$.
\item In \cref{sec:flow-matching-expected-batch-ot}, we show that when used in FM,
      the expected batch OT plan induces a well-posed flow: we prove that the associated velocity
      field is locally Lipschitz, generates a unique flow on $[0,1)$, and admits a terminal map
      transporting the Gaussian source $\mu$ to the discrete target $\nu$. To our knowledge, this
      is the first rectifiability result for a minibatch OT coupling in FM, and it
      provides a rigorous interpretation of this strategy beyond its use as a training heuristic.
\item We then quantify the tradeoff between OT batch size $k$ and inference budget (number of function
      evaluations, NFE) through theoretical results and a benchmark covering a wider range of batch
      sizes and NFE than in related work. In a tractable model where $\nu$ is supported on two atoms,
      we prove that increasing the NFE reduces the integration error much faster than increasing~$k$~(\cref{sec:numerical_integration}). This reveals an asymmetry between these two
      computational resources, which we also observe in synthetic Gaussian-to-discrete experiments
      and image benchmarks: increasing~$k$ improves performance
      mainly in the low-NFE regime, while the benefit vanishes at larger NFE. Altogether, these
      results clarify when larger OT batches can effectively trade training-time computation for
      inference-time savings. 
\end{enumerate}

\section{Background on optimal transport and flow matching}

We recall the background on OT and FM used in our work. We refer to \cite{peyre} for a comprehensive introduction
to OT and to \cite{pierret} for a recent survey on FM.

\subsection{Reminders on optimal transport}

OT is a mathematical theory that studies how to transport one probability distribution to another
optimally, according to a given cost function. Let
$\mu,\,\nu\in\mathcal{P}_{2}(\mathbb{R}^{d})$ be probability measures with finite second moments.
Denote by $\Pi(\mu,\,\nu)$ the set of couplings between $\mu$ and
$\nu$, that is, the set of $\pi\in\mathcal{P}(\mathbb{R}^{d}\times\mathbb{R}^{d})$ whose marginals are $\mu$ and $\nu$. For the quadratic cost, the OT problem is
\begin{equation}
W_{2}^{2}(\mu,\,\nu)
=\inf_{\pi\in\Pi(\mu,\,\nu)}
\int_{\mathbb{R}^{d}\times\mathbb{R}^{d}}
\lVert x-y\rVert^{2}\,\mathrm{d}\pi(x,\,y).
\label{eq:quadratic-ot-problem}
\end{equation}
Any minimizer $\pi^{\star}$ of \eqref{eq:quadratic-ot-problem} is called an OT plan. When $\mu$ is absolutely continuous,
the OT plan is unique and takes the form $\pi^{\star}=(\mathrm{id},\,T^{\star})_\sharp\mu$, where $\sharp$
denotes the push-forward operator ($T_\sharp\mu$ is the distribution of $T(X)$ where $X\sim\mu$), and
where $T^{\star}:\mathbb{R}^{d}\to\mathbb{R}^{d}$ is called the OT map.
When $\mu = \frac{1}{k}\sum_{i=1}^{k}\updelta_{x_{i}}$ and $\nu = \frac{1}{k}\sum_{i=1}^{k}\updelta_{y_{i}}$ are
discrete measures supported on $k$ points of $\mathbb{R}^{d}$, \eqref{eq:quadratic-ot-problem} reduces to finding an
optimal permutation $\sigma^{\star}\in\mathfrak{S}_{k}$ minimizing the total cost
\[
W_{2}^{2}(\mu,\,\nu)=\min_{\sigma\in\mathfrak{S}_{k}}\frac{1}{k}\sum_{i=1}^{k}\lVert x_{i}-y_{\sigma(i)}\rVert^{2}.
\]

\subsection{Reminders on flow matching}

As in OT, the goal of FM is to transport one probability distribution to another. Let $\pi$
be a coupling between $\mu$ and $\nu$ and draw $(X_{0},\,X_{1})\sim\pi$. For $t\in[0,\,1]$, consider the
linear interpolation $X_{t}=(1-t)X_{0}+tX_{1}$ and denote its law by $\rho_{t}^{\pi}$. Then
$(\rho_{t}^{\pi})_{t\in[0,\,1]}$ is a probability path connecting $\rho^{\pi}_{0}=\mu$ and $\rho^{\pi}_{1}=\nu$.
We define the associated target velocity field by
\(
u_{t}^{\pi}(x)=\mathbb{E}[X_{1}-X_{0}\mid X_{t}=x].
\)
Under regularity assumptions, the pair $(\rho^{\pi},\,u^{\pi})$ satisfies the continuity
equation, and samples from $\rho_{t}^{\pi}$ can be generated by drawing $x(0)\sim\mu$ and
integrating the ODE 
\(
\dot{x}(s)=u_{s}^{\pi}(x(s)),
\)
on $s\in[0,\,t]$.

In practice, $u_{t}^{\pi}$ is generally not available in closed form, so one usually trains a
time-dependent model $v_{t}^{\theta}$ by regressing onto the conditional target $X_{1}-X_{0}$ under
$(X_{0},\,X_{1})\sim\pi$ and $t\sim\operatorname{Unif}[0,\,1]$. Once the velocity has been trained, the
ODE is integrated with a discrete scheme such as Euler or Runge--Kutta. The number
of velocity-function evaluations used by the scheme is called the NFE.

The choice of the coupling $\pi$ is central in FM, as it determines the entire probability path
$(\rho_{t}^{\pi})_{t\in[0,\,1]}$ and corresponding target velocity field $u_{t}^{\pi}$.
Choosing the OT plan $\pi^{\star}$, though generally intractable, would in principle yield straight paths and allow
transport in a single integration step. Indeed, when $\mu$ is absolutely continuous and $T^{\star}$
is the OT map between $\mu$ and $\nu$, the initial velocity is given by
$u_{0}^{\pi^{\star}}=T^{\star}-\mathrm{id}$, so the OT map is recovered exactly by one Euler step
from $t=0$ to $t=1$.

\section{Expected batch optimal transport plan: consistency and rates}
\label{sec:expected-batch-ot-plan}

We formalize the expected batch OT plan and study its convergence to the OT plan,
assuming it is unique.
Given $\mu,\,\nu\in\mathcal{P}_{2}(\mathbb{R}^{d})$, minibatch OT consists in solving
OT on random batches of size $k$ drawn from $\mu$ and $\nu$. This is especially useful for
applications where solving the exact OT problem is intractable,
and where we can use the resulting matched pairs during training.
For $k\in\mathbb{N}^{*}$, let $\mathbf{x}_{k}=(x_{1},\ldots,x_{k})$ and $\mathbf{y}_{k}=(y_{1},\ldots,y_{k})$ be the source and target batch respectively, with $x_i,\,y_i \in \mathbb{R}^d$. We denote the induced
empirical measures by $\widehat{\mu}_{\mathbf{x}_{k}}=\frac{1}{k}\sum_{i=1}^{k}\updelta_{x_{i}}$
and $\widehat{\nu}_{\mathbf{y}_{k}}=\frac{1}{k}\sum_{j=1}^{k}\updelta_{y_{j}}$.
Given random batches $(\mathbf{X}_{k},\,\mathbf{Y}_{k})\sim\mu^{\otimes k}\otimes\nu^{\otimes k}$, we write $\widehat{\mu}_{k}=\widehat{\mu}_{\mathbf{X}_{k}}$ and $\widehat{\nu}_{k}=\widehat{\nu}_{\mathbf{Y}_{k}}$.

\subsection{Definition of the expected batch optimal transport plan}\label{sec:expected-batch-ot-definition}

Minibatch OT induces a coupling $\overline{\pi}_{k}$ between $\mu$ and $\nu$ by averaging random empirical OT solutions.
To define this coupling unambiguously, we assume that for each $k\in\mathbb{N}^{*}$, we have access to a
measurable solver whose output $(\mathbf{x}_{k},\,\mathbf{y}_{k})\mapsto\widehat{\pi}_{\mathbf{x}_{k},\,\mathbf{y}_{k}}$
is supported on an optimal permutation between $\widehat{\mu}_{\mathbf{x}_{k}}$ and $\widehat{\nu}_{\mathbf{y}_{k}}$.
For random batches, we write \(\widehat{\pi}_{k}=\widehat{\pi}_{\mathbf{X}_{k},\,\mathbf{Y}_{k}}\).
The expected batch OT plan is then obtained by averaging these solver outputs over random batches.

\begin{definition}[Expected batch OT plan]
\label{def:expected-batch-ot-plan}
Fix the measurable solver introduced above.
For $k\in\mathbb{N}^{*}$, the expected batch optimal transport plan $\overline{\pi}_{k}$
is the unique probability measure on $\mathbb{R}^{d}\times\mathbb{R}^{d}$ such that,
for every bounded Borel function $g:\mathbb{R}^{d}\times\mathbb{R}^{d}\to\mathbb{R}$,
\begin{equation*}
\begin{aligned}
\int_{\mathbb{R}^{d}\times\mathbb{R}^{d}}g(x,\,y)\,
\mathrm{d}\overline{\pi}_{k}(x,\,y)
&=\int_{(\mathbb{R}^{d})^{k}\times(\mathbb{R}^{d})^{k}}
\left(\int_{\mathbb{R}^{d}\times\mathbb{R}^{d}}g\,
\mathrm{d}\widehat{\pi}_{\mathbf{x}_{k},\,\mathbf{y}_{k}}\right)
\,\mathrm{d}(\mu^{\otimes k}\otimes\nu^{\otimes k})
(\mathbf{x}_{k},\,\mathbf{y}_{k}) \,.
\end{aligned}
\end{equation*}
\end{definition}

Hence, $\overline{\pi}_k$ is the law of the pair obtained as follows: first draw
$(\mathbf{X}_{k},\,\mathbf{Y}_{k})\sim\mu^{\otimes k}\otimes\nu^{\otimes k}$, then solve OT between $\widehat{\mu}_{k}$ and $\widehat{\nu}_{k}$, and finally select one of the $k$ matched pairs uniformly at random. 
This construction specializes, in the fully discrete case, to the
averaged minibatch transport matrix of \citet{fatras}. It also matches the
marginalized minibatch coupling $q^{(k)}$ used in batch OT-FM by
\citet{pooladian}.

We show in~\cref{app:expected-batch-ot-construction} that this construction indeed yields a well-defined probability measure, that $\overline{\pi}_{k}\in\Pi(\mu,\,\nu)$,
and $\overline{\pi}_{1}=\mu\otimes\nu$. When one marginal is absolutely continuous,
solver choices agree almost surely under random batches, and hence induce the same $\overline{\pi}_{k}$.

\subsection{Asymptotic consistency and rates for the cost}

We first study the expected batch OT plan through the transport cost it induces.
Since $\overline{\pi}_k \in \Pi(\mu,\nu)$, this cost is greater than the optimal value $W_2^2(\mu,\nu)$.
To quantify the excess cost, we first show that the cost induced by $\overline{\pi}_k$ equals the expected value of $W_2^2(\widehat{\mu}_k,\widehat{\nu}_k)$ over random batches.

\begin{prop}\label{prop:expected-batch-ot-cost-identity}
  For any $k\in\mathbb{N}^{*}$, $\int_{\mathbb{R}^{d}\times\mathbb{R}^{d}}\lVert x-y\rVert^{2}\,\mathrm{d}\overline{\pi}_{k}(x,\,y)
= \mathbb{E}[W_{2}^{2}(\widehat{\mu}_{k},\,\widehat{\nu}_{k})]$.
\end{prop}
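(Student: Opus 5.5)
The identity should follow from applying \cref{def:expected-batch-ot-plan} with $g(x,y)=\lVert x-y\rVert^{2}$ and then using that, for every fixed pair of batches, the solver output $\widehat{\pi}_{\mathbf{x}_k,\mathbf{y}_k}$ is an optimal plan between $\widehat{\mu}_{\mathbf{x}_k}$ and $\widehat{\nu}_{\mathbf{y}_k}$. The one technical point is that the definition is stated for bounded Borel $g$, whereas the quadratic cost is unbounded. I will therefore use truncation and monotone convergence.

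\textbf{Step 1: extend the defining identity to nonnegative functions.} Set $g_n(x,y)=\min(\lVert x-y\rVert^{2},n)$. Each $g_n$ is bounded and Borel, so \cref{def:expected-batch-ot-plan} gives
\[
\int g_n\,\mathrm{d}\overline{\pi}_{k}=\int\Bigl(\int g_n\,\mathrm{d}\widehat{\pi}_{\mathbf{x}_k,\mathbf{y}_k}\Bigr)\mathrm{d}(\mu^{\otimes k}\otimes\nu^{\otimes k})(\mathbf{x}_k,\mathbf{y}_k).
\]
As $n\to\infty$, $g_n$ increases to $\lVert x-y\rVert^{2}$. The monotone convergence theorem therefore gives convergence on the left. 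It also gives convergence of the inner integral for each fixed $(\mathbf{x}_k,\mathbf{y}_k)$. Since these inner integrals are nondecreasing in $n$, a second application of monotone convergence handles the outer integral.

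The limiting inner map $(\mathbf{x}_k,\mathbf{y}_k)\mapsto\int\lVert x-y\rVert^{2}\,\mathrm{d}\widehat{\pi}_{\mathbf{x}_k,\mathbf{y}_k}$ is measurable. This holds because it is a pointwise limit of the inner maps for $g_n$, and those are measurable by the measurability of the solver (this is implicitly what makes the definition meaningful). We obtain
\[
\int\lVert x-y\rVert^{2}\,\mathrm{d}\overline{\pi}_{k}=\mathbb{E}\Bigl[\int\lVert x-y\rVert^{2}\,\mathrm{d}\widehat{\pi}_{k}\Bigr],
\]
with both sides possibly $+\infty$ a priori.

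\textbf{Step 2: identify the inner integral.} By assumption, $\widehat{\pi}_{\mathbf{x}_k,\mathbf{y}_k}$ is supported on an optimal permutation $\sigma^\star$. Hence
\[
\int\lVert x-y\rVert^{2}\,\mathrm{d}\widehat{\pi}_{\mathbf{x}_k,\mathbf{y}_k}=\frac1k\sum_i\lVert x_i-y_{\sigma^\star(i)}\rVert^{2}=W_2^2(\widehat{\mu}_{\mathbf{x}_k},\widehat{\nu}_{\mathbf{y}_k}),
\]
using the permutation form of the discrete problem recalled in the background. One subtlety concerns repeated points, where "supported on an optimal permutation" could mean a mixture of permutations. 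In that case, if the solver outputs a general optimal plan between the empirical measures, its cost still equals $W_2^2$ by definition of optimality. The identity therefore holds pointwise regardless, and substituting it into Step 1 yields the claim.

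\textbf{Finiteness and the main obstacle.} The equality holds in $[0,\infty]$. Both sides are in fact finite, since $\mathbb{E}[W_2^2(\widehat{\mu}_k,\widehat{\nu}_k)]\le\mathbb{E}\bigl[\frac1k\sum_i\lVert X_i-Y_i\rVert^2\bigr]\le 2(M_2(\mu)+M_2(\nu))<\infty$, where $M_2$ denotes the second moment. The only nontrivial point is the passage from bounded test functions to the unbounded quadratic cost, together with the measurability of the integrand, and both are handled by the monotone truncation argument above.
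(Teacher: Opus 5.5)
Your proposal is correct and follows essentially the same route as the paper: truncate the cost by $g_n=\min(\lVert x-y\rVert^{2},n)$, apply the defining identity, use monotone convergence on the left and (after a pointwise limit of the inner integrals) on the outer integral, and identify the inner integral with $W_2^2(\widehat{\mu}_{\mathbf{x}_k},\widehat{\nu}_{\mathbf{y}_k})$ via the optimal permutation. Your added remarks on measurability of the limiting integrand and on finiteness via $2(M_2(\mu)+M_2(\nu))$ are correct refinements that the paper leaves implicit.
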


The identity above connects our analysis with classical questions in statistical OT, such as sample
complexity of empirical Wasserstein costs. In contrast
with most sample-complexity results, which control $\mathbb{E}[|W_{2}^{2}(\widehat{\mu}_{k},\,\widehat{\nu}_{k})-W_{2}^{2}(\mu,\,\nu)|]$, we study directly the signed bias $\mathbb{E}[W_{2}^{2}(\widehat{\mu}_{k},\,\widehat{\nu}_{k})]-W_{2}^{2}(\mu,\,\nu)$. This distinction yields a faster $O(k^{-1})$ rate in the nondegenerate, semidiscrete setting.

\begin{prop}[Asymptotic consistency and rates for expected batch OT cost]
\label{prop:expected-batch-ot-cost-convergence}
Let $\mu, \nu \in \mathcal{P}_2(\mathbb{R}^d)$.
\begin{enumerate}[leftmargin=*]
\item The sequence $(\mathbb{E}[W_{2}^{2}(\widehat{\mu}_{k},\,\widehat{\nu}_{k})])_{k\in\mathbb{N}^*}$
    is nonincreasing.\\ If $\mu,\,\nu\in\mathcal{P}_{q}(\mathbb{R}^{d})$ for some $q > 2$, then $\lim_{k\to\infty} \mathbb{E}[W_{2}^{2}(\widehat{\mu}_{k},\widehat{\nu}_{k})] = W_{2}^{2}(\mu,\nu)$.
    \item Assume $\mu$ has compact support, $\nu$ is uniform with finite support, and the dual OT problem admits a unique, nondegenerate solution. Then, $\mathbb{E}[W_{2}^{2}(\widehat{\mu}_{k},\,\widehat{\nu}_{k})]-W_{2}^{2}(\mu,\,\nu)=O(k^{-1})$.
    \item If $\mu\in\mathcal{P}_{q}(\mathbb{R}^{d})$, $q > 4$ and $\nu$ uniform with finite support, $\mathbb{E}[W_{2}^{2}(\widehat{\mu}_{k},\widehat{\nu}_{k})]-W_{2}^{2}(\mu,\nu)=O(k^{-1/2})$. 
\end{enumerate}
\end{prop}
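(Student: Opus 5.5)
The three items are handled separately: a subsampling/averaging argument for monotonicity, a triangle-inequality argument for consistency, and semidiscrete duality for both rates.

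\textbf{Monotonicity (item 1).} Fix $k$ and i.i.d.\ samples $x_1,\dots,x_{k+1}\sim\mu$, $y_1,\dots,y_{k+1}\sim\nu$. For each $i$, let $\pi^{(i)}$ be an optimal plan between the leave-one-out empirical measures built on $(x_j)_{j\neq i}$ and $(y_j)_{j\neq i}$ (the same index removed on both sides). Set $\tilde\pi=\frac{1}{k+1}\sum_{i=1}^{k+1}\pi^{(i)}$. Each $x_j$ belongs to exactly $k$ of the $k+1$ subsamples, with mass $1/k$ in each, so it receives total mass $\frac{1}{k+1}$ under $\tilde\pi$; the same holds for the $y_j$. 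Hence $\tilde\pi$ couples $\widehat\mu_{k+1}$ and $\widehat\nu_{k+1}$, which gives
\[
W_2^2(\widehat\mu_{k+1},\widehat\nu_{k+1})\le \frac{1}{k+1}\sum_{i=1}^{k+1}W_2^2\bigl(\widehat\mu^{(-i)},\widehat\nu^{(-i)}\bigr).
\]
Each pair $(\widehat\mu^{(-i)},\widehat\nu^{(-i)})$ has the law of $(\widehat\mu_k,\widehat\nu_k)$. Taking expectations therefore yields $\mathbb{E}W_2^2(\widehat\mu_{k+1},\widehat\nu_{k+1})\le\mathbb{E}W_2^2(\widehat\mu_k,\widehat\nu_k)$.

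\textbf{Consistency (item 1).} By \cref{prop:expected-batch-ot-cost-identity} and $\overline\pi_k\in\Pi(\mu,\nu)$, each term is at least $W_2^2(\mu,\nu)$. It therefore suffices to show $\limsup_k \mathbb{E}W_2^2(\widehat\mu_k,\widehat\nu_k)\le W_2^2(\mu,\nu)$. The triangle inequality gives $W_2(\widehat\mu_k,\widehat\nu_k)\le W_2(\widehat\mu_k,\mu)+W_2(\mu,\nu)+W_2(\nu,\widehat\nu_k)$; square this and apply Cauchy--Schwarz to the cross terms. It remains to show $\mathbb{E}W_2^2(\widehat\mu_k,\mu)\to0$, and likewise for $\nu$. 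Almost surely $W_2(\widehat\mu_k,\mu)\to0$, by Varadarajan's theorem together with the strong law of large numbers applied to second moments. Moreover $W_2^2(\widehat\mu_k,\mu)\le 2\int|x|^2d\widehat\mu_k+2\int|x|^2d\mu$. The assumption $\mu\in\mathcal P_q$ with $q>2$ makes $\frac1k\sum_i|X_i|^2$ uniformly integrable (bounded in $L^{q/2}$), so Vitali's theorem concludes.

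\textbf{Rates (items 2 and 3) via semidiscrete duality.} Write $\nu=\frac1N\sum_{j}\updelta_{y_j}$, and for potentials $g\in\mathbb R^N$ let $g^c(x)=\min_j(\lVert x-y_j\rVert^2-g_j)$. Define
\[
\Phi(g)=\int g^c\,d\mu+\tfrac1N\sum_j g_j,\qquad \Phi_k(g)=\int g^c\,d\widehat\mu_k+\int g\,d\widehat\nu_k,
\]
so that $\mathbb E\Phi_k=\Phi$ and $W_2^2(\mu,\nu)=\max\Phi$. On the event $A_k$ that every atom appears in the batch $\mathbf Y_k$, semidiscrete duality gives $W_2^2(\widehat\mu_k,\widehat\nu_k)=\max_g\Phi_k(g)$. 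The complement satisfies $\mathbb P(A_k^c)\le Ne^{-k/N}$. On $A_k^c$ the cost is bounded by a moment bound (item 3) or by $\mathrm{diam}^2$ (item 2), so this event contributes only an exponentially small term.

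For item 3, I would first normalize potentials ($\sum g_j=0$) and show that, on a high-probability event where the weights of $\widehat\nu_k$ are all at least $1/(2N)$, the maximizers lie in a fixed bounded box $B$. Then
\[
\mathbb E\max\Phi_k-\max\Phi\le\mathbb E\sup_{g\in B}|\Phi_k-\Phi|.
\]
The map $g\mapsto g^c(x)$ is $1$-Lipschitz in the sup norm, and $|g^c(x)|\lesssim 1+|x|^2$. A finite-dimensional chaining or net argument then bounds this supremum by $O(k^{-1/2})$. Here $q>4$ ensures that $|X|^2$ has finite variance, with some room left for uniform integrability.

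For item 2, which I expect to be the main obstacle, I would use an M-estimation expansion. Under compact support and a unique, nondegenerate dual solution, $\Phi$ is $C^2$ near $g^\star$ with Hessian $H$ negative definite on $\{\sum g_j=0\}$. Also $\nabla\Phi_k(g^\star)$ is an average of i.i.d.\ bounded centered vectors, since $\nabla\Phi(g^\star)=0$. Heuristically,
\[
\max\Phi_k-\max\Phi\approx(\Phi_k-\Phi)(g^\star)-\tfrac12\nabla\Phi_k(g^\star)^\top H^{-1}\nabla\Phi_k(g^\star),
\]
whose first term has mean zero and whose second has mean $O(1/k)$. The hard part is making this rigorous, because $\Phi_k$ is only piecewise linear in $g$. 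My plan is to write $\max\Phi_k=\Phi_k(\hat g)$ and first establish consistency $\hat g\to g^\star$ with exponential tails, using concavity together with the uniform deviation bound from item 3. I would then bound $\Phi_k(\hat g)-\Phi_k(g^\star)\le\langle\nabla\Phi_k(g^\star),\hat g-g^\star\rangle+R_k$ and use local strong concavity of $\Phi$ to obtain $|\hat g-g^\star|\lesssim|\nabla\Phi_k(g^\star)|+\sqrt{\text{remainder}}$. The remainder $R_k$ is controlled through a local empirical-process bound on $(\Phi_k-\Phi)(g)-(\Phi_k-\Phi)(g^\star)$ of order $|g-g^\star|/\sqrt k$. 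This uses that the Laguerre-cell indicator differences have $\mu$-measure $O(|g-g^\star|)$ under nondegeneracy. Combining the pieces gives an $O(1/k)$ excess in expectation.
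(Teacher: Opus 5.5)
Your monotonicity argument is the paper's leave-one-out argument. For consistency you use the same triangle-inequality reduction. Where the paper cites Fournier--Guillin, you instead use almost-sure convergence plus Vitali. This is more elementary and works for all $\mu,\nu\in\mathcal{P}_2(\mathbb{R}^d)$, because averages of i.i.d.\ integrable variables are already uniformly integrable; on your route $q>2$ is never used.

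For item 3 the paper does not argue directly. It checks the hypotheses of the unbounded-support transfer theorem of Staudt--Hundrieser, with the bounded semidiscrete $k^{-1/2}$ bound of Hundrieser et al.\ as input. Your dual/chaining route is a reasonable self-contained alternative, but one step is false as written. For unbounded $\mu$, the empirical maximizers do \emph{not} lie in a fixed box on the event that all weights of $\widehat\nu_k$ are at least $1/(2N)$. Redoing the argument of \cref{lem:empirical-dual-compact-maximizers} without compact support only gives $\max_j g_j\le 2R_k$ for normalized potentials, where $R_k=\frac1k\sum_i\max_j\lVert X_i-y_j\rVert^2$ is random. Already for $\mu=\mathcal{N}(0,1)$ and two atoms, the empirical Laguerre threshold is an empirical quantile of the $X_i$, which is not bounded uniformly over samples. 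The repair is as follows.
\begin{itemize}
\item Also intersect with $\{R_k\le 2\,\mathbb{E}R_1\}$. Its complement has probability $O(1/k)$ by Chebyshev when $\mu\in\mathcal{P}_4$.
\item On that complement, use $W_2^2(\widehat\mu_k,\widehat\nu_k)\le R_k$ and Cauchy--Schwarz: $\mathbb{E}[R_k\mathbbm{1}_{\{R_k>2\mathbb{E}R_1\}}]\le\lVert R_k\rVert_{L^2}\,\mathbb{P}(R_k>2\,\mathbb{E}R_1)^{1/2}=O(k^{-1/2})$.
\end{itemize}
With this fix your argument goes through, in fact already for $q=4$.

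For item 2 your ingredients match the paper's: compact empirical maximizers on the good event, curvature from nondegeneracy, and a local empirical-process bound of order $a/\sqrt k$ on $\{\lVert g-g^\star\rVert_\infty\le a\}$. Three remarks.
\begin{itemize}
\item By concavity of $\Phi_k$ you may take $R_k=0$ in your first-order inequality. Since $\mathbb{E}\lVert\nabla\Phi_k(g^\star)\rVert^2=O(1/k)$, Cauchy--Schwarz then reduces the bias to proving $\mathbb{E}\lVert\hat g-g^\star\rVert^2=O(1/k)$.
\item The $O(a/\sqrt k)$ local bound needs only the sup-norm $1$-Lipschitz property of $g\mapsto g^c(x)$: envelope $2a$ and a covering of a ball in $\mathbb{R}^N$, as in \cref{lem:local-empirical-process}. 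No Laguerre-cell measure estimate is needed. That estimate would instead give a sharper $a^{3/2}/\sqrt k$-type bound on the linearization remainder. This is what you would need to turn your heuristic expansion into an exact first-order asymptotic for the bias, which is more than the paper proves.
\item The step hidden in ``combining the pieces'' is the real work. The local bound holds at deterministic radii, but you must apply it at the random radius $\lVert\hat g-g^\star\rVert$ and control the result in expectation. That requires a uniform-in-radius argument: peeling over dyadic shells $a_\ell=A2^\ell/\sqrt k$ with McDiarmid tails.
\end{itemize}
The paper does exactly this peeling, but it bypasses both $\hat g$ and your separate consistency step. Using uniqueness of $\lambda^\star$, it upgrades local curvature to a global drift $H(\lambda^\star)-H(\lambda)\ge\kappa\lVert\lambda-\lambda^\star\rVert_\infty^2$ on the compact set $K$ (\cref{lem:quadratic-drift-on-K}). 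It then peels directly on $Z_k=\sup_{\lambda\in K}\{H_k(\lambda)-H_k(\lambda^\star)\}$.
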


Thanks to the semidiscrete structure, our rates avoid the exponential dependence on $d$ that appears in the fully discrete case~\citep{fournier2015}. The $O(k^{-1/2})$
bound follows from existing semidiscrete estimates on $\mathbb{E}[|W_2^2(\widehat\mu_k,\widehat\nu_k)-W_2^2(\mu,\nu)|]$~\citep{hundrieser}, whereas the sharper $O(k^{-1})$ rate exploits the signed expectation: its proof combines local quadratic curvature of the semidiscrete dual
objective near its maximizer~\citep{delbarrio} with empirical process bounds on
the expected supremum. This improved rate is observed numerically for $\mu = \mathcal{N}(0,\,I_d)$ (\cref{subsec:num_rates}), suggesting that compact support may be relaxed.

\subsection{Convergence and rates for the plan}

We now establish the convergence of $(\overline{\pi}_{k})_{k\in\mathbb{N}^{*}}$ in the space $(\mathcal{P}(\mathbb{R}^{d}\times\mathbb{R}^{d}),\, W_{2})$.
From now on, we assume that $\mu$ is absolutely continuous with respect to Lebesgue measure, so that the OT plan between $\mu$ and $\nu$ is unique and denoted by $\pi^{\star}$~\citep[Theorem~6.2.4]{ambrosio}.

Before stating our main result, we provide an illustration in \cref{fig:expected-batch-ot-plan-1d}, which suggests
that $\overline{\pi}_{k}$ approaches $\pi^{\star}$ as $k$ increases. We also
compare $\overline{\pi}_{k}$ with an entropically regularized plan. Qualitatively, the minibatch strategy induces
a form of regularization distinct from entropic regularization. A precise characterization of this
effect is outside the scope of the article and left for future work.

\begin{figure}[t]
\centering
\begin{subfigure}{0.235\textwidth}
\centering
\includegraphics[width=\linewidth]{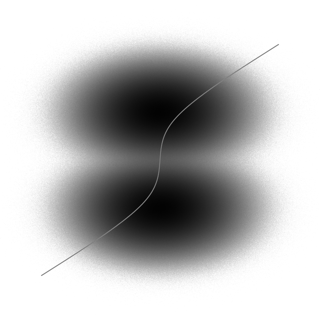}
\caption{$k=1$ (independent)}
\end{subfigure}\hfill
\begin{subfigure}{0.235\textwidth}
\centering
\includegraphics[width=\linewidth]{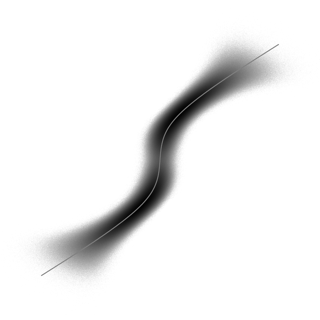}
\caption{$k=100$}
\end{subfigure}\hfill
\begin{subfigure}{0.235\textwidth} 
\centering
\includegraphics[width=\linewidth]{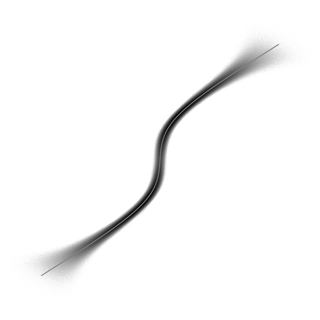}
\caption{$k=1\,000$}
\end{subfigure}\hfill
\begin{subfigure}{0.235\textwidth}
\centering
\includegraphics[width=\linewidth]{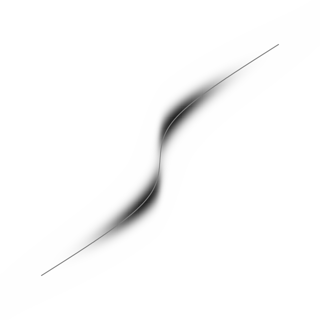}
\caption{$\varepsilon=0.15$ (entropic)}
\end{subfigure}
\caption{Transport plans between $\mathcal{N}(0,1)$ and a mixture of two symmetric Gaussians. As $k$ grows, $\overline{\pi}_{k}$ approaches the OT plan $\pi^{\star}$ (shown as a line) (a--c). (d) shows the entropically regularized plan.}
\label{fig:expected-batch-ot-plan-1d}
\end{figure}

The next proposition proves the convergence to $\pi^\star$ as well as quantitative bounds on
$W_{2}^{2}(\overline{\pi}_{k},\,\pi^{\star})$ in the semidiscrete setting, which is relevant for flow matching. The detailed proof is
in~\cref{app:expected-batch-ot-plan-convergence}.

\begin{prop}[Convergence and rates for the expected batch OT plan]
\label{prop:expected-batch-ot-plan-convergence}
Let $\mu,\,\nu\in \mathcal{P}_{2}(\mathbb{R}^{d})$ with $\mu$ absolutely continuous with respect to the Lebesgue measure. Then, $(\overline{\pi}_{k})_{k\in \mathbb{N}^{*}}$ converges weakly to $\pi^{\star}$: for any bounded continuous function $g:\mathbb{R}^{d}\times\mathbb{R}^{d}\to\mathbb{R}$, $\lim_{k\to +\infty} \int g\,\mathrm{d} \overline{\pi}_{k} = \int g\,\mathrm{d}\pi^{\star}$.
Moreover,
\begin{enumerate}[leftmargin=*]
\item For any $k\geqslant1$, $W_{2}^{2}(\overline{\pi}_{k},\,\pi^{\star})\geqslant\frac{1}{2}\big((\mathbb{E}[W_{2}^{2}(\widehat{\mu}_{k},\,\widehat{\nu}_{k})])^{\frac{1}{2}}-W_{2}(\mu,\,\nu)\big)^{2}$.
\item If $\mu=\mathcal{N}(0,\,I_{d})$ and $\nu$ is uniform on a finite set, then there exists $A_\nu>0$ such that
\[
  W_{2}^{2}(\overline{\pi}_{k},\,\pi^{\star})\leqslant A_\nu(\mathbb{E}[W_{2}^{2}(\widehat{\mu}_{k},\,\widehat{\nu}_{k})]-W_{2}^{2}(\mu,\,\nu))^{1/2}=O(k^{-1/4}).
\]
\end{enumerate}
\end{prop}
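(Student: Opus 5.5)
Weak convergence of $(\overline{\pi}_k)$ follows from tightness plus identification of the limit. Since every $\overline{\pi}_k$ lies in $\Pi(\mu,\nu)$, the family is tight, and $\int\lVert x-y\rVert^2\,\mathrm{d}\overline{\pi}_k$ is bounded by $2(M_2(\mu)+M_2(\nu))$. Let $\pi_\infty$ be any weak limit point; it lies in $\Pi(\mu,\nu)$. I want to show $\pi_\infty$ is optimal. Uniqueness of $\pi^\star$ (from absolute continuity of $\mu$) then forces the whole sequence to converge.

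Lower semicontinuity of the cost under weak convergence gives
\[
\int \lVert x-y\rVert^2\,\mathrm{d}\pi_\infty \leqslant \liminf_k \int \lVert x-y\rVert^2\,\mathrm{d}\overline{\pi}_k = \liminf_k \mathbb{E}[W_2^2(\widehat\mu_k,\widehat\nu_k)]
\]
by \cref{prop:expected-batch-ot-cost-identity}. However, \cref{prop:expected-batch-ot-cost-convergence} gives the limit $W_2^2(\mu,\nu)$ only under $q>2$ moments, so with only second moments I would use a different route: \emph{cyclical monotonicity}. Each $\widehat\pi_k$ is supported on an optimal permutation, hence has cyclically monotone support. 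For fixed $m$, draw $m$ pairs uniformly without replacement from the same batch; this happens with probability tending to one. Their $m$-fold product law is close to $\overline{\pi}_k^{\otimes m}$ in total variation, up to an $O(m^2/k)$ coupling error between with- and without-replacement sampling, and is concentrated on $m$-tuples satisfying the cyclical monotonicity inequality. That inequality defines a closed set, so by the portmanteau theorem $\pi_\infty^{\otimes m}$ is supported on it. Hence $\pi_\infty$ is $c$-cyclically monotone, and therefore optimal because $\mu,\nu\in\mathcal{P}_2$ (Schachermayer--Teichmann / Ambrosio--Gigli). The delicate point is making the with/without replacement comparison rigorous. The $m$ matched pairs are exchangeable, and their joint law minus the product of marginals has mass $O(m^2/k)$.

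For item 1, I would use a triangle-type inequality. Take a coupling $\gamma$ of $\overline{\pi}_k$ and $\pi^\star$ between points $(X,Y)$ and $(X',Y')$. Minkowski's inequality gives
\[
\lVert X-Y\rVert_{L^2}\leqslant \lVert X'-Y'\rVert_{L^2}+\lVert X-X'\rVert_{L^2}+\lVert Y-Y'\rVert_{L^2},
\]
and the last two terms are at most $\sqrt{2}\,(\mathbb{E}\lVert X-X'\rVert^2+\mathbb{E}\lVert Y-Y'\rVert^2)^{1/2}$. Taking an optimal $\gamma$ and using \cref{prop:expected-batch-ot-cost-identity} yields $(\mathbb{E}W_2^2(\widehat\mu_k,\widehat\nu_k))^{1/2}-W_2(\mu,\nu)\leqslant \sqrt2\, W_2(\overline{\pi}_k,\pi^\star)$, which squares to the claim.

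For item 2, I would use a stability argument for the semidiscrete problem. Write $\nu=\frac1n\sum_j\updelta_{y_j}$, with optimal potential $\psi$ and Laguerre cells $L_j$, so that $\pi^\star=(\mathrm{id},T^\star)_\sharp\mu$ with $T^\star=y_j$ on $L_j$. By duality, for any $\pi\in\Pi(\mu,\nu)$ the excess cost $\int\lVert x-y\rVert^2\,\mathrm{d}\pi-W_2^2$ equals $\int \Delta(x,y)\,\mathrm{d}\pi$, where $\Delta\geqslant0$ is the gap $\lVert x-y\rVert^2-\phi(x)-\psi(y)$. A natural coupling of $\pi$ and $\pi^\star$ keeps $x$ and sends $y$ to $T^\star(x)$, so $W_2^2(\pi,\pi^\star)\leqslant\int\lVert y-T^\star(x)\rVert^2\,\mathrm{d}\pi\leqslant D^2\,\pi(y\neq T^\star(x))$, with $D$ the diameter of the support of $\nu$. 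The key estimate is then a bound of the form $\pi(y\neq T^\star(x))\leqslant C\,(\int\Delta\,\mathrm{d}\pi)^{1/2}$. On the event $y=y_j$ with $x\in L_i$, $i\neq j$, the gap $\Delta(x,y_j)$ is an affine function of $x$ vanishing on the hyperplane between cells $i$ and $j$, growing like $2\lVert y_i-y_j\rVert$ times the distance to that hyperplane. Mass at distance below $\varepsilon$ from the cell boundaries is at most $C\varepsilon$, since the Gaussian density is bounded and the boundaries are finitely many hyperplanes. Mass at distance above $\varepsilon$ costs at least $c\varepsilon$ per unit. Hence $\pi(\text{mismatch})\leqslant C\varepsilon+\text{excess}/(c\varepsilon)$, and optimizing over $\varepsilon$ gives the square root. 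Combining this with \cref{prop:expected-batch-ot-cost-convergence}(3), which applies since the Gaussian has all moments, gives $O(k^{-1/4})$. The main obstacle I expect is this margin-type estimate: controlling the Gaussian mass near the Laguerre boundaries uniformly, including unbounded cells. The bounded Gaussian density makes this manageable, because the projection of a Gaussian onto a normal direction is a one-dimensional Gaussian with bounded density.
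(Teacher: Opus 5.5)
Your items 1 and 2 are correct and match the paper. Item 1 is the paper's argument with the $\sqrt{2}$-Lipschitz map $S(x,y)=y-x$, written via Minkowski. Item 2 is the same dual-slack margin estimate with Gaussian anti-concentration around finitely many Laguerre hyperplanes, combined with the $O(k^{-1/2})$ bias bound.

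\emph{The gap is in the weak convergence part.} You claim that the law $\rho_k^{(m)}$ of $m$ pairs drawn without replacement from one batch is within $O(m^2/k)$ of $\overline{\pi}_k^{\otimes m}$ in total variation. The with/without-replacement comparison is done conditionally on the batch. What it actually gives is $\|\rho_k^{(m)}-\mathbb{E}[\widehat{\pi}_k^{\otimes m}]\|_{\mathrm{TV}}=O(m^2/k)$, i.e.\ closeness to i.i.d.\ draws from the \emph{same random} empirical plan. But $\mathbb{E}[\widehat{\pi}_k^{\otimes m}]\neq(\mathbb{E}\widehat{\pi}_k)^{\otimes m}=\overline{\pi}_k^{\otimes m}$ in general. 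The discrepancy is the fluctuation of the random measure $\widehat{\pi}_k$, and exchangeability says nothing about it: by de Finetti, exchangeable laws are mixtures of products, not products.

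The claim is in fact false. Take $\mu=\nu=\mathrm{Unif}[0,1]$. Two pairs from one batch always satisfy $(x_1-x_2)(y_1-y_2)\geqslant0$. Under $\overline{\pi}_k^{\otimes2}$, however, conditionally on the two indices the four order statistics are independent. Hence $\overline{\pi}_k^{\otimes2}(\{(x_1-x_2)(y_1-y_2)<0\})$ is a positive constant for $k=2$ and at least of order $k^{-1/2}$ in general, not $O(1/k)$. More fundamentally, an average of cyclically monotone plans need not be cyclically monotone ($\overline{\pi}_1=\mu\otimes\nu$). Transferring cyclical monotonicity to $\pi_\infty$ therefore requires knowing that $\widehat{\pi}_k$ concentrates around a deterministic limit. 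That is the whole difficulty, and your argument does not supply it.

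\emph{How the paper closes it.} The paper proves this concentration directly. Almost surely $W_2(\widehat{\mu}_k,\mu)\to0$ and $W_2(\widehat{\nu}_k,\nu)\to0$, so the optimal plans $\widehat{\pi}_k(\omega)$ have bounded cost. By stability of optimal plans under weak convergence of the marginals, every limit point is optimal; this is where closedness of cyclical monotonicity legitimately enters, realization by realization. Uniqueness then gives $\widehat{\pi}_k(\omega)\rightharpoonup\pi^\star$. Dominated convergence passes this to $\int g\,\mathrm{d}\overline{\pi}_k=\mathbb{E}[\int g\,\mathrm{d}\widehat{\pi}_k]$.

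\emph{Your discarded route also works.} The lower-semicontinuity route needs only second moments. Indeed $\mathbb{E}[W_2^2(\widehat{\mu}_k,\mu)]\to0$ for every $\mu\in\mathcal{P}_2(\mathbb{R}^d)$, by almost sure convergence plus uniform integrability of the empirical second moments $\frac1k\sum_{i\leqslant k}\lVert X_i\rVert^2$. So the consistency argument of \cref{prop:expected-batch-ot-cost-convergence}(1) yields $\mathbb{E}[W_2^2(\widehat{\mu}_k,\widehat{\nu}_k)]\to W_2^2(\mu,\nu)$ without assuming $q>2$. Lower semicontinuity of the cost then makes every weak limit point of $(\overline{\pi}_k)$ optimal, hence equal to $\pi^\star$.
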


The above result shows that if the expected batch OT cost is far above $W_{2}^{2}(\mu,\,\nu)$, then $\overline{\pi}_{k}$
is necessarily far from $\pi^{\star}$ in Wasserstein distance. However, the converse fails in general.
More precisely, we show that for any $\pi \in \Pi(\mu,\,\nu)$, there is no universal modulus that controls
$W_{2}^{2}(\pi,\,\pi^{\star})$ in terms of the cost bias~(\cref{app:no-universal-modulus}). Plan convergence
thus requires additional hypotheses on $\mu, \nu$.

\subsection{Numerical illustration of convergence rates} \label{subsec:num_rates}

We illustrate the convergence rates of the expected batch OT cost and plan in Gaussian-to-discrete settings. In \cref{fig:expected-batch-ot-cost-plan-rates}, we consider $\mu=\mathcal{N}(0,I_d)$ and $\nu$ supported on $M$ distinct atoms, drawn uniformly from $[-1,1]^d$ and then fixed, for $d\in\{10,1000\}$ and $M\in\{5,10,50\}$.

We evaluate the cost bias and observe a decay in $O(k^{-1})$, faster than the $O(k^{-1/2})$ bound in \cref{prop:expected-batch-ot-cost-convergence}. We also observe an $O(k^{-1/2})$ decay for a computable upper bound on $W_2^2(\overline{\pi}_k,\pi^\star)$. Directly estimating $W_2^2(\overline{\pi}_k,\pi^\star)$ is both computationally and statistically challenging, since it requires discretizing both $\overline{\pi}_k$ and $\pi^\star$ and solving a discrete OT problem on $\mathbb{R}^d\times\mathbb{R}^d$. Instead, we estimate $\mathbb{E}_{(X,Y)\sim\overline{\pi}_k}\bigl[\|Y-T^\star(X)\|^2\bigr]$, where $T^\star$ is the OT map from $\mu$ to $\nu$. This quantity upper bounds $W_2^2(\overline{\pi}_k,\pi^\star)$: if $(X,Y)\sim\overline{\pi}_k$, then $((X,Y),(X,T^\star(X)))$ is a coupling of $\overline{\pi}_k$ and $\pi^\star$.

The finite-sample regime appears slower than the asymptotic regime. Moreover, our experiments suggest that the $O(k^{-1})$ cost-bias rate from \cref{prop:expected-batch-ot-cost-convergence} may remain valid when $\mu$ is Gaussian, despite its unbounded support. Such a result would sharpen the rate in \cref{prop:expected-batch-ot-plan-convergence} from $O(k^{-1/4})$ to $O(k^{-1/2})$ for $W_2^2(\overline{\pi}_k,\pi^\star)$. Proving this extension beyond the compactly supported case may require adapting techniques from empirical OT cost convergence (e.g., \cite{staudt-unbounded}) and is left for future work.

\begin{figure}[t]
\centering
\includegraphics[width=\linewidth]{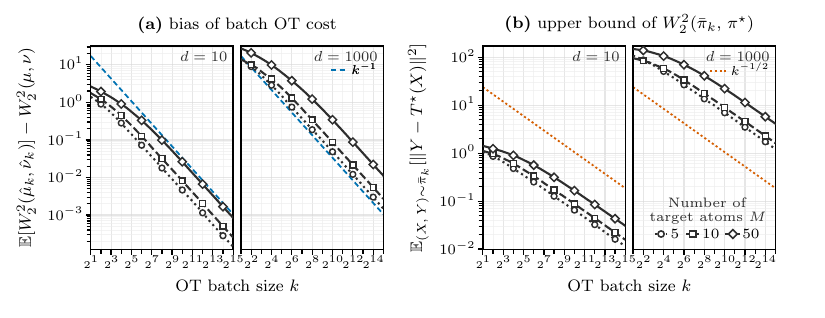}
\caption{Numerical rates in Gaussian-to-discrete experiments.
The expected batch OT cost bias decays as \(O(k^{-1})\) (left).
The quantity \(\mathbb{E}_{(X,Y)\sim\overline{\pi}_{k}}[\lVert Y-T^{\star}(X)\rVert^{2}]\), 
which upper bounds $W_2^2(\overline\pi_k,\,\pi^\star)$, decays
as \(O(k^{-1/2})\) (right). Standard errors are below
plotting resolution.}
\label{fig:expected-batch-ot-cost-plan-rates}
\end{figure}

\section{Flow matching under expected batch optimal transport}
\label{sec:flow-matching-expected-batch-ot}

In this section, we study the effect of minibatch OT on the dynamics of flow matching. We focus on the Gaussian-to-discrete setting,
where the source $\mu$ is a standard Gaussian, and the target $\nu$ is an empirical distribution $\mathrm{Unif}\{v_{1},\ldots,v_{M}\}$ with pairwise distinct atoms.
We first show that the velocity field of FM, when derived from the expected batch OT plan
$\overline{\pi}_{k}$, is regular enough to define a well-posed ODE flow on $[0,\,1)$. We then study the
extent to which increasing the batch size $k$ can serve as a substitute for finer numerical integration,
thereby complementing the empirical studies of~\citet{pooladian,tong,zhang}.
The theoretical statements in this section concern the exact target velocity induced by the coupling.
The image experiments then separately study learned neural approximations.

\subsection{Posterior representation of the velocity field}

In this Gaussian-to-discrete setting, the velocity field admits a closed-form expression, as studied
for instance by \citet[Proposition~1]{bertrand}.
We prove the following result in \cref{app:conditional-mean-posterior-formulas}.

\begin{lem}
\label{lem:gaussian-discrete-velocity-posterior}
Let $\pi\in\Pi(\mu,\,\nu)$ and for $t\in[0,1)$, let $X_t = (1-t)X_0 + tX_1$, $(X_0,X_1)\sim\pi$. Then,
\begin{equation}\label{eq:gaussian-discrete-velocity-posterior-mean}
u_{t}^{\pi}(x)=\frac{m_{t}^{\pi}(x)-x}{1-t},\ 
m_{t}^{\pi}(x)=\mathbb{E}_\pi[X_{1}\mid X_{t}=x]
=\sum_{j=1}^{M}v_{j}\mathbb{P}_{(X_{0},\,X_{1})\sim\pi}(X_{1}=v_{j}\mid X_{t}=x).
\end{equation}
Moreover, if $f_{{\pi}}$ denotes the density of
${\pi}$ with respect to the product of the Lebesgue measure on $\mathbb{R}^{d}$
and the counting measure on $\{v_{1},\ldots,v_{M}\}$, then
\begin{equation}
  \label{eq:gaussian-discrete-posterior-probabilities}
  \mathbb{P}_{(X_0,X_1)\sim\pi}(X_1=v_j\mid X_t=x)
  =
  \left.
  f_{\pi}\!\left(\frac{x-tv_j}{1-t},v_j\right)
  \middle/
  \sum_{\ell=1}^{M}
  f_{\pi}\!\left(\frac{x-tv_\ell}{1-t},v_\ell\right)
  \right. .
\end{equation}
\end{lem}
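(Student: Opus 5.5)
The argument is a change of variables: since $X_1$ takes finitely many values, we write the joint law of $(X_t,X_1)$ explicitly, read off the conditional distribution of $X_1$ given $X_t$ by Bayes' rule, and deduce the velocity formula from the linear interpolation identity.

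\textbf{Step 1: the density of $\pi$.} Since $\pi\in\Pi(\mu,\nu)$ with $\mu=\mathcal{N}(0,I_d)$ and $\nu$ uniform on distinct atoms, we decompose $\pi=\sum_{j=1}^{M}\pi_j\otimes\updelta_{v_j}$. Here $\pi_j(A)=\pi(A\times\{v_j\})$ is a sub-probability measure with $\pi_j\leqslant\mu$. Hence each $\pi_j$ is absolutely continuous, with density $f_\pi(\cdot,v_j)\leqslant\varphi$, where $\varphi$ is the Gaussian density. This gives the density $f_\pi$ with respect to Lebesgue $\otimes$ counting measure.

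\textbf{Step 2: the joint law of $(X_t,X_1)$.} Fix $t\in[0,1)$. On the event $\{X_1=v_j\}$ we have $X_t=(1-t)X_0+tv_j$. This is an affine bijection of $X_0$, with inverse $x\mapsto (x-tv_j)/(1-t)$ and Jacobian $(1-t)^{-d}$. For Borel $B$ we get
\[
\mathbb{P}(X_t\in B,\,X_1=v_j)=\int_B (1-t)^{-d} f_\pi\!\left(\tfrac{x-tv_j}{1-t},v_j\right)\mathrm{d}x .
\]
So $(X_t,X_1)$ has joint density $g_t(x,v_j)=(1-t)^{-d}f_\pi((x-tv_j)/(1-t),v_j)$. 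The marginal density of $X_t$ is $p_t(x)=\sum_\ell g_t(x,v_\ell)$.

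\textbf{Step 3: the posterior probabilities.} A regular version of the conditional law of $X_1$ given $X_t=x$ is $g_t(x,v_j)/p_t(x)$ on $\{p_t>0\}$. The factor $(1-t)^{-d}$ cancels, which gives \eqref{eq:gaussian-discrete-posterior-probabilities}. To justify this, we check that for bounded Borel $h$ and every $j$,
\[
\mathbb{E}[h(X_t)\mathbbm{1}_{X_1=v_j}]=\mathbb{E}\!\left[h(X_t)\,\frac{g_t(X_t,v_j)}{p_t(X_t)}\right].
\]
This follows immediately from Step 2. On the null set $\{p_t=0\}$, any convention may be used.

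\textbf{Step 4: the velocity formula.} Inverting the interpolation gives $X_0=(X_t-tX_1)/(1-t)$. Therefore
\[
X_1-X_0=\frac{X_1-X_t}{1-t}.
\]
Taking conditional expectations given $X_t=x$, and using that $X_t$ is $\sigma(X_t)$-measurable, we obtain $u_t^\pi(x)=(m_t^\pi(x)-x)/(1-t)$. Moreover $m_t^\pi(x)=\sum_j v_j\,\mathbb{P}(X_1=v_j\mid X_t=x)$, since $X_1$ is discrete and integrable.

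\textbf{Main obstacle.} The only delicate point is the measure-theoretic one: conditional expectations are defined only almost everywhere, and the formulas are to be read as a particular version of them. I would handle this by stating that \eqref{eq:gaussian-discrete-posterior-probabilities} defines a version wherever the denominator is positive. When $t=0$, the denominator equals $\varphi(x)>0$ everywhere, since $\sum_j f_\pi(x,v_j)=\varphi(x)$. For $t>0$, positivity holds $p_t$-a.e. by construction.
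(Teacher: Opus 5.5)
Your proof is correct and follows the same route as the paper: the identity $X_1-X_0=(X_1-X_t)/(1-t)$ gives the velocity formula, and Bayes' rule after the affine change of variables $x=(1-t)x_0+tv_j$ gives the posterior weights. You also supply details the paper leaves implicit, all of which are sound: the existence of $f_\pi$ via $\pi(\cdot\times\{v_j\})\leqslant\mu$, the cancellation of the Jacobian $(1-t)^{-d}$, and the choice of a version of the conditional law where the denominator is positive.
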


The velocity field points from $x$ to the posterior mean $m_{t}^{\pi}(x)$, which always lies
in the convex hull of the target atoms.
Observe that if we write $\overline{a}^{\pi}_{j}(x_{0})=\mathbb{P}_{(X_{0},\,X_{1})\sim{\pi}}(X_{1}=v_{j}\mid X_{0}=x_{0})$, then $f_{{\pi}}(x_{0},\,v_{j})=\varphi(x_{0})\overline{a}^{\pi}_{j}(x_{0})$,
where $\varphi$ is the standard Gaussian density (\cref{app:expected-batch-ot-rectifiability}).
Thus the velocity field is also completely determined by the assignment probabilities $\overline{a}^{\pi}_{j}$ between $X_{0}$ and $X_{1}$. In the specific case where $\pi = \mu\otimes\nu$, the assignment probabilities in~\eqref{eq:gaussian-discrete-posterior-probabilities} are simply given by a softmax function of the normalized squared distances $-\frac{\lVert x-tv_{j}\rVert^{2}}{2(1-t)^{2}}$, $j\in\{1,\ldots,M\}$.

\subsection{Smoothness and rectifiability of the expected batch OT plan}

When using the expected batch OT plan for FM, a natural question is whether the velocity field generated
by $\overline{\pi}_{k}$ defines a well-posed flow.
The following result confirms that this is indeed the case, and we prove it in \cref{app:expected-batch-ot-rectifiability}.

\begin{prop}[Rectifiability of the expected batch OT plan]
\label{prop:expected-batch-ot-rectifiability}
Fix $k\in\mathbb{N}^{*}$ and consider the Gaussian-to-discrete setting.
The expected batch OT plan $\overline{\pi}_{k}$
is rectifiable in the following sense.
\begin{enumerate}[leftmargin=*]
\item \emph{(Local Lipschitzness of $u_{t}^{\overline{\pi}_{k}}$).} For any compact $K\subseteq\mathbb{R}^d$, the map $(t,\,x)\in[0,\,1)\times K\mapsto
u_{t}^{\overline{\pi}_{k}}(x)$ is jointly continuous and for any $\tau\in(0,\,1)$, it is locally Lipschitz in $x$ uniformly in $t\in[0,\,\tau]$. 
\item \emph{(Pre-terminal flow).} For any $x_{0}\in\mathbb{R}^{d}$, the ODE
$\dot{x}(t)=u_{t}^{\overline{\pi}_{k}}(x(t))$, $x(0)=x_{0}$, admits a unique
solution on $[0,\,1)$, denoted by $t\mapsto\phi_{t}^{\overline{\pi}_{k}}(x_{0})$.
 Moreover, for any $t\in[0,\,1)$, the
map $x_{0}\mapsto\phi_{t}^{\overline{\pi}_{k}}(x_{0})$ is continuous and 
$(\phi_{t}^{\overline{\pi}_{k}})_{\sharp}\mu=\rho_{t}^{\overline{\pi}_{k}}$.
\item \emph{(Terminal flow).} There exists a Borel map
$\phi^{\overline{\pi}_{k}}:\mathbb{R}^{d}\to\{v_{1},\ldots,v_{M}\}$ such
that $\phi^{\overline{\pi}_{k}}(x_{0})=\lim_{t\uparrow1}\phi_{t}^{\overline{\pi}_{k}}(x_{0})$
for $\mu$-a.e. $x_{0}$, and $(\phi^{\overline{\pi}_{k}})_{\sharp}\mu=\nu$.
\end{enumerate}
\end{prop}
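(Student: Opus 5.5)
The plan is to first understand the assignment probabilities $\overline{a}^{\overline{\pi}_k}_j(x_0)$ well enough to get smoothness, and then use the posterior formula of \cref{lem:gaussian-discrete-velocity-posterior}.

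\textbf{Step 1: the assignment probabilities are smooth.} Conditioning on $x_1=x_0$ in \cref{def:expected-batch-ot-plan}, and using exchangeability of the batch, gives
\[
\overline{a}_j(x_0)=\mathbb{P}\big(\text{the optimal permutation matches } x_0 \text{ to an atom equal to } v_j\big),
\]
where the probability is over $X_2,\dots,X_k\sim\mu$ i.i.d.\ and $\mathbf{Y}_k\sim\nu^{\otimes k}$. Condition on the finitely many configurations of $\mathbf{Y}_k$ and on the permutation $\sigma$ assigned to the other points. The event that $\sigma$ is optimal is a finite intersection of half-spaces in $(x_0,\dots,x_k)$, because the cost comparisons $\sum\lVert x_i-y_{\sigma(i)}\rVert^2$ are linear in the $x_i$ after cancelling the $\lVert x_i\rVert^2$ terms. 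Ties have Lebesgue measure zero, so the solver choice is irrelevant.

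Hence $\overline{a}_j(x_0)$ is a finite sum of Gaussian integrals of indicators of polyhedra whose describing inequalities depend affinely on $x_0$. After the change of variables $x_i\mapsto x_i-x_0$ (or by translating the half-space offsets), $x_0$ enters only through the Gaussian density. Differentiation under the integral then shows that $\overline{a}_j$ is $C^\infty$ (at least $C^1$), with derivatives bounded by polynomials in $\lVert x_0\rVert$.

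I expect this to be the main technical point. The fallback is to show only local Lipschitzness: the difference $\overline{a}_j(x_0)-\overline{a}_j(x_0')$ is bounded by the Gaussian measure of the symmetric difference of slabs, which is $O(\lVert x_0-x_0'\rVert)$ locally.

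\textbf{Step 2: regularity of the velocity (item 1).} By \cref{lem:gaussian-discrete-velocity-posterior}, $f(x_0,v_j)=\varphi(x_0)\overline{a}_j(x_0)$, so
\[
m_t(x)=\frac{\sum_j v_j\,\varphi(y_j)\,\overline{a}_j(y_j)}{\sum_\ell \varphi(y_\ell)\,\overline{a}_\ell(y_\ell)},\qquad y_j=\frac{x-tv_j}{1-t}.
\]
The denominator is strictly positive. Indeed, $\sum_\ell \overline{a}_\ell(y)=1$ for every $y$, and on $[0,\tau]\times K$ the points $y_\ell$ stay in a compact set where $\varphi$ is bounded below. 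Pointwise that gives positivity, and uniformity comes from the continuity of $\overline{a}_\ell$ on the compact set together with $\varphi\geq c>0$ there, combined with the sum-to-one property.

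So $m_t$ is jointly continuous and locally Lipschitz in $x$, uniformly in $t\le\tau$. It follows that $u_t=(m_t-x)/(1-t)$ has the same properties.

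\textbf{Step 3: the pre-terminal flow (item 2).} Since $m_t$ lies in the convex hull of the atoms, $\lVert u_t(x)\rVert\le (C+\lVert x\rVert)/(1-t)$. This linear growth on $[0,\tau]$ rules out blow-up. Cauchy--Lipschitz then gives a unique global solution on every $[0,\tau]$, hence on $[0,1)$, and continuous dependence on $x_0$.

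For the pushforward, note that $\rho_t$ solves the continuity equation with velocity $u_t$ (standard for the FM interpolation). Uniqueness of solutions of the continuity equation under local Lipschitz bounds and the integrability $\int_0^\tau\!\int \lVert u\rVert\,d\rho_t\,dt<\infty$ (Ambrosio's superposition principle / Proposition 8.1.8 in \citet{ambrosio}) then gives $(\phi_t)_\sharp\mu=\rho_t$.

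\textbf{Step 4: the terminal flow (item 3).} Write $x(t)=(1-t)y+tv$. As $t\uparrow1$, $\rho_t\to\nu$, and the posterior concentrates: for $x$ near $v_j$, the point $y_j$ stays bounded while $y_\ell$ diverges for $\ell\neq j$. Hence $m_t(x)\to v_j$ exponentially fast.

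Concretely, set $z(t)=(x(t)-tv_j)/(1-t)$ and write the ODE for $z$. When $x(t)$ is in the Voronoi-like region of $v_j$, we get $\dot z=(m_t-v_j)/(1-t)$, whose right-hand side is exponentially small in $1/(1-t)^2$. So $z$ converges and $x(t)\to v_j$. Then $\lim_t \phi_t$ exists, and its range lies in the atoms.

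Identifying the limit distribution this way is hard pointwise. Instead, I will use the fact that $(\phi_t)_\sharp\mu=\rho_t\to\nu$ weakly, and prove a.e.\ convergence through a Cauchy argument: $\int_\tau^1\lVert u_t(\phi_t)\rVert\,dt$ has finite expectation, since $\mathbb{E}\lVert X_1-X_0\rVert<\infty$ gives $\int_0^1\mathbb{E}_{\rho_t}\lVert u_t\rVert\,dt<\infty$. Thus $\phi_t(x_0)$ has finite length, and the limit $\phi(x_0)$ exists for $\mu$-a.e.\ $x_0$. Its law is the weak limit of the $\rho_t$, namely $\nu$, so $\phi$ takes values in $\{v_j\}$ almost surely. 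Finally, define $\phi$ to be an arbitrary atom on the null set, which makes it Borel.
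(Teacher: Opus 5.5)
There is a genuine gap in your Step 2, at the lower bound on the denominator $\sum_{\ell}\varphi(y_\ell)\overline{a}_\ell(y_\ell)$. The identity $\sum_\ell\overline{a}_\ell(y)=1$ holds at a single point $y$. For $t>0$, however, the $\ell$-th term is evaluated at $y_\ell=(x-tv_\ell)/(1-t)$, and these $M$ points are distinct. Sum-to-one therefore says nothing about $\sum_\ell\overline{a}_\ell(y_\ell)$: nothing prevents $\overline{a}_1(y_1)=\overline{a}_2(y_2)=0$ when $M=2$. Continuity on compacts does not help either.

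This is not a technicality. For the OT plan itself, where $\overline{a}_\ell=\mathbbm{1}_{C_\ell}$, take $\mu=\mathcal{N}(0,1)$ and $\nu=\operatorname{Unif}\{-1,1\}$. The density of $\rho_t$ vanishes on $(-t,t)$, so the denominator really does vanish there. What you need is a positive lower bound on each $\overline{a}_\ell$ separately, and this is the key extra ingredient in the paper. On the event $\{Y_1=\cdots=Y_k=v_\ell\}$, which has probability $M^{-k}$, the atom matched to $x_0$ must be $v_\ell$. Hence $\overline{a}_\ell\geqslant M^{-k}$ on all of $\mathbb{R}^d$. With this, the denominator is at least $M^{-k}\min_{K'}\varphi>0$ on $[0,\tau]\times K$, where $K'$ is the compact set swept by the $y_\ell$, and the rest of your Step 2 goes through. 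Items 2 and 3 are built on item 1, so this bound must be added.

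The same missing bound undermines the first half of your Step 4. For $x$ near $v_j$, the point $y_j=v_j+(x-v_j)/(1-t)$ is not bounded; it can be of size $\delta/(1-t)$. So positivity of $\overline{a}_j$ on compacts would not give uniform exponential concentration. The paper's concentration step uses exactly the global bounds $\overline{a}_i\leqslant 1$ and $\overline{a}_j\geqslant M^{-k}$.

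You can drop that half entirely, because your finite-length argument is correct and self-contained. By Tonelli and $(\phi_t)_\sharp\mu=\rho_t$,
$\int\!\int_0^1\lVert u_t(\phi_t(x_0))\rVert\,\mathrm{d}t\,\mathrm{d}\mu(x_0)=\int_0^1\mathbb{E}\lVert\mathbb{E}[X_1-X_0\mid X_t]\rVert\,\mathrm{d}t\leqslant\mathbb{E}\lVert X_1-X_0\rVert<\infty$.
So $\mu$-a.e.\ trajectory has finite length and converges. Dominated convergence together with $\rho_t\rightharpoonup\nu$ then gives $\phi_\sharp\mu=\nu$, which forces values in the atoms a.e.

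This is a genuinely shorter route than the paper's concentration/trapping/Borel--Cantelli argument, and it uses nothing about the target beyond items 1--2. The paper's route gives more information (trajectories entering $B(v_j,\delta)$ late are trapped and converge to $v_j$ at an explicit rate), but the statement does not require it.

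Similarly, your Step 1 is a valid alternative to the paper's slab-probability Lipschitz estimate. The substitution $x_i\mapsto x_i-x_0$ leaves the polyhedra fixed because $\sum_i(y_{\sigma(i)}-y_{\tau(i)})=0$, so $\overline{a}_j$ is indeed smooth. But smoothness alone does not supply the positivity you need.
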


The terminal map $\phi^{\overline{\pi}_{k}}$ partitions $\mathbb{R}^{d}$ into regions $(\phi^{\overline{\pi}_{k}})^{-1}(\{v_{i}\})$, $i\in\{1,\ldots,M\}$, according to the final atom reached when starting from $x_{0}\in\mathbb{R}^{d}$. As $k$ grows, these regions approach the Laguerre cells of the semidiscrete OT map~\citep[Section~5]{peyre} between $\mu$ and $\nu$ (see~\cref{fig:expected-batch-ot-flow-cells}).

\begin{figure}[t]
\centering
\begin{subfigure}{0.235\textwidth}
\centering
\includegraphics[width=\linewidth]{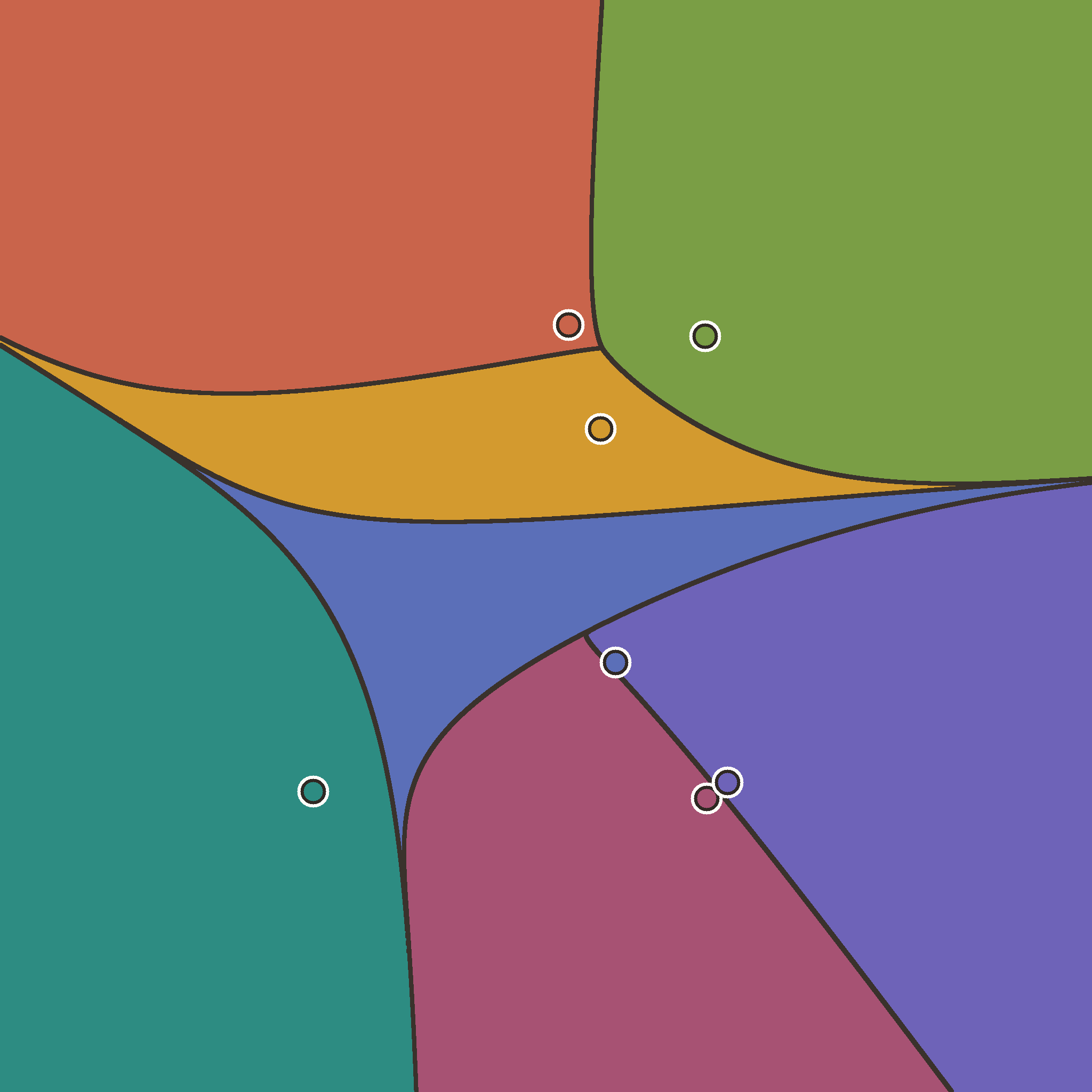}
\caption{$k=1$}
\end{subfigure}\hfill
\begin{subfigure}{0.235\textwidth}
\centering
\includegraphics[width=\linewidth]{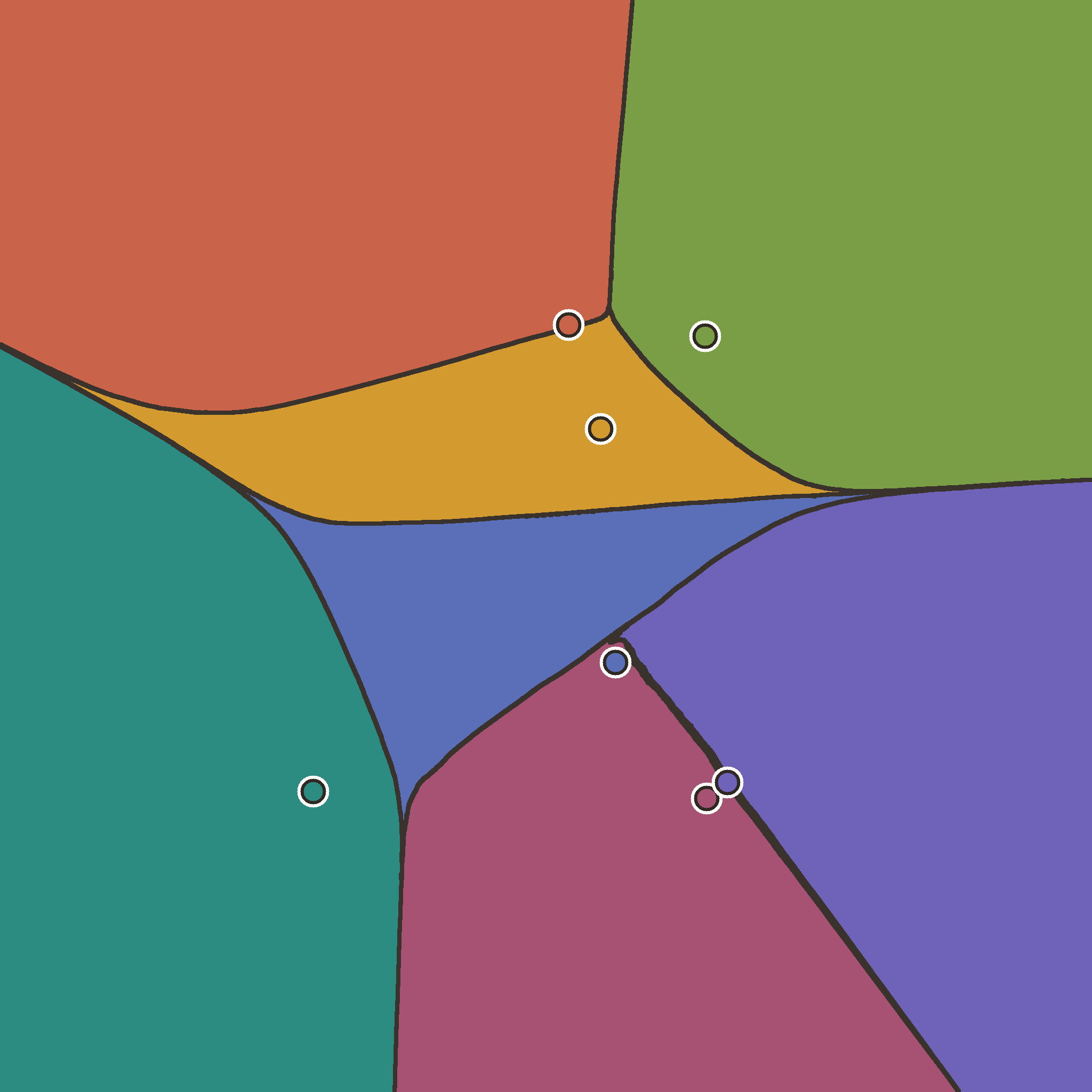}
\caption{$k=25$}
\end{subfigure}\hfill
\begin{subfigure}{0.235\textwidth}
\centering
\includegraphics[width=\linewidth]{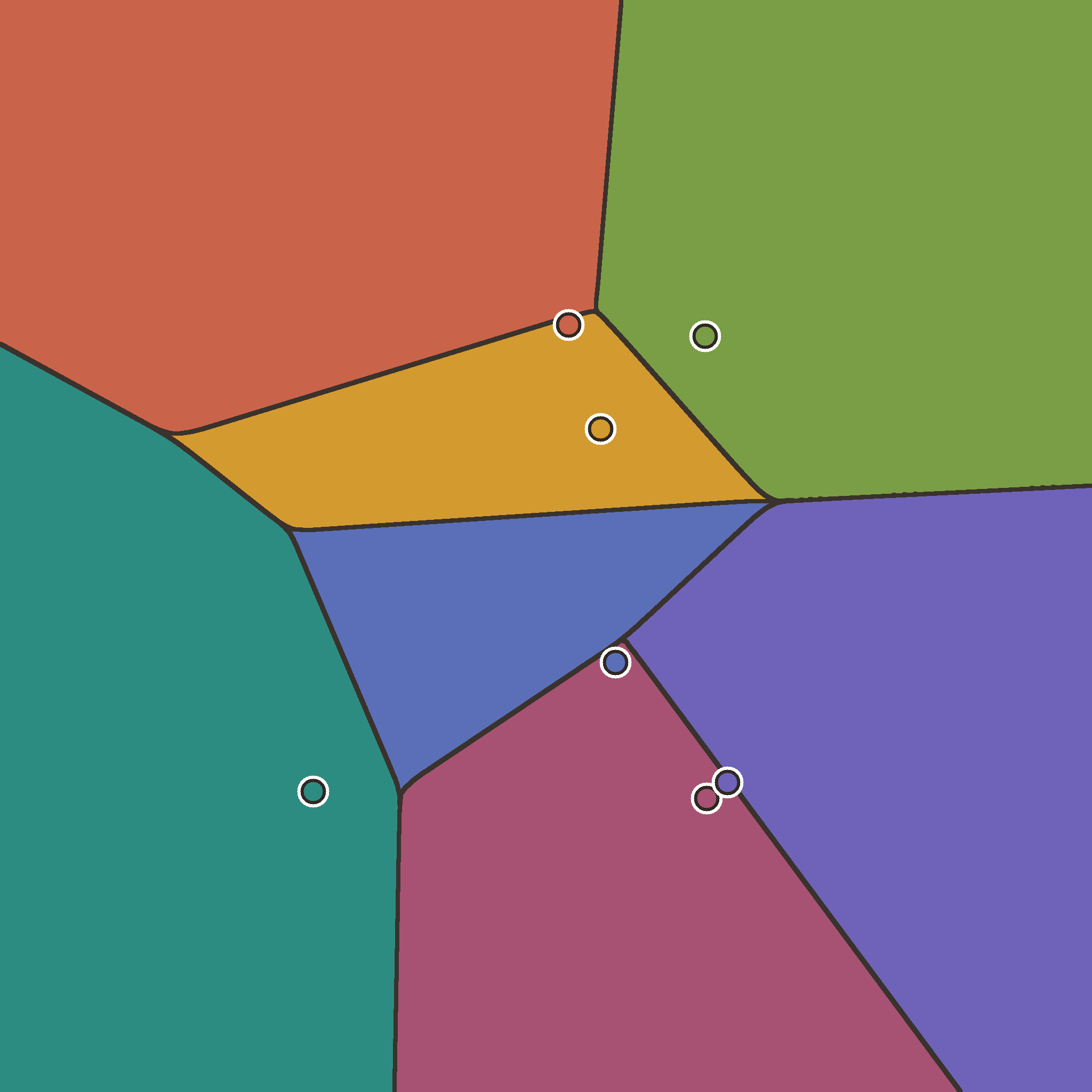}
\caption{$k=1\,000$}
\end{subfigure}\hfill
\begin{subfigure}{0.235\textwidth}
\centering
\includegraphics[width=\linewidth]{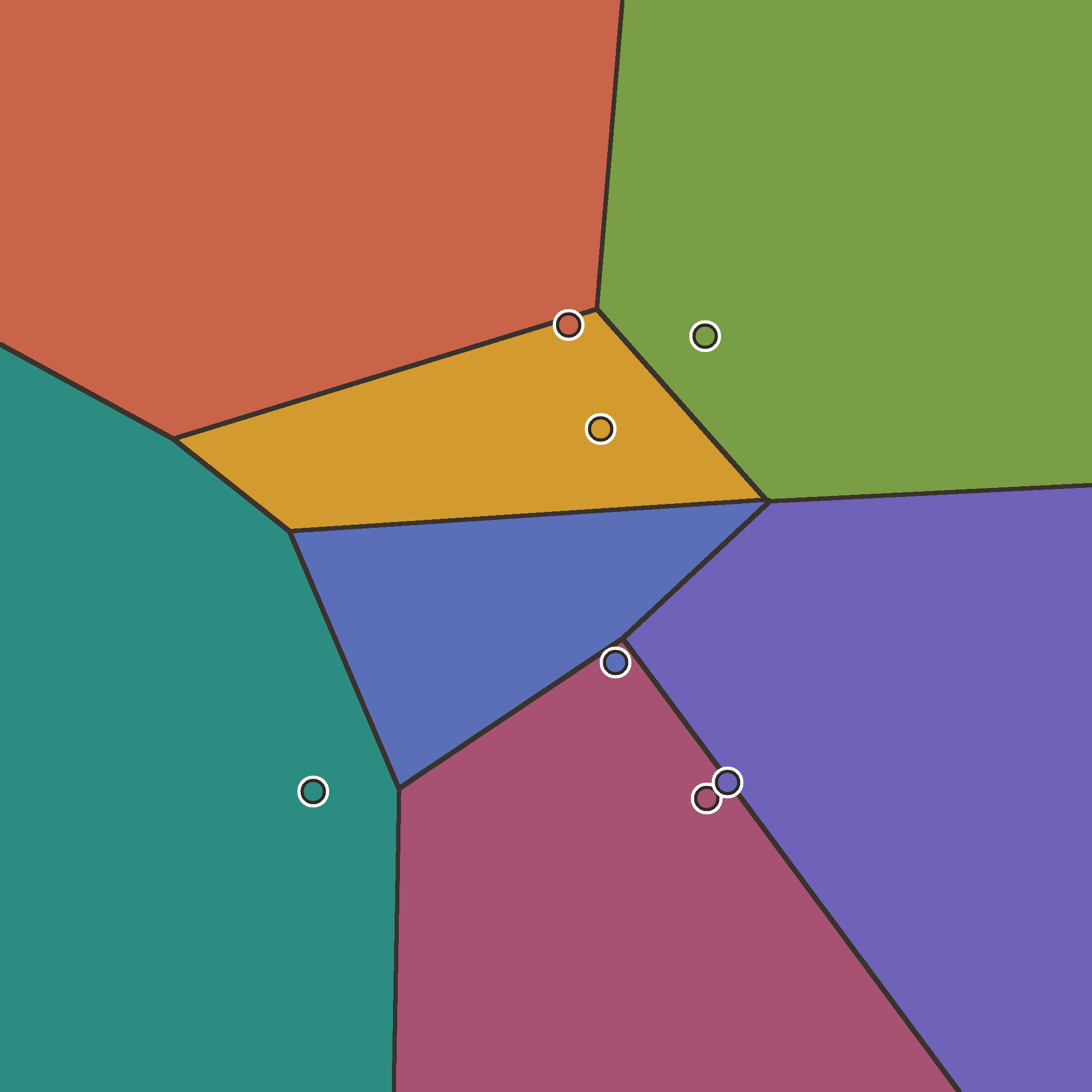}
\caption{OT plan $\pi^{\star}$}
\end{subfigure}
\caption{The terminal map $\phi^{\overline{\pi}_{k}}$ induced by the flow partitions
$\mathbb{R}^{2}$ into cells that approach the semidiscrete OT Laguerre
partition. The case $k=1$ corresponds to the independent coupling. In this example, $\mu=\mathcal{N}(0,\,I_{2})$ and $\nu$ is uniformly supported on $7$ fixed atoms, shown as colored points.
}
\label{fig:expected-batch-ot-flow-cells}
\end{figure}

\subsection{Posterior concentration}\label{sec:posterior-concentration}

According to~\eqref{eq:gaussian-discrete-velocity-posterior-mean}, the concentration in time of the posterior weights
$\mathbb{P}_{(X_{0},X_{1}) \sim \pi}(X_{1}=v_{j}\mid X_{t}=x)$ determines how quickly the posterior mean $m_{t}^{\pi}(x)$ converges to a single specific atom. \cref{fig:posterior-concentration} reports
\(
\mathbb{E}\left[\max_{1\leqslant i\leqslant M} \mathbb{P}_{(X_{0},X_{1}) \sim \overline{\pi}_{k}}(X_{1}=v_{i}\mid X_{t})\right]
\)
when $k$ increases (left panel) for a fixed dimension $d=20$, or when $d$ increases (right panel) for the independent coupling $k=1$. Both panels show stronger
concentration over time as $k$ or $d$ increases. For $k=1$, the increase in concentration with $d$ was already observed by~\cite{bertrand}; we turn this observation into a quantitative theoretical statement in \cref{app:posterior-concentration-euler}. The increase with $k$ is harder to prove, since the OT batch plan is not available in closed form, but we hope to provide a theoretical analysis in future work.

\begin{figure}[t]
\centering
\includegraphics[width=\linewidth]{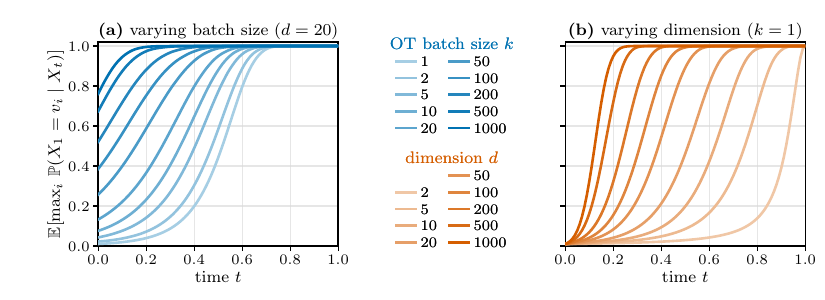}
\caption{Posterior concentration increases with OT batch size $k$ (left), and with ambient dimension $d$ when $k=1$ (right) in a Gaussian-to-discrete setting with
$M=100$ atoms.
Standard errors over sampled trajectories are below plotting
resolution. The batch-size experiment uses a fixed cloud of $100$ atoms drawn uniformly in $[-1,\,1]^{20}$; the dimension experiment uses one such cloud for each $d$.}
\label{fig:posterior-concentration}
\end{figure}

\subsection{Numerical integration} \label{sec:numerical_integration}

We now turn to the interaction between the OT batch size $k$ and the numerical integration budget,
measured by the number of function evaluations (NFE).
For any $\pi\in\Pi(\mu,\,\nu)$, define the explicit Euler iterates with $n\in\mathbb{N}^{*}$ uniform time steps $t_{j}=j/n$ by
\begin{equation}
f_{0,\,n}^{\pi}=\operatorname{id}\quad\text{and}\quad
f_{j+1,\,n}^{\pi}=(\operatorname{id}+\tfrac{1}{n}u_{t_{j}}^{\pi})\circ f_{j,\,n}^{\pi}
\quad\text{for}\;j\in\{0,\ldots,n-1\}.
\label{eq:euler-recursion}
\end{equation}
We denote the final Euler map by $f_{n}^{\pi}=f_{n,\,n}^{\pi}$.
For $n,\,k\in\mathbb{N}^{*}$, we quantify the expected Euler integration error of the target velocity $u^{\overline{\pi}_{k}}$ by
\(
\mathscr{E}_{n,\,k}
=\mathbb{E}_{X_{0}\sim\mu}\left[\lVert f_{n}^{\overline{\pi}_{k}}(X_{0})-\phi^{\overline{\pi}_{k}}(X_{0})\rVert\right].
\)

Can the OT batch size efficiently compensate for fewer function evaluations? To isolate the effect of the two parameters on $\mathscr{E}_{n,\,k}$, we consider a tractable model and compare the asymptotics along the two extreme regimes $k=1$ and $n=1$. This simple setting already reveals a striking difference in how $\mathscr{E}_{n,\,k}$ may decay with respect to $n$ versus $k$.

\begin{prop}[Asymptotic comparison between OT batch size and NFE]
\label{prop:binary-integration-asymptotics}
Let $\mu=\mathcal{N}(0,\,1)$ and $\nu=\operatorname{Unif}\{-1,\,1\}$.
Then $\log\mathscr{E}_{n,\,1}\sim-\frac{2\pi}{\sqrt{3}}n^{\frac{2}{3}}$ as $n\to\infty$
and $\log\mathscr{E}_{1,\,k}\sim-\frac{1}{2}\log k$ as $k\to\infty$.
\end{prop}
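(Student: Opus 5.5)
We treat the two regimes separately. Both rest on \cref{lem:gaussian-discrete-velocity-posterior}, and in both we first identify the terminal map as $\phi^{\overline{\pi}_k}(x_0)=\operatorname{sign}(x_0)$ for $\mu$-a.e.\ $x_0$. The argument for this is the same in both cases. By the symmetry $x\mapsto -x$ of $\mu$, $\nu$ and of the minibatch construction, $u_t^{\overline{\pi}_k}$ is odd, so $0$ is a fixed trajectory. Uniqueness of the flow (\cref{prop:expected-batch-ot-rectifiability}) then makes trajectories order preserving, so each sign is preserved. Since $(\phi^{\overline{\pi}_k})_\sharp\mu=\nu$ and $\mu(\{x_0>0\})=\nu(\{1\})$, this forces $\phi^{\overline{\pi}_k}=\operatorname{sign}$ a.e.

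\textbf{Regime $n=1$.} A single Euler step from $t=0$ gives $f_1^{\pi}=x_0+u_0^{\pi}(x_0)=m_0^{\pi}(x_0)=\mathbb{E}_{\pi}[X_1\mid X_0=x_0]$. Hence $\mathscr{E}_{1,k}=2\,\mathbb{E}[1-\overline{a}_{\operatorname{sign}(X_0)}(X_0)]$. By symmetry this equals $4\,\overline{\pi}_k(\{x>0,\,y=-1\})$, which is $4/k$ times the expected number of pairs in a batch that match a positive $x_i$ to $-1$.

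In dimension one the batch OT plan is the monotone (sorted) matching. Let $P=\#\{i:X_i>0\}$ and $N=\#\{j:Y_j=1\}$; these are independent $\mathrm{Bin}(k,1/2)$ variables. Under the monotone matching, the number of mismatched pairs of the type above is exactly $(P-N)_+$. Therefore $\mathscr{E}_{1,k}=\tfrac{4}{k}\mathbb{E}[(P-N)_+]$. By the CLT and uniform integrability (the second moments are bounded after scaling by $\sqrt{k}$),
\[
\mathbb{E}[(P-N)_+]\sim \sqrt{k/2}\;\mathbb{E}[Z_+]=\sqrt{k}/(2\sqrt{\pi}),
\]
so $\mathscr{E}_{1,k}\sim 2/\sqrt{\pi k}$ and $\log\mathscr{E}_{1,k}\sim-\tfrac12\log k$.

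\textbf{Regime $k=1$.} For $\pi=\mu\otimes\nu$, formula \eqref{eq:gaussian-discrete-posterior-probabilities} gives $m_t(x)=\tanh\!\big(tx/(1-t)^2\big)$. One Euler step with $h=1/n$ from $t_j$ then reads
\[
x\mapsto\Big(1-\tfrac{h}{1-t_j}\Big)x+\tfrac{h}{1-t_j}\tanh\!\Big(\tfrac{t_jx}{(1-t_j)^2}\Big).
\]
This map is odd and increasing, because $h\le 1-t_j$. The last step ($1-t_{n-1}=h$) gives $f_n=\tanh(n(n-1)x_{n-1})$. The error is therefore $\mathbb{E}|\tanh(n(n-1)x_{n-1})-\operatorname{sign}X_0|$, which is governed by how much the iteration amplifies small $|x_0|$.

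Linearizing at $0$ and writing $1-t_j=m/n$, each step multiplies by $1+n^2t_j/m^3-1/m$. The total log-gain is then
\[
L_n=\sum_{m=1}^{n}\log\big(1+n^2/m^3+O(1/m)\big).
\]
Compare this sum with the integral $\int_0^\infty\log(1+n^2/m^3)\,\mathrm{d}m=n^{2/3}\int_0^\infty\log(1+u^{-3})\,\mathrm{d}u$. The last integral equals $\pi/\sin(\pi/3)=\tfrac{2\pi}{\sqrt3}$, and the discretization and $O(1/m)$ corrections contribute $O(\log n)=o(n^{2/3})$. This gives $L_n=(\tfrac{2\pi}{\sqrt3}+o(1))n^{2/3}$.

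It remains to show $\log\mathscr{E}_{n,1}\sim-L_n$.
\begin{itemize}[leftmargin=*]
\item \emph{Lower bound.} Since $\tanh a\le a$ for $a\ge0$ and the step map is monotone, the nonlinear iterate is dominated by the linear one for $x_0>0$. Hence every $x_0$ with $|x_0|\le c\,e^{-L_n}$ ends with $|f_n(x_0)|\le 1/2$ and error at least $1/2$. This set has $\mu$-mass $\asymp e^{-L_n}$.
\item \emph{Upper bound.} Split at $|x_0|=e^{-L_n+\varepsilon n^{2/3}}$. The small-$|x_0|$ part contributes at most $2\mu(\cdot)\lesssim e^{-L_n+\varepsilon n^{2/3}}$. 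For the remaining $x_0$, we show that the trajectory still reaches a fixed level $\delta$ before the final $o(n^{2/3})$ steps. Once $|x|\ge\delta$, the remaining tanh steps have arguments of order $n^2\delta$, so the final error is $e^{-cn^2}$, which is negligible.
\end{itemize}

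\textbf{Main obstacle.} The hard part is this upper bound, specifically controlling the transition out of the linear regime. On $[0,\delta]$ we have $\tanh a\ge a(1-a^2/3)$, so the gain is lost only by a factor $e^{O(1)}$ until $t_jx/(1-t_j)^2$ becomes of order one. After that point, the tanh term already pushes $x$ toward $\pm1$ geometrically. We would first attempt a comparison argument with a piecewise (linear, then saturated) map to make this quantitative.
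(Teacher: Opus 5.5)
Your one-step regime (monotone matching, the count $(P-N)_+$, CLT with uniform integrability) is in substance the paper's argument, as are the slope product with its integral asymptotic and the lower bound via domination by the linearized iteration (the paper phrases it as concavity). Your sign-preservation argument for $\phi^{\overline{\pi}_k}=\operatorname{sign}$ is a valid self-contained substitute for the paper's citation. One small slip: the correction to $\log(1+n^2/m^3)$ in $\log c_m$ comes from $-n/m^2-1/m$, not $O(1/m)$, and it sums to $O(n^{1/3})$ rather than $O(\log n)$. That is still $o(n^{2/3})$, so nothing breaks.

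\textbf{The gap is the upper bound on $\mathscr{E}_{n,1}$.} You assert that every trajectory with $|x_0|\geqslant e^{-L_n+\varepsilon n^{2/3}}$ reaches a fixed level $\delta$ before the last step. This is stated, not proved, and it is exactly the nonlinear transition you flag as the main obstacle. Without it you only have $\log\mathscr{E}_{n,1}\geqslant -L_n-O(1)$, not the matching upper bound. Also, once the trajectory reaches $\delta$ at $\Theta(n^{2/3})$ remaining steps, the tanh arguments there are of order $n^{2/3}\delta$, not $n^2\delta$. Only the final step has gain $n(n-1)$, and that is what actually produces $e^{-cn^2}$.

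\textbf{The missing idea is that you never need to cross the transition.} Since $f_n$ is increasing on $[0,\infty)$, which you already noted, it suffices to make $1-f_n$ tiny at a single threshold. You can choose that threshold so small that the iteration stays linear up to step $n-1$. The last step, whose gain $c_1=n(n-1)$ is by itself polynomially large, then does all the saturation.

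The paper takes $x_n=An^{2/3}e^{-L_n}$ with $A>2\pi/\sqrt3$. The argument runs as follows.
\begin{enumerate}
\item Your domination bound gives $f_{m,n}(x_n)\leqslant f'_{m,n}(0)\,x_n$.
\item The slope comparison $f'_{m,n}(0)\leqslant \frac{8(n-m)}{n^2}f'_{n-1,n}(0)$ follows from $c_q\geqslant (q-1)/q$ for $q\geqslant3$ and $c_2\geqslant n^2/16$.
\item Combining these with $\alpha_{m,n}\leqslant n^2/(n-m)^2$, every tanh argument $\alpha_{m,n}f_{m,n}(x_n)$ with $m\leqslant n-2$ is at most $8An^{-4/3}$.
\item The estimate $\tanh a\geqslant a(1-a^2/3)$ then costs a total factor $(1-64A^2n^{-8/3}/3)^{n-1}\to1$.
\item Hence $n(n-1)f_{n-1,n}(x_n)\geqslant\frac{A}{2}n^{2/3}$, and so $1-f_n(x_n)\leqslant 2e^{-An^{2/3}}$.
\end{enumerate}
This gives $\mathscr{E}_{n,1}\leqslant 2\varphi(0)x_n+4e^{-An^{2/3}}=O(n^{2/3}e^{-L_n})$. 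The price is a polynomial factor, invisible at logarithmic scale, and no $\varepsilon$-splitting or piecewise comparison map is needed.
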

Increasing the number of Euler steps thus causes the integration error to decay as
$\exp(-c n^{2/3})$, whereas increasing the OT batch size only yields a
$k^{-1/2}$-type decay.
In this specific setting, the influence of the NFE is considerably more pronounced, and this asymmetry remains clear even outside the asymptotic regime.
\cref{fig:binary-error-contours} displays the level sets of $\mathscr{E}_{n,\,k}$ in the
$(n,\,k)$ plane. It shows for example that to compensate for a reduction in NFE from $25$ to $10$, one needs to increase OT batch size $k$ from $1$ to $90$, well beyond the two-atom support size.

\begin{figure}[t]
\centering
\includegraphics[width=\linewidth]{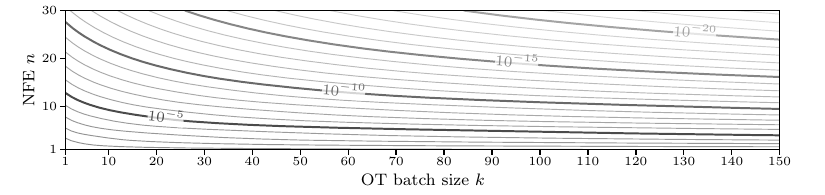}
\caption{Level sets of the Euler integration error $\mathscr{E}_{n,\,k}$ in the two-atom model of \cref{prop:binary-integration-asymptotics}.}
\label{fig:binary-error-contours}
\end{figure}

We observe a similar qualitative asymmetry in higher-dimensional semidiscrete examples with larger finite support. Although we were not able to prove it formally, we conjecture that in this general case, $\mathscr{E}_{n,\,k}$ decays as $1/n$ and $1/\sqrt{k}$.
\cref{fig:nfe-vs-batch-size} shows the (Monte Carlo) average integration error $\hat{\mathscr{E}}_{n,\,k}$ as a function of $n$ and $k$, for $\nu=\operatorname{Unif}\{v_{1},\ldots,v_{100}\}\subset[-1,\,1]^{20}$. For each pair $(n,\,k)$, we estimate the numerical integration error of the expected batch OT flow
by comparing a coarse $n$-step Euler discretization with a much finer reference discretization.
The error also appears to be non-monotonic in $k$ for small values,
reflecting the fact that the monotonicity of the expected transport cost~(\Cref{prop:expected-batch-ot-cost-convergence}) does not necessarily imply monotonicity of the induced numerical error.

\begin{figure}[t]
\centering
\includegraphics[width=\linewidth]{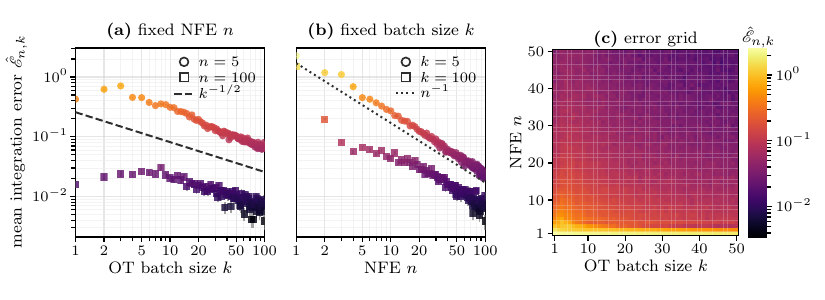}
\caption{In a Gaussian-to-discrete setting with \(100\) atoms in dimension \(20\),
the numerical integration error is consistent with a \(1/n\) scaling in the number of Euler steps
\(n\) and a \(1/\sqrt{k}\) scaling in the OT batch size \(k\).
Error bars show \(\pm1\) standard error over sampled initial conditions.}
\label{fig:nfe-vs-batch-size}
\end{figure}

\subsection{Image dataset experiments}

We now evaluate expected batch OT on two image datasets: CIFAR-10 \citep{cifar} and SVHN \citep{svhn}. These experiments probe the learned neural velocity field, rather than the exact target velocity studied above.

For each dataset, we define $\nu$ as the uniform empirical measure on the flattened images after normalization to $[-1,\,1]^{3072}$.
We keep the optimization minibatch size fixed and vary only the OT batch size
$k\in\{1,\,2,\,4,\ldots,8192\}$.
For each value of $k$, we train a separate flow model using exact OT couplings computed on batches of size $k$.
We then generate samples with explicit Euler using a prescribed NFE, and evaluate them with the Fréchet Inception Distance (FID, \citet{fid}). We stop at $k=8192$ because larger exact OT batches exceeded our computational budget.
For each dataset and each $k$, we report FID over six training seeds.

\cref{fig:image-cost-fid-vs-batch-size} (left) shows that
$\mathbb{E}[W_{2}^{2}(\widehat{\mu}_{k},\,\widehat{\nu}_{k})]$ decreases steadily on both datasets, with no visible saturation up to $k=8192$.
By \cref{prop:expected-batch-ot-plan-convergence}(1), the expected batch OT plan should not be regarded as effectively converged.
\cref{fig:image-cost-fid-vs-batch-size} (right) shows that increasing $k$ improves FID mainly when the integration scheme is coarse ($\mathrm{NFE} = 10$). At high NFE, the effect is weaker and can reverse: on SVHN, large $k$ slightly degrades the FID at $\mathrm{NFE}=50$ and $100$. This effect can also be seen qualitatively on generated samples: see~\cref{fig:cifar-nfe10,fig:cifar-nfe100,fig:svhn-nfe5,fig:svhn-nfe100} in~\cref{app:experimental-details}.

We recall that the effect of $k$ and NFE occurs at different stages: increasing $k$ affects
training time, while NFE determines the cost of sampling at inference. The improvement at low NFE in
\cref{fig:image-cost-fid-vs-batch-size} therefore suggests a possible
training-inference tradeoff: larger OT batches can improve low-NFE sampling,
at the price of additional training computation. On our hardware, exact OT
batches up to $k=1024$ add about $3\%$ training overhead, whereas
$k=8192$ increases training time by $55\%$
(\cref{tab:training-inference-wall-time}; see \cref{sec:comp_tradeoff} for
details).

\begin{figure}[t]
\centering
\includegraphics[width=\linewidth]{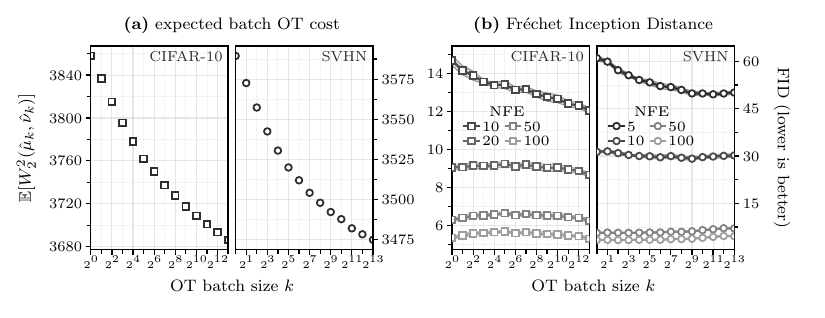}
\caption{\textbf{Image experiment}. The expected batch OT cost decreases steadily up to \(k=8192\) (left).
Increasing \(k\) improves FID mainly in the low-NFE regime, while the effect weakens or reverses at large NFE (right).
Error bands show \(\pm1\) empirical standard deviation over training seeds.}
\label{fig:image-cost-fid-vs-batch-size}
\end{figure}

Our experiment complements previous empirical studies.
\citet{pooladian,tong} fix the training batch size $B$ and compare independent pairing with minibatch OT only in the case $k=B$, thus they do not explore a grid of OT batch sizes $k$.
\citet{zhang} study much larger batch couplings with entropic OT, instead of exact batch OT.
A comparison between exact batch OT and entropic or sliced alternatives is left for future work.
Moreover, these works do not track the expected batch OT cost
$\mathbb{E}[W_{2}^{2}(\widehat{\mu}_{k},\,\widehat{\nu}_{k})]$ as a function of $k$.

\section{Conclusion}

We studied the expected batch OT plan as a population-level object underlying minibatch OT for flow matching, complementing~\citet{fatras,pooladian}. In the semidiscrete setting, we established quantitative convergence rates and proved that the induced velocity field is regular enough to define a well-posed flow. Taken together, these results put minibatch OT on firmer theoretical ground and sharpen its practical interpretation: increasing the OT batch size can meaningfully improve low-NFE generation, but it does not eliminate the need for accurate inference-time integration and the gains must be weighed against additional training-time OT cost.

\paragraph{Limitations and future directions.}
\begin{itemize}[leftmargin=*]
  \item Although the expected batch OT plan can be formulated in full generality, most of our quantitative results are obtained in the semidiscrete setting, which is the one directly relevant to generative modeling. Extending comparable convergence rates and structural properties to more general source-target configurations would further clarify the scope of the construction.
  \item Our theoretical analysis concerns the true velocity field induced by the coupling, whereas practical FM relies on an approximation learned by a neural network. Understanding how the regularity, rectifiability, and low-NFE benefits established here transfer to learned velocities is therefore a central next step, even if analyzing the exact target dynamics is a necessary starting point.
  \item We obtained a theoretical comparison between OT batch size and numerical integration in a tractable, one-dimensional two-atom model using the Euler method. Deriving general bounds for the behavior of $\mathscr{E}_{n,\,k}$ as a function of both $n$ and $k$, and for other numerical schemes, appears much more challenging and remains open.
  \item Expected batch OT is only one possible coupling surrogate between independent pairing and exact OT. Similar questions about rectifiability and interaction with numerical integration could be addressed for other couplings used in practice, such as entropic and sliced variants.
\end{itemize}

\section*{Acknowledgements}
This research was funded, in part, by the Agence nationale de la recherche (ANR), through the PEPR PDE-AI project (ANR-23-PEIA-0004).

\clearpage
\bibliographystyle{apalike}
\bibliography{references}

\begin{thebibliography}{}

\bibitem[Albergo et~al., 2024]{albergo2024stochastic}
Albergo, M.~S., Goldstein, M., Boffi, N.~M., Ranganath, R., and Vanden-Eijnden, E. (2024).
\newblock Stochastic interpolants with data-dependent couplings.
\newblock In {\em International Conference on Machine Learning}, pages 921--937. PMLR.

\bibitem[Ambrosio et~al., 2005]{ambrosio}
Ambrosio, L., Gigli, N., and Savar{\'e}, G. (2005).
\newblock {\em Gradient flows: in metric spaces and in the space of probability measures}.
\newblock Springer.

\bibitem[Bansil and Kitagawa, 2022]{bansil2022}
Bansil, M. and Kitagawa, J. (2022).
\newblock Quantitative stability in the geometry of semi-discrete optimal transport.
\newblock {\em International Mathematics Research Notices}, 2022(10):7354--7389.

\bibitem[Bertrand et~al., 2025]{bertrand}
Bertrand, Q., Gagneux, A., Massias, M., and Emonet, R. (2025).
\newblock On the closed-form of flow matching: Generalization does not arise from target stochasticity.
\newblock In {\em NeurIPS 2025}.

\bibitem[Bobkov and Ledoux, 2019]{bobkovledoux}
Bobkov, S. and Ledoux, M. (2019).
\newblock {\em One-dimensional empirical measures, order statistics, and {Kantorovich} transport distances}, volume 261.
\newblock American Mathematical Society.

\bibitem[Boucheron and Massart, 2011]{boucheron}
Boucheron, S. and Massart, P. (2011).
\newblock A high-dimensional {Wilks} phenomenon.
\newblock {\em Probability theory and related fields}, 150(3):405--433.

\bibitem[Chen et~al., 2018]{torchdiffeq}
Chen, R. T.~Q., Rubanova, Y., Bettencourt, J., and Duvenaud, D. (2018).
\newblock Neural ordinary differential equations.
\newblock {\em Advances in Neural Information Processing Systems}.

\bibitem[Del~Barrio et~al., 2024]{delbarrio}
Del~Barrio, E., Gonz{\'a}lez~Sanz, A., and Loubes, J.-M. (2024).
\newblock Central limit theorems for semi-discrete {Wasserstein} distances.
\newblock {\em Bernoulli}, 30(1):554--580.

\bibitem[Dodson et~al., 2026]{dodson2026two}
Dodson, N., Gao, X., Wang, Q., Wang, Y., and Wan, Z. (2026).
\newblock Two calm ends and the wild middle: A geometric picture of memorization in diffusion models.
\newblock {\em arXiv preprint arXiv:2602.17846}.

\bibitem[Fatras et~al., 2021a]{fatras21a}
Fatras, K., Sejourne, T., Flamary, R., and Courty, N. (2021a).
\newblock Unbalanced minibatch optimal transport; applications to domain adaptation.
\newblock In Meila, M. and Zhang, T., editors, {\em Proceedings of the 38th International Conference on Machine Learning}, volume 139 of {\em Proceedings of Machine Learning Research}, pages 3186--3197. PMLR.

\bibitem[Fatras et~al., 2020]{fatras20a}
Fatras, K., Zine, Y., Flamary, R., Gribonval, R., and Courty, N. (2020).
\newblock Learning with minibatch {Wasserstein}: asymptotic and gradient properties.
\newblock In Chiappa, S. and Calandra, R., editors, {\em Proceedings of the Twenty Third International Conference on Artificial Intelligence and Statistics}, volume 108 of {\em Proceedings of Machine Learning Research}, pages 2131--2141. PMLR.

\bibitem[Fatras et~al., 2021b]{fatras}
Fatras, K., Zine, Y., Majewski, S., Flamary, R., Gribonval, R., and Courty, N. (2021b).
\newblock Minibatch optimal transport distances; analysis and applications.
\newblock {\em arXiv preprint arXiv:2101.01792}.

\bibitem[Flamary et~al., 2021]{pot}
Flamary, R., Courty, N., Gramfort, A., Alaya, M.~Z., Boisbunon, A., Chambon, S., Chapel, L., Corenflos, A., Fatras, K., Fournier, N., Gautheron, L., Gayraud, N.~T., Janati, H., Rakotomamonjy, A., Redko, I., Rolet, A., Schutz, A., Seguy, V., Sutherland, D.~J., Tavenard, R., Tong, A., and Vayer, T. (2021).
\newblock {POT}: {Python Optimal Transport}.
\newblock {\em Journal of Machine Learning Research}, 22(78):1--8.

\bibitem[Fournier and Guillin, 2015]{fournier2015}
Fournier, N. and Guillin, A. (2015).
\newblock On the rate of convergence in {Wasserstein} distance of the empirical measure.
\newblock {\em Probability Theory and Related Fields}, 162(3):707--738.

\bibitem[Harris et~al., 2020]{numpy}
Harris, C.~R., Millman, K.~J., van~der Walt, S.~J., Gommers, R., Virtanen, P., Cournapeau, D., Wieser, E., Taylor, J., Berg, S., Smith, N.~J., Kern, R., Picus, M., Hoyer, S., van Kerkwijk, M.~H., Brett, M., Haldane, A., del R{\'{i}}o, J.~F., Wiebe, M., Peterson, P., G{\'{e}}rard-Marchant, P., Sheppard, K., Reddy, T., Weckesser, W., Abbasi, H., Gohlke, C., and Oliphant, T.~E. (2020).
\newblock Array programming with {NumPy}.
\newblock {\em Nature}, 585(7825):357--362.

\bibitem[Hertrich et~al., 2025]{hertrich}
Hertrich, J., Chambolle, A., and Delon, J. (2025).
\newblock On the relation between rectified flows and optimal transport.
\newblock {\em arXiv preprint arXiv:2505.19712}.

\bibitem[Heusel et~al., 2017]{fid}
Heusel, M., Ramsauer, H., Unterthiner, T., Nessler, B., and Hochreiter, S. (2017).
\newblock {GANs} trained by a two time-scale update rule converge to a local {Nash} equilibrium.
\newblock {\em Advances in Neural Information Processing Systems}, 30.

\bibitem[Hundrieser et~al., 2024]{hundrieser}
Hundrieser, S., Staudt, T., and Munk, A. (2024).
\newblock Empirical optimal transport between different measures adapts to lower complexity.
\newblock {\em Annales de l'Institut Henri Poincaré, Probabilités et Statistiques}, 60(2):824--846.

\bibitem[Kitagawa et~al., 2016]{kitagawa2016}
Kitagawa, J., M{\'e}rigot, Q., and Thibert, B. (2016).
\newblock Convergence of a {Newton} algorithm for semi-discrete optimal transport.
\newblock {\em Journal of the European Mathematical Society}.

\bibitem[Klatt et~al., 2022]{klatt2022}
Klatt, M., Munk, A., and Zemel, Y. (2022).
\newblock Limit laws for empirical optimal solutions in random linear programs.
\newblock {\em Annals of Operations Research}, 315(1):251--278.

\bibitem[Krizhevsky et~al., 2009]{cifar}
Krizhevsky, A., Hinton, G., et~al. (2009).
\newblock Learning multiple layers of features from tiny images.
\newblock Technical report, University of Toronto, Toronto, ON, Canada.

\bibitem[Lipman et~al., 2023]{lipman}
Lipman, Y., Chen, R.~T., Ben-Hamu, H., Nickel, M., and Le, M. (2023).
\newblock Flow matching for generative modeling.
\newblock In {\em 11th International Conference on Learning Representations, ICLR 2023}.

\bibitem[Liu et~al., 2022]{liu2022flow}
Liu, X., Gong, C., and Liu, Q. (2022).
\newblock Flow straight and fast: Learning to generate and transfer data with rectified flow.
\newblock {\em arXiv preprint arXiv:2209.03003}.

\bibitem[Mousavi-Hosseini et~al., 2025]{mousavi}
Mousavi-Hosseini, A., Zhang, S.~Y., Klein, M., and Cuturi, M. (2025).
\newblock Flow matching with semidiscrete couplings.
\newblock {\em arXiv preprint arXiv:2509.25519}.

\bibitem[Netzer et~al., 2011]{svhn}
Netzer, Y., Wang, T., Coates, A., Bissacco, A., Wu, B., Ng, A.~Y., et~al. (2011).
\newblock Reading digits in natural images with unsupervised feature learning.
\newblock In {\em NIPS workshop on deep learning and unsupervised feature learning}, page~4. Granada.

\bibitem[Parmar et~al., 2022]{cleanfid}
Parmar, G., Zhang, R., and Zhu, J.-Y. (2022).
\newblock On aliased resizing and surprising subtleties in {GAN} evaluation.
\newblock In {\em CVPR}.

\bibitem[Paszke et~al., 2019]{torchvision}
Paszke, A., Gross, S., Massa, F., Lerer, A., Bradbury, J., Chanan, G., Killeen, T., Lin, Z., Gimelshein, N., Antiga, L., Desmaison, A., K\"{o}pf, A., Yang, E., DeVito, Z., Raison, M., Tejani, A., Chilamkurthy, S., Steiner, B., Fang, L., Bai, J., and Chintala, S. (2019).
\newblock {PyTorch}: an imperative style, high-performance deep learning library.
\newblock In {\em Proceedings of the 33rd International Conference on Neural Information Processing Systems}, Red Hook, NY, USA. Curran Associates Inc.

\bibitem[Peyr{\'e} and Cuturi, 2019]{peyre}
Peyr{\'e}, G. and Cuturi, M. (2019).
\newblock {\em Computational optimal transport: With applications to data science}.
\newblock Now Foundations and Trends.

\bibitem[Pierret et~al., 2026]{pierret}
Pierret, E., Tosel, V., Delon, J., and Newson, A. (2026).
\newblock {Flow Matching for Applied Mathematicians}.
\newblock {\em HAL preprint hal-05538982}.

\bibitem[Poli et~al., 2021]{torchdyn}
Poli, M., Massaroli, S., Yamashita, A., Asama, H., Park, J., and Ermon, S. (2021).
\newblock {TorchDyn}: Implicit models and neural numerical methods in {PyTorch}.
\newblock \url{https://github.com/DiffEqML/torchdyn}.

\bibitem[Pooladian et~al., 2023a]{pooladian}
Pooladian, A.-A., Ben-Hamu, H., Domingo-Enrich, C., Amos, B., Lipman, Y., and Chen, R.~T. (2023a).
\newblock Multisample flow matching: Straightening flows with minibatch couplings.
\newblock {\em ICML 2023}.

\bibitem[Pooladian et~al., 2023b]{minimax}
Pooladian, A.-A., Divol, V., and Niles-Weed, J. (2023b).
\newblock Minimax estimation of discontinuous optimal transport maps: The semi-discrete case.
\newblock In Krause, A., Brunskill, E., Cho, K., Engelhardt, B., Sabato, S., and Scarlett, J., editors, {\em Proceedings of the 40th International Conference on Machine Learning}, volume 202 of {\em Proceedings of Machine Learning Research}, pages 28128--28150. PMLR.

\bibitem[Santambrogio, 2015]{santambrogio}
Santambrogio, F. (2015).
\newblock {\em Optimal Transport for Applied Mathematicians: Calculus of Variations, PDEs, and Modeling}.
\newblock Springer International Publishing.

\bibitem[Staudt and Hundrieser, 2025]{staudt-unbounded}
Staudt, T. and Hundrieser, S. (2025).
\newblock Convergence of empirical optimal transport in unbounded settings.
\newblock {\em Bernoulli}, 31(3):1929--1954.

\bibitem[Tong et~al., 2024]{tong}
Tong, A., Fatras, K., Malkin, N., Huguet, G., Zhang, Y., Rector-Brooks, J., Wolf, G., and Bengio, Y. (2024).
\newblock Improving and generalizing flow-based generative models with minibatch optimal transport.
\newblock {\em Transactions on Machine Learning Research}.

\bibitem[van~der Vaart and Wellner, 1996]{vanderVaart1996}
van~der Vaart, A. and Wellner, J. (1996).
\newblock {\em Weak Convergence and Empirical Processes. With Applications to Statistics}.
\newblock New York: Springer.

\bibitem[Virtanen et~al., 2020]{scipy}
Virtanen, P., Gommers, R., Oliphant, T.~E., Haberland, M., Reddy, T., Cournapeau, D., Burovski, E., Peterson, P., Weckesser, W., Bright, J., {van der Walt}, S.~J., Brett, M., Wilson, J., Millman, K.~J., Mayorov, N., Nelson, A. R.~J., Jones, E., Kern, R., Larson, E., Carey, C.~J., Polat, {\.I}., Feng, Y., Moore, E.~W., {VanderPlas}, J., Laxalde, D., Perktold, J., Cimrman, R., Henriksen, I., Quintero, E.~A., Harris, C.~R., Archibald, A.~M., Ribeiro, A.~H., Pedregosa, F., {van Mulbregt}, P., and {SciPy 1.0 Contributors} (2020).
\newblock {{SciPy} 1.0: Fundamental Algorithms for Scientific Computing in Python}.
\newblock {\em Nature Methods}, 17:261--272.

\bibitem[Wan et~al., 2025]{wan}
Wan, Z., Wang, Q., Mishne, G., and Wang, Y. (2025).
\newblock Elucidating flow matching {ODE} dynamics via data geometry and denoisers.
\newblock In {\em Forty-second International Conference on Machine Learning}.

\bibitem[Zhang et~al., 2025]{zhang}
Zhang, S., Mousavi-Hosseini, A., Klein, M., and Cuturi, M. (2025).
\newblock On fitting flow models with large {Sinkhorn} couplings.
\newblock {\em arXiv preprint arXiv:2506.05526}.

\end{thebibliography}

\newpage
\appendix

\section[Proofs for Section 3]{Proofs for \cref{sec:expected-batch-ot-plan}}

This appendix follows the structure of \cref{sec:expected-batch-ot-plan}. 
We first justify the construction of \(\overline{\pi}_{k}\) and prove its elementary properties (\cref{app:expected-batch-ot-construction}). 
We then analyze its transport cost through the empirical OT cost \(\mathbb{E}[W_2^2(\widehat\mu_k,\widehat\nu_k)]\) (\cref{app:expected-batch-ot-cost-convergence,app:expected-batch-ot-cost-rates}). 
The last part proves convergence of \(\overline{\pi}_{k}\) as a plan and gives the cost-to-plan estimates used in \cref{prop:expected-batch-ot-plan-convergence} (\cref{app:expected-batch-ot-plan-convergence}).

Throughout, if $\mathcal{Y} \subset \mathbb{R}^d$ is a finite set containing at least two points, we define
\[
\operatorname{sep}(\mathcal{Y}) =
\min_{\substack{y,\,y'\in\mathcal{Y}\\ y\neq y'}}
\lVert y-y'\rVert\,,
\quad \text{and} \quad
\mathrm{diam}(\mathcal{Y})=\max_{y,y'\in\mathcal{Y}}\lVert y-y'\rVert. 
\]

\subsection[Construction and elementary properties]{Construction and elementary properties (\cref{sec:expected-batch-ot-definition})}\label{app:expected-batch-ot-construction}

We detail the construction of the expected batch OT plan
\(\overline{\pi}_{k}\) and record its elementary properties.

\subsubsection{Empirical OT solver}\label{app:empirical-ot-measurable-solver}

An empirical OT problem may have several solutions, and the solver
used in the implementation of batch OT typically selects one. To model
this, we fix a measurable empirical OT solver, and define the corresponding expected
batch OT plan as the average of these empirical OT plans over random batches.
We then show that the solver-dependence disappears when $\mu$ or $\nu$
is absolutely continuous.

For $k\in\mathbb{N}^{*}$, write $E_{k}=(\mathbb{R}^{d})^{k}\times(\mathbb{R}^{d})^{k}$,
$F=\mathbb{R}^{d}\times\mathbb{R}^{d}$ and $\alpha_{k}=\mu^{\otimes k}\otimes\nu^{\otimes k}$.
For $(\mathbf{x}_{k},\,\mathbf{y}_{k})\in E_{k}$, recall that $\widehat{\mu}_{\mathbf{x}_{k}}=\frac{1}{k}\sum_{i=1}^{k}\updelta_{x_{i}}$
and $\widehat{\nu}_{\mathbf{y}_{k}}=\frac{1}{k}\sum_{j=1}^{k}\updelta_{y_{j}}$.
We consider a measurable solver  
\[
\widehat{\sigma}_k \colon
\begin{array}[t]{c@{\;}c@{\;}c}
  E_k
    & \longrightarrow
    & \mathfrak{S}_k, \\
  (\mathbf{x}_k,\,\mathbf{y}_k)
    & \longmapsto
    & \widehat{\sigma}_{\mathbf{x}_k,\,\mathbf{y}_k},
\end{array}
\]
such that, for every \((\mathbf{x}_{k},\,\mathbf{y}_{k})\in E_{k}\),
\[
\widehat{\sigma}_{\mathbf{x}_{k},\,\mathbf{y}_{k}}
\in
\operatorname*{argmin}_{\sigma\in\mathfrak{S}_{k}}
C_{\sigma}(\mathbf{x}_{k},\,\mathbf{y}_{k}) = \frac{1}{k}\sum_{i=1}^{k}
\lVert x_{i}-y_{\sigma(i)}\rVert^{2} .
\]
Let $\widehat{\pi}_{\mathbf{x}_{k},\,\mathbf{y}_{k}}=\frac{1}{k}\sum_{i=1}^{k}\updelta_{(x_{i},\,y_{\widehat{\sigma}_{\mathbf{x}_{k},\,\mathbf{y}_{k}}(i)})}$
be the corresponding OT plan between $\widehat{\mu}_{\mathbf{x}_{k}}$ and $\widehat{\nu}_{\mathbf{y}_{k}}$.

\paragraph*{Existence of a measurable solver.}

Fix a total order $\preceq_{k}$ on the finite
set $\mathfrak{S}_{k}$, and choose the $\preceq_{k}$-smallest minimizer
$\widehat{\sigma}_{k}^{\preceq}$. This map is Borel because each $C_\sigma$ is continuous, and for every
$\sigma\in\mathfrak{S}_{k}$,
\[
\{\widehat{\sigma}_{k}^{\preceq}=\sigma\}=\bigcap_{\sigma'\prec_{k}\sigma}\{C_{\sigma}<C_{\sigma'}\}\cap\bigcap_{\sigma'\succeq_{k}\sigma}\{C_{\sigma}\leqslant C_{\sigma'}\}.
\]

\paragraph*{Uniqueness for absolutely continuous measures.}
Generic uniqueness of empirical OT plans when both random supports are
absolutely continuous is proved in
\citet[Proposition~6.6]{klatt2022}. The same genericity argument gives the
one-sided version needed here. Suppose that \(\mu\) is absolutely continuous
and condition on \(\mathbf y_k=(y_1,\ldots,y_k)\). For two permutations
\(\sigma,\tau\in\mathfrak S_k\), a tie in quadratic cost is equivalent to
\[
  C_{\sigma}(\mathbf{x}_{k},\,\mathbf{y}_{k})-C_{\tau}(\mathbf{x}_{k},\,\mathbf{y}_{k})
  =
  -\frac{2}{k}\sum_{i=1}^k
  \langle x_i,\,y_{\sigma(i)}-y_{\tau(i)}\rangle
  = 0 .
\]
If the two permutations induce distinct empirical plans, this is a nontrivial
hyperplane in \((\mathbb R^d)^k\). It therefore has
\(\mu^{\otimes k}\)-measure zero. A finite union over pairs
\((\sigma,\,\tau)\) shows that, almost surely, all optimal permutations induce
the same empirical plan. Since every coupling between the labelled uniform
empirical measures is a convex combination of permutation plans, the empirical
OT plan is almost surely unique as a probability measure. The case where
\(\nu\) is absolutely continuous is symmetric.

\subsubsection{Well-posedness of the expected batch OT plan}

We can now define the expected batch OT plan using the measurable OT solver
introduced in \cref{app:empirical-ot-measurable-solver}.

\begin{proof}[Justification for \cref{def:expected-batch-ot-plan}]
The expected batch OT plan is defined by
averaging the solver outputs $\widehat{\pi}_{\mathbf{x}_{k},\,\mathbf{y}_{k}}$
over random batches. We show that this is well-posed.
For $z=(\mathbf{x}_{k},\,\mathbf{y}_{k})\in E_{k}$, let $\eta_{z}=\widehat{\pi}_{\mathbf{x}_{k},\,\mathbf{y}_{k}}\in\mathcal{P}(F)$.
We first show that $(\mathbf{x}_{k},\,\mathbf{y}_{k})\mapsto\widehat{\pi}_{\mathbf{x}_{k},\,\mathbf{y}_{k}}$
is a Borel family of probability measures. For Borel $A\subseteq F$,
\[
\widehat{\pi}_{\mathbf{x}_{k},\,\mathbf{y}_{k}}(A)=\frac{1}{k}\sum_{i=1}^{k}\sum_{\sigma\in\mathfrak{S}_{k}}\mathbbm{1}_{A}\bigl(x_{i},\,y_{\sigma(i)}\bigr)\,\mathbbm{1}_{\{\sigma=\widehat{\sigma}_{\mathbf{x}_{k},\,\mathbf{y}_{k}}\}}.
\]
Since $\widehat{\sigma}_{k}$ is Borel by assumption, the right-hand side
is a Borel function of $(\mathbf{x}_{k},\,\mathbf{y}_{k})$. Therefore,
by \citep[Section~5.3]{ambrosio}, there exists a unique probability
measure $\widetilde{\eta}_{k}\in\mathcal{P}(E_{k}\times F)$ such
that, for every bounded Borel map $f:E_{k}\times F\to\mathbb{R}$,
\[
\int_{E_{k}\times F}f(z,\,z')\,\mathrm{d}\widetilde{\eta}_{k}(z,\,z')=\int_{E_{k}}\left(\int_{F}f(z,\,z')\,\mathrm{d}\eta_{z}(z')\right)\,\mathrm{d}(\mu^{\otimes k}\otimes\nu^{\otimes k})(z).
\]
Let $\operatorname{pr}_{F}:E_{k}\times F\to F$ be the projection
onto the second factor, and define $\overline{\pi}_{k}=(\operatorname{pr}_{F})_{\sharp}\widetilde{\eta}_{k}$.
For bounded Borel $g:F\to\mathbb{R}$, applying the change-of-variables
formula for push-forward measures and then the identity above, we obtain
\begin{align*}
\int_{F}g(z')\,\mathrm{d}\overline{\pi}_{k}(z') & =\int_{E_{k}\times F}g(z')\,\mathrm{d}\widetilde{\eta}_{k}(z,\,z')\\
 & =\int_{E_{k}}\left(\int_{F}g(z')\,\mathrm{d}\eta_{z}(z')\right)\,\mathrm{d}(\mu^{\otimes k}\otimes\nu^{\otimes k})(z)\\
 & =\int_{(\mathbb{R}^{d})^{k}\times(\mathbb{R}^{d})^{k}}\left(\int_{\mathbb{R}^{d}\times\mathbb{R}^{d}}g\,\mathrm{d}\widehat{\pi}_{\mathbf{x}_{k},\,\mathbf{y}_{k}}\right)\,\mathrm{d}(\mu^{\otimes k}\otimes\nu^{\otimes k})(\mathbf{x}_{k},\,\mathbf{y}_{k}).
\end{align*}

We now show that $\overline{\pi}_k$ is the law of a random pair produced by the minibatch OT procedure.
If we draw $(\mathbf{X}_{k},\,\mathbf{Y}_{k})\sim\mu^{\otimes k}\otimes\nu^{\otimes k}$
and let $I\sim\operatorname{Unif}\{1,\ldots,k\}$ be independent of
$(\mathbf{X}_{k},\,\mathbf{Y}_{k})$, then $(X_{I},\,Y_{\widehat{\sigma}_{\mathbf{X}_{k},\,\mathbf{Y}_{k}}(I)})\sim\overline{\pi}_{k}$.
Indeed, for every bounded Borel function $g:\mathbb{R}^{d}\times\mathbb{R}^{d}\to\mathbb{R}$,
conditioning on $I$ gives
\begin{align*}
\mathbb{E}[g(X_{I},\,Y_{\widehat{\sigma}_{\mathbf{X}_{k},\,\mathbf{Y}_{k}}(I)})] & =\mathbb{E}\left[\frac{1}{k}\sum_{i=1}^{k}g\left(X_{i},\,Y_{\widehat{\sigma}_{\mathbf{X}_{k},\,\mathbf{Y}_{k}}(i)}\right)\right]\\
 & =\mathbb{E}\left[\int_{\mathbb{R}^{d}\times\mathbb{R}^{d}}g(x,\,y)\,\mathrm{d}\widehat{\pi}_{\mathbf{X}_{k},\,\mathbf{Y}_{k}}(x,\,y)\right]\\
 & =\int_{\mathbb{R}^{d}\times\mathbb{R}^{d}}g(x,\,y)\,\mathrm{d}\overline{\pi}_{k}(x,\,y).
\end{align*}
\end{proof}

\subsubsection{Elementary properties of the expected batch OT plan}\label{app:expected-batch-ot-elementary-properties}

We now show that $\overline\pi_k$ is an admissible coupling between $\mu$ and $\nu$ (cf.~\cite[Lemma~4.1]{pooladian}), 
and that $\overline\pi_1 = \mu\otimes\nu$.

\paragraph{Admissibility.} For any $k\in\mathbb{N}^{*}$, $\overline{\pi}_{k}\in\Pi(\mu,\,\nu)$.
Indeed, let $f:\mathbb{R}^{d}\to\mathbb{R}$ be bounded and Borel,
and let $\operatorname{pr}_{1}:\mathbb{R}^{d}\times\mathbb{R}^{d}\to\mathbb{R}^{d}$
be defined by $\operatorname{pr}_{1}(x,\,y)=x$. Then
\begin{align}
\int_{\mathbb{R}^{d}}f(x)\,\mathrm{d}(\operatorname{pr}_{1})_{\sharp}\overline{\pi}_{k}(x) & =\int_{\mathbb{R}^{d}\times\mathbb{R}^{d}}f(x)\,\mathrm{d}\overline{\pi}_{k}(x,\,y)\nonumber \\
 & =\int_{(\mathbb{R}^{d})^{k}\times(\mathbb{R}^{d})^{k}}\left(\int_{\mathbb{R}^{d}\times\mathbb{R}^{d}}f(x)\,\mathrm{d}\widehat{\pi}_{\mathbf{x}_{k},\,\mathbf{y}_{k}}(x,\,y)\right)\,\mathrm{d}\alpha_{k}(\mathbf{x}_{k},\,\mathbf{y}_{k})\nonumber \\
 & =\int_{(\mathbb{R}^{d})^{k}\times(\mathbb{R}^{d})^{k}}\left(\int_{\mathbb{R}^{d}}f(x)\,\mathrm{d}\widehat{\mu}_{\mathbf{x}_{k}}(x)\right)\,\mathrm{d}\alpha_{k}(\mathbf{x}_{k},\,\mathbf{y}_{k})\label{eq:expected-batch-ot-first-marginal}\\
 & =\int_{(\mathbb{R}^{d})^{k}}\frac{1}{k}\sum_{i=1}^{k}f(x_{i})\,\mathrm{d}\mu^{\otimes k}(\mathbf{x}_{k})\nonumber \\
 & =\int_{\mathbb{R}^{d}}f(x)\,\mathrm{d}\mu(x).\nonumber
\end{align}
Equation~\eqref{eq:expected-batch-ot-first-marginal} follows from the fact that $\widehat{\pi}_{\mathbf{x}_{k},\,\mathbf{y}_{k}}$
is an OT plan between $\widehat{\mu}_{\mathbf{x}_{k}}$ and $\widehat{\nu}_{\mathbf{y}_{k}}$,
thus its first marginal is $\widehat{\mu}_{\mathbf{x}_{k}}$. We conclude that $(\operatorname{pr}_{1})_{\sharp}\overline{\pi}_{k}=\mu$.
The same argument can be adapted to show that $(\operatorname{pr}_{2})_{\sharp}\overline{\pi}_{k}=\nu$,
with $\operatorname{pr}_{2}(x,\,y)=y$. Therefore, $\overline{\pi}_{k}\in\Pi(\mu,\,\nu)$.

\paragraph{Case $k=1$.}
By definition, $\widehat{\pi}_{\mathbf{x}_{1},\,\mathbf{y}_{1}}=\updelta_{(x_{1},\,y_{1})}$.
For every Borel set $A\subseteq\mathbb{R}^{d}\times\mathbb{R}^{d}$,
using $g=\mathbbm{1}_{A}$ in \cref{def:expected-batch-ot-plan},
\begin{align*}
\overline{\pi}_{1}(A) & =\int_{\mathbb{R}^{d}\times\mathbb{R}^{d}}\mathbbm{1}_{A}(x,\,y)\,\mathrm{d}\overline{\pi}_{1}(x,\,y)\\
 & =\int_{\mathbb{R}^{d}\times\mathbb{R}^{d}}\left(\int_{\mathbb{R}^{d}\times\mathbb{R}^{d}}\mathbbm{1}_{A}\,\mathrm{d}\widehat{\pi}_{\mathbf{x}_{1},\,\mathbf{y}_{1}}\right)\,\mathrm{d}(\mu\otimes\nu)(\mathbf{x}_{1},\,\mathbf{y}_{1})\\
 & =\int_{\mathbb{R}^{d}\times\mathbb{R}^{d}}\widehat{\pi}_{\mathbf{x}_{1},\,\mathbf{y}_{1}}(A)\,\mathrm{d}(\mu\otimes\nu)(x_{1},\,y_{1})\\
 & =\int_{\mathbb{R}^{d}\times\mathbb{R}^{d}}\mathbbm{1}_{A}(x_{1},\,y_{1})\,\mathrm{d}(\mu\otimes\nu)(x_{1},\,y_{1})\\
 & =(\mu\otimes\nu)(A).
\end{align*}
Therefore, $\overline{\pi}_{1}=\mu\otimes\nu$.

\subsection{Properties of the expected batch OT cost}
\label{app:expected-batch-ot-cost-convergence}

Next, we study the expected batch OT cost and prove its equivalent formulation in \cref{prop:expected-batch-ot-cost-identity}. We then establish that this cost is nonincreasing in $k$ and converges to the OT cost (\cref{prop:expected-batch-ot-cost-convergence}(1)).

\begin{proof}[Proof of \cref{prop:expected-batch-ot-cost-identity}]
Let $k\in\mathbb{N}^*$. For $n\in\mathbb N^*$, define the truncated cost
function $g_{n}:(x,y)\mapsto\min(\lVert x-y\rVert^{2},\,n)$ for $(x,y) \in \mathbb{R}^d \times \mathbb{R}^d$. Applying \cref{def:expected-batch-ot-plan} with $g_n$ gives,
\begin{equation}
\begin{aligned}
\int_{\mathbb{R}^{d}\times\mathbb{R}^{d}}g_{n}\,\mathrm{d}\overline{\pi}_{k}
&=\int_{(\mathbb{R}^{d})^{k}\times(\mathbb{R}^{d})^{k}}
\left(\int_{\mathbb{R}^{d}\times\mathbb{R}^{d}}g_{n}\,
\mathrm{d}\widehat{\pi}_{\mathbf{x}_{k},\,\mathbf{y}_{k}}\right)
\,\mathrm{d}(\mu^{\otimes k}\otimes\nu^{\otimes k})
(\mathbf{x}_{k},\,\mathbf{y}_{k}).
\end{aligned}
\label{eq:cost-identity-before-monotone-convergence}
\end{equation}
Since $g_{n+1}\geqslant g_{n}\geqslant0$ and $\lim_{n\to+\infty}g_{n}(x,\,y)=\lVert x-y\rVert^{2}$,
by the monotone convergence theorem,
\begin{equation}
\lim_{n \to +\infty} \int_{\mathbb{R}^{d}\times\mathbb{R}^{d}}g_{n}\,\mathrm{d}\overline{\pi}_{k}
= 
\int_{\mathbb{R}^{d}\times\mathbb{R}^{d}}
\lVert x-y\rVert^{2}\,\mathrm{d}\overline{\pi}_{k}.
\label{eq:cost-identity-monotone-convergence-left}
\end{equation}
Moreover, for any $(\mathbf{x}_{k},\,\mathbf{y}_{k}) \in
(\mathbb{R}^{d})^{k}\times(\mathbb{R}^{d})^{k}$,
\begin{align*}
\lim_{n \to +\infty} \int_{\mathbb{R}^{d}\times\mathbb{R}^{d}}g_{n}\,
\mathrm{d}\widehat{\pi}_{\mathbf{x}_{k},\,\mathbf{y}_{k}}
&= \lim_{n \to +\infty} \frac{1}{k}\sum_{i=1}^{k}
g_{n}(x_{i},\,y_{\widehat{\sigma}_{\mathbf{x}_{k},\,\mathbf{y}_{k}}(i)}) \\
&= \frac{1}{k}\sum_{i=1}^{k}
\lVert x_{i}-y_{\widehat{\sigma}_{\mathbf{x}_{k},\,\mathbf{y}_{k}}(i)}\rVert^{2}
=W_{2}^{2}(\widehat{\mu}_{\mathbf{x}_{k}},\,\widehat{\nu}_{\mathbf{y}_{k}}).
\end{align*}
Since $(\int_{\mathbb{R}^{d}\times\mathbb{R}^{d}}g_{n}\,
\mathrm{d}\widehat{\pi}_{\mathbf{x}_{k},\,\mathbf{y}_{k}})_{n \in \mathbb{N}^*}$ is nondecreasing, we apply
the monotone convergence theorem once more on the right-hand side of \eqref{eq:cost-identity-before-monotone-convergence}.
Combining with \eqref{eq:cost-identity-monotone-convergence-left}, we then obtain
\begin{align*}
\int_{\mathbb{R}^{d}\times\mathbb{R}^{d}}\lVert x-y\rVert^{2}\,
\mathrm{d}\overline{\pi}_{k}(x,\,y)
&=\int_{(\mathbb{R}^{d})^{k}\times(\mathbb{R}^{d})^{k}}
W_{2}^{2}(\widehat{\mu}_{\mathbf{x}_{k}},\,\widehat{\nu}_{\mathbf{y}_{k}})
\,\mathrm{d}(\mu^{\otimes k}\otimes\nu^{\otimes k})
(\mathbf{x}_{k},\,\mathbf{y}_{k})\\
&=\mathbb{E}_{(\mathbf{X}_{k},\,\mathbf{Y}_{k})
\sim\mu^{\otimes k}\otimes\nu^{\otimes k}}
[W_{2}^{2}(\widehat{\mu}_{k},\,\widehat{\nu}_{k})].
\end{align*}
\end{proof}

\begin{proof}[Proof of \cref{prop:expected-batch-ot-cost-convergence}(1)]
\emph{Monotonicity.}
We show that
\(
\mathbb{E}[W_{2}^{2}(\widehat{\mu}_{k+1},\,\widehat{\nu}_{k+1})]\leqslant\mathbb{E}[W_{2}^{2}(\widehat{\mu}_{k},\,\widehat{\nu}_{k})]
\)
for any $k\in\mathbb{N}^{*}$.
This result was first stated in \citep[Appendix~D.8]{pooladian} with an incomplete proof.

Let independent batches $\mathbf{X}_{k+1}=(X_1,\ldots,X_{k+1})\sim\mu^{\otimes{k+1}}$
and $\mathbf{Y}_{k+1}=(Y_1,\ldots,Y_{k+1})\sim\nu^{\otimes{k+1}}$. For each $s\in\{1,\ldots,k+1\}$,
define the leave-one-out batches $\mathbf{X}_{-s}=(X_i)_{i\ne s}$ and 
$\mathbf{Y}_{-s}=(Y_j)_{j\ne s}$. Denote by 
$\widehat{\mu}_{-s}=\widehat{\mu}_{\mathbf{X}_{-s}}$ and $\widehat{\nu}_{-s}=\widehat{\nu}_{\mathbf{Y}_{-s}}$
the corresponding empirical measures, and let $\pi_{-s}$ be an OT plan between $\widehat{\mu}_{-s}$
and $\widehat{\nu}_{-s}$.

Define $\overline{\pi}=\frac{1}{k+1}\sum_{s=1}^{k+1}\pi_{-s}$.
As $\overline{\pi}$ averages batch OT plans between discrete measures $\widehat{\mu}_{k+1}$ and
$\widehat{\nu}_{k+1}$, we have $\overline{\pi}\in\Pi(\widehat{\mu}_{k+1},\,\widehat{\nu}_{k+1})$ by
\citet[Lemma~9]{fatras}, but we detail the first marginal for completeness: by linearity of the push-forward operator,
\[
(\mathrm{pr}_1)_\sharp\overline{\pi}=\frac{1}{k+1}\sum_{s=1}^{k+1} (\mathrm{pr}_1)_\sharp \pi_{-s}=\frac{1}{k+1}\sum_{s=1}^{k+1} \widehat{\mu}_{-s}=\widehat{\mu}_{k+1}.
\]

Therefore, by definition of the Wasserstein distance,
\begin{equation}
\begin{aligned}
W_{2}^{2}(\widehat{\mu}_{k+1},\,\widehat{\nu}_{k+1})
&\leqslant\int_{\mathbb{R}^{d}\times\mathbb{R}^{d}}\lVert x-y\rVert^{2}\,\mathrm{d}\overline{\pi}(x,y)\\
&=\frac{1}{k+1}\sum_{s=1}^{k+1}\int_{\mathbb{R}^{d}\times\mathbb{R}^{d}}\lVert x-y\rVert^{2}\,\mathrm{d}\pi_{-s}(x,y)\\
&=\frac{1}{k+1}\sum_{s=1}^{k+1}W_{2}^{2}(\widehat{\mu}_{-s},\,\widehat{\nu}_{-s}).
\end{aligned}
\label{eq:leave-one-out-cost-monotonicity}
\end{equation}
Finally, for each $s$, $(\mathbf{X}_{-s},\,\mathbf{Y}_{-s})\sim \mu^{\otimes k}\otimes\nu^{\otimes k}$.
Therefore, $\mathbb{E}[W_{2}^{2}(\widehat{\mu}_{-s},\,\widehat{\nu}_{-s})]=\mathbb{E}[W_{2}^{2}(\widehat{\mu}_{k},\,\widehat{\nu}_{k})]$,
and taking expectations in \eqref{eq:leave-one-out-cost-monotonicity} gives
\[
\mathbb{E}[W_{2}^{2}(\widehat{\mu}_{k+1},\,\widehat{\nu}_{k+1})]\leqslant\mathbb{E}[W_{2}^{2}(\widehat{\mu}_{k},\,\widehat{\nu}_{k})].
\]

\emph{Consistency.} Let $\mu,\,\nu\in\mathcal{P}_{q}(\mathbb{R}^{d})$ with $q>2$. By \cref{app:expected-batch-ot-elementary-properties} and the definition of the Wasserstein distance,
$\mathbb{E}[W_{2}^{2}(\widehat{\mu}_{k},\,\widehat{\nu}_{k})]\geqslant W_{2}^{2}(\mu,\,\nu)$. On the other hand, by the triangle inequality,
\[
W_{2}(\widehat{\mu}_{k},\,\widehat{\nu}_{k}) \leqslant W_{2}(\widehat{\mu}_{k},\,\mu)+W_{2}(\mu,\,\nu)+W_{2}(\widehat{\nu}_{k},\,\nu)\,.
\]
Therefore, denoting $a_k=W_2(\widehat\mu_k,\mu)+W_2(\widehat\nu_k,\nu)$,
\begin{equation*}
W_2^2(\widehat\mu_k,\widehat\nu_k)-W_2^2(\mu,\nu) \leqslant 2W_2(\mu,\nu)a_k+a_k^2 \,.
\end{equation*}
Taking expectations gives,
\begin{equation*}
0\leqslant 
\mathbb{E}\bigl[W_2^2(\widehat\mu_k,\widehat\nu_k)\bigr]
-W_2^2(\mu,\nu)
\leqslant 
2W_2(\mu,\nu)\mathbb{E}[a_k]+\mathbb{E}[a_k^2].
\end{equation*}
Now, by the Cauchy-Schwarz inequality,
\begin{equation*}
\mathbb{E}[a_{k}]\leqslant\mathbb{E}[W_{2}^{2}(\widehat{\mu}_{k},\,\mu)]^{1/2}+\mathbb{E}[W_{2}^{2}(\widehat{\nu}_{k},\,\nu)]^{1/2}.
\end{equation*}
Moreover, as $(a+b)^{2}\leqslant2(a^{2}+b^{2})$,
\begin{equation*}
\mathbb{E}[a_{k}^{2}]\leqslant2\mathbb{E}[W_{2}^{2}(\widehat{\mu}_{k},\,\mu)]+2\mathbb{E}[W_{2}^{2}(\widehat{\nu}_{k},\,\nu)].
\end{equation*}

Since $\mu, \nu \in \mathcal{P}_q(\mathbb{R}^d)$, by \citet[Theorem~1]{fournier2015}, both terms on the right-hand side
converge to zero as $k\to+\infty$. We conclude that
\[
\lim_{k \to +\infty}\mathbb{E}\bigl[W_2^2(\widehat\mu_k,\widehat\nu_k)\bigr] = W_2^2(\mu,\nu) \,.
\]
\end{proof}

\subsection{Convergence rates for the expected batch OT cost with compact support}

We now prove the convergence rates in \cref{prop:expected-batch-ot-cost-convergence}, starting with the compactly supported case (\cref{prop:expected-batch-ot-cost-convergence}(2)). We first recall the dual formulation of semidiscrete OT in our setting~\citep[Section~5]{peyre}. 

Assume that $\nu$ is uniform on $\mathcal{Y}=\{v_1,\ldots,v_M\}\subset\mathbb{R}^{d}$. Let $(X_{1},\ldots,X_{k})$ and $(Y_{1},\ldots,Y_{k})$ be independent batches with joint law $\mu^{\otimes k}\otimes\nu^{\otimes k}$. Denote $\widehat{\mu}_{k}=\frac{1}{k}\sum_{i=1}^{k}\updelta_{X_{i}}$, $\widehat{\nu}_{k}=\frac{1}{k}\sum_{i=1}^{k}\updelta_{Y_{i}}$. If $Y_{i}=v_{J_{i}}$, then the indices $J_{1},\ldots,J_{k}$ are i.i.d. samples uniformly drawn in $\{1,\ldots,M\}$, and
\begin{equation*}
\widehat{\nu}_{k}=\sum_{j=1}^{M}\widehat{q}_{k,j}\updelta_{v_{j}}\quad\text{with}\quad\widehat{q}_{k,j}=\frac{1}{k}\sum_{i=1}^{k}\mathbbm{1}_{\{J_{i}=j\}}.
\end{equation*}
For a dual variable $\lambda=(\lambda_{1},\ldots,\lambda_{M})\in\mathbb{R}^{M}$,
let $\psi_{\lambda}(x)=\min_{1\leqslant j\leqslant M}\{\lVert x-v_{j}\rVert^{2}-\lambda_{j}\}$
and define
\[
H(\lambda)=\int_{\mathbb{R}^{d}}\psi_{\lambda}(x)\,\mathrm{d}\mu(x)+\frac{1}{M}\sum_{j=1}^{M}\lambda_{j}\quad\text{and}\quad H_{k}(\lambda)=\int_{\mathbb{R}^{d}}\psi_{\lambda}(x)\,\mathrm{d}\widehat{\mu}_{k}(x)+\sum_{j=1}^{M}\widehat{q}_{k,j}\lambda_{j}.
\]
Since $H(\lambda+t\mathbf{1}_{M})=H(\lambda)$ for any $t\in\mathbb{R}$, we work
on $\Lambda=\mathrm{span}(\mathbf{1}_{M})^{\perp}$, $\mathbf{1}_M = (1, \dots, 1) \in \mathbb{R}^M$ to remove such
invariance. Then, by the dual formulation of semidiscrete OT,
\[
W_{2}^{2}(\mu,\,\nu)=\sup_{\lambda\in\Lambda}H(\lambda)\quad\text{and}\quad W_{2}^{2}(\widehat{\mu}_{k},\,\widehat{\nu}_{k})=\sup_{\lambda\in\Lambda}H_{k}(\lambda).
\]

\begin{prop}[\cref{prop:expected-batch-ot-cost-convergence}(2)]
\label{prop:semidiscrete-two-sample-k-inverse-rate}
Let $\mu \in \mathcal{P}(\mathcal{X})$ with $\mathcal{X} \subset \mathbb{R}^d$ compact, and $\nu = \frac1M \sum_{i=1}^M \updelta_{v_i}$ with $v_i \in \mathbb{R}^d$. Assume that the restriction of $H$ to $\Lambda$ admits a unique
maximizer $\lambda^{\star}$, that $H$ is $\mathcal{C}^{2}$ in a
neighborhood of $\lambda^{\star}$, and that $D^{2}H(\lambda^{\star})$
is negative definite on $\Lambda$. Then, there exists a constant
$C>0$ such that, for any $k\in\mathbb{N}^{*}$,
\[
\mathbb{E}[W_{2}^{2}(\widehat{\mu}_{k},\,\widehat{\nu}_{k})]-W_{2}^{2}(\mu,\,\nu)\leqslant\frac{C}{k}.
\]
\end{prop}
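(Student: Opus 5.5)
The plan is to bound the signed bias through the dual. Write $W_2^2(\widehat\mu_k,\widehat\nu_k)=\sup_{\Lambda}H_k$ and $W_2^2(\mu,\nu)=H(\lambda^\star)$, and set $G_k=H_k-H$. For fixed $\lambda$, $\mathbb{E}[H_k(\lambda)]=H(\lambda)$, since $\widehat\mu_k$ and $\widehat q_k$ are unbiased for $\mu$ and $(1/M)\mathbf{1}_M$. Hence
\[
\mathbb{E}\big[\sup_\Lambda H_k\big]-H(\lambda^\star)=\mathbb{E}\Big[\sup_{\lambda\in\Lambda}\big(H(\lambda)-H(\lambda^\star)+G_k(\lambda)-G_k(\lambda^\star)\big)\Big],
\]
because $\mathbb{E}[G_k(\lambda^\star)]=0$. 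So the task is to bound the expected supremum of a centered empirical process increment penalized by the curvature of $H$. This is a Wilks-type phenomenon: the penalty is quadratic in the displacement while the fluctuation is of order $k^{-1/2}$ times the displacement, so the supremum is of order $k^{-1}$.

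\textbf{Step 1 (localization and quadratic curvature).} Using uniqueness of $\lambda^\star$, negative definiteness of $D^2H(\lambda^\star)$ on $\Lambda$, and concavity of $H$, I will show there exist $c,r>0$ with $H(\lambda)-H(\lambda^\star)\le -c\min(\|\lambda-\lambda^\star\|^2,\,r\|\lambda-\lambda^\star\|)$ on $\Lambda$. Concavity extends the local quadratic bound to a linear one outside the ball of radius $r$. On compact $\mathcal{X}$, the maximizers of $H_k$ can also be restricted a priori to a bounded set of $\Lambda$ of diameter at most about $\mathrm{diam}(\mathcal{X}\cup\mathcal{Y})^2$, up to an event where some $\widehat q_{k,j}=0$. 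That event has probability $\le M(1-1/M)^k$, exponentially small, and on it the cost is bounded, so it contributes $O(k^{-1})$.

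\textbf{Step 2 (increment bound).} Decompose
\[
G_k(\lambda)-G_k(\lambda^\star)=(\widehat\mu_k-\mu)(\psi_\lambda-\psi_{\lambda^\star})+\langle\widehat q_k-\tfrac1M\mathbf{1}_M,\,\lambda-\lambda^\star\rangle.
\]
The second term is at most $\|\widehat q_k-\frac1M\mathbf 1\|\,\|\lambda-\lambda^\star\|$. For the first, note that $\lambda\mapsto\psi_\lambda(x)$ is $1$-Lipschitz in sup norm and the class $\{\psi_\lambda-\psi_{\lambda^\star}:\|\lambda-\lambda^\star\|\le\delta\}$ is parametric of dimension $M$ with envelope $\delta$. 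A standard chaining/bracketing maximal inequality (van der Vaart--Wellner) then gives $\mathbb{E}\sup_{\|\lambda-\lambda^\star\|\le\delta}|(\widehat\mu_k-\mu)(\psi_\lambda-\psi_{\lambda^\star})|\le C\delta/\sqrt{k}$. With Talagrand/Bousquet concentration, one gets tails for $Z_\delta=\sup_{\|\lambda-\lambda^\star\|\le\delta}|G_k(\lambda)-G_k(\lambda^\star)|/\delta$ of sub-Gaussian type at scale $k^{-1/2}$, since the envelope is bounded on compact $\mathcal{X}$.

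\textbf{Step 3 (peeling).} I will slice $\Lambda$ into shells $\{2^{j-1}\varepsilon<\|\lambda-\lambda^\star\|\le 2^j\varepsilon\}$ with $\varepsilon=k^{-1/2}$. On shell $j$ the objective is at most $-c4^{j-1}\varepsilon^2+2^j\varepsilon Z_{2^j\varepsilon}$, and its positive part is bounded by $(2^j\varepsilon Z)^2/(c4^{j-1}\varepsilon^2)\cdot\ldots$, i.e.\ by $Z^2/c$ up to constants, restricted to the event that $Z\gtrsim 2^j\varepsilon$. Summing over $j$ with the tail bounds gives $\mathbb{E}[\sup]\le C\,\mathbb{E}[Z^2]\lesssim C/k$. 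The inner ball contributes at most $\varepsilon\,\mathbb{E}Z_\varepsilon=O(1/k)$. In the linear-curvature region, the penalty $-cr\|\lambda-\lambda^\star\|$ dominates except on an event of exponentially small probability.

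\textbf{Main obstacle.} The crux is Step 2–3: obtaining $\mathbb{E}[\sup_\delta Z_\delta^2]=O(1/k)$ uniformly over scales. This requires the localized maximal inequality to scale linearly in $\delta$, which holds because the class is Lipschitz in a finite-dimensional parameter with envelope proportional to $\delta$. The cleaner route I would try first is a direct argument: since $|\psi_\lambda-\psi_{\lambda^\star}|\le\|\lambda-\lambda^\star\|_\infty$, use a Dudley bound with covering numbers $(C\delta/\eta)^M$, and then combine the bounded-differences inequality with peeling as in \citet{boucheron}.
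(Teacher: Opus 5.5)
Your proposal is correct and follows essentially the same route as the paper: the dual representation with $\mathbb{E}[(H_k-H)(\lambda^\star)]=0$, localization of the empirical maximizers to a fixed compact set on a high-probability event, a quadratic drift bound coming from the nondegenerate Hessian, a localized maximal inequality $\mathbb{E}[S_k(a)]\lesssim a/\sqrt{k}$ obtained from the $M$-dimensional Lipschitz parametrization together with bounded-differences tails, and peeling over dyadic shells. The differences are minor. You get the drift bound from concavity of $H$ rather than from compactness and uniqueness. You also localize on $\{\min_j\widehat q_{k,j}>0\}$ instead of the paper's $\{\min_j\widehat q_{k,j}\geqslant 1/(2M)\}$; this works, but it needs a complementary-slackness argument rather than the paper's cruder upper bound on $H_k$, which would only give a $k$-dependent bound on your event. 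The argument is that every atom with $\widehat q_{k,j}>0$ receives mass from some $X_i$ under any optimal plan, which bounds the oscillation of every maximizer by the oscillation of the cost.
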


The assumptions on $H$ in \Cref{prop:semidiscrete-two-sample-k-inverse-rate} are standard in semidiscrete OT analyses~\citep{kitagawa2016,bansil2022,delbarrio}. For instance, they hold for $\mu=\mathrm{Unif}([-1,\,1]^d)$, or more generally, when $\mu$ is supported on a compact convex set $\mathcal{X}\subset\mathbb{R}^{d}$, admits a density $\rho\in C(\mathcal{X})$ with respect to Lebesgue measure, and satisfies the Poincaré--Wirtinger inequality~\citep{delbarrio}
\begin{equation*}
\forall f\in C^{1}(\mathcal{X}),\quad \mathbb{E}_{X\sim\mu}[|f(X)-\mathbb{E}[f(X)]|]\leqslant C_{\mathrm{PW}}\mathbb{E}_{X\sim\mu}\lVert\nabla f(X)\rVert.
\end{equation*}

We now prove \Cref{prop:semidiscrete-two-sample-k-inverse-rate}. Throughout, $C$ and $c$ denote positive constants whose values may change from line to line.

\paragraph{Reduction to an empirical dual excess.}

We first reduce the desired bias bound to a bound on one dual quantity. The empirical target $\hat{\nu}_k$ may put 
too little mass on some atom, so we isolate the event on which this does not happen, as in \cite{minimax}.
Let $G_{k}=\{\min_{1\leqslant j\leqslant M}\widehat{q}_{k,j}\geqslant\frac{1}{2M}\}$. For each $j$, the variable $\widehat{q}_{k,\,j}$ is the  average of $k$ Bernoulli variables with mean $1/M$. Hence, by Hoeffding's inequality, 
\[
\mathbb{P}\left(\widehat{q}_{k,j}<\frac{1}{2M}\right)\leqslant\exp\left(-\frac{k}{2M^{2}}\right),
\]
and the union bound over $j\in\{1,\ldots,M\}$ gives 
\begin{equation}
\mathbb{P}(G_{k}^{c})\leqslant M\exp\left(-\frac{k}{2M^{2}}\right). \label{eq:hoeffding}
\end{equation}
We use the following lemma, which states that on the event $G_{k}$,
all maximizers of $H_k$ belong to a fixed compact set $K$. Its proof is postponed to the end of the section for readability.
\begin{lem}[Compactness of empirical maximizers]\label{lem:empirical-dual-compact-maximizers}
Let $\mu \in \mathcal{P}(\mathcal{X})$ with $\mathcal{X} \subset \mathbb{R}^d$ compact, and $\nu = \frac1M \sum_{i=1}^M \updelta_{v_i}$ with $v_i \in \mathbb{R}^d$. Then, there exists a compact set $K\subseteq\Lambda$,
independent of $k$, with $\lambda^\star\in K$, such that on $G_{k}$, the empirical dual objective $H_{k}$ 
attains its maximum on $\Lambda$, and every maximizer belongs to $K$.
\end{lem}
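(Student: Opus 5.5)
The plan is a uniform coercivity argument. On $G_k$, I will show that $H_k(\lambda)$ drops below $H_k(0)$ as soon as $\lVert\lambda\rVert$ exceeds a radius $R$ that depends only on $\mathcal{X}$, $\mathcal{Y}$ and $M$. Then I take $K$ to be the closed ball of radius $R$ in $\Lambda$.

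\emph{Step 1: an upper bound on $H_k$.} Set $D=\max_{x\in\mathcal{X},\,1\leqslant j\leqslant M}\lVert x-v_j\rVert$, which is finite because $\mathcal{X}$ is compact. The support of $\widehat{\mu}_k$ lies in $\mathcal{X}$ almost surely. For every $j$ we have $\psi_\lambda(x)\leqslant\lVert x-v_j\rVert^2-\lambda_j$. Choosing $j^\ast\in\operatorname{argmax}_j\lambda_j$ gives
\[
H_k(\lambda)\leqslant D^2-\lambda_{j^\ast}+\sum_{l=1}^M\widehat{q}_{k,l}\lambda_l
= D^2+\sum_{l=1}^M\widehat{q}_{k,l}\,(\lambda_l-\max_j\lambda_j).
\]
Every summand is nonpositive. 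Keeping only an index $l$ with $\lambda_l=\min_j\lambda_j$ and using $\widehat{q}_{k,l}\geqslant\frac{1}{2M}$ on $G_k$, I get
\[
H_k(\lambda)\leqslant D^2-\frac{1}{2M}\bigl(\max_j\lambda_j-\min_j\lambda_j\bigr).
\]

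\emph{Step 2: norm comparison on $\Lambda$.} Take $\lambda\in\Lambda$, so $\sum_j\lambda_j=0$. Then $\max_j\lambda_j\geqslant0\geqslant\min_j\lambda_j$. Hence $\lVert\lambda\rVert_\infty\leqslant\max_j\lambda_j-\min_j\lambda_j$, and therefore $\lVert\lambda\rVert\leqslant\sqrt{M}\,(\max_j\lambda_j-\min_j\lambda_j)$. Combining with Step 1, on $G_k$,
\[
H_k(\lambda)\leqslant D^2-\frac{\lVert\lambda\rVert}{2M^{3/2}}.
\]
On the other side, $H_k(0)=\int\min_j\lVert x-v_j\rVert^2\,\mathrm{d}\widehat{\mu}_k\geqslant0$. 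Set $R=2M^{3/2}D^2$ and $K=\{\lambda\in\Lambda:\lVert\lambda\rVert\leqslant R\}$. Then $H_k(\lambda)<0\leqslant H_k(0)$ for every $\lambda\in\Lambda\setminus K$. The set $K$ is compact and does not depend on $k$.

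\emph{Step 3: attainment and $\lambda^\star\in K$.} The map $\lambda\mapsto\psi_\lambda(x)$ is $1$-Lipschitz for $\lVert\cdot\rVert_\infty$, so $H_k$ is continuous. It therefore attains its maximum on $K$. Since $0\in K$ and $H_k<H_k(0)$ outside $K$, this maximum is the supremum over $\Lambda$, and every maximizer lies in $K$. The same computation applies to $H$ with weights exactly $\frac1M\geqslant\frac{1}{2M}$ (the estimate is even better). This yields $H(\lambda)<H(0)$ outside $K$, so the maximizer $\lambda^\star$ also lies in $K$.

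There is no serious obstacle. The only point needing care is Step 2, namely converting the spread $\max_j\lambda_j-\min_j\lambda_j$ into a bound on the norm. This uses the normalization $\sum_j\lambda_j=0$ defining $\Lambda$, together with the lower bound $\min_j\widehat{q}_{k,j}\geqslant\frac{1}{2M}$ that $G_k$ is designed to guarantee.
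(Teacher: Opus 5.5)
Your proof is correct and follows essentially the same coercivity argument as the paper. You bound $\psi_\lambda$ by the constant minus $\max_j\lambda_j$, use $\widehat{q}_{k,j}\geqslant\frac{1}{2M}$ on $G_k$ together with $\sum_j\lambda_j=0$, and compare with $H_k(0)\geqslant 0$; the only differences are cosmetic, since you get a Euclidean ball rather than the paper's box $\Lambda\cap[-2(M-1)R,\,2R]^M$, and you place $\lambda^\star$ in $K$ by applying the same bound to $H$ rather than enlarging to $\operatorname{conv}(K_0\cup\{\lambda^\star\})$.
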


On the event $G_{k}$, let $\widehat{\lambda}_{k}$ be a maximizer of $H_{k}$ over $\Lambda$. By \Cref{lem:empirical-dual-compact-maximizers}, $\widehat{\lambda}_{k}\in K$. The quantity to control is therefore the largest empirical dual improvement 
over $\lambda^\star$ inside this compact set: 
\[
Z_{k}=\sup_{\lambda\in K}\{H_{k}(\lambda)-H_{k}(\lambda^\star)\}.
\]
Indeed, the bias is controlled by $\mathbb{E}[Z_{k}]$, up to an exponentially small error. 
On $G_{k}$, duality and the definition of $\widehat{\lambda}_{k}$ give
$W_{2}^{2}(\widehat{\mu}_{k},\,\widehat{\nu}_{k}) = H_{k}(\widehat{\lambda}_{k})$ and
$W_{2}^{2}(\mu,\,\nu)=H(\lambda^\star)$. Hence, on $G_{k}$,
\[\begin{aligned}
W_{2}^{2}(\widehat{\mu}_{k},\,\widehat{\nu}_{k})-W_{2}^{2}(\mu,\,\nu)
& =H_{k}(\widehat{\lambda}_{k})-H(\lambda^{\star})\\
& =(H_{k} - H)(\lambda^\star) + H_{k}(\widehat{\lambda}_{k}) - H_{k}(\lambda^\star) \\
& \leqslant (H_{k} - H)(\lambda^\star) + Z_{k}.
\end{aligned}
\]
In the right-hand side, the first term is an empirical error at the fixed parameter $\lambda^\star$,
and the second term is the empirical dual excess $Z_{k}$. Taking expectations and using $Z_{k}\geqslant 0$, we get 
\begin{equation}
\mathbb{E}[(W_{2}^{2}(\widehat{\mu}_{k},\,\widehat{\nu}_{k}) - W_{2}^{2}(\mu,\,\nu))\mathbbm{1}_{G_{k}}]
\leqslant \mathbb{E}[(H_{k} - H)(\lambda^\star)\mathbbm{1}_{G_{k}}] + \mathbb{E}[Z_{k}].
\label{eq:obj-decomp}
\end{equation}
Since $\mathbb{E}[(H_{k}-H)(\lambda^\star)]=0$, we have $\mathbb{E}[(H_{k}-H)(\lambda^\star)\mathbbm{1}_{G_{k}}]=-\mathbb{E}[(H_{k}-H)(\lambda^\star)\mathbbm{1}_{G_{k}^{c}}]$.
Because $\operatorname{supp}(\mu)$ is compact and $\lambda^\star$ is fixed, the function $x\mapsto\psi_{\lambda^\star}(x)$
is bounded on $\operatorname{supp}(\mu)$, and the linear term in $\lambda^\star$ is fixed. Hence $(H_{k}-H)(\lambda^\star)$
is deterministically bounded. Therefore, by \eqref{eq:hoeffding}, there exist $C, c > 0$ such that
\begin{equation*}
\mathbb{E}[(H_{k}-H)(\lambda^\star)\mathbbm{1}_{G_{k}}]\leqslant C e^{-ck}.
\end{equation*}
It remains to control the contribution of $G_{k}^{c}$. Let $S=\operatorname{supp}(\mu)\cup\operatorname{supp}(\nu)=\mathcal{X} \cup \mathcal{Y}$.
The measures $\mu$, $\nu$, $\widehat{\mu}_{k}$ and $\widehat{\nu}_{k}$ are all supported in the compact set $S$.
Hence, both transport costs take values $[0,\,\mathrm{diam}(S)^2]$, and $|W_{2}^{2}(\widehat{\mu}_{k},\,\widehat{\nu}_{k}) - W_{2}^{2}(\mu,\,\nu)|\leqslant \mathrm{diam}(S)^2$. Together with \eqref{eq:hoeffding}, this gives 
\begin{equation*}
\mathbb{E}[|W_{2}^{2}(\widehat{\mu}_{k},\,\widehat{\nu}_{k}) - W_{2}^{2}(\mu,\,\nu)|\mathbbm{1}_{G_{k}^{c}}]\leqslant C e^{-ck}.
\end{equation*}
Combining this with \eqref{eq:obj-decomp}, we have reduced our problem to
\begin{equation}
\mathbb{E}[W_{2}^{2}(\widehat{\mu}_{k},\,\widehat{\nu}_{k})]-W_{2}^{2}(\mu,\,\nu)\leqslant\mathbb{E}[Z_{k}]+Ce^{-ck}.\label{eq:main-reduction-to-Zk}
\end{equation}
It remains to prove $\mathbb{E}[Z_{k}]\leqslant \frac{C}{k}$. To this end, we combine a peeling argument with bounds on suprema of localized empirical processes, of the type developed in \cite{vanderVaart1996} or \citet[Theorem~7]{boucheron}.

\paragraph{Control of the empirical dual excess.} The definition of $Z_{k}$ involves $H_{k}(\lambda)-H_{k}(\lambda^\star)$.
To analyze this difference, we rewrite $H_{k}$ (or $H$) as a single empirical average.
Let $P=\mu\otimes\mathrm{Unif}\{1,\ldots,M\}$ on $\mathcal{Z}=\mathbb{R}^{d}\times\{1,\ldots,M\}$.
Let $\zeta_{i}=(X_{i},\,J_{i})\sim P$ and $P_{k}=\frac{1}{k}\sum_{i=1}^{k}\updelta_{\zeta_{i}}$.
For $\lambda\in\mathbb{R}^{M}$, define $h_{\lambda}(x,\,j)=\psi_{\lambda}(x)+\lambda_{j}$ and $g_{\lambda}=h_{\lambda}-h_{\lambda^{\star}}$.
Then $g_{\lambda^\star}=0$, and
\[
\begin{alignedat}{2}
H(\lambda)
  &= \int_{\mathcal{Z}} h_{\lambda}\,\mathrm{d}P,
&\qquad
H_{k}(\lambda)
  &= \int_{\mathcal{Z}} h_{\lambda}\,\mathrm{d}P_{k},
\\[0.5em]
H(\lambda)-H(\lambda^\star)
  &= \int_{\mathcal{Z}} g_{\lambda}\,\mathrm{d}P,
&\qquad
H_{k}(\lambda)-H_{k}(\lambda^\star)
  &= \int_{\mathcal{Z}} g_{\lambda}\,\mathrm{d}P_{k}.
\end{alignedat}
\]
Therefore, we can decompose the quantity defining $Z_{k}$ as 
\begin{equation}
H_{k}(\lambda) - H_{k}(\lambda^\star) 
= H(\lambda) - H(\lambda^\star)
+  \int_{\mathcal{Z}} g_{\lambda}\,\mathrm{d}(P_{k}-P).
\label{eq:fundamental-decomp}
\end{equation}
To control these two terms, we use two estimates, which are proved later in the section.
The first one concerns the deterministic population drift $H(\lambda) - H(\lambda^\star)$. It follows from the negative definiteness of 
$D^2H(\lambda^\star)$, together with compactness.

\begin{lem}\label{lem:quadratic-drift-on-K}
Under the assumptions of \cref{prop:semidiscrete-two-sample-k-inverse-rate}, there exists $\kappa>0$
such that, for every $\lambda\in K$,
\(
H(\lambda^\star)-H(\lambda)\geqslant\kappa\lVert\lambda-\lambda^\star\rVert^2_\infty.
\)
\end{lem}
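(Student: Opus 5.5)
We split $K$ into a small neighborhood of $\lambda^\star$, where the Hessian gives the bound, and its complement, where uniqueness and compactness do. The constant is then the minimum of the two. Throughout we use that all norms on the finite-dimensional space $\Lambda$ are equivalent, so it suffices to prove the bound with the Euclidean norm $\lVert\cdot\rVert_2$ and absorb the factor $\lVert\lambda\rVert_\infty\leqslant\lVert\lambda\rVert_2$ into $\kappa$.

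\emph{Local step.} Since $\lambda^\star$ maximizes $H$ on $\Lambda$ and $H$ is $\mathcal{C}^2$ near $\lambda^\star$, the gradient of $H|_\Lambda$ vanishes at $\lambda^\star$. By assumption $D^2H(\lambda^\star)$ is negative definite on $\Lambda$. Let $-2\kappa_0<0$ be its largest eigenvalue restricted to $\Lambda$. By continuity of $D^2H$, there is $r>0$ such that the ball $B_r(\lambda^\star)\cap\Lambda$ lies in the $\mathcal{C}^2$ neighborhood and $D^2H(\lambda)\preceq -\kappa_0 I$ on $\Lambda$ for $\lambda$ in this ball. Take $\lambda\in\Lambda$ with $\lVert\lambda-\lambda^\star\rVert_2\leqslant r$. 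Taylor's formula with integral remainder along the segment $[\lambda^\star,\lambda]$, which stays in $\Lambda$ because $\Lambda$ is a linear subspace, gives
\[
H(\lambda^\star)-H(\lambda)\geqslant\frac{\kappa_0}{2}\lVert\lambda-\lambda^\star\rVert_2^2 .
\]

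\emph{Global step.} Consider the set $K_r=\{\lambda\in K:\lVert\lambda-\lambda^\star\rVert_2\geqslant r\}$, which is compact. The function $H$ is continuous, indeed concave and finite: $\psi_\lambda$ is a minimum of affine functions of $\lambda$, hence concave in $\lambda$ and $1$-Lipschitz in $\lambda$ for the sup norm, and $\mu$ has compact support. Because $\lambda^\star$ is the unique maximizer of $H$ on $\Lambda$, the quantity $\delta=\min_{K_r}\{H(\lambda^\star)-H(\lambda)\}$ is attained and satisfies $\delta>0$. For $\lambda\in K_r$ we then bound
\[
H(\lambda^\star)-H(\lambda)\geqslant\delta\geqslant\frac{\delta}{D^2}\lVert\lambda-\lambda^\star\rVert_2^2,
\]
where $D=\max_{\lambda\in K}\lVert\lambda-\lambda^\star\rVert_2<\infty$. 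If $K_r$ is empty this step is vacuous.

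\emph{Conclusion.} Combining the two steps, the claim holds with $\kappa=\min(\kappa_0/2,\,\delta/D^2)>0$, after converting $\lVert\cdot\rVert_2$ to $\lVert\cdot\rVert_\infty$ as explained at the start. The only delicate point is the local step: $\lambda^\star$ must be an interior critical point for the restriction $H|_\Lambda$, not for $H$ on all of $\mathbb{R}^M$. This holds because the maximization is over the whole subspace $\Lambda$. In any case $H$ is invariant along $\mathbf{1}_M$, so its full gradient at $\lambda^\star$ vanishes as well.
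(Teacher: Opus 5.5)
Your proof is correct and follows essentially the same route as the paper: a Taylor expansion with integral remainder near $\lambda^\star$ using negative definiteness and continuity of $D^2H$ on $\Lambda$, then uniqueness of the maximizer plus compactness on the far region, with $\kappa$ the minimum of the two constants. The only cosmetic difference is that you run the global step in $\lVert\cdot\rVert_2$ and convert to $\lVert\cdot\rVert_\infty$ at the end, whereas the paper converts already in the local step and defines the far region via $\lVert\cdot\rVert_\infty$ with radius $\varepsilon/\sqrt{M}$.
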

The second estimate controls the empirical fluctuation $\int_{\mathcal{Z}} g_{\lambda}\,\mathrm{d}(P_{k}-P)$ locally. For $a\geqslant0$, define 
\[
S_{k}(a)=\sup_{\substack{\lambda\in\Lambda\\
\lVert\lambda-\lambda^{\star}\rVert_{\infty}\leqslant a
}
}\left|\int_{\mathcal{Z}}g_{\lambda}\,\mathrm{d}(P_{k}-P)\right|.
\]
This is the largest fluctuation over the ball of radius $a$ around $\lambda^\star$. The following estimate
says that the fluctuation grows at most linearly with the radius.
\begin{lem}
\label{lem:local-empirical-process}
There exist $C>0$, depending only on $M$, and $c>0$ such that, 
for every $a,\,t>0$,
\[
\forall k\in\mathbb{N}^{*},\quad
\mathbb{E}[S_{k}(a)]
  \leqslant C\frac{a}{\sqrt{k}}
\quad\text{and}\quad
\mathbb{P}\left(
  S_{k}(a)-\mathbb{E}[S_{k}(a)]\geqslant t
\right)
  \leqslant
  \exp\left(
    -c\,\frac{k t^2}{a^2}
  \right).
\]
\end{lem}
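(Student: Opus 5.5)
The plan is to treat $\{g_\lambda : \lVert\lambda-\lambda^\star\rVert_\infty\leqslant a\}$ as a finite-dimensional, Lipschitz-parametrized class with an envelope proportional to $a$. The expectation bound will then follow from symmetrization and a chaining (or contraction) argument, and the tail bound from McDiarmid's bounded differences inequality.

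\emph{Step 1: Lipschitz structure.} For $\lambda,\lambda'\in\mathbb{R}^M$ and any $x$, the map $\lambda\mapsto\psi_\lambda(x)=\min_j\{\lVert x-v_j\rVert^2-\lambda_j\}$ is a minimum of functions that are each $1$-Lipschitz in $\lVert\cdot\rVert_\infty$. Hence $|\psi_\lambda(x)-\psi_{\lambda'}(x)|\leqslant\lVert\lambda-\lambda'\rVert_\infty$. Likewise $|\lambda_j-\lambda'_j|\leqslant\lVert\lambda-\lambda'\rVert_\infty$, so
\[
|h_\lambda(x,j)-h_{\lambda'}(x,j)|\leqslant 2\lVert\lambda-\lambda'\rVert_\infty
\]
for every $(x,j)\in\mathcal{Z}$, with no dependence on $x$. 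In particular $\lVert g_\lambda\rVert_\infty\leqslant 2a$ on the ball $B_a=\{\lambda\in\Lambda:\lVert\lambda-\lambda^\star\rVert_\infty\leqslant a\}$. This bound is deterministic and uniform, which is what makes everything work without any compactness of $\mu$.

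\emph{Step 2: Expectation bound.} By symmetrization,
\[
\mathbb{E}[S_k(a)]\leqslant 2\,\mathbb{E}\sup_{\lambda\in B_a}\Bigl|\frac1k\sum_i\varepsilon_i g_\lambda(\zeta_i)\Bigr|.
\]
Conditionally on the sample, the Rademacher process $\lambda\mapsto\frac{1}{\sqrt k}\sum_i\varepsilon_i g_\lambda(\zeta_i)$ is sub-Gaussian with respect to the metric $2\lVert\lambda-\lambda'\rVert_\infty$, by Step 1 and Hoeffding's inequality. The index set $B_a$ is an $\ell^\infty$-ball of dimension at most $M-1$, so its covering numbers satisfy $N(B_a,\lVert\cdot\rVert_\infty,\epsilon)\leqslant(1+2a/\epsilon)^{M}$. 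Dudley's entropy integral then gives
\[
\mathbb{E}\sup_{\lambda\in B_a}\Bigl|\tfrac{1}{\sqrt k}\textstyle\sum_i\varepsilon_i g_\lambda(\zeta_i)\Bigr|\leqslant C\int_0^{2a}\sqrt{M\log(1+4a/\epsilon)}\,\mathrm{d}\epsilon=C_M\,a,
\]
where the absolute value is handled by including $g_{\lambda^\star}=0$ in the class. Dividing by $\sqrt k$ yields $\mathbb{E}[S_k(a)]\leqslant Ca/\sqrt k$ with $C$ depending only on $M$ (scaling as $\sqrt M$).

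\emph{Step 3: Concentration.} I view $S_k(a)$ as a function of the independent variables $\zeta_1,\ldots,\zeta_k$. Replacing a single $\zeta_i$ changes $\frac1k\sum_i g_\lambda(\zeta_i)$ by at most $\frac1k\cdot 2\lVert g_\lambda\rVert_\infty\leqslant 4a/k$, uniformly in $\lambda$. The supremum therefore has bounded differences $4a/k$, and McDiarmid's inequality gives
\[
\mathbb{P}\bigl(S_k(a)-\mathbb{E}S_k(a)\geqslant t\bigr)\leqslant\exp\bigl(-2t^2/(k\cdot 16a^2/k^2)\bigr)=\exp\bigl(-kt^2/(8a^2)\bigr),
\]
so the claim holds with $c=1/8$.

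\emph{Main obstacle.} The only delicate point is measurability of the supremum over an uncountable set. I would handle it by noting that $\lambda\mapsto g_\lambda$ is continuous, so the supremum can be taken over a countable dense subset of $B_a$. The chaining constant in Step 2 is routine once Step 1 provides the sample-independent Lipschitz bound.
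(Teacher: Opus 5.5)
Your proposal is correct and follows essentially the same route as the paper. The paper also uses the bound $\lVert g_\lambda\rVert_\infty\leqslant 2a$ and the $2$-Lipschitz dependence of $g_\lambda$ on $\lambda$ to reduce the entropy of the class to covering numbers of an $\ell^\infty$-ball in $\mathbb{R}^M$, obtains $\mathbb{E}[S_k(a)]\leqslant Ca/\sqrt{k}$ from a Dudley-type integral, and applies McDiarmid with bounded differences $4a/k$. The only difference is presentational: the paper cites the maximal inequality of van der Vaart and Wellner (\S2.14) in terms of sup-norm covering numbers of $\mathscr{G}_a$, whereas you carry out the symmetrization and chaining of the Rademacher process indexed by $\lambda$ explicitly, and you also address the measurability of the supremum, which the paper leaves implicit.
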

We now prove the bound on $Z_{k}$.
Let $D_{K}=\sup_{\lambda\in K} \lVert \lambda-\lambda^\star\rVert_\infty<\infty$.
For $\ell\geqslant0$, set $a_{\ell}=\frac{A2^{\ell}}{\sqrt{k}}$, where $A\geqslant1$ will be chosen 
large enough as explained below. Let $L_{k}$ be the smallest integer such that $a_{L_{k}}\geqslant D_{K}$. Then, any 
$\lambda\in K$ either satisfies $\lVert\lambda-\lambda^\star\rVert_\infty\leqslant a_{0}$, 
or belongs to one of the
sets
\(
  a_{\ell}
  <
  \lVert\lambda-\lambda^\star\rVert_\infty
  \leqslant
  a_{\ell+1}
\)
with $0\leqslant \ell<L_{k}$.
By the drift estimate and the decomposition
\eqref{eq:fundamental-decomp}, if
\(\lVert\lambda-\lambda^\star\rVert_\infty\leqslant a_{0}\), then
\(
  H_{k}(\lambda)-H_{k}(\lambda^\star)
  \leqslant
  S_{k}(a_{0}).
\)
On the set
\(a_{\ell}<\lVert\lambda-\lambda^\star\rVert_\infty\leqslant a_{\ell+1}\), the same
decomposition gives
\[
  H_{k}(\lambda)-H_{k}(\lambda^\star)
  \leqslant
  -\kappa a_{\ell}^{2}+S_{k}(a_{\ell+1}).
\]
Taking the supremum over \(K\), we obtain
\[
  Z_{k}
  \leqslant
  \max\Big(
    S_{k}(a_{0}),\,
    \max_{0\leqslant\ell<L_{k}}
    \{S_{k}(a_{\ell+1})-\kappa a_{\ell}^{2}\}
  \Big).
\]
Since \(S_{k}(a_{0})\geqslant0\), denoting \(x_{+}=\max(x,0)\), this implies
\begin{equation}\label{eq:Zk-peeling-bound}
  Z_{k}
  \leqslant
  S_{k}(a_{0})
  +
  \sum_{\ell=0}^{L_{k}-1}
  (S_{k}(a_{\ell+1})-\kappa a_{\ell}^{2})_{+}.
\end{equation}
The first term is already of the right order, because
\cref{lem:local-empirical-process} gives
\[
  \mathbb{E}[S_{k}(a_{0})]
  \leqslant
  C\frac{a_{0}}{\sqrt{k}}
  =
  C\frac{A}{k}
  \quad\text{and}\quad 
  \mathbb{E}[S_{k}(a_{\ell+1})]
  \leqslant
  C\frac{a_{\ell+1}}{\sqrt{k}}
  \leqslant
  C\frac{A2^{\ell}}{k}.
\]
On the other hand,
\(
  \kappa a_{\ell}^{2}
  =
  \kappa\frac{A^2 4^{\ell}}{k}.
\)
Choosing \(A\) large enough, depending only on the constants in the previous
display, ensures that for every \(\ell\geqslant0\),
\(
  \mathbb{E}[S_{k}(a_{\ell+1})]
  \leqslant
  \frac{\kappa}{2} a_{\ell}^{2}.
\)
Therefore, for every \(t>0\),
\[
\begin{aligned}
  \mathbb{P}\left(S_{k}(a_{\ell+1})-\kappa a_{\ell}^{2}>t\right)
  &\leqslant
  \mathbb{P}\left(
    S_{k}(a_{\ell+1})-\mathbb{E}[S_{k}(a_{\ell+1})]
    >
    \frac{\kappa}{2} a_{\ell}^{2}+t
  \right) \\
  &\leqslant
  \exp\left(
    -c\frac{k(\frac{\kappa}{2} a_{\ell}^{2}+t)^2}{a_{\ell+1}^{2}}
  \right).
\end{aligned}
\]
Using \(\mathbb{E}[X_+]=\int_0^\infty\mathbb{P}(X>t)\,\mathrm{d} t\), followed by the change of
variables
\(
  u=\frac{\sqrt{k}}{a_{\ell+1}}
  \left(\frac{\kappa}{2}a_{\ell}^{2}+t\right),
\)
and the elementary Gaussian tail bound
\(\int_x^\infty e^{-cu^2}\mathrm{d} u\leqslant C e^{-cx^2/2}\), we get
\[
  \mathbb{E}[
    (S_{k}(a_{\ell+1})-\kappa a_{\ell}^{2})_{+}
  ]
  \leqslant
  C\frac{a_{\ell+1}}{\sqrt{k}}
  \exp\left(-c\frac{k a_{\ell}^{4}}{a_{\ell+1}^{2}}\right).
\]
Since \(a_{\ell+1}=2a_{\ell}\),
\(
  \frac{k a_{\ell}^{4}}{a_{\ell+1}^{2}}
  =
  \frac{k a_{\ell}^{2}}{4}
  =
  \frac{A^2 4^{\ell}}{4}
\) 
and 
\(
  \frac{a_{\ell+1}}{\sqrt{k}}
  =
  \frac{2A2^{\ell}}{k}.
\)
Thus
\[
  \mathbb{E}[
    (S_{k}(a_{\ell+1})-\kappa a_{\ell}^{2})_{+}
  ]
  \leqslant
  C\frac{A2^{\ell}}{k}
  \exp(-cA^2 4^{\ell}).
\]
Summing over $\ell\in\{0,\ldots,L_k-1\}$ and extending the finite sum to an infinite one,
\[
  \sum_{\ell=0}^{L_{k}-1}
  \mathbb{E}[
    (S_{k}(a_{\ell+1})-\kappa a_{\ell}^{2})_{+}
  ]
  \leqslant
  \frac{CA}{k}
  \sum_{\ell=0}^\infty
  2^{\ell}\exp(-cA^2 4^{\ell})
  \leqslant
  \frac{C}{k},
\]
because \(A\) is fixed and the series is finite.  Taking expectations in
\eqref{eq:Zk-peeling-bound}, we conclude that
\(
  \mathbb{E}[Z_{k}]
  \leqslant
  \frac{C}{k}.
\)
Substituting this into \eqref{eq:main-reduction-to-Zk} 
proves \cref{prop:semidiscrete-two-sample-k-inverse-rate}.

\paragraph{Proofs of the auxiliary lemmas.}
We provide the proofs of the lemmas used above.

\begin{proof}[Proof of \cref{lem:empirical-dual-compact-maximizers}]
Let
\(
  R=
  \max_{1\leqslant j\leqslant M}
  \sup_{x\in\operatorname{supp}(\mu)}\lVert x-v_{j}\rVert^{2}.
\)
This number is finite because \(\operatorname{supp}(\mu)\) is compact.  Fix
\(\lambda\in\Lambda\), and write \(\lambda_{\max}=\max_j\lambda_j\) and 
\(\lambda_{\min}=\min_j\lambda_j\). 
For every \(x\in\operatorname{supp}(\mu)\), choosing an index at which \(\lambda_j\)
takes its maximum gives
\(
  \psi_\lambda(x)
  \leqslant
  R-\lambda_{\max}.
\)
Therefore
\[
  H_{k}(\lambda)
  \leqslant
  R-\lambda_{\max}
  +
  \sum_{j=1}^{M}\widehat{q}_{k,j}\lambda_j
  =
  R-
  \sum_{j=1}^{M}\widehat{q}_{k,j}(\lambda_{\max}-\lambda_j),
\]
where we used \(\sum_j\widehat{q}_{k,j}=1\).  On \(G_{k}\), each
\(\widehat{q}_{k,j}\geqslant\frac{1}{2M}\), so
\[
  H_{k}(\lambda)
  \leqslant
  R-\frac{1}{2M}
  \sum_{j=1}^{M}(\lambda_{\max}-\lambda_j).
\]
Since \(\lambda\in\Lambda\), \(\sum_j\lambda_j=0\), and hence
\(
  \sum_{j=1}^{M}(\lambda_{\max}-\lambda_j)=M\lambda_{\max}.
\)
Thus, on \(G_{k}\),
\begin{equation}\label{eq:Hk-upper-compactness}
  H_{k}(\lambda)
  \leqslant
  R-\frac{1}{2}\lambda_{\max}.
\end{equation}
On the other hand,
\(
  H_{k}(0)
  =
  \frac{1}{k}\sum_{i=1}^{k}\min_j\lVert X_i-v_j\rVert^{2}
  \geqslant0.
\)
Hence any maximizer of \(H_{k}\) must satisfy
\(H_{k}(\lambda)\geqslant H_{k}(0)\geqslant0\).  By
\eqref{eq:Hk-upper-compactness}, this implies
\(
  \lambda_{\max}\leqslant 2R.
\)
Since \(\lambda\in\Lambda\), the coordinates sum to zero, and therefore
\(
  \lambda_{\min}
  \geqslant
  -(M-1)\lambda_{\max}
  \geqslant
  -2(M-1)R.
\)

Thus, on \(G_{k}\), any maximizer must belong to the compact set
\(
  K_0
  =
  \Lambda\cap[-2(M-1)R,2R]^M.
\)
Moreover, the same estimate shows that outside $K_0$,
\(H_{k}\leqslant H_{k}(0)\): since \(H_{k}\) is continuous and \(K_0\)
is compact, \(H_{k}\) attains its maximum on \(K_0\), hence on \(\Lambda\).

Finally, \(\lambda^\star\) is fixed.  Enlarging the compact set if necessary,
we may take
\(
  K
  =
  \operatorname{conv}(K_0\cup\{\lambda^\star\})
  \subseteq\Lambda.
\)
Then \(K\) is compact, independent of \(k\), contains \(\lambda^\star\), and
contains all empirical maximizers on \(G_{k}\).
\end{proof}

\begin{proof}[Proof of \cref{lem:quadratic-drift-on-K}]
We first prove the estimate near \(\lambda^\star\).  Let \(\xi=\lambda-\lambda^\star\) and
\(\lambda_t=\lambda^\star+t\xi\) for $t\in[0,\,1]$.
Since \(\lambda,\,\lambda^\star\in\Lambda\), we have \(\xi\in\Lambda\) and
\(\lambda_t\in\Lambda\). By Taylor's formula applied to
\(t\mapsto H(\lambda_t)\),
\[
  H(\lambda)-H(\lambda^\star)
  =
  \xi^\top\nabla H(\lambda^\star)
  +
  \int_0^1(1-t)\xi^\top D^2H(\lambda_t)\xi\,\mathrm{d} t.
\]
Because \(\lambda^\star\) maximizes \(H\) on \(\Lambda\),
\(
  \xi^\top\nabla H(\lambda^\star)=0.
\)
The Hessian \(D^2H(\lambda^\star)\) is negative definite on \(\Lambda\), so
there exists \(a>0\) such that, for every \(u\in\Lambda\),
\(
  u^\top D^2H(\lambda^\star)u
  \leqslant
  -2a\lVert u\rVert_2^2.
\)
By continuity of \(D^2H\), after choosing \(\varepsilon>0\) small enough,
\(
  \lVert D^2H(v)-D^2H(\lambda^\star)\rVert_{\mathrm{op}}
  \leqslant
  a
\)
whenever \(v\in\Lambda\) and
\(\lVert v-\lambda^\star\rVert_2\leqslant\varepsilon\).  Therefore, if
\(\lVert\lambda-\lambda^\star\rVert_2\leqslant\varepsilon\), then for every
\(t\in[0,1]\),
\(
  \xi^\top D^2H(\lambda_t)\xi
  \leqslant
  -a\lVert \xi\rVert_2^2.
\)
Consequently, whenever \(\lVert\lambda-\lambda^\star\rVert_2\leqslant\varepsilon\),
\[
  H(\lambda^\star)-H(\lambda)
  \geqslant
  \frac{a}{2}\lVert\lambda-\lambda^\star\rVert_2^2
  \geqslant
  \frac{a}{2}\lVert\lambda-\lambda^\star\rVert_\infty^2.
\]

We now extend this lower bound to the whole compact set \(K\).  Set
\(
  r=\frac{\varepsilon}{\sqrt{M}}.
\)
If \(\lVert\lambda-\lambda^\star\rVert_\infty\leqslant r\), then
\(\lVert\lambda-\lambda^\star\rVert_2\leqslant\varepsilon\), so the local estimate
applies.

It remains to consider
\(
  K_r
  =
  \{\lambda\in K:
  \lVert\lambda-\lambda^\star\rVert_\infty\geqslant r\}.
\)
The function \(H\) is continuous on \(\Lambda\).  Indeed, for
\(\lambda,\lambda'\in\Lambda\), the linear terms in \(H\) cancel, and the
Lipschitz bound for \(\psi_\lambda\), proved below, gives
\[
  |H(\lambda)-H(\lambda')|
  \leqslant
  \lVert\lambda-\lambda'\rVert_\infty.
\]
Since \(\lambda^\star\) is the unique maximizer of \(H\) on \(\Lambda\), the
continuous function \(\lambda\mapsto H(\lambda^\star)-H(\lambda)\) is
strictly positive on the compact set \(K_r\).  Hence there exists
\(\eta>0\) such that
\(
  H(\lambda^\star)-H(\lambda)
  \geqslant
  \eta
\)
  for every
\(
\lambda\in K_r.
\)
Let
\(
  D=\sup_{\lambda\in K}\lVert\lambda-\lambda^\star\rVert_\infty.
\)
If \(D=0\), there is nothing to prove.  Otherwise, for \(\lambda\in K_r\),
\[
  H(\lambda^\star)-H(\lambda)
  \geqslant
  \eta
  \geqslant
  \frac{\eta}{D^2}\lVert\lambda-\lambda^\star\rVert_\infty^2.
\]
Combining the local region and the region away from \(\lambda^\star\), the
claim follows with
\(
  \kappa
  =
  \min\left(\frac{a}{2},\frac{\eta}{D^2}\right).
\)
\end{proof}

The localized empirical-fluctuation estimate relies on the following Lipschitz property.  We state it separately because it is also useful for the continuity of $H$.

\begin{lem}
\label{lem:semidiscrete-dual-potential-lipschitz}
For any $x \in \mathbb{R}^{d}$, $z\in\mathcal{Z}$ and $\lambda, \lambda' \in \mathbb{R}^{M}$, 
\[
|\psi_{\lambda}(x)-\psi_{\lambda'}(x)|\leqslant\lVert\lambda-\lambda'\rVert_{\infty}
\quad\text{and}\quad
|g_{\lambda}(z)-g_{\lambda'}(z)|\leqslant2\lVert\lambda-\lambda'\rVert_{\infty}.
\]
In particular, if $\lVert\lambda-\lambda^{\star}\rVert_{\infty}\leqslant a$ for $a > 0$,
then $\lVert g_{\lambda}\rVert_{\infty}\leqslant2a$.
\end{lem}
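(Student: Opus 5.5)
The plan is a direct, elementary argument: first establish the pointwise Lipschitz bound for $\psi_\lambda$, then deduce the bound for $g_\lambda$ from the triangle inequality, and finally specialize to $\lambda' = \lambda^\star$.

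For the first claim, fix $x\in\mathbb{R}^d$ and write $c_j(x)=\lVert x-v_j\rVert^2$. For every index $j$ we have
\[
c_j(x)-\lambda_j \leqslant c_j(x)-\lambda'_j+\lVert\lambda-\lambda'\rVert_\infty .
\]
Taking the minimum over $j$ on both sides gives
\[
\psi_\lambda(x)\leqslant\psi_{\lambda'}(x)+\lVert\lambda-\lambda'\rVert_\infty .
\]
This uses the fact that a minimum of functions shifted by at most a constant is shifted by at most that constant. Exchanging the roles of $\lambda$ and $\lambda'$ yields the reverse inequality, hence $|\psi_\lambda(x)-\psi_{\lambda'}(x)|\leqslant\lVert\lambda-\lambda'\rVert_\infty$. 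No integrability or compactness is needed here, since the bound is pointwise.

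For the second claim, let $z=(x,j)\in\mathcal{Z}$. By definition of $g_\lambda$, the term $h_{\lambda^\star}$ cancels:
\[
g_\lambda(z)-g_{\lambda'}(z)=h_\lambda(z)-h_{\lambda'}(z)=\psi_\lambda(x)-\psi_{\lambda'}(x)+\lambda_j-\lambda'_j .
\]
By the first claim and $|\lambda_j-\lambda'_j|\leqslant\lVert\lambda-\lambda'\rVert_\infty$, the absolute value is at most $2\lVert\lambda-\lambda'\rVert_\infty$.

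Finally, apply the second bound with $\lambda'=\lambda^\star$ and use $g_{\lambda^\star}=0$. This gives $|g_\lambda(z)|\leqslant 2\lVert\lambda-\lambda^\star\rVert_\infty\leqslant 2a$ for every $z$, so $\lVert g_\lambda\rVert_\infty\leqslant 2a$.

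There is no real obstacle in this argument. The only point requiring care is the min-stability step, which is settled by the one-line inequality above, applied symmetrically in $\lambda$ and $\lambda'$.
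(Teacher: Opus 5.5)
Your proposal is correct and follows essentially the same route as the paper's proof. Both use the termwise bound $\lVert x-v_\ell\rVert^2-\lambda_\ell\leqslant\lVert x-v_\ell\rVert^2-\lambda'_\ell+\lVert\lambda-\lambda'\rVert_\infty$, minimize over $\ell$, and symmetrize; both then use the cancellation $g_\lambda(x,j)-g_{\lambda'}(x,j)=\psi_\lambda(x)-\psi_{\lambda'}(x)+\lambda_j-\lambda'_j$ and specialize to $\lambda'=\lambda^\star$ with $g_{\lambda^\star}=0$.
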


\begin{proof}
Let $x \in \mathbb{R}^{d}$ and $\lambda, \,\lambda' \in \mathbb{R}^{M}$. For any $\ell\in\{1,\ldots,M\}$,
\begin{align*}
\psi_{\lambda}(x) &\leqslant\lVert x-v_{\ell}\rVert^{2}-\lambda_{\ell} \\
&= \lVert x-v_{\ell}\rVert^{2}-\lambda'_{\ell}+(\lambda'_{\ell}-\lambda_{\ell})\\
 &\leqslant\lVert x-v_{\ell}\rVert^{2}-\lambda'_{\ell}+\lVert\lambda-\lambda'\rVert_{\infty}.
\end{align*}
Taking the minimum over $\ell$ gives $\psi_{\lambda}(x)-\psi_{\lambda'}(x)\leqslant\lVert\lambda-\lambda'\rVert_{\infty}$.
Additionally, by exchanging $\lambda$ and $\lambda'$, we obtain $\psi_{\lambda'}(x)-\psi_{\lambda}(x)\leqslant\lVert\lambda'-\lambda\rVert_{\infty}$. We conclude that for any $x \in \mathbb{R}^{d}$, $\lambda \mapsto \psi_\lambda(x)$ is $1$-Lipschitz continuous.
Then, by definition of $g_\lambda$, for any $z = (x,j) \in \mathcal{Z}$, 
\[
g_{\lambda}(x,\,j)-g_{\lambda'}(x,\,j)=\psi_{\lambda}(x)-\psi_{\lambda'}(x)+\lambda_{j}-\lambda'_{j}\,.
\]
Therefore, using the Lipschitz continuity of $\lambda \mapsto \psi_\lambda(x)$, we obtain
\[
| g_{\lambda}(x,\,j)-g_{\lambda'}(x,\,j) | \leqslant 2 \lVert \lambda-\lambda'\rVert_\infty \,.
\]
Finally, choosing $\lambda'=\lambda^{\star}$ and taking the sup over $z = (x,j) \in \mathcal{Z}$ yields 
\[
\lVert g_\lambda \rVert_\infty \leqslant 2 \lVert \lambda-\lambda^\star\rVert_\infty \,,
\]
which concludes the proof.
\end{proof}

\begin{proof}[Proof of \cref{lem:local-empirical-process}]
For the first inequality,
let $\mathscr{G}_{a}=\{g_{\lambda}:\lambda\in\Lambda,\ \lVert\lambda-\lambda^{\star}\rVert_{\infty}\leqslant a\}$.
By \cref{lem:semidiscrete-dual-potential-lipschitz}, the class of functions $\mathscr{G}_{a}$
is uniformly bounded by the envelope $z \mapsto 2a$. By \cite[\S 2.14]{vanderVaart1996} (see also \cite[Lemma H.3]{minimax}),
\[
\mathbb{E}[S_{k}(a)]\leqslant\frac{C_{1}}{\sqrt{k}}\int_{0}^{C_{2}a}\sqrt{\log(2N(r,\mathscr{G}_{a},\lVert\cdot\rVert_{\infty}))}\,\mathrm{d}r,
\]
where $C_{1},\,C_{2} > 0$ and $N(r,\,\mathscr{G}_{a},\,\lVert\cdot\rVert_{\infty})$
denotes the $r$-covering number in $\mathscr{G}_{a}$ with respect to the sup norm.
Since $\lambda\mapsto g_{\lambda}$ is $2$-Lipschitz uniformly in $z$ (\cref{lem:semidiscrete-dual-potential-lipschitz}), for
every $r>0$,
\[
N(r,\mathscr{G}_{a},\lVert\cdot\rVert_{\infty})\leqslant N(r/2,\,B_{\infty}(\lambda^{\star},\,a),\lVert\cdot\rVert_{\infty}).
\]
Since $N(r/2,\,B_{\infty}(\lambda^{\star},\,a),\lVert\cdot\rVert_{\infty})\leqslant(1+4a/r)^{M}$,
\begin{align*}
\mathbb{E}[S_{k}(a)] & \leqslant\frac{C_{1}}{\sqrt{k}}\int_{0}^{C_{2}a}\sqrt{\log(2(1+4a/r)^{M})}\,\mathrm{d}r\\
 & \leqslant C\frac{a}{\sqrt{k}} \left(\sqrt{\log 2} + \int_{0}^{C_{2}}\sqrt{\log(1+4/s)}\,\mathrm{d}s\right),
\end{align*}
where the last line follows from $\sqrt{x+y} \leqslant \sqrt{x}+\sqrt{y}$ for $x,\,y\geqslant0$, and
the change of variable $r = a s$. Since $\int_{0}^{C_{2}} \sqrt{\log(1+4/s)} \,\mathrm{d}s < +\infty$,
this concludes the first inequality.

We now prove the second inequality.
By definition, $S_{k}$ can be seen as a function of $k$ random variables, $S_{k}(a)=F(\zeta_{1},\ldots,\zeta_{k})$.
If one coordinate \(\zeta_i\) is replaced by an arbitrary \(\zeta_i'\), then for any $\lambda$ satisfying
$\lVert\lambda-\lambda^{\star}\rVert_{\infty}\leqslant a$,
\[
\left|\frac{1}{k}g_{\lambda}(\zeta_{i})-\frac{1}{k}g_{\lambda}(\zeta_{i}')\right|\leqslant\frac{1}{k}\left(|g_{\lambda}(\zeta_{i})|+|g_{\lambda}(\zeta_{i}')|\right)\leqslant\frac{4a}{k}\,,
\]
by Lipschitz continuity of dual potentials (\cref{lem:semidiscrete-dual-potential-lipschitz}). Taking the supremum over $\lambda$ gives that $F$ satisfies the bounded-differences
property with constant $4a/k$. The final result follows from applying McDiarmid's inequality.
\end{proof}

\subsection{Convergence rates for the expected batch OT cost with unbounded support}
\label{app:expected-batch-ot-cost-rates}

We now derive convergence rates when $\mu \in \mathcal{P}(\mathbb{R}^d)$ has unbounded support (\cref{prop:expected-batch-ot-cost-convergence}(3)). To this end, we leverage sample complexity results for semidiscrete OT~\citep{staudt-unbounded,hundrieser}.

\begin{proof}[Proof of \cref{prop:expected-batch-ot-cost-convergence}(3)]
Assume that $\mu\in\mathcal{P}_{q}(\mathbb{R}^{d})$ for some $q>4$ and that $\nu$ is uniform on $\mathcal{Y}=\{v_1,\ldots,v_M\}\subset\mathbb{R}^{d}$. First, 
\begin{equation}\label{eq:trieq_unboundedrate}
\left|
\mathbb{E}\bigl[W_{2}^{2}(\widehat{\mu}_{k},\widehat{\nu}_{k})\bigr]
-W_{2}^{2}(\mu,\nu)
\right|
\leqslant
\mathbb{E}\left[
\left|
W_{2}^{2}(\widehat{\mu}_{k},\widehat{\nu}_{k})
-W_{2}^{2}(\mu,\nu)
\right|
\right].
\end{equation}
The desired rate follows from applying \citet[Theorem~1.1]{staudt-unbounded} with the quadratic cost
$c(x,v)=\|x-v\|^{2}$. Let $c_{\mathbb{R}^{d}}(x)=2\|x\|^{2}$ and $c_{\mathcal{Y}}(v)=2\|v\|^{2}$. Then, $c(x,v)\leqslant c_{\mathbb{R}^{d}}(x)+c_{\mathcal{Y}}(v)$. For
$r\geqslant1$, let $B_{\mathbb{R}^{d}}(r)=c_{\mathbb{R}^{d}}^{-1}([0,r])$ and $B_{\mathcal{Y}}(r)=c_{\mathcal{Y}}^{-1}([0,r])$. On
$B_{\mathbb{R}^{d}}(r)\times B_{\mathcal{Y}}(r)$, $c\leqslant2r$, hence the normalized cost
\[
c^{(r)}(x,v):=\frac{1}{2r}\lVert x-v\rVert ^{2}
\]
takes values in $[0,1]$ on $B_{\mathbb{R}^{d}}(r) \times B_{\mathcal{Y}}(r)$. Applying
\citet[Theorem~3.2]{hundrieser} to $c^{(r)}$, and then rescaling by $2r$,
gives a constant $C_0>0$ such that, for any $\alpha \in \mathcal{P}(B_{\mathbb{R}^{d}}(r))$ and $\beta \in \mathcal{P}(B_{\mathcal{Y}}(r))$,
\[
\mathbb{E}[
|
W_{2}^{2}(\widehat{\alpha}_{n},\widehat{\beta}_{m})
-W_{2}^{2}(\alpha,\beta)
|
]
\leqslant
C_{0}r\,(n\wedge m)^{-1/2}.
\]
Hence, Assumption BC$(\kappa,\alpha)$ of \citet[Theorem~1.1]{staudt-unbounded} holds with $\kappa = C_0$ and $\alpha=1/2$. 

It remains to check the moment condition. Let $s=2$ and
$\varepsilon=q/2-2>0$. Since $\mu\in\mathcal{P}_{q}(\mathbb{R}^{d})$ and
$\nu$ has finite support,
\[
\int_{\mathbb{R}^{d}}c_{\mathbb{R}^{d}}(x)^{s+\varepsilon}\,\mathrm{d}\mu(x)
=
2^{q/2}\int_{\mathbb{R}^{d}}\|x\|^{q}\,\mathrm{d}\mu(x)
<+\infty,
\qquad
\int_{\mathcal{Y}}c_{\mathcal{Y}}(v)^{s+\varepsilon}\,\mathrm{d}\nu(v)
<+\infty.
\]
Therefore, by \citet[Theorem~1.1]{staudt-unbounded}, since
$(s-1)/s=1/2$, there is a constant $C>0$ such that
\[
\mathbb{E}\left[
\left|
W_{2}^{2}(\widehat{\mu}_{k},\widehat{\nu}_{k})
-W_{2}^{2}(\mu,\nu)
\right|
\right]
\leqslant Ck^{-1/2}.
\]
Combining this with \eqref{eq:trieq_unboundedrate} and the
nonnegativity of the bias finally gives
\[
0\leqslant
\mathbb{E}[W_{2}^{2}(\widehat{\mu}_{k},\widehat{\nu}_{k})]
-W_{2}^{2}(\mu,\nu)
\leqslant Ck^{-1/2}.
\]
\end{proof}

\subsection[Convergence of the expected plan and comparison with costs]{Convergence of the expected plan and comparison with costs (\cref{prop:expected-batch-ot-plan-convergence})}
\label{app:expected-batch-ot-plan-convergence}

In this section, we prove the plan convergence results of \cref{prop:expected-batch-ot-plan-convergence} and
record the cost-to-plan estimates used to compare convergence of couplings with convergence of their transport costs.

Let $(\Omega,\mathcal{F},\mathbb{P})$
be a probability space carrying two independent i.i.d.\ sequences
$(X_{i})_{i\in\mathbb{N}^{*}}$ and $(Y_{j})_{j\in\mathbb{N}^{*}}$
with respective laws $\mu$ and $\nu$. For each $k\in\mathbb{N}^{*}$, set
$\mathbf{X}_{k}=(X_{1},\ldots,X_{k})$ and $\mathbf{Y}_{k}=(Y_{1},\ldots,Y_{k})$,
and write $\widehat{\mu}_{k}=\widehat{\mu}_{\mathbf{X}_{k}}$,
$\widehat{\nu}_{k}=\widehat{\nu}_{\mathbf{Y}_{k}}$ and
$\widehat{\pi}_{k}=\widehat{\pi}_{\mathbf{X}_{k},\,\mathbf{Y}_{k}}$. We assume,
as in \cref{prop:expected-batch-ot-plan-convergence}, that $\mu$ is absolutely
continuous, so that the OT plan between $\mu$ and $\nu$ is unique.
We denote it by $\pi^{\star}$.

\subsubsection{Weak convergence of the expected batch OT plan}

We prove weak convergence of the expected batch OT plan $\overline{\pi}_{k}$ to the unique OT plan $\pi^{\star}$ by first
showing that $\widehat{\pi}_{k}\rightharpoonup\pi^{\star}$ almost surely, and then passing
to the expected plan by dominated convergence.

\begin{proof}[Proof of weak convergence in \cref{prop:expected-batch-ot-plan-convergence}.]
\emph{Convergence of the empiricals (cf.~\cite[Proposition~D.1]{pooladian}).}
As $k\to\infty$, $\widehat{\pi}_{k}\rightharpoonup\pi^{\star}$
almost surely.
Indeed, \citet[Theorem~2.13]{bobkovledoux} give, almost surely, $W_{2}(\widehat{\mu}_{k},\,\mu)\to0$
and $W_{2}(\widehat{\nu}_{k},\,\nu)\to0$. Fix an outcome $\omega$ for
which both convergences hold. By \citet[Theorem~5.11]{santambrogio},
$\widehat{\mu}_{k}\rightharpoonup\mu$ and $\widehat{\nu}_{k}\rightharpoonup\nu$.
Moreover, the triangle inequality gives
\[
\int_{\mathbb{R}^{d}\times\mathbb{R}^{d}}\lVert x-y\rVert^{2}\,\mathrm{d}\widehat{\pi}_{k}=W_{2}^{2}(\widehat{\mu}_{k},\,\widehat{\nu}_{k})\leqslant\left(W_{2}(\widehat{\mu}_{k},\,\mu)+W_{2}(\mu,\,\nu)+W_{2}(\nu,\,\widehat{\nu}_{k})\right)^{2},
\]
so the sequence $(\widehat{\pi}_{k})$ has uniformly bounded quadratic
cost. Since each $\widehat{\pi}_{k}$ is an OT plan between $\widehat{\mu}_{k}$
and $\widehat{\nu}_{k}$, by \citet[Proposition~7.1.3]{ambrosio}, $(\widehat{\pi}_{k})$
is relatively compact in $\mathcal{P}(\mathbb{R}^{d}\times\mathbb{R}^{d})$,
and every weak limit point is an OT plan between $\mu$ and $\nu$.
As this OT plan is unique, every weak limit point equals $\pi^{\star}$,
thus $\widehat{\pi}_{k}\rightharpoonup\pi^{\star}$. Since this
holds on an event of probability one, the conclusion follows.

\emph{Convergence of the expected plan.}
We now show that as $k\to\infty$, $\overline{\pi}_{k}\rightharpoonup\pi^{\star}$.
Let $g$ be a bounded continuous map on $\mathbb{R}^{d}\times\mathbb{R}^{d}$.
Since $(\mathbf{X}_{k},\,\mathbf{Y}_{k})\sim\mu^{\otimes k}\otimes\nu^{\otimes k}$,
the transfer formula gives, for every bounded Borel map $f$ on $(\mathbb{R}^{d})^{k}\times(\mathbb{R}^{d})^{k}$,
\[
\int_{\Omega}f(\mathbf{X}_{k}(\omega),\,\mathbf{Y}_{k}(\omega))\,
\mathrm{d}\mathbb{P}(\omega)
=\int_{(\mathbb{R}^{d})^{k}\times(\mathbb{R}^{d})^{k}}
f(\mathbf{x}_{k},\,\mathbf{y}_{k})
\,\mathrm{d}(\mu^{\otimes k}\otimes\nu^{\otimes k})
(\mathbf{x}_{k},\,\mathbf{y}_{k}).
\]
Hence, by \cref{def:expected-batch-ot-plan}, using the bounded
Borel map
$f:(\mathbf{x}_{k},\,\mathbf{y}_{k})
\mapsto\int_{\mathbb{R}^{d}\times\mathbb{R}^{d}}g\,
\mathrm{d}\widehat{\pi}_{\mathbf{x}_{k},\,\mathbf{y}_{k}}$,
\[
\int_{\mathbb{R}^{d}\times\mathbb{R}^{d}}g\,\mathrm{d}\overline{\pi}_{k}
=\int_{\Omega}\left(\int_{\mathbb{R}^{d}\times\mathbb{R}^{d}}g\,
\mathrm{d}\widehat{\pi}_{k}(\omega)\right)
\,\mathrm{d}\mathbb{P}(\omega).
\]
By the convergence of empiricals, for $\mathbb{P}$-almost every $\omega$,
$\widehat{\pi}_{k}(\omega)\rightharpoonup\pi^{\star}$
as $k\to+\infty$, thus
\[
\int_{\mathbb{R}^{d}\times\mathbb{R}^{d}}g\,
\mathrm{d}\widehat{\pi}_{k}(\omega)
\xrightarrow[k\to+\infty]{}\int_{\mathbb{R}^{d}\times\mathbb{R}^{d}}g\,\mathrm{d}\pi^{\star}.
\]
Moreover,
\[
\left|\int_{\mathbb{R}^{d}\times\mathbb{R}^{d}}g\,
\mathrm{d}\widehat{\pi}_{k}(\omega)\right|
\leqslant\lVert g\rVert_{\infty}.
\]
Hence the dominated convergence theorem yields
\[
\int_{\mathbb{R}^{d}\times\mathbb{R}^{d}}g\,\mathrm{d}\overline{\pi}_{k}
\xrightarrow[k\to+\infty]{}\int_{\mathbb{R}^{d}\times\mathbb{R}^{d}}g\,\mathrm{d}\pi^{\star}.
\]
\end{proof}

\subsubsection{Lower bound on plan distance}\label{app:plan-distance-lower-bound}

We now prove a lower bound that converts excess quadratic transport cost into a lower bound on the distance to the OT plan $\pi^\star$.
Specifically, for every $\pi\in\Pi(\mu,\,\nu)$, we show that 
\[
W_{2}(\pi,\,\pi^{\star})\geqslant\frac{1}{\sqrt{2}}\left(\left(\int_{\mathbb{R}^{d}\times\mathbb{R}^{d}}\lVert x-y\rVert^{2}\,\mathrm{d}\pi(x,\,y)\right)^{\frac{1}{2}}-W_{2}(\mu,\,\nu)\right).
\]

\begin{proof}[Proof of \cref{prop:expected-batch-ot-plan-convergence}(1)]
Let $S(x,\,y)=y-x$. The map $S$ is
$\sqrt{2}$-Lipschitz, since
\[
\lVert S(x,\,y)-S(x',\,y')\rVert\leqslant\lVert x-x'\rVert+\lVert y-y'\rVert\leqslant\sqrt{2}\lVert(x,\,y)-(x',\,y')\rVert_{\mathbb{R}^{d}\times\mathbb{R}^{d}}\,,
\]
where $\lVert(x,\,y)-(x',\,y')\rVert_{\mathbb{R}^{d}\times\mathbb{R}^{d}}^2=\lVert x-x'\rVert^{2}+\lVert y-y'\rVert^{2}$.

Let $\gamma\in\Pi(S_{\sharp}\pi,\,S_{\sharp}\pi^{\star})$, and let $(D,\,D^{\star})\sim\gamma$.
By the reverse triangle inequality for the $L_{2}$ norm,
\[
\left(\mathbb{E}[\lVert D-D^{\star}\rVert^{2}]\right)^{\frac{1}{2}}\geqslant\left|\left(\mathbb{E}[\lVert D\rVert^{2}]\right)^{\frac{1}{2}}-\left(\mathbb{E}[\lVert D^{\star}\rVert^{2}]\right)^{\frac{1}{2}}\right|.
\]
Taking the infimum over $\Pi(S_{\sharp}\pi,S_{\sharp}\pi^{\star})$
yields
\begin{align*}
W_{2}(S_{\sharp}\pi,\,S_{\sharp}\pi^{\star}) & \geqslant\left|\left(\mathbb{E}_{D\sim S_{\sharp}\pi}[\lVert D\rVert^{2}]\right)^{\frac{1}{2}}-\left(\mathbb{E}_{D^{\star}\sim S_{\sharp}\pi^{\star}}[\lVert D^{\star}\rVert^{2}]\right)^{\frac{1}{2}}\right|\\
 & =\left|\left(\mathbb{E}_{(X,\,Y)\sim\pi}[\lVert S(X,\,Y)\rVert^{2}]\right)^{\frac{1}{2}}-\left(\mathbb{E}_{(X,\,Y)\sim\pi^{\star}}[\lVert S(X,\,Y)\rVert^{2}]\right)^{\frac{1}{2}}\right|\\
 & =\left|\left(\int_{\mathbb{R}^{d}\times\mathbb{R}^{d}}\lVert x-y\rVert^{2}\,\mathrm{d}\pi(x,\,y)\right)^{\frac{1}{2}}-W_{2}(\mu,\,\nu)\right|.
\end{align*}
Since $\pi^{\star}$ is optimal,
\[
\left(\int_{\mathbb{R}^{d}\times\mathbb{R}^{d}}\lVert x-y\rVert^{2}\,\mathrm{d}\pi(x,\,y)\right)^{\frac{1}{2}}\geqslant W_{2}(\mu,\,\nu),
\]
and hence
\begin{equation}
W_{2}(S_{\sharp}\pi,S_{\sharp}\pi^{\star})\geqslant\left(\int_{\mathbb{R}^{d}\times\mathbb{R}^{d}}\lVert x-y\rVert^{2}\,\mathrm{d}\pi(x,\,y)\right)^{\frac{1}{2}}-W_{2}(\mu,\,\nu).\label{eq:plan-distance-cost-lower-bound}
\end{equation}

For any $\gamma\in\Pi(\pi,\,\pi^{\star})$, the push-forward
$(S\times S)_{\sharp}\gamma$ belongs to $\Pi(S_{\sharp}\pi,\,S_{\sharp}\pi^{\star})$.
The Lipschitz bound on $S$ gives
\[
W_{2}^{2}(S_{\sharp}\pi,\,S_{\sharp}\pi^{\star})\leqslant\mathbb{E}_{(Q,\,Q^{\star})\sim\gamma}[\lVert S(Q)-S(Q^{\star})\rVert^{2}]\leqslant2\mathbb{E}_{(Q,\,Q^{\star})\sim\gamma}[\lVert Q-Q^{\star}\rVert_{\mathbb{R}^{d}\times\mathbb{R}^{d}}^{2}].
\]
Taking the infimum over $\Pi(\pi,\,\pi^{\star})$ gives
$W_{2}(S_{\sharp}\pi,\,S_{\sharp}\pi^{\star})\leqslant\sqrt{2}\,W_{2}(\pi,\,\pi^{\star})$.
Combining this with \eqref{eq:plan-distance-cost-lower-bound} yields the result.
\end{proof}

\subsubsection{Upper bound on plan distance}

We prove the converse estimate to the previous lower bound in the Gaussian-to-discrete setting: excess quadratic transport cost controls the \(W_2\)-distance to the semidiscrete OT plan.
Assume $\mu=\mathcal{N}(0,\,I_{d})$, $\nu=\operatorname{Unif}\{v_{1},\ldots,v_{M}\}$,
and denote $\pi^{\star}\in\Pi(\mu,\,\nu)$ the unique OT plan.
We show that there exists $A_{\nu}>0$ such that for any $\pi\in\Pi(\mu,\,\nu)$,
\[
W_{2}^{2}(\pi,\,\pi^{\star})\leqslant A_{\nu}\left(\int_{\mathbb{R}^{d}\times\mathbb{R}^{d}}\lVert x-y\rVert^{2}\,\mathrm{d}\pi(x,\,y)-W_{2}^{2}(\mu,\,\nu)\right)^{\frac{1}{2}}.
\]

We use the following formula for the distance from a point to a hyperplane:
\begin{lem}
\label{lem:point-hyperplane-distance}
Let $n_{H}\in\mathbb{R}^{d}\backslash\{0\}$,
$b\in\mathbb{R}$ and define, for $x\in\mathbb{R}^{d}$, $h(x)=\langle n_{H},\,x\rangle-b$.
Consider the hyperplane $H=\{x\in\mathbb{R}^{d}\,:\,h(x)=0\}$. Then
\[
\operatorname{dist}(x,\,H)=\frac{|h(x)|}{\lVert n_{H}\rVert}.
\]
\end{lem}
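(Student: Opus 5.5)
The proof reduces to the one-dimensional projection onto the normal direction. First, the claim is invariant under translating along $H$, so I will not use coordinates at all.

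Let $p$ be the orthogonal projection of $x$ onto $H$, characterized as the unique minimizer of $\lVert x-z\rVert$ over $z\in H$. The candidate is
\[
p=x-\frac{h(x)}{\lVert n_{H}\rVert^{2}}\,n_{H}.
\]
It lies in $H$, because $h(p)=\langle n_{H},x\rangle-b-h(x)=0$. Its distance to $x$ is $\lVert x-p\rVert=|h(x)|/\lVert n_{H}\rVert$.

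Next I show that $p$ is indeed closest. Take any $z\in H$. Then $h(z)=h(p)=0$, so $\langle n_{H},z-p\rangle=0$, meaning $z-p\perp n_{H}$. Since $x-p$ is parallel to $n_{H}$, Pythagoras gives
\[
\lVert x-z\rVert^{2}=\lVert x-p\rVert^{2}+\lVert p-z\rVert^{2}\geqslant\lVert x-p\rVert^{2}.
\]
Taking the infimum over $z\in H$ yields $\operatorname{dist}(x,H)=\lVert x-p\rVert$, which is the stated formula.

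There is no real obstacle. The only point needing care is the orthogonality $z-p\perp n_{H}$, and it follows directly from the affine form of $h$. Alternatively, Cauchy--Schwarz gives the lower bound in one line: $|h(x)|=|h(x)-h(z)|=|\langle n_{H},x-z\rangle|\leqslant\lVert n_{H}\rVert\,\lVert x-z\rVert$. The point $p$ then shows this bound is attained.
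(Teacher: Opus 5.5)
Your proof is correct and follows the paper's approach: the paper likewise writes down the orthogonal projection $x-\frac{h(x)}{\lVert n_{H}\rVert^{2}}n_{H}$ and reads off the distance. The only difference is that you also verify that this point lies in $H$ and is the minimizer, via Pythagoras or Cauchy--Schwarz, which the paper takes for granted.
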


\begin{proof}
The orthogonal projection of $x$ onto $H$ is
$x-\frac{h(x)}{\lVert n_{H}\rVert^{2}}n_{H}$, hence the distance is
$\frac{|h(x)|}{\lVert n_{H}\rVert}$.
\end{proof}

We now prove \cref{prop:expected-batch-ot-plan-convergence}(2), that is $W_{2}^{2}(\pi,\,\pi^{\star})\leqslant A_{\nu}\,\mathrm{Exc}(\pi)^{\frac{1}{2}}$, where
\[
\mathrm{Exc}(\pi)=\int_{\mathbb{R}^{d}\times\mathbb{R}^{d}}\lVert x-y\rVert^{2}\,\mathrm{d}\pi(x,\,y)-W_{2}^{2}(\mu,\,\nu).
\]

\paragraph{Reminders on semidiscrete OT.}

Because $\mu$ is absolutely continuous and $\nu$ is finitely supported,
semidiscrete OT \citep[Section~5]{peyre} gives a unique transport
map $T^{\star}:\mathbb{R}^{d}\to\mathcal{Y}$ and weights $\lambda_{1}^{\star},\ldots,\lambda_{M}^{\star}\in\mathbb{R}$
such that the functions $P_{j}^{\star}(x)=\lVert x-v_{j}\rVert^{2}-\lambda_{j}^{\star}$
define the Laguerre cells
\[
C_{j}=\left\{ x\in\mathbb{R}^{d}\,:\,P_{j}^{\star}(x)\leqslant\min_{1\leqslant\ell\leqslant M}P_{\ell}^{\star}(x)\right\} .
\]
Moreover, $T^{\star}(x)=v_{j}$ for $x\in C_{j}$. For $j\ne\ell$,
the interface between $C_{j}$ and $C_{\ell}$ is contained in the
affine hyperplane
\[
H_{j\ell}=\left\{x\in\mathbb{R}^{d}\,:\,P_{j}^{\star}(x)=P_{\ell}^{\star}(x)\right\}.
\]
We write $\Sigma^{\star}=\cup_{1\leqslant j<\ell\leqslant M}H_{j\ell}$, which
contains the decision boundary of $T^{\star}$.

\paragraph{Reduction to misclassification.}

Now let $(X,\,Y)\sim\pi$ and define $Y^{\star}=T^{\star}(X)$, so
that $(X,\,Y^{\star})\sim\pi^{\star}$. Using this as a coupling between
$\pi$ and $\pi^{\star}$, we obtain
\[
W_{2}^{2}(\pi,\,\pi^{\star})\leqslant\mathbb{E}[\lVert(X,\,Y)-(X,\,Y^{\star})\rVert_{\mathbb{R}^{d}\times\mathbb{R}^{d}}^{2}]=\mathbb{E}[\lVert Y-Y^{\star}\rVert^{2}].
\]
Since $Y$ and $Y^{\star}$ both take values in the finite set $\mathcal{Y}$,
\(
\lVert Y-Y^{\star}\rVert^{2}\leqslant\mathrm{diam}(\mathcal{Y})^{2}\,\mathbbm{1}_{\{Y\ne Y^{\star}\}},
\)
hence
\begin{equation}
W_{2}^{2}(\pi,\,\pi^{\star})\leqslant\mathrm{diam}(\mathcal{Y})^{2}\,\mathbb{P}(Y\ne Y^{\star}).\label{eq:plan-distance-misclassification-bound}
\end{equation}

\paragraph{Excess cost is the expected dual slack.}

Next we express the excess cost using semidiscrete OT duality \citep[Section~5]{peyre}:
\[
W_{2}^{2}(\mu,\,\nu)=\int_{\mathbb{R}^{d}}\min_{1\leqslant m\leqslant M}P_{m}^{\star}(x)\,\mathrm{d}\mu(x)+\frac{1}{M}\sum_{m=1}^{M}\lambda_{m}^{\star},
\]
For $y=v_{j}$, define $\lambda^{\star}(y)=\lambda_{j}^{\star}$, and set
\[
\Delta(x,\,y)=\lVert x-y\rVert^{2}-\left(\min_{1\leqslant m\leqslant M}P_{m}^{\star}(x)+\lambda^{\star}(y)\right).
\]
By dual feasibility, $\Delta(x,\,y)\geqslant0$ for all $(x,\,y)$. Moreover,
because $\pi$ has marginals $\mu$ and $\nu$,
\begin{equation}
\mathbb{E}[\Delta(X,\,Y)]=\int_{\mathbb{R}^{d}\times\mathbb{R}^{d}}\lVert x-y\rVert^{2}\,\mathrm{d}\pi(x,\,y)-W_{2}^{2}(\mu,\,\nu)=\mathrm{Exc}(\pi).\label{eq:excess-cost-dual-slack}
\end{equation}

\paragraph{Excess cost controls misclassification away from the boundary. }

Now fix $\delta>0$ and define the $\delta$-neighborhood of the decision
boundary:
\[
\Sigma_{\delta}^{\star}=\{x\in\mathbb{R}^{d}\,:\,\mathrm{dist}(x,\,\Sigma^{\star})\leqslant\delta\}.
\]
Take $x\in C_{j}\backslash\Sigma_{\delta}^{\star}$, let $\ell\ne j$
and consider $h_{j\ell}=P_{\ell}^{\star}-P_{j}^{\star}$. This is
an affine function of gradient $\nabla h_{j\ell}=2(v_{j}-v_{\ell})$, and
\(
\lVert\nabla h_{j\ell}\rVert=2\lVert v_{j}-v_{\ell}\rVert\geqslant2\,\mathrm{sep}(\mathcal{Y}).
\)
As $H_{j\ell}=h_{j\ell}^{-1}(\{0\})$, by \cref{lem:point-hyperplane-distance},
\[
|h_{j\ell}(x)|=\lVert\nabla h_{j\ell}\rVert\,\mathrm{dist}(x,\,H_{j\ell}).
\]
Because $x\not\in\Sigma_{\delta}^{\star}$, we have $\mathrm{dist}(x,\,H_{j\ell})>\delta$,
hence
\[
|h_{j\ell}(x)|\geqslant2\,\mathrm{sep}(\mathcal{Y})\,\delta.
\]
But $x\in C_{j}$, so $P_{j}^{\star}(x)\leqslant P_{\ell}^{\star}(x)$,
hence $h_{j\ell}(x)\geqslant0$, and
\begin{equation}
P_{\ell}^{\star}(x)-P_{j}^{\star}(x)\geqslant2\,\mathrm{sep}(\mathcal{Y})\,\delta.\label{eq:laguerre-slack-away-from-boundary}
\end{equation}
Now, since $x\in C_{j}$, $P_{j}^{\star}(x)=\min_{1\leqslant m\leqslant M}P_{m}^{\star}(x)$,
thus for $y=v_{\ell}$,
\[
\Delta(x,\,v_{\ell})=(\lVert x-v_{\ell}\rVert^{2}-\lambda_{\ell}^{\star})-\min_{1\leqslant m\leqslant M}P_{m}^{\star}(x)=P_{\ell}^{\star}(x)-P_{j}^{\star}(x).
\]
Using \eqref{eq:laguerre-slack-away-from-boundary}, we obtain $\Delta(x,\,v_{\ell})\geqslant2\,\mathrm{sep}(\mathcal{Y})\,\delta$,
\[
\Delta(X,\,Y)\,\mathbbm{1}_{\{X\not\in\Sigma_{\delta}^{\star},\,Y\ne Y^{\star}\}}\geqslant2\,\mathrm{sep}(\mathcal{Y})\,\delta\,\mathbbm{1}_{\{X\not\in\Sigma_{\delta}^{\star},\,Y\ne Y^{\star}\}}.
\]
Since $\Delta\geqslant0$ everywhere, \eqref{eq:excess-cost-dual-slack} implies
\[
\mathrm{Exc}(\pi)\geqslant2\,\mathrm{sep}(\mathcal{Y})\,\delta\,\mathbb{P}(X\not\in\Sigma_{\delta}^{\star},\,Y\ne Y^{\star}),
\]
hence
\begin{equation}
\mathbb{P}(X\not\in\Sigma_{\delta}^{\star},\,Y\ne Y^{\star})\leqslant\frac{\mathrm{Exc}(\pi)}{2\,\mathrm{sep}(\mathcal{Y})\,\delta}.\label{eq:misclassification-away-from-boundary}
\end{equation}

\paragraph{Gaussian anti-concentration near the boundary.}

Since $\Sigma^{\star}$ is the union of at most $\frac{M(M-1)}{2}$
affine hyperplanes, it suffices to bound the Gaussian mass of a $\delta$-tube
around one hyperplane $H=\{x\in\mathbb{R}^{d}\,:\,\langle n,\,x\rangle=b\}$
for $\lVert n\rVert=1$.

By \cref{lem:point-hyperplane-distance},
\(
\mathrm{dist}(x,\,H)=|\langle n,\,x\rangle-b|.
\)
If $X\sim\mathcal{N}(0,\,I_{d})$, then $\langle n,\,X\rangle\sim\mathcal{N}(0,\,1)$,
and
\[
\mu(\mathrm{dist}(X,\,H)\leqslant\delta)=\mathbb{P}_{Z\sim\mathcal{N}(0,\,1)}(|Z-b|\leqslant\delta).
\]
The standard Gaussian density is bounded by $(2\pi)^{-\frac{1}{2}}$,
so
\[
\mathbb{P}_{Z\sim\mathcal{N}(0,\,1)}(|Z-b|\leqslant\delta)\leqslant\frac{2\delta}{\sqrt{2\pi}}=\sqrt{\frac{2}{\pi}}\delta.
\]
By the union bound,
\begin{equation}
\mathbb{P}(X\in\Sigma_{\delta}^{\star})\leqslant\alpha\delta\quad\text{and}\quad\alpha=\frac{M(M-1)}{2}\sqrt{\frac{2}{\pi}}.\label{eq:gaussian-boundary-tube}
\end{equation}

\paragraph{Conclusion.}

Combining \eqref{eq:misclassification-away-from-boundary} and \eqref{eq:gaussian-boundary-tube}, we obtain
\begin{align*}
\mathbb{P}(Y\ne Y^{\star}) & \leqslant\mathbb{P}(X\in\Sigma_{\delta}^{\star})+\mathbb{P}(X\not\in\Sigma_{\delta}^{\star},\,Y\ne Y^{\star})\\
 & \leqslant\alpha\delta+\frac{\mathrm{Exc}(\pi)}{2\,\mathrm{sep}(\mathcal{Y})\,\delta}.
\end{align*}
Minimizing the right-hand side over $\delta>0$,
\begin{equation}
\mathbb{P}(Y\ne Y^{\star})\leqslant2\sqrt{\frac{\alpha}{2\,\mathrm{sep}(\mathcal{Y})}}\mathrm{Exc}(\pi)^{\frac{1}{2}}.\label{eq:misclassification-excess-cost-bound}
\end{equation}
Finally, substitute \eqref{eq:misclassification-excess-cost-bound} into \eqref{eq:plan-distance-misclassification-bound}:
\[
W_{2}^{2}(\pi,\,\pi^{\star})\leqslant A_{\nu}\,\mathrm{Exc}(\pi)^{\frac{1}{2}}\quad\text{and}\quad A_{\nu}=2\,\mathrm{diam}(\mathcal{Y})^{2}\left(\frac{M(M-1)}{2\sqrt{2\pi}\,\mathrm{sep}(\mathcal{Y})}\right)^{\frac{1}{2}}.
\]

In the Gaussian-to-uniform-discrete setting, apply the upper bound above
with $\pi=\overline{\pi}_{k}$. Using
\cref{prop:expected-batch-ot-cost-identity,prop:expected-batch-ot-cost-convergence},
\[
W_{2}^{2}(\overline{\pi}_{k},\,\pi^{\star})
\leqslant
A_{\nu}\left(\mathbb{E}[W_{2}^{2}(\widehat{\mu}_{k},\,\widehat{\nu}_{k})]
-W_{2}^{2}(\mu,\,\nu)\right)^{1/2}
\leqslant Ck^{-1/4}.
\]

\subsubsection{No universal cost-to-plan modulus}\label{app:no-universal-modulus}
We show that there is no universal modulus converting excess transport cost into \(W_2\)-closeness to the OT plan,
even when the OT plan is unique. More specifically, 
there is no function $\omega$ defined for $r\geqslant0$, with $\omega(r)\geqslant0$
and $\omega(r)\xrightarrow[r\to0]{}0$, such that the inequality
\[
W_{2}(\pi,\,\pi^{\star})\leqslant\omega\left(\int_{\mathbb{R}^{d}\times\mathbb{R}^{d}}\lVert x-y\rVert^{2}\,\mathrm{d}\pi(x,\,y)-W_{2}^{2}(\mu,\,\nu)\right)
\]
holds for all pairs $\mu,\,\nu\in\mathcal{P}_{2}(\mathbb{R}^{2})$
whose OT plan $\pi^{\star}$ is unique, and for every $\pi\in\Pi(\mu,\,\nu)$.
\begin{figure}[t]
  \centering
  \includegraphics[width=0.75\linewidth]{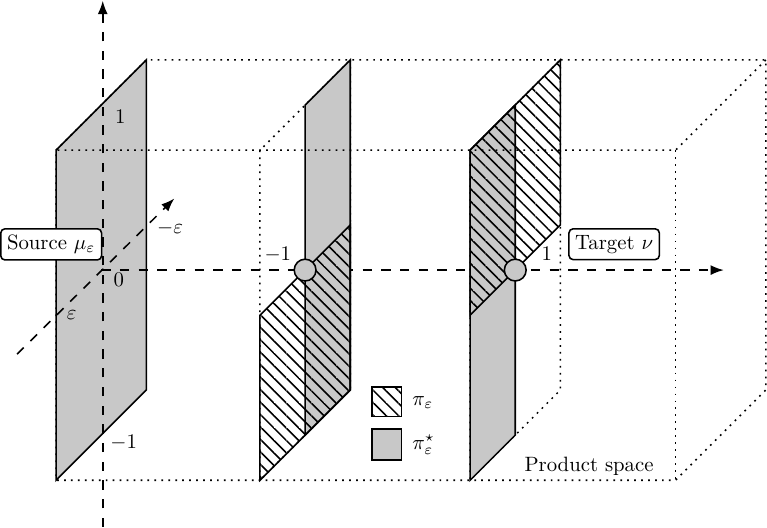}
  \caption{Couplings $\pi_{\varepsilon}$ and $\pi_{\varepsilon}^{\star}$ in $\Pi(\mu_{\varepsilon},\,\nu)$.}
\end{figure}

Indeed, for $\varepsilon>0$, let $K_{\varepsilon}=[-\varepsilon,\,\varepsilon]\times[-1,\,1]$,
$\mu_{\varepsilon}=\operatorname{Unif}(K_{\varepsilon})$ and $\nu=\operatorname{Unif}\{(-1,\,0),\,(1,\,0)\}$.
The OT map $S_{\varepsilon}^{\star}$ between $\mu_{\varepsilon}$
and $\nu$ is unique by Brenier's theorem, as $\mu_{\varepsilon}$
is absolutely continuous~\citep[Theorem~1.22]{santambrogio}. Moreover,
for every $(x_{1},\,x_{2})\in K_{\varepsilon}$, we have $S_{\varepsilon}^{\star}(x_{1},\,x_{2})=(\operatorname{sign}(x_{1}),\,0)$,
as it is optimal to send $(x_{1},\,x_{2})$ to $(-1,\,0)$ if $x_{1}<0$,
and to $(1,\,0)$ otherwise. Define the map $S_{\varepsilon}:(x_{1},\,x_{2})\mapsto(\operatorname{sign}(x_{2}),\,0)$.
The values of $S_{\varepsilon}$ and $S_{\varepsilon}^{\star}$ do
not matter on the $\mu_{\varepsilon}$-negligible sets $\mathbb{R}\times\{0\}$
and $\{0\}\times\mathbb{R}$. Denote by $\pi_{\varepsilon}=(\operatorname{id},\,S_{\varepsilon})_{\sharp}\mu_{\varepsilon}$
and $\pi_{\varepsilon}^{\star}=(\operatorname{id},\,S_{\varepsilon}^{\star})_{\sharp}\mu_{\varepsilon}$
the corresponding transport plans.
Then
\[
\delta_{\varepsilon}=\int_{\mathbb{R}^{2}\times\mathbb{R}^{2}}\lVert x-y\rVert^{2}\,\mathrm{d}\pi_{\varepsilon}(x,\,y)-W_{2}^{2}(\mu_{\varepsilon},\,\nu)=\frac{1}{4\varepsilon}\int_{K_{\varepsilon}}\left(\lVert x-S_{\varepsilon}(x)\rVert^{2}-\lVert x-S_{\varepsilon}^{\star}(x)\rVert^{2}\right)\,\mathrm{d}x.
\]
Given $x=(x_{1},\,x_{2})\in\mathbb{R}^{2}$, we have $S_{\varepsilon}(x)\ne S_{\varepsilon}^{\star}(x)$
if and only if $x_{1}$ and $x_{2}$ have opposite signs, that is,
$x_{1}x_{2}<0$.
For instance, if $x_{1}<0$ and $x_{2}>0$,
\[
\lVert x-S_{\varepsilon}(x)\rVert^{2}-\lVert x-S_{\varepsilon}^{\star}(x)\rVert^{2}=\left((x_{1}-1)^{2}+x_{2}^{2}\right)-\left((x_{1}+1)^{2}+x_{2}^{2}\right)=4|x_{1}|,
\]
which also holds if $x_{1}>0$ and $x_{2}<0$ by symmetry. Therefore,
\[
\delta_{\varepsilon}=\frac{1}{4\varepsilon}\left(-\int_{-\varepsilon}^{0}\int_{0}^{1}4x_{1}\,\mathrm{d}x_{2}\,\mathrm{d}x_{1}+\int_{0}^{\varepsilon}\int_{-1}^{0}4x_{1}\,\mathrm{d}x_{2}\,\mathrm{d}x_{1}\right)=\frac{2}{\varepsilon}\int_{0}^{\varepsilon}x_{1}\,\mathrm{d}x_{1}=\varepsilon.
\]
Consider the map $\Psi:(x,\,y)\in\mathbb{R}^{2}\times\mathbb{R}^{2}\mapsto x_{2}y_{1}\in\mathbb{R}$.
On $K_{\varepsilon}\times\{(-1,\,0),\,(1,\,0)\}$, since $a+b\leqslant\sqrt{2(a^{2}+b^{2})}$,
\begin{align*}
|\Psi(x,\,y)-\Psi(x',\,y')| & =|x_{2}y_{1}-x_{2}'y_{1}'|\leqslant|x_{2}-x_{2}'|+|y_{1}-y_{1}'|\\
 & \leqslant\sqrt{2}\sqrt{\lVert x-x'\rVert^{2}+\lVert y-y'\rVert^{2}}=\sqrt{2}\lVert(x,\,y)-(x',\,y')\rVert_{\mathbb{R}^{2}\times\mathbb{R}^{2}}.
\end{align*}
Thus $\Psi$ is $\sqrt{2}$-Lipschitz, and as in the proof of
\cref{app:plan-distance-lower-bound},
\[
W_{2}(\pi_{\varepsilon},\,\pi_{\varepsilon}^{\star})\geqslant\frac{1}{\sqrt{2}}W_{2}(\Psi_{\sharp}\pi_{\varepsilon},\,\Psi_{\sharp}\pi_{\varepsilon}^{\star}).
\]

Let $(X,\,Y)\sim\pi_{\varepsilon}$. Then $Y_{1}=\operatorname{sign}(X_{2})$,
hence $\Psi(X,\,Y)=X_{2}\,\operatorname{sign}(X_{2})=|X_{2}|\sim\operatorname{Unif}[0,\,1]$,
and $\Psi_{\sharp}\pi_{\varepsilon}=\operatorname{Unif}[0,\,1]$, with
quantile function $F_{\Psi_{\sharp}\pi_{\varepsilon}}^{-}(u)=u$.
Likewise, let $(X,\,Y)\sim\pi_{\varepsilon}^{\star}$. Then $Y_{1}=\operatorname{sign}(X_{1})$
is independent of $X_{2}\sim\operatorname{Unif}[-1,\,1]$, so $\Psi(X,\,Y)=X_{2}Y_{1}\sim\operatorname{Unif}[-1,\,1]$,
hence $\Psi_{\sharp}\pi_{\varepsilon}^{\star}=\operatorname{Unif}[-1,\,1]$,
with quantile function $F_{\Psi_{\sharp}\pi_{\varepsilon}^{\star}}^{-}(u)=-1+2u$.

By \citet[Remark~2.30]{peyre}, in dimension $1$, the
2-Wasserstein distance is the $L^{2}$ distance between quantile functions:
\[
W_{2}^{2}(\Psi_{\sharp}\pi_{\varepsilon},\,\Psi_{\sharp}\pi_{\varepsilon}^{\star})=\int_{0}^{1}(\underbrace{F_{\Psi_{\sharp}\pi_{\varepsilon}}^{-}(u)}_{u}-\underbrace{F_{\Psi_{\sharp}\pi_{\varepsilon}^{\star}}^{-}(u)}_{-1+2u})^{2}\,\mathrm{d}u=\int_{0}^{1}(u-1)^{2}\,\mathrm{d}u=\frac{1}{3}.
\]
This constructs a family of instances $(\mu_{\varepsilon},\,\nu)$ with unique
OT plans $\pi_{\varepsilon}^{\star}$ and couplings $\pi_{\varepsilon}\in\Pi(\mu_{\varepsilon},\,\nu)$
such that $\int_{\mathbb{R}^{2}\times\mathbb{R}^{2}}\lVert x-y\rVert^{2}\,\mathrm{d}\pi_{\varepsilon}(x,\,y)-W_{2}^{2}(\mu_{\varepsilon},\,\nu)=\delta_{\varepsilon}\downarrow0$
but $W_{2}(\pi_{\varepsilon},\,\pi_{\varepsilon}^{\star})\geqslant1/\sqrt{6}$,
ruling out the existence of any such universal function $\omega$.

\subsubsection{One-dimensional expected batch OT plan rate}
We show that in one dimension the expected batch OT plan converges to the monotone OT plan $\pi^\star$ at rate
\(O(k^{-1})\) in squared \(W_2\) distance, provided 
 $\mu,\,\nu\in\mathcal{P}_2(\mathbb{R})$ and one quantile map is Lipschitz.
If $\mu$ and $\nu$ have quantile functions
$F_{\mu}^{-}$ and $F_{\nu}^{-}$, and one of the quantiles is $L$-Lipschitz
on $[0,\,1]$, then
\[
W_{2}^{2}(\overline{\pi}_{k},\,\pi^{\star})\leqslant\frac{L^{2}}{3(k+1)}.
\]
We give the argument for the case where $F_{\nu}^{-}$ is $L$-Lipschitz on $[0,\,1]$:
the case of $F_{\mu}^{-}$ is symmetric.

Let $(U_{1},\dots,U_{k},V_{1},\ldots,V_{k})\sim\mathrm{Unif}([0,\,1])^{\otimes2k}$
and write the corresponding order statistics
\[
U_{(1)}\leqslant\cdots\leqslant U_{(k)}\quad\text{and}\quad V_{(1)}\leqslant\cdots\leqslant V_{(k)}.
\]
Let also $I\sim\mathrm{Unif}(\{1,\ldots,k\})$ be independent of everything
else.

In one dimension, the OT plan between two empirical measures with
equal weights matches the $i$-th order statistic of one sample with
the $i$-th order statistic of the other. Hence $(X,\,Y)=(F_{\mu}^{-}(U_{(I)}),\,F_{\nu}^{-}(V_{(I)}))$
has law $\overline{\pi}_{k}$.

Now define $Y^{\star}=F_{\nu}^{-}(U_{(I)})$. Since $I$ is independent,
$U_{(I)}\sim\mathrm{Unif}([0,\,1])$. Therefore $(X,\,Y^{\star})=(F_{\mu}^{-}(U_{(I)}),\,F_{\nu}^{-}(U_{(I)}))$
has law $\pi^{\star}$.
Thus $((X,\,Y),\,(X,\,Y^{\star}))$ is a coupling of $\overline{\pi}_{k}$
and $\pi^{\star}$, and
\[
W_{2}^{2}(\overline{\pi}_{k},\,\pi^{\star})\leqslant\mathbb{E}[\lVert(X,\,Y)-(X,\,Y^{\star})\rVert_{\mathbb{R}^{2}}^{2}]=\mathbb{E}[|Y-Y^{\star}|^{2}].
\]
Since $F_{\nu}^{-}$ is $L$-Lipschitz, $|Y-Y^{\star}|\leqslant L\,|V_{(I)}-U_{(I)}|$,
and
\(
W_{2}^{2}(\overline{\pi}_{k},\,\pi^{\star})\leqslant L^{2}\,\mathbb{E}[|U_{(I)}-V_{(I)}|^{2}].
\)

Conditioning on $I$ gives
\[
\mathbb{E}[|U_{(I)}-V_{(I)}|^{2}]=\frac{1}{k}\sum_{i=1}^{k}\mathbb{E}[|U_{(i)}-V_{(i)}|^{2}].
\]
For each $i$, the random variables $U_{(i)}$ and $V_{(i)}$ are
independent and identically distributed, so
\[
\mathbb{E}[|U_{(i)}-V_{(i)}|^{2}]=2\,\operatorname{Var}(U_{(i)}).
\]
As $\operatorname{Var}(U_{(i)})=\frac{i(k+1-i)}{(k+1)^{2}(k+2)}$,
and using $\sum_{i=1}^{k}i(k+1-i)=\frac{k(k+1)(k+2)}{6}$, we obtain
\[
\mathbb{E}[|U_{(I)}-V_{(I)}|^{2}]=\frac{1}{3(k+1)}.
\]
Substituting this into the previous bound yields
\[
W_{2}^{2}(\overline{\pi}_{k},\pi^{\star})\leqslant\frac{L^{2}}{3(k+1)}.
\]

\section[Proofs for Section 4]{Proofs for \cref{sec:flow-matching-expected-batch-ot}}

This appendix follows the structure of \cref{sec:flow-matching-expected-batch-ot}.
We first prove the posterior formulas for Gaussian-to-discrete FM velocities (\cref{app:conditional-mean-posterior-formulas}).
We then show that the expected batch OT plan is rectifiable by establishing regularity of its induced velocity field and constructing the terminal flow map (\cref{app:expected-batch-ot-rectifiability}).
We next prove the posterior-concentration and one-step Euler estimates used to interpret the independent coupling (\cref{app:posterior-concentration-euler}).
Finally, we analyze the one-dimensional two-atom model and prove the asymptotic comparison between OT batch size and NFE (\cref{app:binary-asymptotics}).

\paragraph{Notation for the proofs.}
In the main text of \cref{sec:flow-matching-expected-batch-ot}, we write \((X_0,\,X_1,\,X_t)\), with
\(X_t=(1-t)X_0+tX_1\). In this appendix only, we write
\((X,\,Y,\,Z_t)\) for the same objects, with \(X=X_0\), \(Y=X_1\), and
\(Z_t=X_t=(1-t)X+tY\). Accordingly, a point \(z\) in the support of
\(Z_t\) corresponds to a point \(x\) in the support of \(X_t\) in the
main text whenever \(x\) is used there as the current time-\(t\) state.

\subsection[Conditional mean and posterior formulas]{Conditional mean and posterior formulas (\cref{lem:gaussian-discrete-velocity-posterior})}
\label{app:conditional-mean-posterior-formulas}

\begin{proof}[Proof of \cref{lem:gaussian-discrete-velocity-posterior}]
Let $(X,\,Y)\sim\pi$ and $Z_{t}=(1-t)X+tY$. Since
\(
Y-X=\frac{Y-Z_{t}}{1-t},
\)
taking conditional expectations given $Z_{t}=z$ yields
\[
u_{t}^{\pi}(z)=\mathbb{E}[Y-X\mid Z_{t}=z]=\frac{m_{t}^{\pi}(z)-z}{1-t}.
\]
The finite-support expression for $m_{t}^{\pi}$ follows by conditioning
on the values of $Y$. The posterior formula for $\pi$ is Bayes'
formula applied after the change of variables
$z=(1-t)x+tv_{j}$. For $\mu\otimes\nu$, the same
Bayes formula gives the softmax expression because
$X$ is standard Gaussian and $Y$ is uniform on the atoms.
\end{proof}

\subsection[Rectifiability]{Rectifiability of the expected batch OT plan (\cref{prop:expected-batch-ot-rectifiability})}
\label{app:expected-batch-ot-rectifiability}

This section proves rectifiability of the expected batch OT plan by first expressing its FM velocity through averaged assignment probabilities, then proving the regularity needed for the pre-terminal ODE flow, and finally showing that the flow converges to a discrete terminal map.
Let $k\in\mathbb{N}^{*}$, $\mu=\mathcal{N}(0,\,I_{d})$ and $\nu=\operatorname{Unif}\{v_{1},\ldots,v_{M}\}$
with $M\geqslant2$, where the atoms $v_{1},\ldots,v_{M}$ are pairwise distinct.
Let $\mathcal{Y}=\{v_{1},\ldots,v_{M}\}$ denote the support of $\nu$.

\subsubsection{Local Lipschitz regularity of the batch OT target velocity}

Draw $\mathbf{X}_{k}=(X_{1},\ldots,X_{k})\sim\mu^{\otimes k}$ and $\mathbf{Y}_{k}=(Y_{1},\ldots,Y_{k})\sim\nu^{\otimes k}$
independently, and let $I\sim\operatorname{Unif}\{1,\ldots,k\}$
be independent of everything else. Set $\widetilde{X}=X_{I}$
and $\widetilde{Y}=Y_{\widehat{\sigma}_{\mathbf{X}_{k},\,\mathbf{Y}_{k}}(I)}$, with
$\widehat{\sigma}_{\mathbf{X}_{k},\,\mathbf{Y}_{k}}$ as in \cref{app:empirical-ot-measurable-solver}.
By \cref{def:expected-batch-ot-plan},
$(\widetilde{X},\,\widetilde{Y})\sim\overline{\pi}_{k}$.

To prove that the velocity field $u_{t}^{\overline{\pi}_{k}}$ is Lipschitz, we first express it in \cref{lem:expected-batch-ot-posterior-representation} in terms of the assignment probabilities
$\bar{a}_{j}(x)=\mathbb{P}(\widetilde{Y}=v_{j}\mid \widetilde{X}=x)$.
We then show in \cref{lem:expected-batch-ot-assignment-probabilities} that the functions $\bar{a}_{j}$ are globally
Lipschitz and bounded from below, which then yields~\cref{prop:expected-batch-ot-rectifiability}(1).

For each $i\in\{1,\ldots,k\}$ and $x\in\mathbb{R}^{d}$, let $\widehat{Y}^{(i)}(x)$ be the target atom matched to the $i$-th source
position when $X_i$ is replaced by $x$, while all other source points and
all target points are kept fixed:
\[
\widehat{Y}^{(i)}(x)=
Y_{\widehat{\sigma}_{k}(X_{1},\,\ldots,\,X_{i-1},\,x,\,X_{i+1},\,\ldots,\,X_{k},\,\mathbf{Y}_{k})(i)}.
\]
For $j\in\{1,\ldots,M\}$, set
\(
a_{j}^{(i)}(x)=\mathbb{P}(\widehat{Y}^{(i)}(x)=v_{j}).
\)
Since $I$ is uniform, $\overline{a}_{j}(x)$ is obtained by averaging over the
possible positions $i$:
$\overline{a}_{j}(x)=\frac{1}{k}\sum_{i=1}^{k} a_{j}^{(i)}(x)$.

\begin{lem}
\label{lem:expected-batch-ot-posterior-representation}
Let $\varphi$ denote the standard
Gaussian density. For $t\in[0,\,1)$ and $z\in\mathbb{R}^{d}$, write
\[
x_{t,j}(z)=\frac{z-tv_{j}}{1-t}
\quad\text{and}\quad
w_{j}(t,\,z)=(1-t)^{-d}\varphi(x_{t,j}(z))\overline{a}_{j}(x_{t,j}(z)).
\]
Then the density of $\overline{\pi}_{k}$ with respect to the Lebesgue measure in $x$
and the counting measure on $\{v_{1},\ldots,v_{M}\}$ is
$
f_{\overline{\pi}_{k}}(x,\,v_{j})=\varphi(x)\overline{a}_{j}(x),
$
and
\[
m_{t}^{\overline{\pi}_{k}}(z)=\sum_{j=1}^{M} v_{j}\,
\frac{w_{j}(t,\,z)}{\sum_{\ell=1}^{M} w_{\ell}(t,\,z)},\quad
u_{t}^{\overline{\pi}_{k}}(z)=\sum_{j=1}^{M} \frac{v_{j}-z}{1-t}\,
\frac{w_{j}(t,\,z)}{\sum_{\ell=1}^{M} w_{\ell}(t,\,z)}.
\]
\end{lem}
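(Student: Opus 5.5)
We prove the lemma in three steps: identify the density of $\overline{\pi}_{k}$, push it through the linear interpolation, and apply \cref{lem:gaussian-discrete-velocity-posterior}.

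\emph{Density of $\overline{\pi}_{k}$.} Write $(\widetilde{X},\,\widetilde{Y})\sim\overline{\pi}_{k}$ as above. By admissibility (\cref{app:expected-batch-ot-elementary-properties}), the first marginal of $\overline{\pi}_{k}$ is $\mu=\mathcal{N}(0,\,I_{d})$, so $\widetilde{X}$ has density $\varphi$. Since $\widetilde{Y}$ takes values in the finite set $\mathcal{Y}$, disintegration with respect to the first marginal gives, for every Borel $A\subseteq\mathbb{R}^{d}$ and every $j$,
\[
\mathbb{P}(\widetilde{X}\in A,\,\widetilde{Y}=v_{j})=\int_{A}\varphi(x)\,\overline{a}_{j}(x)\,\mathrm{d}x,
\]
where $\overline{a}_{j}(x)=\mathbb{P}(\widetilde{Y}=v_{j}\mid\widetilde{X}=x)$ is a Borel version of the conditional probability. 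This identifies $f_{\overline{\pi}_{k}}(x,\,v_{j})=\varphi(x)\overline{a}_{j}(x)$.

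For later use, we also justify the concrete version $\overline{a}_{j}=\frac{1}{k}\sum_{i}a_{j}^{(i)}$. Condition on $I=i$, which is independent and uniform. Then $\widetilde{X}=X_{i}$, and by Fubini, integrating out the remaining coordinates $(X_{\ell})_{\ell\neq i}$ and $\mathbf{Y}_{k}$ (independent of $X_{i}$) gives $\mathbb{P}(\widetilde{Y}=v_{j}\mid X_{i}=x,\,I=i)=a_{j}^{(i)}(x)$. Averaging over $i$ then yields
\[
\mathbb{E}[g(\widetilde{X})\mathbbm{1}_{\{\widetilde{Y}=v_{j}\}}]=\int g\,\varphi\,\frac{1}{k}\sum_{i}a_{j}^{(i)}\,\mathrm{d}x
\]
for all bounded Borel $g$. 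Measurability of $x\mapsto a_{j}^{(i)}(x)$ follows from the Borel measurability of the solver $\widehat{\sigma}_{k}$ together with Fubini.

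\emph{Posterior formula.} Fix $t\in[0,\,1)$. On the event $\{\widetilde{Y}=v_{j}\}$ we have $Z_{t}=(1-t)\widetilde{X}+tv_{j}$. The affine change of variables $x=x_{t,j}(z)$ has Jacobian $(1-t)^{-d}$, so
\[
\mathbb{P}(Z_{t}\in B,\,\widetilde{Y}=v_{j})=\int_{B}w_{j}(t,\,z)\,\mathrm{d}z .
\]
Hence $Z_{t}$ has density $\sum_{\ell}w_{\ell}(t,\,\cdot)$, which is strictly positive wherever some $\overline{a}_{\ell}>0$. Since $\sum_{\ell}\overline{a}_{\ell}=1$ and $\varphi>0$, this holds everywhere, so the ratios below are well defined. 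Bayes' formula then gives
\[
\mathbb{P}(\widetilde{Y}=v_{j}\mid Z_{t}=z)=\frac{w_{j}(t,\,z)}{\sum_{\ell}w_{\ell}(t,\,z)} .
\]
This is exactly \eqref{eq:gaussian-discrete-posterior-probabilities} with $f_{\pi}=f_{\overline{\pi}_{k}}$; the common factor $(1-t)^{-d}$ cancels in the ratio. Plugging these posterior weights into \eqref{eq:gaussian-discrete-velocity-posterior-mean} from \cref{lem:gaussian-discrete-velocity-posterior} gives the stated formula for $m_{t}^{\overline{\pi}_{k}}$. For $u_{t}^{\overline{\pi}_{k}}$, we use $\sum_{j}\frac{w_{j}}{\sum_{\ell}w_{\ell}}=1$ to write $z=\sum_{j}z\,\frac{w_{j}}{\sum_{\ell}w_{\ell}}$, and then $(m_{t}^{\overline{\pi}_{k}}(z)-z)/(1-t)$ takes the claimed form.

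\emph{Main subtlety.} The only delicate point is that conditional expectations are defined only almost everywhere. We resolve this by fixing the versions $\overline{a}_{j}=\frac{1}{k}\sum_{i}a_{j}^{(i)}$ and taking the displayed formulas as the definitions of $m_{t}^{\overline{\pi}_{k}}$ and $u_{t}^{\overline{\pi}_{k}}$ pointwise. This choice is consistent because the Lebesgue-a.e. identities above pin down the conditional law uniquely up to null sets. It is also the version whose regularity is established afterwards in \cref{lem:expected-batch-ot-assignment-probabilities}.
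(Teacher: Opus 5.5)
Your argument follows the paper's proof. You condition on the uniform index $I$ and integrate out the remaining batch coordinates by independence, which gives the density $\varphi\,\overline{a}_{j}$ with the concrete version $\overline{a}_{j}=\frac{1}{k}\sum_{i}a_{j}^{(i)}$. You then apply the affine change of variables $z=(1-t)x+tv_{j}$ and Bayes' formula. Your preliminary abstract disintegration step is harmless but redundant.

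\textbf{The positivity step does not follow as written.} You derive positivity of $\sum_{\ell}w_{\ell}(t,z)$ from $\sum_{\ell}\overline{a}_{\ell}\equiv1$ and $\varphi>0$. For $t>0$, however, the $\overline{a}_{\ell}$ are evaluated at the \emph{different} points $x_{t,\ell}(z)$. For a general coupling, nothing prevents $\overline{a}_{\ell}(x_{t,\ell}(z))=0$ for every $\ell$ at once.

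A concrete counterexample is the OT plan between $\mathcal{N}(0,1)$ and $\operatorname{Unif}\{-1,1\}$. Its assignment probabilities are the indicators of $\{x<0\}$ and $\{x>0\}$, which sum to one. Yet $Z_{t}=(1-t)X+t\operatorname{sign}(X)$ puts no mass on $(-t,t)$, so $w_{-1}(t,z)+w_{+1}(t,z)=0$ there.

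\textbf{The correct reason is the lower bound $\overline{a}_{j}\geqslant M^{-k}$.} The event $\{Y_{1}=\cdots=Y_{k}=v_{j}\}$ has probability $M^{-k}$ and is independent of the source points. On it, every position is matched to $v_{j}$, so $a_{j}^{(i)}(x)\geqslant M^{-k}$ for all $i$ and $x$. Hence $w_{j}(t,z)\geqslant(1-t)^{-d}M^{-k}\varphi(x_{t,j}(z))>0$ everywhere.

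The paper proves this bound in the subsequent lemma and relies on it for the displayed ratios to be defined pointwise. With this replacement, your proof is complete.
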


\begin{proof}
Let $g$ be bounded and Borel. Conditioning on the random index $I$ gives
\[
\mathbb{E}[g(\widetilde{X},\,\widetilde{Y})]
=\frac{1}{k}\sum_{i=1}^{k} \mathbb{E}\bigl[g(X_{i},\widehat{Y}^{(i)}(X_{i}))\bigr].
\]
Now condition on $X_{i}=x$. Since $X_{i}\sim\mathcal{N}(0,\,I_{d})$ is
independent of $(X_{r})_{r\ne i}$ and $\mathbf{Y}_{k}$,
\[
\mathbb{E}\bigl[g(X_{i},\,\widehat{Y}^{(i)}(X_{i}))\bigr]
=
\sum_{j=1}^{M} \int_{\mathbb{R}^{d}} g(x,\,v_{j})\,\varphi(x)\,a_{j}^{(i)}(x)\,\mathrm{d}x.
\]
Averaging over $i$ yields
\[
\mathbb{E}[g(\widetilde{X},\,\widetilde{Y})]
=
\sum_{j=1}^{M} \int_{\mathbb{R}^{d}} g(x,\,v_{j})\,\varphi(x)\,\overline{a}_{j}(x)\,\mathrm{d}x.
\]
Thus $(\widetilde{X},\,\widetilde{Y})$ has density
$(x,\,v_{j})\mapsto \varphi(x)\overline{a}_{j}(x)$.

Denote $Z_{t}=(1-t)\widetilde{X}+t\widetilde{Y}$. The change of variables
$z=(1-t)x+tv_{j}$ shows that $(Z_{t},\,\widetilde{Y})$ has density
$(z,\,v_{j})\mapsto w_{j}(t,\,z)$. Hence the density of $Z_{t}$ is
$\sum_{\ell=1}^{M} w_{\ell}(t,\,z)$, and Bayes' formula gives
\[
\mathbb{P}(\widetilde{Y}=v_{j}\mid Z_{t}=z)
=
\frac{w_{j}(t,\,z)}{\sum_{\ell=1}^{M} w_{\ell}(t,\,z)}.
\]
The formulas for
$m_{t}^{\overline{\pi}_{k}}(z)=\mathbb{E}[\widetilde{Y}\mid Z_{t}=z]$ and $u_{t}^{\overline{\pi}_{k}}$ follow.
\end{proof}

\begin{lem}
\label{lem:expected-batch-ot-assignment-probabilities}
For every $j\in\{1,\ldots,M\}$,
the map $\overline{a}_{j}:\mathbb{R}^{d}\to[0,\,1]$ is globally Lipschitz:
\[
\forall x,\,x'\in\mathbb{R}^d, \quad
|\overline{a}_{j}(x)-\overline{a}_{j}(x')|
\leqslant
(k!)^{2}\frac{\operatorname{diam}(\mathcal{Y})}{\operatorname{sep}(\mathcal{Y})\sqrt{2\pi}}
\lVert x-x'\rVert.
\]
Moreover, $\overline{a}_{j}(x)\geqslant M^{-k}$ for every $x\in\mathbb{R}^{d}$.
\end{lem}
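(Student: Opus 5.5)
Since $\overline{a}_j=\frac1k\sum_{i=1}^k a_j^{(i)}$, it suffices to prove both claims for a single $a_j^{(i)}$; by exchangeability of the source points, take $i=1$. Write $a_j^{(1)}(x)=\mathbb{P}(\widehat{Y}^{(1)}(x)=v_j)$, where the probability is over $X_2,\ldots,X_k\sim\mu$ i.i.d.\ and $\mathbf{Y}_k\sim\nu^{\otimes k}$.

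\textbf{Lower bound.} On the event $\{Y_1=\cdots=Y_k=v_j\}$, every permutation matches the first source point to $v_j$. Hence $\widehat{Y}^{(1)}(x)=v_j$ on this event, for any $x$ and any choice of solver. This event has probability $M^{-k}$, so $a_j^{(1)}(x)\geqslant M^{-k}$, and averaging over $i$ gives the same bound for $\overline{a}_j$.

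\textbf{Lipschitz bound.} Fix $\mathbf{y}_k$ and $x_2,\ldots,x_k$. As $x$ moves along the segment $[x,x']$, the solver's output can change only where two permutations tie. Each difference of costs $C_\sigma-C_\tau$ is affine in the first source point, with the $x$-dependent part equal to $-\frac{2}{k}\langle x,\,y_{\sigma(1)}-y_{\tau(1)}\rangle$.

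If $y_{\sigma(1)}=y_{\tau(1)}$, the two permutations assign the same atom to the first point, so this tie does not change $\widehat{Y}^{(1)}$. Otherwise, the tie happens on a hyperplane in $x$ whose normal $y_{\sigma(1)}-y_{\tau(1)}$ has norm at least $\mathrm{sep}(\mathcal{Y})$. The hyperplane's offset, however, depends on the other points $x_2,\ldots,x_k$.

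We therefore use the randomness of the $x_r$ rather than of $x$. The event $\{\widehat{Y}^{(1)}(x)\neq\widehat{Y}^{(1)}(x')\}$ implies that some pair $(\sigma,\tau)$ with $y_{\sigma(1)}\neq y_{\tau(1)}$ changes order between $x$ and $x'$. Since the $x$-dependent part of $C_\sigma-C_\tau$ changes by at most $\frac{2}{k}\mathrm{diam}(\mathcal{Y})\lVert x-x'\rVert$, the quantity
\[
D_{\sigma\tau}=-\frac{2}{k}\sum_{r\geqslant2}\langle x_r,\,y_{\sigma(r)}-y_{\tau(r)}\rangle+(\text{terms in }\mathbf{y}_k)
\]
must fall in an interval of length $\frac{2}{k}\mathrm{diam}(\mathcal{Y})\lVert x-x'\rVert$.

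The goal is then an anti-concentration bound for $D_{\sigma\tau}$. Conditionally on $\mathbf{y}_k$, it is Gaussian in $(x_2,\ldots,x_k)$. Its standard deviation is $\frac{2}{k}\big(\sum_{r\geqslant2}\lVert y_{\sigma(r)}-y_{\tau(r)}\rVert^2\big)^{1/2}$. Because $\sigma$ and $\tau$ are bijections and $y_{\sigma(1)}\neq y_{\tau(1)}$, there is some $r\geqslant2$ with $y_{\sigma(r)}\neq y_{\tau(r)}$ (taking the labels $y_{\sigma(r)}$ as distinct entries of the multiset). This gives a standard deviation of at least $\frac{2}{k}\mathrm{sep}(\mathcal{Y})$. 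A Gaussian density bound then yields
\[
\mathbb{P}\bigl(D_{\sigma\tau}\in\text{interval}\bigr)\leqslant\frac{\mathrm{diam}(\mathcal{Y})\lVert x-x'\rVert}{\mathrm{sep}(\mathcal{Y})\sqrt{2\pi}}.
\]
A union bound over the at most $(k!)^2$ pairs $(\sigma,\tau)$ gives $|a_j^{(1)}(x)-a_j^{(1)}(x')|\leqslant\mathbb{P}(\widehat{Y}^{(1)}(x)\neq\widehat{Y}^{(1)}(x'))$, which is bounded by the claimed constant.

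\textbf{Main obstacle.} The delicate step is this last one. It requires making rigorous that a change of output forces an order change of some pair with distinct first atoms: ties between permutations that agree on the atom at position $1$ are harmless. It also requires verifying, when $\mathbf{y}_k$ has repeated atoms, that some $r\geqslant2$ actually has a nonzero coefficient. If the second point fails for a given pair, I would instead re-index via permutations of the atom multiset, so that any $\tau$ differing at position $1$ must differ at a second position in the atom sense as well.
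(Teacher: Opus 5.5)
Your proposal is correct and matches the paper's proof essentially step for step. The paper bounds $|a_j^{(i)}(x)-a_j^{(i)}(x')|$ by $\mathbb{P}(\widehat{Y}^{(i)}(x)\neq\widehat{Y}^{(i)}(x'))$ and union-bounds over the $(k!)^2$ pairs $(\sigma,\sigma')$ selected by the solver at $x$ and $x'$. It confines the conditionally Gaussian term $U^{(i)}_{\sigma,\sigma'}$ to an interval of length at most $\frac{2}{k}\operatorname{diam}(\mathcal{Y})\lVert x-x'\rVert$ and lower-bounds its conditional variance by $(2\operatorname{sep}(\mathcal{Y})/k)^2$ on $\{Y_{\sigma(i)}\neq Y_{\sigma'(i)}\}$, using the same all-atoms-equal event for the $M^{-k}$ bound. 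The two ``obstacles'' you flag are already settled by your own argument. Taking $\sigma,\tau$ to be the solver outputs at $x$ and $x'$ gives the weak order reversal directly, and since both label sequences enumerate the same multiset, the families $(Y_{\sigma(r)})_{r\neq i}$ and $(Y_{\tau(r)})_{r\neq i}$ cannot coincide; the paper concludes in exactly this way.
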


\begin{proof}
Fix $i\in\{1,\ldots,k\}$ and $x,\,x'\in\mathbb{R}^{d}$. Since
$
a_{j}^{(i)}(x)=\mathbb{P}(\widehat{Y}^{(i)}(x)=v_{j}),
$
we have
\begin{align*}
|a_{j}^{(i)}(x)-a_{j}^{(i)}(x')|
&=
|
\mathbb{E}[\mathbbm{1}_{\{\widehat{Y}^{(i)}(x)=v_{j}\}}
-\mathbbm{1}_{\{\widehat{Y}^{(i)}(x')=v_{j}\}}]
|\\
&\leqslant
\mathbb{E}[
|\mathbbm{1}_{\{\widehat{Y}^{(i)}(x)=v_{j}\}}
-\mathbbm{1}_{\{\widehat{Y}^{(i)}(x')=v_{j}\}}|
] \\
&\leqslant
\mathbb{P}(\widehat{Y}^{(i)}(x)\neq \widehat{Y}^{(i)}(x')).
\end{align*}
It remains to bound this probability by a constant times $\lVert x-x'\rVert$.
For each permutation $\sigma\in\mathfrak{S}_{k}$ and each $z\in \mathbb{R}^{d}$, define the cost of transporting
$(X_{1},\,\ldots,\,X_{i-1},\,x,\,X_{i+1},\,\ldots,X_{k})$ to $\mathbf{Y}_{k}$ using $\sigma$:
\[
C_{\sigma}^{(i)}(z)
=
\frac{1}{k}\bigg(
\lVert z-Y_{\sigma(i)}\rVert^{2}
+
\sum_{r\neq i}\lVert X_{r}-Y_{\sigma(r)}\rVert^{2}
\bigg).
\]
By definition, $\widehat{Y}^{(i)}(z)$ is equal to $Y_{\sigma(i)}$, where
$\sigma$ is the permutation selected by the solver, minimizing
$\sigma \mapsto C_{\sigma}^{(i)}(z)$.
Let $E_{\sigma,\,\sigma'}$ be the event that the solver selects $\sigma$
at $x$ and $\sigma'$ at $x'$, and that $\widehat{Y}^{(i)}(x)\neq \widehat{Y}^{(i)}(x')$.
Then, by the union bound,
\[
\mathbb{P}(\widehat{Y}^{(i)}(x)\neq \widehat{Y}^{(i)}(x'))
\leqslant \sum_{\sigma,\sigma'\in\mathfrak{S}_{k}}\mathbb{P}(E_{\sigma,\sigma'}).
\]
Fix a pair $\sigma,\,\sigma'\in\mathfrak{S}_{k}$. Then
\[
C_{\sigma}^{(i)}(z)-C_{\sigma'}^{(i)}(z)
=
\frac{2}{k}\langle Y_{\sigma'(i)}-Y_{\sigma(i)},\,z\rangle
-
U_{\sigma,\,\sigma'}^{(i)},
\quad\text{with}\quad 
U_{\sigma,\,\sigma'}^{(i)}
=
\frac{2}{k}\sum_{r\neq i}
\langle Y_{\sigma(r)}-Y_{\sigma'(r)},\,X_{r}\rangle.
\]
On the event $E_{\sigma,\,\sigma'}$, we have
$C_{\sigma}^{(i)}(x)-C_{\sigma'}^{(i)}(x)\leqslant 0$ and
$C_{\sigma}^{(i)}(x')-C_{\sigma'}^{(i)}(x')\geqslant 0$, which implies that the random variable
\(U_{\sigma,\,\sigma'}^{(i)}\)
belongs to the interval
\[
I_{\sigma,\,\sigma'}^{(i)}(x,\,x')
=
\left[
\frac{2}{k}\langle Y_{\sigma'(i)}-Y_{\sigma(i)},\,x\rangle,\,
\frac{2}{k}\langle Y_{\sigma'(i)}-Y_{\sigma(i)},\,x'\rangle
\right].
\]
Since $E_{\sigma,\sigma'}$ implies that the labels at position $i$ are
different, we have
\[
E_{\sigma,\,\sigma'}
\subseteq
\{Y_{\sigma(i)}\neq Y_{\sigma'(i)}\}
\cap
\{U_{\sigma,\,\sigma'}^{(i)}\in I_{\sigma,\,\sigma'}^{(i)}(x,\,x')\}.
\]
Therefore, conditionally on $\mathbf{Y}_{k}$,
\[
\mathbb{P}(E_{\sigma,\,\sigma'}\mid \mathbf{Y}_{k})
\leqslant
\mathbbm{1}_{\{Y_{\sigma(i)}\neq Y_{\sigma'(i)}\}}
\,
\mathbb{P}(U_{\sigma,\,\sigma'}^{(i)}\in I_{\sigma,\,\sigma'}^{(i)}(x,\,x')
\mid \mathbf{Y}_{k}).
\]

Since $\sigma,\,\sigma'$ are fixed, conditionally on $\mathbf{Y}_{k}$, the variable $U_{\sigma,\,\sigma'}^{(i)}\mid\mathbf{Y}_{k}$
is centered and Gaussian, and
\[
\operatorname{Var}(U_{\sigma,\,\sigma'}^{(i)}\mid \mathbf{Y}_{k})
=
\left(\frac{2}{k}\right)^{2}
\sum_{r\neq i}\lVert Y_{\sigma(r)}-Y_{\sigma'(r)}\rVert^{2}.
\]
Moreover, on the event $\{Y_{\sigma(i)}\neq Y_{\sigma'(i)}\}$, as the families
$(Y_{\sigma(r)})_{r\neq i}$ and $(Y_{\sigma'(r)})_{r\neq i}$ cannot coincide,
\[
\operatorname{Var}(U_{\sigma,\,\sigma'}^{(i)}\mid \mathbf{Y}_{k})
\geqslant
\left(\frac{2\,\operatorname{sep}(\mathcal{Y})}{k}\right)^{2}.
\]
Let $\Phi$ denote the standard Gaussian cumulative distribution function.
The function $\Phi$ is $1/\sqrt{2\pi}$-Lipschitz. It follows that on the event
$\{Y_{\sigma(i)}\neq Y_{\sigma'(i)}\}$,
\[
\mathbb{P}(U_{\sigma,\,\sigma'}^{(i)}\in I_{\sigma,\,\sigma'}^{(i)}(x,\,x')
\mid \mathbf{Y}_{k})
\leqslant
\frac{|I_{\sigma,\,\sigma'}^{(i)}(x,\,x')|}
{\sqrt{2\pi}\,\sqrt{\operatorname{Var}(U_{\sigma,\,\sigma'}^{(i)}\mid \mathbf{Y}_{k})}}
\leqslant
\frac{\operatorname{diam}(\mathcal{Y})}
{\operatorname{sep}(\mathcal{Y})\sqrt{2\pi}}
\,\lVert x-x'\rVert.
\]
Hence
\[
\mathbb{P}(E_{\sigma,\sigma'}\mid \mathbf{Y}_{k})
\leqslant \mathbbm{1}_{\{Y_{\sigma(i)}\neq Y_{\sigma'(i)}\}}
\frac{\operatorname{diam}(\mathcal{Y})}
{\operatorname{sep}(\mathcal{Y})\sqrt{2\pi}}
\lVert x-x'\rVert
\leqslant
\frac{\operatorname{diam}(\mathcal{Y})}
{\operatorname{sep}(\mathcal{Y})\sqrt{2\pi}}
\lVert x-x'\rVert.
\]
Since the right-hand side does not depend on $\mathbf{Y}_k$, the same upper bound holds for $\mathbb{P}(E_{\sigma,\sigma'})$, and
\[
\mathbb{P}(\widehat{Y}^{(i)}(x)\neq \widehat{Y}^{(i)}(x'))
\leqslant
(k!)^{2}
\frac{\operatorname{diam}(\mathcal{Y})}
{\operatorname{sep}(\mathcal{Y})\sqrt{2\pi}}
\,\lVert x-x'\rVert.
\]
Combining this with the first inequality of the proof shows that
$a_{j}^{(i)}$ is Lipschitz, and the same holds for $\overline{a}_{j}$ by averaging over $i$.
Finally, the event $\{Y_1=\cdots=Y_k=v_{j}\}$ has probability $M^{-k}$,
and on this event one has $\widehat{Y}^{(i)}(x)=v_{j}$ for every $i$ and every $x$.
Thus $a_{j}^{(i)}(x)\geqslant M^{-k}$, and therefore $\overline{a}_{j}(x)\geqslant M^{-k}$.
\end{proof}

\begin{proof}[Proof of \cref{prop:expected-batch-ot-rectifiability}(1)]
As in \citet[Theorem~14]{hertrich}, we derive the regularity of the FM velocity
from the regularity of the conditional posterior weights, which in our setting are the $\overline{a}_j$
controlled in \cref{lem:expected-batch-ot-assignment-probabilities}.
Let $\tau\in(0,\,1)$ and let $K\subseteq\mathbb{R}^{d}$ be compact.
We show that there exists a constant
$L>0$ such that, for every $t\in[0,\,\tau]$, the vector field
$u_{t}^{\overline{\pi}_{k}}$ is $L$-Lipschitz on $K$. Moreover, we show that
$(t,\,z)\in[0,\,1)\times K\mapsto u_{t}^{\overline{\pi}_{k}}(z)$ is
continuous.

First, observe that by \cref{lem:expected-batch-ot-assignment-probabilities}, each $\overline{a}_{j}$
is globally Lipschitz on $\mathbb{R}^d$. It is also bounded by $1$ as a probability. 
The standard Gaussian $\varphi$ is also globally bounded and globally Lipschitz on $\mathbb{R}^d$.

As products of globally Lipschitz and bounded functions, the functions $\varphi\overline{a}_{j}$ are globally Lipschitz and bounded.
Since $(t,\,z)\mapsto x_{t,j}(z)$ is also Lipschitz in $z$ with a constant uniform in $t\in[0,\,\tau]$, it follows that each function $w_{j}(t,\,z) = (1-t)^{-d} \varphi(x_{t,j}(z)) \overline{a}_{j}(x_{t,j}(z))$ is also bounded and globally Lipschitz, with Lipschitz constant uniform in $t\in[0,\,\tau]$.
All of these functions are also continuous in $(t,\,z)$, as products and compositions of continuous functions. 
Hence the sum $S(t,\,z)=\sum_{\ell=1}^{M}w_{\ell}(t,\,z)$
is also continuous on $[0,\,\tau]\times \mathbb{R}^d$, globally Lipschitz in $z$ uniformly in $t\in[0,\,\tau]$.

Now, let $K$ be a compact set. 
Since $(t,\,z)\mapsto x_{t,j}(z)$
is continuous on the compact $[0,\,\tau]\times K$, the set
\[
K'=\{x_{t,j}(z):\,t\in[0,\,\tau],\ z\in K,\ 1\leqslant j\leqslant M\}
\]
is also compact. The Gaussian $\varphi$ is bounded from below by a positive constant $c_{\varphi}>0$ on $K'$ and $\overline{a}_{j}$ is bounded from below everywhere by $M^{-k}$.
It follows that  for every $(t,\,z)\in[0,\,\tau]\times K$,
$w_{j}(t,\,z)\geqslant (1-t)^{-d}c_{\varphi}M^{-k}>0$, and the sum $S(t,\,z)$
is also bounded from below on $[0,\,\tau]\times K$ by a positive constant.
Therefore each ratio $w_{j}/S$ is continuous on $[0,\,\tau]\times K$,
and Lipschitz in $z$ on $K$ with a constant uniform in $t\in[0,\,\tau]$.
Since, by \cref{lem:expected-batch-ot-posterior-representation},
\[
m_{t}^{\overline{\pi}_{k}}(z)=\sum_{j=1}^{M}v_{j}\,\frac{w_{j}(t,\,z)}{S(t,\,z)} \text{ and } u_{t}^{\overline{\pi}_{k}}(z)=\frac{m_{t}^{\overline{\pi}_{k}}(z)-z}{1-t}
\]
the same result holds for $(t,\,z)\mapsto m_{t}^{\overline{\pi}_{k}}(z)$ and $(t,\,z)\mapsto u_{t}^{\overline{\pi}_{k}}(z)$.
\end{proof}

The Gaussian density assumption is essential for the regularity of 
the target velocity $u_{t}^{\overline{\pi}_{k}}$. In fact, when $\mu$ has disconnected support, 
the velocity can even be discontinuous, as in the following example.

\begin{example}
\label{ex:disconnected-support-discontinuity}
In dimension $1$, define $S_{+}=(\frac{1}{2},\,1)$ and $S_{-}=-S_{+}$.
Fix $p\in(0,\,1)$, let $\mu$ be the probability measure with density
$f=2p\mathbbm{1}_{S_{+}}+2(1-p)\mathbbm{1}_{S_{-}}$, and $\nu=\operatorname{Unif}\{-1,\,1\}$.
Then for every $z_{0}\in(0,\,1/7)$, setting $t_{0}=\frac{1+2z_{0}}{3}$,
the map $(t,\,z)\mapsto u_{t}^{\overline{\pi}_{1}}(z)$ is discontinuous at $(t_{0},\,z_{0})$,
even when restricted to either coordinate direction (see \cref{fig:disconnected-support-discontinuity}).
\end{example}

\begin{figure}[t]
\centering
\includegraphics[width=1\textwidth]{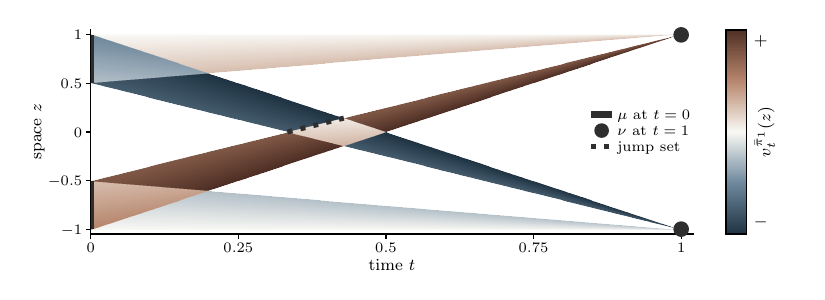}
\caption{The velocity field $u_{t}^{\overline{\pi}_{1}}$ described in \cref{ex:disconnected-support-discontinuity} exhibits jump discontinuities.}
\label{fig:disconnected-support-discontinuity}
\end{figure}

\subsubsection{Pre-terminal flow}

\begin{proof}[Proof of \cref{prop:expected-batch-ot-rectifiability}(2)]
Denote $u_{t}=u_{t}^{\overline{\pi}_{k}}$, $\rho_{t}=\rho_{t}^{\overline{\pi}_{k}}$,
and $R_{\mathcal{Y}}=\max_{1\leqslant j\leqslant M}\lVert v_{j}\rVert$.
The proof relies on combining \cite[Theorem~2.7]{pierret} and \cite[Proposition~3.2]{pierret}.
The main difference is that the velocity field $u_{t}^{\overline{\pi}_{k}}$ is locally Lipschitz
with a constant that may blow up as $t\to1$. For this reason, we work on $[0,\,\tau]$ for some $\tau<1$,
and since $\tau$ is arbitrary, this gives the conclusion on $[0,\,1)$.

\citet[Proposition~3.2]{pierret} ensures that the pair $(\rho^{\overline{\pi}_{k}},\,u^{\overline{\pi}_{k}})$
satisfies the continuity equation,  since $\mu$ and $\nu$ have finite second moments.
Moreover, for every $t\in[0,\,\tau]$ and every $z\in\mathbb{R}^{d}$,
since $m_{t}^{\overline{\pi}_{k}}(z)\in\operatorname{conv}(\mathcal{Y})$,
we have $\lVert m_{t}^{\overline{\pi}_{k}}(z)\rVert\leqslant R_{\mathcal{Y}}$. Thus
by \cref{lem:gaussian-discrete-velocity-posterior},
\begin{equation}
  \lVert u_{t}(z)\rVert=\frac{\lVert m_{t}^{\overline{\pi}_{k}}(z)-z\rVert}{1-t}\leqslant\frac{R_{\mathcal{Y}}+\lVert z\rVert}{1-\tau}.\label{eq:preterminal-linear-growth}
\end{equation}
Hence, for every compact $K\subseteq\mathbb{R}^{d}$, $\sup_{z\in K}\int_0^\tau \lVert u_t(z)\rVert \,\mathrm{d}t<+\infty$.
Moreover, the Lipschitz constant $L$ of \cref{prop:expected-batch-ot-rectifiability}(1) is integrable on $[0,\,\tau]$.
Therefore, applying \cite[Theorem 2.7]{pierret} on $[0,\,\tau]$
gives the existence and uniqueness of $\phi_t^{\overline{\pi}_{k}}$,
and the property $(\phi_{t}^{\overline{\pi}_{k}})_{\sharp}\mu=\rho_{t}$
on $[0,\,\tau]$.
The continuity of $x\mapsto\phi_{t}^{\overline{\pi}_{k}}(x)$ for every fixed $t\in[0,\,\tau]$ is a
consequence of the inequality~\eqref{eq:preterminal-linear-growth} and the Cauchy--Lipschitz theorem
(see \citet[Section 2.2]{pierret}).
\end{proof}

\subsubsection{Terminal flow}

We adapt the analysis of \citet{wan} to the expected batch OT setting.
Fix $\delta\in(0,\,\operatorname{sep}(\mathcal{Y})/8)$. The proof is divided into four steps.
\begin{enumerate}[leftmargin=*]
\item \emph{Uniform concentration near atoms.}
We show that the posterior mean \(m_t^{\overline{\pi}_k}(z)\) is exponentially close to \(v_j\), uniformly for \(z\in B(v_j,2\delta)\) and \(t\) close enough to \(1\).

\item \emph{Trapping near an atom.}
We show that once a trajectory enters \(B(v_j,\delta)\) sufficiently close to terminal time, it remains in \(B(v_j,2\delta)\) and converges to \(v_j\).

\item \emph{Almost-sure entry.}
We show that for \(\mu\)-almost every initial point, the trajectory enters one of the balls \(B(v_j,\delta)\) along a sequence of times \(t_n\uparrow1\), and therefore converges to the corresponding atom.

\item \emph{Terminal map and push-forward.}
We define the terminal map as the almost-sure limit of the flow and prove that it pushes \(\mu\) forward to \(\nu\).
\end{enumerate}

\paragraph{Step 1: Uniform concentration near atoms.}

Let $h=1-t$. By \cref{lem:expected-batch-ot-posterior-representation},
\[
m_{t}^{\overline{\pi}_{k}}(z)=\sum_{i=1}^{M}v_{i}\,\frac{w_{i}(t,\,z)}{\sum_{\ell=1}^{M}w_{\ell}(t,\,z)}\quad\text{and}\quad w_{i}(t,\,z)=h^{-d}\varphi\left(\frac{z-(1-h)v_{i}}{h}\right)\overline{a}_{i}\left(\frac{z-(1-h)v_{i}}{h}\right).
\]
Fix $z\in B(v_{j},\,2\delta)$ and $i\neq j$. Since $\delta<\operatorname{sep}(\mathcal{Y})/8$,
for $h$ small enough,
\begin{equation}
\lVert z-(1-h)v_{i}\rVert^{2}-\lVert z-(1-h)v_{j}\rVert^{2}\geqslant\frac{\operatorname{sep}(\mathcal{Y})^{2}}{4}.\label{eq:posterior-gaussian-ratio-gap}
\end{equation}
Using $\overline{a}_{i}\leqslant1$ and $\overline{a}_{j}\geqslant M^{-k}$ from \cref{lem:expected-batch-ot-assignment-probabilities},
it follows that
\[
\frac{w_{i}(t,\,z)}{w_{j}(t,\,z)}\leqslant M^{k}\exp\!\left(-\frac{\operatorname{sep}(\mathcal{Y})^{2}}{8h^{2}}\right).
\]
Summing over $i\neq j$ yields, as $\sum_{\ell=1}^{M}w_{\ell}(t,\,z)\geqslant w_{j}(t,\,z)$,
\[
1-\frac{w_{j}(t,\,z)}{\sum_{\ell=1}^{M}w_{\ell}(t,\,z)}\leqslant(M-1)M^{k}\exp\!\left(-\frac{\operatorname{sep}(\mathcal{Y})^{2}}{8(1-t)^{2}}\right).
\]
Hence
\[
\lVert m_{t}^{\overline{\pi}_{k}}(z)-v_{j}\rVert
\leqslant\operatorname{diam}(\mathcal{Y})\sum_{i\neq j}
\frac{w_{i}(t,\,z)}{\sum_{\ell=1}^{M}w_{\ell}(t,\,z)}
\leqslant\operatorname{diam}(\mathcal{Y})(M-1)M^{k}
\exp\!\left(-\frac{\operatorname{sep}(\mathcal{Y})^{2}}{8(1-t)^{2}}\right).
\]
Thus, choosing \(t_{\delta}\in(0,1)\) close enough to \(1\) so that
\eqref{eq:posterior-gaussian-ratio-gap} holds for all
\(t\in[t_{\delta},1)\), all \(z\in B(v_j,2\delta)\), all \(j\), and all
\(i\neq j\), we obtain
\[
\lVert m_{t}^{\overline{\pi}_{k}}(z)-v_{j}\rVert
\leqslant A\exp\left(-\frac{a}{(1-t)^{2}}\right),
\]
for every \(j\), every \(t\in[t_{\delta},1)\), and every
\(z\in B(v_j,2\delta)\), with
\[
A=\operatorname{diam}(\mathcal{Y})(M-1)M^{k},
\qquad
a=\frac{\operatorname{sep}(\mathcal{Y})^{2}}{8}.
\]

\paragraph{Step 2: Trapping near an atom.}
Increase \(t_\delta\), if necessary, so that
\[
\int_{t_{\delta}}^{1}
\frac{A}{1-s}
\exp\left(-\frac{a}{(1-s)^{2}}\right)
\,\mathrm{d}s<\delta .
\]
This preserves the conclusion of Step 1, since increasing \(t_\delta\)
only restricts the range of times under consideration.

Fix $x\in\mathbb{R}^{d}$, $j\in\{1,\ldots,M\}$ and $t_{*}\in[t_{\delta},\,1)$
such that $\lVert\phi_{t_{*}}^{\overline{\pi}_{k}}(x)-v_{j}\rVert<\delta$.
For $t\in[t_{*},\,1)$, set $z_{t}=\phi_{t}^{\overline{\pi}_{k}}(x)$ and
$y_{t}=z_{t}-v_{j}$.
With $\xi_{t}=m_{t}^{\overline{\pi}_{k}}(z_{t})-v_{j}$,
the ODE and the definition of \(\xi_t\) give 
\[
\frac{\mathrm{d}}{\mathrm{d}t}\left(\frac{y_{t}}{1-t}\right)=\frac{\xi_{t}}{(1-t)^{2}}.
\]
Hence, for every $t\in[t_{*},\,1)$, 
\begin{equation}
y_{t}=\frac{1-t}{1-t_{*}}y_{t_{*}}+(1-t)\int_{t_{*}}^{t}\frac{\xi_{s}}{(1-s)^{2}}\,\mathrm{d}s.\label{eq:yt-repr}
\end{equation}
Let $T$ be the supremum of the times $r\in[t_{*},\,1)$ such that
$\lVert y_{t}\rVert<2\delta$ for every $t\in[t_{*},\,r]$. By continuity
and the assumption at time $t_{*}$, $T>t_{*}$. For every $t<T$,
the trajectory remains in $B(v_{j},\,2\delta)$ on $[t_{*},\,t]$,
hence 
\begin{equation}
\forall s\in[t_{*},\,t],\quad\lVert\xi_{s}\rVert\leqslant A\exp\left(-\frac{a}{(1-s)^{2}}\right).\label{eq:bound-xi}
\end{equation}
Using \eqref{eq:yt-repr} and the inequality $1-t\leqslant1-s$ for
$s\leqslant t$, we obtain 
\begin{equation*}
\begin{aligned}\lVert y_{t}\rVert & \leqslant\frac{1-t}{1-t_{*}}\lVert y_{t_{*}}\rVert+(1-t)\int_{t_{*}}^{t}\frac{\lVert\xi_{s}\rVert}{(1-s)^{2}}\,\mathrm{d}s\\
 & <\delta+\int_{t_{*}}^{t}\frac{A}{1-s}\exp\!\left(-\frac{a}{(1-s)^{2}}\right)\,\mathrm{d}s\\
 & <2\delta.
\end{aligned}
\end{equation*}
Thus $\lVert y_{t}\rVert<2\delta$ for every $t\in[t_{*},\,T)$.
Letting \(t\uparrow T\) in the preceding estimate gives
\(\lVert y_T\rVert<2\delta\) if \(T<1\). Hence, by continuity, there
exists $\varepsilon\in(0,\,1-T)$ such that $\lVert y_{t}\rVert<2\delta$
for all $t\in[T,\,T+\varepsilon]$, contradicting the definition of
$T$ as a supremum. Therefore $T=1$.

It remains to prove convergence. Since $T=1$, \eqref{eq:bound-xi}
holds on $[t_{*},\,1)$. The first term in \eqref{eq:yt-repr} converges
to zero as $t\uparrow1$. For the second term, fix $\varepsilon>0$
and choose $T_{0}\in[t_{*},\,1)$ such that $\int_{T_{0}}^{1}\frac{A}{1-s}\exp\!\left(-\frac{a}{(1-s)^{2}}\right)\,\mathrm{d}s<\varepsilon$.
Then, for $t\geqslant T_{0}$, 
\[
\begin{aligned}(1-t)\int_{t_{*}}^{t}\frac{\lVert\xi_{s}\rVert}{(1-s)^{2}}\,\mathrm{d}s & \leqslant(1-t)\int_{t_{*}}^{T_{0}}\frac{\lVert\xi_{s}\rVert}{(1-s)^{2}}\,\mathrm{d}s\\
 & \quad+\int_{T_{0}}^{t}\frac{A}{1-s}\exp\!\left(-\frac{a}{(1-s)^{2}}\right)\,\mathrm{d}s.
\end{aligned}
\]
The first term tends to zero as $t\uparrow1$, while the second term
is bounded by $\varepsilon$. Since $\varepsilon>0$ was arbitrary,
the integral term in the representation of $y_{t}$ also tends to
zero. Hence $y_{t}\to0$, that is, $\phi_{t}^{\overline{\pi}_{k}}(x)\to v_{j}$
as $t\uparrow1$.

\paragraph{Step 3: Almost-sure entrance.}
Let $\mathcal{Y}_{\delta}
=\{z\in\mathbb{R}^{d}:\operatorname{dist}(z,\mathcal{Y})<\delta\}$.
Since $Z_t=(1-t)X+tY\to Y\in\mathcal{Y}$ almost surely as
$t\uparrow1$, we have
\(
\rho_t^{\overline{\pi}_{k}}(\mathcal{Y}_{\delta}^{c})\to0.
\)
Choose a sequence $t_n\uparrow1$ with $t_n\geqslant t_\delta$ and
\(
\rho_{t_n}^{\overline{\pi}_{k}}(\mathcal{Y}_{\delta}^{c})\leqslant 2^{-n}.
\)
By \cref{prop:expected-batch-ot-rectifiability}(2),
$(\phi_{t_n}^{\overline{\pi}_{k}})_\sharp\mu=\rho_{t_n}^{\overline{\pi}_{k}}$.
Hence
\[
\sum_{n\geqslant1}
\mu\!\left(\{x:\phi_{t_n}^{\overline{\pi}_{k}}(x)\notin\mathcal{Y}_{\delta}\}\right)
=
\sum_{n\geqslant1}
\rho_{t_n}^{\overline{\pi}_{k}}(\mathcal{Y}_{\delta}^{c})
<\infty.
\]
By Borel--Cantelli, for $\mu$-almost every $x$ there exists
$N(x)$ such that
\(
\phi_{t_n}^{\overline{\pi}_{k}}(x)\in \mathcal{Y}_{\delta}
\) 
for all $n\geqslant N(x)$.
Fix such an $x$, and choose an index $n_0\geqslant N(x)$.
Since $\delta<\operatorname{sep}(\mathcal Y)/8$, the balls
$B(v_1,\delta),\ldots,B(v_M,\delta)$ are pairwise disjoint.
Thus there is a unique $j(x)\in\{1,\ldots,M\}$ such that
\(
\phi_{t_{n_0}}^{\overline{\pi}_{k}}(x)\in B(v_{j(x)},\delta).
\)
Applying step 2 with
$t_*=t_{n_0}$ and $j=j(x)$ yields
\(
\phi_t^{\overline{\pi}_{k}}(x)\to v_{j(x)}
\) 
as $t\uparrow1$.
Consequently,
\(
\phi_{t_n}^{\overline{\pi}_{k}}(x)\in B(v_{j(x)},\delta)
\)
for all sufficiently large $n$.

\paragraph{Step 4: Terminal map and push-forward.}
By step 3,
for $\mu$-almost every $x$ the limit $\phi^{\overline{\pi}_{k}}(x)=\lim_{t\uparrow1}\phi_{t}^{\overline{\pi}_{k}}(x)$
exists and belongs to $\mathcal{Y}$. Choose a sequence $t_{n}\uparrow1$
as in step 3. Since each $\phi_{t_{n}}^{\overline{\pi}_{k}}$
is continuous by \cref{prop:expected-batch-ot-rectifiability}(2), the set $E=\{x\in\mathbb{R}^{d}\,:\,\lim_{n\to\infty}\phi_{t_{n}}^{\overline{\pi}_{k}}(x)\text{ exists}\}$
is Borel, and the limit map is Borel on $E$. Since $\mu(E)=1$, extending
it arbitrarily on $E^{c}$ yields a Borel map $\phi^{\overline{\pi}_{k}}:\mathbb{R}^{d}\to\mathcal{Y}$.
Let $f$ be bounded and continuous. By dominated convergence,
\[
\int_{\mathbb{R}^{d}}f(\phi^{\overline{\pi}_{k}}(x))\,\mathrm{d}\mu(x)=\lim_{n\to\infty}\int_{\mathbb{R}^{d}}f(\phi_{t_{n}}^{\overline{\pi}_{k}}(x))\,\mathrm{d}\mu(x).
\]
Using \cref{prop:expected-batch-ot-rectifiability}(2),
\[
\int_{\mathbb{R}^{d}}f(\phi_{t_{n}}^{\overline{\pi}_{k}}(x))\,\mathrm{d}\mu(x)=\int_{\mathbb{R}^{d}}f(z)\,\mathrm{d}\rho_{t_{n}}^{\overline{\pi}_{k}}(z).
\]
Since $\rho_{t}^{\overline{\pi}_{k}}\rightharpoonup\nu$ as $t\uparrow1$,
it follows that
\[
\int_{\mathbb{R}^{d}}f(\phi^{\overline{\pi}_{k}}(x))\,\mathrm{d}\mu(x)=\int_{\mathbb{R}^{d}}f(y)\,\mathrm{d}\nu(y).
\]
Therefore $(\phi^{\overline{\pi}_{k}})_{\sharp}\mu=\nu$.
This proves \cref{prop:expected-batch-ot-rectifiability}(3).

\subsection[Posterior concentration and one-step Euler accuracy]{Posterior concentration and one-step Euler accuracy (\cref{sec:posterior-concentration})}
\label{app:posterior-concentration-euler}

In the following proposition, we show that in the Gaussian-to-discrete setting, under the independent coupling, there exist positive constants $C_1,\,C_2$ (depending only on $\nu$) such that for any $t\in[0,\,1)$,
\[
 0\leqslant1-\mathbb{E}\left[\max_{1\leqslant i\leqslant M} \mathbb{P}(Y=v_{i}\mid Z_{t})\right]\leqslant C_1 \exp\left(-C_2 \frac {t^2}{(1-t)^2}\right).
\]
The constant $C_2$ depends on $\nu$ through the separation of its support, which is a key reason why the posterior probabilities concentrate faster when the dimension increases: if the $M$ points $v_{j}$ are distributed uniformly on $[0,1]^{d}$ for instance, then the average minimal separation of two points of $\nu$ is of the order $M^{-2/d}$, which increases with $d$.

\begin{prop}[Posterior concentration and one-step Euler accuracy]
Let $X\sim\mathcal{N}(0,\,I_{d})$ and
$Y\sim\operatorname{Unif}(\mathcal{Y})$ be independent, where  $\mathcal{Y}=\{v_{1},\ldots,v_{M}\}$ are distinct 
atoms in $\mathbb{R}^d$. Denote $Z_{t}=(1-t)X+tY$ and
\(
\operatorname{sep}(\mathcal{Y})
=\min_{1\leqslant i\ne j\leqslant M}\lVert v_{i}-v_{j}\rVert .
\)
Then, for every $t\in[0,\,1)$,
\[
0\leqslant1-\mathbb{E}\left[\max_{1\leqslant j\leqslant M}\mathbb{P}(Y=v_{j}\mid Z_{t})\right]
\leqslant(M-1)\exp\left(-\frac{t^{2}\operatorname{sep}(\mathcal{Y})^{2}}{8(1-t)^{2}}\right).
\]
If, in addition, $\mathcal{Y}\subset[-1,\,1]^{d}$, and if
$\rho_{t}^{\mu\otimes\nu}$ is the distribution of $Z_{t}$, then for every
$h\in[0,\,1-t]$,
\begin{align*}
W_{2}\left((\operatorname{id}+hu_{t}^{\mu\otimes\nu})_{\sharp}\rho_{t}^{\mu\otimes\nu},\,\rho_{t+h}^{\mu\otimes\nu}\right)
&\leqslant\frac{2h\sqrt{d(M-1)}}{1-t}\exp\left(-\frac{t^{2}\operatorname{sep}(\mathcal{Y})^{2}}{16(1-t)^{2}}\right).
\end{align*}
\end{prop}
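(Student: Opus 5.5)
The plan is to bound the defect $1-\max_j\mathbb{P}(Y=v_j\mid Z_t)$ by the posterior mass not placed on the \emph{true} atom, and then compute that mass explicitly using the softmax form of the posterior under the independent coupling (\cref{lem:gaussian-discrete-velocity-posterior}). The lower bound $0$ is trivial since posterior probabilities are at most $1$. Write $Y=v_J$ with $J$ uniform and independent of $X$, and set $p_i(z)=\mathbb{P}(Y=v_i\mid Z_t=z)$. Then $\max_j p_j(Z_t)\geqslant p_J(Z_t)$, so
\[
1-\mathbb{E}\bigl[\max_j p_j(Z_t)\bigr]\leqslant\mathbb{E}\bigl[1-p_J(Z_t)\bigr]=\sum_{i}\mathbb{E}\bigl[\mathbbm{1}_{\{i\ne J\}}\,p_i(Z_t)\bigr].
\]
By the softmax formula, $p_i/p_J=w_i/w_J$ with $w_i(z)=\exp(-\lVert z-tv_i\rVert^2/(2(1-t)^2))$. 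I would use $p_i\leqslant\min(1,w_i/w_J)\leqslant\sqrt{w_i/w_J}$.

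Next, condition on $J=j$ and $i\ne j$, and substitute $Z_t=(1-t)X+tv_j$. Put $\Delta=v_j-v_i$ and $D=\lVert\Delta\rVert\geqslant\operatorname{sep}(\mathcal{Y})$. The exponent becomes
\[
\log\frac{w_i}{w_j}=-\frac{t^2D^2+2t(1-t)\langle X,\Delta\rangle}{2(1-t)^2}.
\]
Since $G=\langle X,\Delta\rangle/D\sim\mathcal{N}(0,1)$, the Gaussian moment generating function gives
\[
\mathbb{E}\Bigl[\sqrt{w_i/w_j}\Bigr]=\exp\!\Bigl(-\frac{t^2D^2}{4(1-t)^2}\Bigr)\,\mathbb{E}\Bigl[e^{-\frac{tD}{2(1-t)}G}\Bigr]=\exp\!\Bigl(-\frac{t^2D^2}{8(1-t)^2}\Bigr).
\]
Summing over the $M-1$ indices $i\ne j$ and averaging over $j$ yields the first bound.

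For the Euler estimate, I would use the synchronous coupling induced by the same pair $(X,Y)$: the pair $\bigl(Z_t+hu_t(Z_t),\,Z_{t+h}\bigr)$ has the two required marginals. Because $Z_{t+h}=Z_t+h(Y-X)$ and $Y-X=(Y-Z_t)/(1-t)$, while $u_t(Z_t)=(m_t(Z_t)-Z_t)/(1-t)$, the difference is
\[
\frac{h}{1-t}\bigl(Y-m_t(Z_t)\bigr).
\]
Moreover $Y-m_t(Z_t)=\sum_{i}p_i(Z_t)(v_J-v_i)$, so its norm is at most $\operatorname{diam}(\mathcal{Y})\,(1-p_J(Z_t))\leqslant 2\sqrt d\,(1-p_J)$ when $\mathcal{Y}\subset[-1,1]^d$. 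Using $(1-p_J)^2\leqslant 1-p_J$ gives
\[
W_2\leqslant\frac{2h\sqrt d}{1-t}\,\mathbb{E}[1-p_J]^{1/2}.
\]
Inserting the bound on $\mathbb{E}[1-p_J]$ obtained above produces exactly the stated exponent with $16$ and the prefactor $\sqrt{M-1}$.

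The main obstacle is the concentration step: the naive bound $\min(1,w_i/w_j)$ has an unbounded Gaussian exponential moment, and it is the square-root trick that yields the clean $1/8$ constant. If that step needed sharpening, the fallback would be to split on the event $\{\lvert G\rvert\leqslant tD/(2(1-t))\}$, but I expect the square-root bound to suffice.
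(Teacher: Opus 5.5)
Your proof is correct. The Euler half essentially matches the paper: both couple $Z_t+hu_t(Z_t)$ with $Z_{t+h}$ through the same pair $(X,Y)$ and use that the difference is $\frac{h}{1-t}(m_t(Z_t)-Y)$. The concentration half, however, takes a genuinely different route.

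The paper bounds the stated quantity directly. It identifies $1-\mathbb{E}[\max_j p_j]$ with the error probability of the Bayes classifier. For $W_t=Z_t/(1-t)$ and $s=t/(1-t)$, this classifier is the nearest-neighbour rule $w\mapsto\argmin_j\lVert w-sv_j\rVert$. The paper then takes a union bound over pairwise confusions and applies $\Phi(-u)\leqslant e^{-u^2/2}$.

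You instead bound the larger quantity $\mathbb{E}[1-p_J(Z_t)]$, the expected posterior mass on wrong atoms, via $\min(1,a)\leqslant\sqrt a$ and the Gaussian moment generating function. In effect this is the Bhattacharyya coefficient of $\mathcal{N}(tv_i,(1-t)^2I_d)$ and $\mathcal{N}(tv_j,(1-t)^2I_d)$, and it gives the same exponent $1/8$.

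What your route buys:
\begin{itemize}
\item It needs no measurable maximizer and no identification of the decision rule.
\item It yields a formally stronger statement with the same constant.
\item It controls exactly what the $L^2$ Euler error needs. Since $\mathbb{E}[p_J(Z_t)]=\mathbb{E}[\sum_j p_j(Z_t)^2]$, your quantity equals $\mathbb{E}[1-\sum_j p_j^2]$, so the paper's step $1-\sum_j p_j^2\leqslant 2(1-p_*)$ becomes unnecessary. Combined with the paper's bound $\mathbb{E}[\lVert Y-m_t\rVert^2\mid Z_t]\leqslant 2d(1-\sum_j p_j^2)$, it would even improve the Euler constant by a factor $\sqrt2$.
\end{itemize}
The paper's route, in exchange, bounds the Bayes error itself, with a transparent nearest-neighbour interpretation at scale $s$.

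One small correction to your closing remark. The bound $p_i\leqslant w_i/w_j$ alone does not lead to a divergent moment. Given $J=j$, $w_i/w_j$ is a likelihood ratio, so its expectation is exactly $1$: the bound is merely useless, and the square root is what turns it into a decaying one.
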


\begin{proof}
Let $t\in[0,\,1)$. Write $s=\frac{t}{1-t}$ and  $W_{t}=\frac{Z_{t}}{1-t}=X+sY$.
Almost surely, $\mathbb{P}(Y=v_{j}\mid Z_{t})=\mathbb{P}(Y=v_{j}\mid W_{t})$.
For $w\in\mathbb{R}^{d}$, choose a measurable maximizer
\[
\hat{\jmath}(w)\in\operatorname*{arg\,max}_{1\leqslant j\leqslant M}p_{j}(w)
\quad\text{where}\quad p_{j}(w)=\mathbb{P}(Y=v_{j}\mid W_{t}=w).
\]
Almost surely,
\(
\max_{1\leqslant j\leqslant M}\mathbb{P}(Y=v_{j}\mid W_{t})
=\mathbb{P}(Y=v_{\hat{\jmath}(W_{t})}\mid W_{t}),
\)
and therefore,
\[
\mathbb{E}\left[\max_{1\leqslant j\leqslant M}\mathbb{P}(Y=v_{j}\mid W_{t})\right]
=\mathbb{P}(Y=v_{\hat{\jmath}(W_{t})}).
\]
Now, as $X$ and $Y$ are independent,
\[
W_{t}\mid\{Y=v_{j}\} = X+sv_{j}\sim\mathcal{N}(sv_{j},\,I_{d}),
\]
hence the density of $W_{t}\mid\{Y=v_{j}\}$ is proportional to
$\exp(-\frac{1}{2}\lVert w-sv_{j}\rVert^{2})$, and 
\[
\hat{\jmath}(w)\in\argmin_{1\leqslant j\leqslant M}\lVert w-sv_{j}\rVert.
\]
Fix $i\in\{1,\ldots,M\}$. On the event that $\hat{\jmath}(W_{t})=j$ with $j\ne i$, then
$\lVert W_{t}-sv_{j}\rVert\leqslant\lVert W_{t}-sv_{i}\rVert$, hence
\[
\lVert X+s(v_{i}-v_{j})\rVert\leqslant\lVert X\rVert.
\]
Squaring and simplifying, we obtain 
\(
\langle X,\,v_{i}-v_{j}\rangle\leqslant-\frac{s}{2}\lVert v_{i}-v_{j}\rVert^{2}.
\)
Since $X$ is a standard Gaussian, $\langle X,\,v_{i}-v_{j}\rangle\sim\mathcal{N}(0,\,\lVert v_{i}-v_{j}\rVert^{2})$,
and denoting by $\Phi$ the c.d.f. of the standard Gaussian, we obtain
\[
\mathbb{P}(\hat{\jmath}(W_{t})=j\mid Y=v_{i})\leqslant\Phi\left(-\frac{s}{2}\lVert v_{i}-v_{j}\rVert\right)\leqslant\Phi\left(-\frac{s}{2}\operatorname{sep}(\mathcal{Y})\right).
\]
Summing over $j\in \{1,\ldots,M\}\backslash \{i\}$ gives
\[
\mathbb{P}(\hat{\jmath}(W_{t})\ne i\mid Y=v_{i})\leqslant(M-1)\Phi\left(-\frac{s}{2}\operatorname{sep}(\mathcal{Y})\right).
\]
Since $Y$ is uniform, averaging over $i$ yields
\[
\mathbb{P}(Y\ne v_{\hat{\jmath}(W_{t})})=\frac{1}{M}\sum_{i=1}^{M}\mathbb{P}(\hat{\jmath}(W_{t})\ne i\mid Y=v_{i})\leqslant(M-1)\Phi\left(-\frac{s}{2}\operatorname{sep}(\mathcal{Y})\right).
\]
Using $\Phi(-u)\leqslant e^{-u^{2}/2}$ for $u\geqslant0$, we conclude that
\[
\mathbb{P}(Y=v_{\hat{\jmath}(W_{t})})\geqslant1-(M-1)\exp\left(-\frac{s^{2}}{8}\operatorname{sep}(\mathcal{Y})^{2}\right).
\]
Recalling that $s=\frac{t}{1-t}$ and that conditioning on $Z_t$ or on $W_t$
is equivalent gives the claim.
See also \citet[Lemma~A.1]{dodson2026two} for a related posterior-concentration bound.

We now prove the second inequality, assuming $\mathcal{Y}\subset[-1,\,1]^d$.
Let $\tilde{Z}_{t,\,h}=Z_{t}+hu_{t}^{\mu\otimes\nu}(Z_{t})$. Then
$\tilde{Z}_{t,\,h}\sim(\operatorname{id}+hu_{t}^{\mu\otimes\nu})_{\sharp}\rho_{t}^{\mu\otimes\nu}$,
and $Z_{t+h}\sim\rho_{t+h}^{\mu\otimes\nu}$. Hence, by the definition
of $W_{2}$,
\begin{equation}
W_{2}^{2}\left((\operatorname{id}+hu_{t}^{\mu\otimes\nu})_{\sharp}\rho_{t}^{\mu\otimes\nu},\,\rho_{t+h}^{\mu\otimes\nu}\right)\leqslant\mathbb{E}[\lVert\tilde{Z}_{t,\,h}-Z_{t+h}\rVert^{2}].\label{eq:one-step-euler-coupling}
\end{equation}
Now $Z_{t+h}=Z_{t}+h(Y-X)$, and by \cref{lem:gaussian-discrete-velocity-posterior},
\[
\tilde{Z}_{t,\,h}-Z_{t+h}=\frac{h}{1-t}\left(m_{t}^{\mu\otimes\nu}(Z_{t})-Y\right).
\]
Moreover, denoting $p_{\ell}(z)=\mathbb{P}(Y=v_{\ell}\mid Z_{t}=z)$
and $p_{*}(z)=\max_{\ell}p_{\ell}(z)$,
as $\lVert v_{i}-v_{j}\rVert\leqslant2\sqrt{d}$,
\begin{align*}
\mathbb{E}[\lVert Y-m_{t}^{\mu\otimes\nu}(Z_{t})\rVert^{2}\mid Z_{t}=z]
&= \sum_{i=1}^{M}\lVert v_{i}-m_{t}^{\mu\otimes\nu}(z)\rVert^{2}p_{i}(z)
= \frac{1}{2}\sum_{1\leqslant i,j\leqslant M}p_{i}(z)p_{j}(z)\lVert v_{i}-v_{j}\rVert^{2} \\
&\leqslant 2d\left(1-\sum_{j=1}^{M}p_{j}(z)^{2}\right).
\end{align*}
Moreover, since $\sum_{j=1}^M p_{j}(z)^{2}\geqslant p_{*}(z)^{2}$, we obtain
\[
1-\sum_{j=1}^{M}p_{j}(z)^{2}\leqslant1-p_{*}(z)^{2}=(1-p_{*}(z))(1+p_{*}(z))\leqslant2(1-p_{*}(z)).
\]
Hence taking expectations,
\begin{equation}
\mathbb{E}[\lVert Y-m_{t}^{\mu\otimes\nu}(Z_{t})\rVert^{2}]\leqslant4d\left(1-\mathbb{E}\left[\max_{1\leqslant j\leqslant M}\mathbb{P}(Y=v_{j}\mid Z_{t})\right]\right).\label{eq:one-step-euler-posterior-bound}
\end{equation}
Combining \eqref{eq:one-step-euler-coupling} and \eqref{eq:one-step-euler-posterior-bound}, and then applying
the first inequality yields the result.
\end{proof}

\subsection[Two-atom tractable case]{Two-atom tractable case (\cref{prop:binary-integration-asymptotics})}
\label{app:binary-asymptotics}

Classical Euler bounds are poorly adapted to the present regime: they
are driven by global Lipschitz estimates on the velocity field, whereas
in the binary model the relevant phenomenon is the formation of a
sharp transition layer near the origin. We therefore argue directly
with the expected batch OT plan in the binary case.
From now on, we work with $\mu=\mathcal{N}(0,\,1)$ and
$\nu=\operatorname{Unif}\{-1,\,+1\}$.

\subsubsection{Auxiliary formulas}

For $k\in\mathbb{N}^{*}$ and $x\in\mathbb{R}$, define
\(
q_{k}(x)=\mathbb{P}_{(X,\,Y)\sim\overline{\pi}_{k}}(Y=-1\mid X=x).
\)

\begin{lem}
\label{lem:binary-assignment-probability}
For $x\in\mathbb{R}$, let $u=\Phi(x)$, where $\Phi$ denotes the
c.d.f. of a standard Gaussian. Let
$N_{-}\sim\operatorname{Bin}(k,\,\tfrac{1}{2})$ and define
$B_{r}=\mathbb{P}(N_{-}\geqslant r)$ for $r\in\{1,\ldots,k\}$. Then $q_{k}(-x)=1-q_{k}(x)$, and
\begin{equation}
q_{k}(x)=\sum_{j=0}^{k-1}\binom{k-1}{j}u^{j}(1-u)^{k-1-j}B_{j+1}.
\label{eq:binary-assignment-probability}
\end{equation}
\end{lem}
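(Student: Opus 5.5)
The plan is to compute $q_k(x)$ directly from the sampling description of $\overline{\pi}_k$, using the conditional representation from the proof of \cref{lem:expected-batch-ot-posterior-representation}. That proof gives $f_{\overline{\pi}_k}(x,v_j)=\varphi(x)\overline{a}_j(x)$ with $\overline{a}_j(x)=\frac1k\sum_{i=1}^k a_j^{(i)}(x)$, so
\[
q_k(x)=\frac1k\sum_{i=1}^k\mathbb{P}\bigl(\widehat{Y}^{(i)}(x)=-1\bigr).
\]
By exchangeability of $(X_r)_{r\neq i}$ and of $\mathbf{Y}_k$, every term equals the $i=1$ term. It therefore suffices to compute $\mathbb{P}(\widehat{Y}^{(1)}(x)=-1)$, that is, the probability that a source point placed at $x$ is sent to $-1$ when the other $k-1$ source points are i.i.d.\ $\mathcal{N}(0,1)$ and the $k$ targets are i.i.d.\ uniform on $\{-1,1\}$.

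The key structural fact is the one-dimensional monotone matching. For distinct sources, every optimal permutation pairs the $r$-th smallest source with the $r$-th smallest target; ties among targets are harmless because only the target value matters. Let $N_-=\#\{r:Y_r=-1\}\sim\operatorname{Bin}(k,\frac12)$. Then the sorted targets are $-1$ in positions $1,\ldots,N_-$ and $+1$ afterwards.

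Next let $J=\#\{r\geqslant2: X_r<x\}$. Almost surely no $X_r$ equals $x$ and the sources are distinct, so the point at $x$ has rank $J+1$. It is therefore sent to $-1$ if and only if $J+1\leqslant N_-$. Since each $X_r<x$ independently with probability $u=\Phi(x)$, we have $J\sim\operatorname{Bin}(k-1,u)$, and $J$ is independent of $N_-$. Conditioning on $J=j$ gives probability $B_{j+1}$, and summing over $j$ yields \eqref{eq:binary-assignment-probability}.

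One point requires care. The proposal must confirm that every optimal permutation, and hence in particular the measurable solver's choice, yields the same assigned value at the point $x$. I would justify this with the standard exchange argument: with distinct sources, any crossing pair $x_a<x_b$ matched to $y_a>y_b$ strictly increases the quadratic cost. So the assignment is forced whenever the sources are distinct, which holds almost surely.

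For the symmetry $q_k(-x)=1-q_k(x)$, I will substitute $\Phi(-x)=1-u$ and reindex $j\mapsto k-1-j$ in \eqref{eq:binary-assignment-probability}. This reduces the claim to the identity
\[
B_{k-j}=\mathbb{P}(N_-\geqslant k-j)=\mathbb{P}(k-N_-\leqslant j)=\mathbb{P}(N_-\leqslant j)=1-B_{j+1},
\]
which holds because $k-N_-$ has the same law as $N_-$. Since the binomial weights sum to one, the symmetry follows. Alternatively, one can argue directly from the reflection invariance $(x,y)\mapsto(-x,-y)$ of $\mu$, $\nu$ and of the monotone matching.

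I expect the main obstacle to be bookkeeping rather than analysis: making the almost-sure uniqueness of the assigned value at position $i$ rigorous, and reconciling it with the density-based definition of $q_k$ (which is only defined $\mu$-a.e.). I would handle the latter by taking the continuous version $x\mapsto\overline{a}_{-1}(x)$ provided by \cref{lem:expected-batch-ot-assignment-probabilities}, for which the computation above holds for every $x$.
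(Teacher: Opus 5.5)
Your proposal is correct and follows essentially the same route as the paper: one-dimensional monotone matching, the point at $x$ having rank $1+J$ with $J\sim\operatorname{Bin}(k-1,u)$ independent of $N_-\sim\operatorname{Bin}(k,\tfrac12)$, and conditioning on $J=j$ to get $B_{j+1}$. The differences are minor. You derive $q_k(-x)=1-q_k(x)$ algebraically from the formula, using that $k-N_-$ has the same law as $N_-$, whereas the paper uses the invariance of $\overline{\pi}_k$ under $(x,y)\mapsto(-x,-y)$. In addition, working with the pointwise functions $a_j^{(i)}$ makes rigorous the conditioning on the null event $\{X=x\}$, which the paper does informally.
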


\begin{proof}
The coupling is symmetric under the map $(x,\,y)\mapsto(-x,\,-y)$.
Thus if $(X,\,Y)\sim\overline{\pi}_{k}$, then $(-X,\,-Y)\sim\overline{\pi}_{k}$, and
\(
q_{k}(-x)=\mathbb{P}(Y=-1\mid X=-x)=\mathbb{P}(Y=+1\mid X=x)=1-q_{k}(x).
\)

Since the OT map in dimension $1$ is the monotone rearrangement
\citep[Section~2.6]{peyre}, a sample from $\overline{\pi}_{k}$ may be
generated as follows. Draw
$(X_{1},\ldots,X_{k},\,Y_{1},\ldots,Y_{k})\sim\mu^{\otimes k}\otimes\nu^{\otimes k}$,
let $Y_{(1)}\leqslant\cdots\leqslant Y_{(k)}$ be the sorted target
labels, choose an index $J\sim\operatorname{Unif}\{1,\ldots,k\}$
independently, let $L$ be the rank of $X_{J}$ among
$(X_{1},\ldots,X_{k})$, and output $(X,\,Y)=(X_{J},\,Y_{(L)})$.

Condition on the event $\{X=X_{J}=x\}$. The remaining $k-1$ source
samples are still i.i.d.\ $\mathcal{N}(0,\,1)$ under this conditioning.
Let
\[
J_{x}=\sum_{i\neq J}\mathbbm{1}_{\{X_{i}\leqslant x\}}
\sim\operatorname{Bin}(k-1,\,u)\quad\text{and}\quad
N_{-}=\sum_{j=1}^{k}\mathbbm{1}_{\{Y_{j}=-1\}}
\sim\operatorname{Bin}(k,\,1/2).
\]
Then the rank of $X_{J}$ is $L=J_{x}+1$, and
$Y_{(r)}=-1$ if and only if $r\leqslant N_{-}$. Therefore, for every
$j\in\{0,\ldots,k-1\}$,
\[
\mathbb{P}(Y=-1\mid X=x,\,J_{x}=j)
=\mathbb{P}(Y_{(j+1)}=-1)=\mathbb{P}(N_{-}\geqslant j+1)=B_{j+1}.
\]
Averaging over the law of $J_{x}$ yields \eqref{eq:binary-assignment-probability}.
\end{proof}

For $t\in[0,\,1)$ and $z\in\mathbb{R}$, set
$s=t/(1-t)\in[0,\,\infty)$ and $\xi=z/(1-t)\in\mathbb{R}$.

\begin{lem}
\label{lem:binary-conditional-mean}
For every $k\in\mathbb{N}^{*}$, $t\in[0,\,1)$ and $z\in\mathbb{R}$,
\[
m_{t}^{\overline{\pi}_{k}}(z)=\tanh\!\left(s\xi+\frac{1}{2}\log\frac{q_{k}(s-\xi)}{q_{k}(s+\xi)}\right)
\quad\text{and}\quad
m_{t}^{\mu\otimes\nu}(z)=\tanh\!\left(\frac{tz}{(1-t)^{2}}\right).
\]
\end{lem}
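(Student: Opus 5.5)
We start from \cref{lem:gaussian-discrete-velocity-posterior} in the one-dimensional two-atom case $v_1=-1$, $v_2=+1$. Since the atoms are $\pm1$, the posterior mean is
\[
m_t^{\pi}(z)=\mathbb{P}(Y=1\mid Z_t=z)-\mathbb{P}(Y=-1\mid Z_t=z)=\tanh\!\Big(\tfrac12\log\tfrac{f_\pi(x_{t,+},1)}{f_\pi(x_{t,-},-1)}\Big),
\]
where $x_{t,\pm}=(z\mp t)/(1-t)=\xi\mp s$. This uses the identity $\frac{p-q}{p+q}=\tanh(\tfrac12\log(p/q))$. The common Jacobian factor $(1-t)^{-d}$ cancels in the ratio, so only the densities matter.

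Next we compute these densities. By \cref{lem:expected-batch-ot-posterior-representation}, $f_{\overline{\pi}_k}(x,v_j)=\varphi(x)\overline{a}_j(x)$. Here $\overline{a}_{-}(x)=q_k(x)$ and $\overline{a}_{+}(x)=1-q_k(x)=q_k(-x)$, the last equality coming from the symmetry in \cref{lem:binary-assignment-probability}. The log-ratio therefore splits into a Gaussian part and an assignment part:
\[
\log\frac{\varphi(\xi-s)}{\varphi(\xi+s)}+\log\frac{q_k(-(\xi-s))}{q_k(\xi+s)}=2s\xi+\log\frac{q_k(s-\xi)}{q_k(s+\xi)}.
\]
For the Gaussian part, $-\tfrac12(\xi-s)^2+\tfrac12(\xi+s)^2=2s\xi$. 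Halving the log-ratio gives the claimed formula for $m_t^{\overline{\pi}_k}$.

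The quantities $q_k$ are bounded below by $2^{-k}>0$ by \cref{lem:expected-batch-ot-assignment-probabilities}, so the logarithms are well defined.

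For the independent coupling we have $k=1$. From \eqref{eq:binary-assignment-probability}, $q_1\equiv B_1=\tfrac12$, so the log term vanishes. The argument of $\tanh$ is then $s\xi=tz/(1-t)^2$, which gives the second formula.

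The only delicate point is bookkeeping the signs in $x_{t,\pm}$ together with the symmetry $q_k(-x)=1-q_k(x)$, so that the arguments come out as $q_k(s-\xi)$ in the numerator and $q_k(s+\xi)$ in the denominator.
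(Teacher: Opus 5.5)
Your proof is correct and follows the same route as the paper. Both arguments write the posterior mean as $\tanh$ of half the posterior log-odds, evaluate the joint density $\varphi(x)\,\mathbb{P}(Y=y\mid X=x)$ at $x=\xi\mp s$, extract $2s\xi$ from the Gaussian ratio, use $q_k(-x)=1-q_k(x)$ for the $+1$ atom, and set $q_1\equiv\frac12$. Citing the already-established density formulas instead of redoing Bayes' rule, and noting $q_k\geqslant 2^{-k}$ so the logarithms are finite, are small presentational improvements over the paper's version.
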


\begin{proof}
Fix $k\in\mathbb{N}^{*}$, $t\in[0,\,1)$ and $z\in\mathbb{R}$. Since
$Y\in\{-1,\,1\}$,
\[
\begin{aligned}
m_{t}^{\overline{\pi}_{k}}(z) &=\mathbb{E}[Y\mid Z_{t}=z] \\
&=\mathbb{P}(Y=+1\mid Z_{t}=z)-\mathbb{P}(Y=-1\mid Z_{t}=z) \\
&= \tanh\!\left(\frac{1}{2}\log\psi_{t}^{\overline{\pi}_{k}}(z)\right)\quad
\text{with}\quad \psi_{t}^{\overline{\pi}_{k}}(z)=
\frac{\mathbb{P}(Y=+1\mid Z_{t}=z)}{\mathbb{P}(Y=-1\mid Z_{t}=z)}.
\end{aligned}
\]
For $y\in\{-1,\,1\}$, applying Bayes' formula gives
\[
\mathbb{P}(Y=y\mid Z_{t}=z)
=\frac{f_{Z_{t}\mid Y=y}(z)\mathbb{P}(Y=y)}{f_{Z_{t}}(z)}.
\]
The marginal probabilities $\mathbb{P}(Y=y)=1/2$ and the density
$f_{Z_{t}}(z)$ cancel in the odds ratio. On the event $(Z_{t}=z,\,Y=y)$,
the identity $Z_{t}=(1-t)X+tY$ forces
$x_{y}=(z-ty)/(1-t)$. Changing variables between $Z_{t}$ and $X$
in the conditional density gives, up to the common factor $(1-t)^{-1}$,
\[
f_{Z_{t}\mid Y=y}(z)\propto\varphi(x_{y})\mathbb{P}(Y=y\mid X=x_{y}),
\]
where $\varphi$ is the standard Gaussian density. Consequently,
\begin{equation}
\psi_{t}^{\overline{\pi}_{k}}(z)
=\frac{\varphi(x_{+1})}{\varphi(x_{-1})}
\frac{\mathbb{P}(Y=+1\mid X=x_{+1})}{\mathbb{P}(Y=-1\mid X=x_{-1})}.
\label{eq:binary-posterior-odds}
\end{equation}
The ratio of Gaussian densities satisfies
\[
\log\frac{\varphi(x_{+1})}{\varphi(x_{-1})}
=\frac{x_{-1}^{2}-x_{+1}^{2}}{2}=\frac{2tz}{(1-t)^{2}}=2s\xi.
\]
Moreover, $\mathbb{P}(Y=-1\mid X=x)=q_{k}(x)$ and
$\mathbb{P}(Y=+1\mid X=x)=q_{k}(-x)$ by
\cref{lem:binary-assignment-probability}. Since
$x_{+1}=\xi-s$ and $x_{-1}=\xi+s$, taking logarithms in
\eqref{eq:binary-posterior-odds} yields
\[
\log\psi_{t}^{\overline{\pi}_{k}}(z)
=2s\xi+\log\frac{q_{k}(s-\xi)}{q_{k}(s+\xi)}.
\]
This proves the first formula. Then \cref{lem:binary-assignment-probability}
gives $q_{1}\equiv1/2$ and the second formula follows.
\end{proof}

\begin{lem}[Structure of the Euler iterates in the binary case]
\label{lem:binary-euler-structure}
Fix $k\in\mathbb{N}^{*}$ and $n\geqslant1$. For every $0\leqslant m\leqslant n$,
the map $f_{m,\,n}^{\overline{\pi}_{k}}$ is odd. In the independent case
$\overline{\pi}_{1}=\mu\otimes\nu$, the map $f_{m,\,n}^{\mu\otimes\nu}$
is increasing and concave for $x\geqslant0$. Finally,
$\operatorname{im}(f_{n}^{\overline{\pi}_{k}})\subset[-1,\,1]$. Therefore,
\(
\mathscr{E}_{n,\,k}=2\,\mathbb{E}_{X\sim\mu}[(1-f_{n}^{\overline{\pi}_{k}}(X))\mathbbm{1}_{\{X>0\}}].
\)
\end{lem}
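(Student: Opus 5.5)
We argue by induction on $m$, using the symmetry of the velocity field, and treat the three claims separately.

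\emph{Oddness.} First we show that $z\mapsto u_{t}^{\overline{\pi}_{k}}(z)$ is odd for each $t\in[0,1)$. By \cref{lem:binary-conditional-mean}, with $\xi=z/(1-t)$, replacing $z$ by $-z$ turns the argument of $\tanh$ into $-s\xi+\frac12\log\frac{q_k(s+\xi)}{q_k(s-\xi)}$, which is the negative of the original argument. Hence $m_t^{\overline{\pi}_k}$ is odd, and so is $u_t^{\overline{\pi}_k}(z)=(m_t^{\overline{\pi}_k}(z)-z)/(1-t)$. The same conclusion follows directly from the symmetry $(X,Y)\mapsto(-X,-Y)$ of $\overline{\pi}_k$ noted in \cref{lem:binary-assignment-probability}. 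Since $f_{0,n}=\mathrm{id}$ is odd, and a composition of odd maps is odd, the recursion \eqref{eq:euler-recursion} gives oddness of every $f_{m,n}^{\overline{\pi}_k}$.

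\emph{Image in $[-1,1]$.} The last Euler step uses $t_{n-1}=1-1/n$, so $\frac1n u_{t_{n-1}}(z)=m_{t_{n-1}}(z)-z$. Therefore
\[
f_n^{\overline{\pi}_k}=m_{t_{n-1}}^{\overline{\pi}_k}\circ f_{n-1,n}^{\overline{\pi}_k},
\]
which takes values in $\operatorname{conv}\{-1,1\}=[-1,1]$.

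\emph{Monotonicity and concavity for $k=1$.} Each step map is
\[
g_j(z)=z+\tfrac1n\frac{\tanh(a_j z)-z}{1-t_j},\qquad a_j=\frac{t_j}{(1-t_j)^2}\ge0,
\]
that is, $g_j(z)=(1-\beta_j)z+\beta_j\tanh(a_j z)$ with $\beta_j=\frac{1}{n(1-t_j)}=\frac{1}{n-j}\in(0,1]$. Thus $g_j$ is a convex combination of the identity and $\tanh(a_j\cdot)$. Both are increasing, and both are concave on $[0,\infty)$ and vanish at $0$. So $g_j$ is increasing, concave on $[0,\infty)$, odd, and maps $[0,\infty)$ into itself. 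Composing such maps preserves these properties: if $f$ is increasing, concave and nonnegative on $[0,\infty)$, and $g$ is increasing and concave, then $g\circ f$ is increasing and concave there. Induction on $m$ gives the claim. In the case $j=0$ we have $a_0=0$ and $\beta_0=1/n$, so $g_0(z)=(1-1/n)z$ is linear, which is harmless.

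\emph{Error formula.} We expect this to be the step needing the most care. We identify the terminal map: we would show that $\phi^{\overline{\pi}_k}(x)=\operatorname{sign}(x)$ for a.e.\ $x$. By \cref{prop:expected-batch-ot-rectifiability}, the flow is odd by uniqueness and oddness of $u$, so $\phi_t(0)=0$ for all $t$. Uniqueness of one-dimensional ODE solutions prevents trajectories from crossing $0$, so $x>0$ implies $\phi_t(x)>0$. The limit therefore lies in $\{-1,1\}\cap[0,\infty)=\{1\}$. Combining this with $f_n\le1$ gives $|f_n(x)-\phi(x)|=1-f_n(x)$ for $x>0$. Oddness of $f_n$ and symmetry of $\mu$ then double the contribution of $\{X>0\}$, and $\{X=0\}$ is $\mu$-null.
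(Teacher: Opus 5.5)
Your proof is correct, and for oddness, the terminal bound and concavity it matches the paper's argument. Oddness comes by induction from the oddness of $m_t^{\overline{\pi}_k}$ given by \cref{lem:binary-conditional-mean}, the terminal bound from $f_n^{\overline{\pi}_k}=m^{\overline{\pi}_k}_{1-1/n}\circ f^{\overline{\pi}_k}_{n-1,n}$, and concavity by composing increasing concave step maps. You also make explicit something the paper's composition argument needs tacitly: each step map is a convex combination of $\mathrm{id}$ and $\tanh(\alpha\,\cdot)$, so it maps $[0,\infty)$ into itself.

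The one genuine difference is how you identify $\phi^{\overline{\pi}_k}=\operatorname{sign}$ almost everywhere. The paper cites \citet[Note~4.2]{pierret}. You derive it from results inside the paper:
\begin{itemize}
\item the flow is odd, by uniqueness;
\item the zero trajectory cannot be crossed, which needs backward uniqueness, available via Gr\"onwall from the local Lipschitz bound in \cref{prop:expected-batch-ot-rectifiability}(1);
\item the terminal values lie in $\{-1,1\}$ almost everywhere, by \cref{prop:expected-batch-ot-rectifiability}(3).
\end{itemize}
This makes the lemma self-contained at the cost of a few lines. It does rely on the symmetry of $\mu$ to locate the cut at $0$, whereas a 1D citation of this kind would typically apply to general monotone rearrangements.
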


\begin{proof}
We first prove oddness. The initial map
$f_{0,\,n}^{\overline{\pi}_{k}}=\operatorname{id}$ is odd. Assume that
$f_{m,\,n}^{\overline{\pi}_{k}}$ is odd for some $0\leqslant m<n$, and write
$r=n-m$ and $t=m/n$. By \cref{lem:gaussian-discrete-velocity-posterior}, the Euler recursion
\eqref{eq:euler-recursion} becomes
\[
f_{m+1,\,n}^{\overline{\pi}_{k}}(x)
=\left(1-\frac{1}{r}\right)f_{m,\,n}^{\overline{\pi}_{k}}(x)
+\frac{1}{r}\,m_{t}^{\overline{\pi}_{k}}\!\left(f_{m,\,n}^{\overline{\pi}_{k}}(x)\right).
\]
By \cref{lem:binary-conditional-mean}, the map $m_{t}^{\overline{\pi}_{k}}$
is odd, hence the right-hand side is odd, and induction gives the claim.

We next specialize to $\overline{\pi}_{1}=\mu\otimes\nu$. By
\cref{lem:binary-conditional-mean},
\(
m_{t}^{\mu\otimes\nu}(z)=\tanh(\frac{tz}{(1-t)^{2}}).
\)
For $z\geqslant0$, this map is increasing and concave. Since
$f_{0,\,n}^{\mu\otimes\nu}=\operatorname{id}$ is increasing and concave
for $x\geqslant0$, and since the composition of increasing concave
maps is increasing and concave, the recursion shows by induction that
$f_{m,\,n}^{\mu\otimes\nu}$ is increasing and concave for $x\geqslant0$
for every $m$.

For the terminal bound, take $m=n-1$, so that $r=1$. Then
\(
f_{n}^{\overline{\pi}_{k}}(x)=m_{1-1/n}^{\overline{\pi}_{k}}(f_{n-1,\,n}^{\overline{\pi}_{k}}(x)).
\)
By \cref{lem:binary-conditional-mean}, this map takes values in
$[-1,\,1]$.
Finally, as we are in dimension one, \citet[Note~4.2]{pierret} gives
$\phi^{\overline{\pi}_{k}}(X)=\operatorname{sign}(X)$ almost surely, hence
\(
\mathscr{E}_{n,\,k}
=\mathbb{E}\bigl[|f_{n}^{\overline{\pi}_{k}}(X)-\operatorname{sign}(X)|\bigr].
\)
For $x>0$, the terminal bound gives
$|f_{n}^{\overline{\pi}_{k}}(x)-1|=1-f_{n}^{\overline{\pi}_{k}}(x)$. For $x<0$,
oddness and the terminal bound give
$|f_{n}^{\overline{\pi}_{k}}(x)+1|=1-f_{n}^{\overline{\pi}_{k}}(-x)$. Splitting
the expectation over the positive and negative half-lines and using
the symmetry of the Gaussian density yields the formula.
\end{proof}

\subsubsection{One Euler step: the effect of the OT batch size}

For one Euler step, the error is entirely statistical: it measures the
probability that the expected batch OT plan assigns the wrong sign.

\begin{proof}[Proof of the asymptotic for $\mathscr{E}_{1,\,k}$ in \cref{prop:binary-integration-asymptotics}]
Since $f_{1}^{\overline{\pi}_{k}}(x)=\mathbb{E}[Y\mid X=x]$, we have
\[
|f_{1}^{\overline{\pi}_{k}}(X)-\operatorname{sign}(X)|
=|\mathbb{E}[Y\mid X]-\operatorname{sign}(X)|
=\mathbb{E}[|Y-\operatorname{sign}(X)|\mid X].
\]
Since both $Y$ and $\operatorname{sign}(X)$ take values in $\{-1,\,+1\}$,
the last quantity is equal to $0$ if $Y=\operatorname{sign}(X)$ and to
$2$ otherwise. Therefore
\(
\mathscr{E}_{1,\,k}=2\,\mathbb{P}(Y\neq\operatorname{sign}(X)).
\)

We compute this probability through the rank representation of
$\overline{\pi}_{k}$. Let $U_{1},\ldots,U_{k},V_{1},\ldots,V_{k}$ be i.i.d.\
uniform random variables on $[0,\,1]$, let
$L\sim\operatorname{Unif}\{1,\ldots,k\}$ be independent of everything else,
and set
\[
X=\Phi^{-1}(U_{(L)})\quad\text{and}\quad
Y=\begin{cases}
-1, & \text{if } V_{(L)}\leqslant1/2,\\
+1, & \text{otherwise}.
\end{cases}
\]
Then $(X,\,Y)\sim\overline{\pi}_{k}$. Define
\[
B=\sum_{i=1}^{k}\mathbbm{1}_{\{U_{i}\leqslant1/2\}}\quad\text{and}\quad
B'=\sum_{i=1}^{k}\mathbbm{1}_{\{V_{i}\leqslant1/2\}}.
\]
Then $B$ and $B'$ are independent variables with law
$\operatorname{Bin}(k,\,1/2)$, and
\begin{align*}
\operatorname{sign}(X)=-1&\iff U_{(L)}<1/2\iff L\leqslant B,\\
Y=-1&\iff V_{(L)}\leqslant1/2\iff L\leqslant B'.
\end{align*}
Therefore, $Y\neq\operatorname{sign}(X)$ if and only if $L$ lies
strictly between $B$ and $B'$. Conditionally on $(B,\,B')$, the number
of such ranks is exactly $|B-B'|$, so
\[
\mathbb{P}(Y\neq\operatorname{sign}(X)\mid B,\,B')=\frac{|B-B'|}{k}.
\]
Taking expectations gives
\begin{equation}
\mathscr{E}_{1,\,k}=\frac{2}{k}\,\mathbb{E}|B-B'|.
\label{eq:binary-one-step-binomial}
\end{equation}

Write $B-B'=\sum_{i=1}^{k}D_{i}$, where the $D_{i}$ are i.i.d.\ with
\[
\mathbb{P}(D_{i}=1)=\mathbb{P}(D_{i}=-1)=\frac{1}{4}\quad\text{and}\quad
\mathbb{P}(D_{i}=0)=\frac{1}{2}.
\]
Then $\mathbb{E}[D_{i}]=0$ and $\operatorname{Var}(D_{i})=1/2$. By the
central limit theorem,
\[
\frac{B-B'}{\sqrt{k}}\xrightarrow[k\to+\infty]{\mathrm{d}}Z
\quad\text{with}\quad Z\sim\mathcal{N}(0,\,1/2).
\]
Since the variables $(B-B')/\sqrt{k}$ have uniformly bounded second
moments, their absolute values are uniformly integrable. Therefore
\[
\frac{1}{\sqrt{k}}\,\mathbb{E}|B-B'|\xrightarrow[k\to+\infty]{}\mathbb{E}|Z|=\frac{1}{\sqrt{\pi}}.
\]
Combining this with \eqref{eq:binary-one-step-binomial} yields
$\mathscr{E}_{1,\,k}\sim2/\sqrt{\pi k}$, and taking logarithms gives
the result.
\end{proof}

\subsubsection{Many Euler steps: the effect of the numerical integration budget}

We now fix the coupling to be the independent one,
$\overline{\pi}_{1}=\mu\otimes\nu$, and study the effect of increasing the
number of Euler steps. The Euler map is an odd increasing concave sigmoid
approximating the sign map. The error is therefore controlled by the
width of the transition layer near the origin, and that width is governed
by the derivative at the origin.

From now on, write $f_{m,\,n}=f_{m,\,n}^{\mu\otimes\nu}$ and
$f_{n}=f_{n,\,n}$. By \cref{lem:binary-conditional-mean}, for
$0\leqslant m<n$, if $r=n-m$ and $\alpha_{m,\,n}=mn/r^{2}$, then
\begin{equation}
f_{m+1,\,n}(x)
=\left(1-\frac{1}{r}\right)f_{m,\,n}(x)
+\frac{1}{r}\,\tanh\!\left(\alpha_{m,\,n}f_{m,\,n}(x)\right).
\label{eq:binary-independent-euler-recursion}
\end{equation}
Fix once and for all a constant $A>2\pi/\sqrt{3}$, and define
\(
x_{n}=\frac{A n^{2/3}}{f_{n,\,n}'(0)}.
\)

\begin{lem}
\label{lem:binary-euler-slope-product}
For $0\leqslant m<n$, writing $r=n-m$,
\[
f'_{m+1,\,n}(0)=c_{r}\,f'_{m,\,n}(0)\quad\text{and}\quad
c_{r}=1+\frac{n^{2}-nr-r^{2}}{r^{3}}.
\]
Consequently, for $n\geqslant2$,
$\log f_{n,\,n}'(0)=\sum_{r=1}^{n}\log c_{r}$.
\end{lem}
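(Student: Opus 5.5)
The plan is to differentiate the Euler recursion \eqref{eq:binary-independent-euler-recursion} at the origin and induct on $m$.

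First, I will check by induction that each $f_{m,\,n}$ is $\mathcal{C}^{\infty}$. This holds because $f_{0,\,n}=\operatorname{id}$, and the recursion only composes $f_{m,\,n}$ with the smooth map $z\mapsto(1-\frac1r)z+\frac1r\tanh(\alpha_{m,\,n}z)$. By \cref{lem:binary-euler-structure}, every $f_{m,\,n}$ is odd, so $f_{m,\,n}(0)=0$.

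Next, I will differentiate the recursion at $x=0$ with the chain rule. Since $\tanh'(0)=1$ and $f_{m,\,n}(0)=0$, this gives
\[
f'_{m+1,\,n}(0)=\Bigl(1-\frac1r+\frac{\alpha_{m,\,n}}{r}\Bigr)f'_{m,\,n}(0).
\]
Substituting $m=n-r$ into $\alpha_{m,\,n}=mn/r^{2}$ yields $\alpha_{m,\,n}/r=(n^{2}-nr)/r^{3}$. Writing $-1/r=-r^{2}/r^{3}$ then puts the bracket in the stated form $c_{r}=1+\frac{n^{2}-nr-r^{2}}{r^{3}}$.

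For the product formula, I will iterate from $m=0$ to $m=n-1$, so that $r=n-m$ runs over $\{1,\ldots,n\}$. With $f'_{0,\,n}(0)=1$, this gives $f'_{n,\,n}(0)=\prod_{r=1}^{n}c_{r}$. Taking logarithms requires every factor to be positive, and this is the only delicate point. I will rewrite
\[
c_{r}=\frac{r^{2}(r-1)+n(n-r)}{r^{3}}.
\]
Both terms in the numerator are nonnegative. When $r\geqslant2$, the first term is positive. When $r=1$, the numerator equals $n(n-1)$, which is positive exactly when $n\geqslant2$. This explains the hypothesis $n\geqslant2$; for $n=1$ one has $c_{1}=0$. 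Hence for $n\geqslant2$ all $c_{r}>0$, $f'_{n,\,n}(0)>0$, and $\log f'_{n,\,n}(0)=\sum_{r=1}^{n}\log c_{r}$.
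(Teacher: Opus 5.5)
Your proof is correct and follows the paper's argument: differentiate the recursion at the origin using $f_{m,\,n}(0)=0$ and $\tanh'(0)=1$, then substitute $\alpha_{m,\,n}=n(n-r)/r^{2}$ and iterate. Your additional check that $c_{r}=\frac{r^{2}(r-1)+n(n-r)}{r^{3}}>0$ for $n\geqslant2$ (with $c_{1}=0$ when $n=1$) supplies the positivity needed to take logarithms, which the paper leaves implicit.
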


\begin{proof}
Differentiate \eqref{eq:binary-independent-euler-recursion} at $0$.
Since $f_{m,\,n}(0)=0$ by \cref{lem:binary-euler-structure} and
$\tanh'(0)=1$,
\[
f'_{m+1,\,n}(0)=\left(1-\frac{1}{r}+\frac{\alpha_{m,\,n}}{r}\right)f'_{m,\,n}(0).
\]
Now $\alpha_{m,\,n}=mn/r^{2}=n(n-r)/r^{2}$, which gives the announced
coefficient $c_{r}$.
\end{proof}

\begin{lem}
\label{lem:binary-euler-slope-asymptotic}
\(
\frac{1}{n^{2/3}}\log f_{n,\,n}'(0)
\xrightarrow[n\to\infty]{}\frac{2\pi}{\sqrt{3}}.
\)
\end{lem}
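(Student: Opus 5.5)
The plan is to treat $\sum_{r=1}^{n}\log c_r$, which equals $\log f_{n,\,n}'(0)$ by \cref{lem:binary-euler-slope-product}, as a Riemann sum at scale $r\asymp n^{2/3}$. Write
\[
c_r=1-\frac1r+\frac{n(n-r)}{r^3}=1+\frac{n^2}{r^3}-\frac{n}{r^2}-\frac1r .
\]
With $r=n^{2/3}y$, the dominant term is $n^2/r^3=y^{-3}$. The corrections are small at this scale: $n/r^2=n^{-1/3}y^{-2}$ and $1/r=n^{-2/3}y^{-1}$. So I expect
\[
\sum_{r}\log c_r\approx n^{2/3}\int_0^\infty\log(1+y^{-3})\,\mathrm{d}y .
\]
The limit constant then comes from an integration by parts. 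The boundary term $[y\log(1+y^{-3})]_0^\infty$ vanishes at both ends, which gives
\[
\int_0^\infty\log(1+y^{-3})\,\mathrm{d}y=\int_0^\infty\frac{3}{1+y^{3}}\,\mathrm{d}y=3\cdot\frac{2\pi}{3\sqrt3}=\frac{2\pi}{\sqrt3},
\]
using the classical value $\int_0^\infty (1+y^3)^{-1}\,\mathrm{d}y=\frac{\pi}{3\sin(\pi/3)}$.

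To make this rigorous, I fix $0<\varepsilon<1<R$ and split the range of $r$ into three pieces.

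\emph{Middle range} $\varepsilon n^{2/3}\le r\le Rn^{2/3}$. Here
\[
\log c_r=\log\bigl(1+y^{-3}+O_{\varepsilon}(n^{-1/3})\bigr)
\]
uniformly in $y\in[\varepsilon,R]$, and $y\mapsto\log(1+y^{-3})$ is uniformly continuous on this interval. Hence $n^{-2/3}\sum\log c_r\to\int_\varepsilon^R\log(1+y^{-3})\,\mathrm{d}y$.

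\emph{Large range} $r>Rn^{2/3}$. Since $c_r\le 1+n^2/r^3$, we get $\log c_r\le n^2/r^3$. Summing gives at most $C n^2/(Rn^{2/3})^2=Cn^{2/3}/R^2$. For the lower bound, $c_r\ge 1-1/r$ because $r\le n$. So for $r\ge 2$, $\log c_r\ge -2/r$, and the sum of the negative parts is $O(\log n)=o(n^{2/3})$.

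\emph{Small range} $r<\varepsilon n^{2/3}$. For $n$ large we have $r\le n/2$, so $n^2-nr-r^2>0$ and $c_r\ge1$; the lower bound is therefore $0$. For the upper bound, $\log c_r\le\log(1+n^2/r^3)$, which is decreasing in $r$. Comparison with the integral gives at most
\[
\log(1+n^2)+n^{2/3}\int_0^\varepsilon\log(1+y^{-3})\,\mathrm{d}y .
\]
The first term is $O(\log n)$, which also absorbs the $r=1$ term $\log(n^2-n)$. The second term is small relative to $n^{2/3}$ because the integrand is integrable at $0$, since $\log(1+y^{-3})\sim 3\log(1/y)$.

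Combining the three ranges gives
\[
\limsup\text{ and }\liminf\text{ of } n^{-2/3}\log f'_{n,\,n}(0)\ \text{ both lie within }\ \int_\varepsilon^R\log(1+y^{-3})\,\mathrm{d}y\ \pm\ \Bigl(\int_0^\varepsilon\log(1+y^{-3})\,\mathrm{d}y+\frac{C}{R^2}\Bigr).
\]
Letting $\varepsilon\to0$ and $R\to\infty$ then yields the claimed limit $2\pi/\sqrt3$.

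The main obstacle is bookkeeping rather than any single hard step. One must check positivity of $c_r$ over the whole range so that $\log c_r$ is defined. One must also make the two sources of error uniform in $y$ on the middle range: the $n^{-1/3}$ correction inside the logarithm and the Riemann-sum discretization. Finally, the endpoint contributions, namely the singular behaviour near $r\approx1$ and the tail $r\sim n$, must be shown to be genuinely $o(n^{2/3})$.
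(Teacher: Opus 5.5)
Your argument is correct and follows essentially the same route as the paper: both read $\sum_{r}\log c_r$ as a Riemann sum at scale $r\asymp n^{2/3}$ whose leading part is $n^{2/3}\int_0^\infty\log(1+s^{-3})\,\mathrm{d}s$. The only difference is bookkeeping: the paper factors $c_r=(1+n^2/r^3)(1-x_r)$, shows $\sum_{r\geqslant2}\lvert\log(1-x_r)\rvert=o(n^{2/3})$, and then makes a single monotone integral comparison for $u\mapsto\log(1+n^2/u^3)$ on $[1,n]$, whereas you truncate at fixed $\varepsilon,R$ and bound each range separately; your integration by parts also justifies the value $2\pi/\sqrt{3}$, which the paper only states.
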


\begin{proof}
The $r=1$ term in \cref{lem:binary-euler-slope-product} is $\log(n(n-1))=O(\log n)$,
which is $o(n^{2/3})$. For $r\geqslant2$, factor
$c_{r}=(1+u_{r})(1-x_{r})$ with
\(
u_{r}=\frac{n^{2}}{r^{3}}
\) and
\(
x_{r}=\frac{n/r^{2}+1/r}{1+u_{r}}
\).
Then
\[
\log c_{r}=\log\left(1+\frac{n^{2}}{r^{3}}\right)+\log(1-x_{r}).
\]
Splitting the sum as $R_{n}=\lceil n^{2/3}\rceil$ shows that
$\sum_{r=2}^{n}|\log(1-x_{r})|=o(n^{2/3})$. Hence
\[
\log f_{n,\,n}'(0)=\sum_{r=2}^{n}\log\left(1+\frac{n^{2}}{r^{3}}\right)+o(n^{2/3}).
\]
By integral comparison,
\[
\sum_{r=2}^{n}\log\left(1+\frac{n^{2}}{r^{3}}\right)
=\int_{1}^{n}\log\left(1+\frac{n^{2}}{u^{3}}\right)\,\mathrm{d}u+o(n^{2/3}).
\]
With the change of variables $u=n^{2/3}s$, the right-hand side becomes
\[
n^{2/3}\int_{n^{-2/3}}^{n^{1/3}}\log(1+s^{-3})\,\mathrm{d}s+o(n^{2/3}).
\]
Since $\int_{0}^{\infty}\log(1+s^{-3})\,\mathrm{d}s=2\pi/\sqrt{3}$,
the claim follows.
\end{proof}

\begin{lem}
\label{lem:binary-euler-derivative-comparison}
For every $0\leqslant m\leqslant n-2$,
\(
f'_{m,\,n}(0)\leqslant\frac{8(n-m)}{n^{2}}\,f'_{n-1,\,n}(0).
\)
\end{lem}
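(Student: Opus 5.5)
The plan is to reduce the claim to a lower bound on a product of the slope multipliers from \cref{lem:binary-euler-slope-product}. Fix $0\leqslant m\leqslant n-2$ and set $R=n-m\geqslant2$. Iterating $f'_{p+1,\,n}(0)=c_{n-p}\,f'_{p,\,n}(0)$ for $p=m,\ldots,n-2$ gives
\[
f'_{n-1,\,n}(0)=f'_{m,\,n}(0)\prod_{r=2}^{R}c_{r}.
\]
It therefore suffices to show two things: that $f'_{m,\,n}(0)\geqslant0$, and that $\prod_{r=2}^{R}c_{r}\geqslant n^{2}/(8R)$. Dividing then yields the statement.

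The key step is an elementary lower bound on each factor $c_r$ for $1\leqslant r\leqslant n$. I will check that
\[
c_{r}-\Bigl(1-\frac{1}{r}\Bigr)=\frac{1}{r}+\frac{n^{2}-nr-r^{2}}{r^{3}}=\frac{n(n-r)}{r^{3}}\geqslant0 ,
\]
so $c_r\geqslant (r-1)/r>0$ for $r\geqslant2$.

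Positivity of all factors, together with $f'_{0,\,n}(0)=1$, shows by induction that $f'_{m,\,n}(0)>0$. Hence multiplying and dividing by these factors preserves the direction of the inequalities.

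For the first factor I use a sharper bound. We have $c_2=(n^2-2n+4)/8$, and $n^2-2n+4\geqslant n^2/2$ because the quadratic $\tfrac12n^2-2n+4$ has negative discriminant. This gives $c_2\geqslant n^2/16$.

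Combining, the remaining factors telescope:
\[
\prod_{r=2}^{R}c_r\geqslant\frac{n^2}{16}\prod_{r=3}^{R}\frac{r-1}{r}=\frac{n^2}{16}\cdot\frac{2}{R}=\frac{n^2}{8R},
\]
where the empty product for $R=2$ is $1$ and the bound remains valid. Substituting into the product identity gives $f'_{m,\,n}(0)\leqslant\frac{8(n-m)}{n^{2}}f'_{n-1,\,n}(0)$.

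There is no real obstacle. The only point requiring care is that $c_r$ can drop below $1$ when $r$ is close to $n$, so one cannot simply bound every factor by $1$. The identity $c_r-(1-1/r)=n(n-r)/r^3$ handles this uniformly, and the large factor $c_2$ supplies the $n^2$ gain.
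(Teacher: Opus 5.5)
Your proof is correct and follows the paper's own proof: you iterate the slope recursion to get $f'_{n-1,n}(0)=f'_{m,n}(0)\prod_{r=2}^{n-m}c_r$, bound $c_r\geqslant (r-1)/r$ for $r\geqslant 3$ so the product telescopes to $2/(n-m)$, and use $c_2\geqslant n^2/16$. Your explicit check that $f'_{m,n}(0)>0$, and your discriminant verification of the $c_2$ bound, fill in two small steps the paper leaves implicit when it rearranges the inequality.
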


\begin{proof}
Let $r=n-m$. By \cref{lem:binary-euler-slope-product},
\[
f'_{n-1,\,n}(0)=f'_{m,\,n}(0)\prod_{q=2}^{r}c_{q}.
\]
For $q\in\{3,\ldots,r\}$, since $n(n-q)\geqslant0$,
\[
c_{q}=1-\frac{1}{q}+\frac{n(n-q)}{q^{3}}\geqslant1-\frac{1}{q}=\frac{q-1}{q}.
\]
By telescoping,
\[
\prod_{q=3}^{r}c_{q}\geqslant\prod_{q=3}^{r}\frac{q-1}{q}=\frac{2}{r}.
\]
Also $c_{2}=(n^{2}-2n+4)/8\geqslant n^{2}/16$. Thus
\[
f'_{n-1,\,n}(0)\geqslant f'_{m,\,n}(0)\,\frac{n^{2}}{8r},
\]
which is the desired inequality.
\end{proof}

\begin{lem}
\label{lem:binary-transition-saturation}
For all sufficiently large $n$,
\(
0\leqslant1-f_{n,\,n}(x_{n})\leqslant2e^{-A n^{2/3}}.
\)
\end{lem}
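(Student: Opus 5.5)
The plan is to exploit the fact that the last Euler step is a pure $\tanh$, and to show that $x_{n}$ lies in the near-linear regime of $f_{n-1,\,n}$. For $m=n-1$ we have $r=1$ and $\alpha_{n-1,\,n}=n(n-1)$, so \eqref{eq:binary-independent-euler-recursion} reads
\[
f_{n,\,n}(x)=\tanh\bigl(n(n-1)f_{n-1,\,n}(x)\bigr).
\]
\cref{lem:binary-euler-slope-product} with $r=1$ gives $c_{1}=n(n-1)$, hence $f'_{n,\,n}(0)=n(n-1)f'_{n-1,\,n}(0)$. Two consequences follow at once.
\begin{itemize}
\item The lower bound $1-f_{n,\,n}(x_{n})\geqslant0$ is immediate from $\tanh\leqslant1$.
\item For $y\geqslant0$ we have $1-\tanh y\leqslant 2e^{-2y}$. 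Hence it suffices to show $2n(n-1)f_{n-1,\,n}(x_{n})\geqslant An^{2/3}$.
\end{itemize}
Introduce the normalized ratio $g_{m}=f_{m,\,n}(x_{n})/(f'_{m,\,n}(0)x_{n})$. Since $n(n-1)f'_{n-1,\,n}(0)x_{n}=An^{2/3}$, the required inequality is exactly $g_{n-1}\geqslant\tfrac12$.

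To control $g_{m}$, I will use \cref{lem:binary-euler-structure}: each $f_{m,\,n}$ is odd, increasing and concave on $[0,\infty)$. Since $x_{n}>0$, this gives $0\leqslant f_{m,\,n}(x_{n})\leqslant f'_{m,\,n}(0)x_{n}$, so $g_{m}\in[0,1]$. Write $f_{m}=f_{m,\,n}(x_{n})$. Using $\tanh y\geqslant y-y^{3}/3$ for $y\geqslant0$ in the recursion gives
\[
f_{m+1}\geqslant c_{r}f_{m}-\frac{\alpha_{m,\,n}^{3}}{3r}f_{m}^{3}\geqslant c_{r}f_{m}-\frac{\alpha_{m,\,n}^{3}}{3r}\bigl(f'_{m,\,n}(0)x_{n}\bigr)^{3}.
\]
Dividing by $f'_{m+1,\,n}(0)x_{n}=c_{r}f'_{m,\,n}(0)x_{n}$ yields
\[
g_{m+1}\geqslant g_{m}-\frac{\alpha_{m,\,n}^{3}}{3rc_{r}}\bigl(f'_{m,\,n}(0)x_{n}\bigr)^{2}.
\]
Since $c_{r}\geqslant\alpha_{m,\,n}/r$ (the part $1-1/r$ is nonnegative), the decrement is at most $\tfrac13\alpha_{m,\,n}^{2}(f'_{m,\,n}(0)x_{n})^{2}$. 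Starting from $g_{0}=1$ and telescoping up to $m=n-2$, it remains to show that these decrements have total $o(1)$.

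The key step, and the one I expect to be the main obstacle, is bounding $f'_{m,\,n}(0)x_{n}$ uniformly in $m$. For this I will use \cref{lem:binary-euler-derivative-comparison}, valid for $m\leqslant n-2$:
\[
f'_{m,\,n}(0)x_{n}\leqslant\frac{8r}{n^{2}}f'_{n-1,\,n}(0)x_{n}=\frac{8r}{n^{2}}\cdot\frac{An^{2/3}}{n(n-1)}.
\]
Combined with $\alpha_{m,\,n}\leqslant n^{2}/r^{2}$, this bounds the $m$-th decrement by
\[
C\,\frac{A^{2}n^{4/3}}{r^{2}n^{2}(n-1)^{2}}.
\]
Summing over $r\geqslant2$ gives $O(A^{2}n^{-8/3})\to0$. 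Hence $g_{n-1}\geqslant1-o(1)\geqslant\tfrac12$ for all sufficiently large $n$, which closes the argument.

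The only points to check carefully are:
\begin{itemize}
\item the inequality $c_{r}\geqslant\alpha_{m,\,n}/r$, including the edge case $m=0$, where $\alpha_{0,\,n}=0$ and the decrement vanishes;
\item the elementary bounds $\tanh y\geqslant y-y^{3}/3$ and $1-\tanh y\leqslant2e^{-2y}$ for $y\geqslant0$.
\end{itemize}
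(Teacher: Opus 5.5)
Your proof is correct and follows the paper's argument. Both reduce, via $1-\tanh u\leqslant 2e^{-2u}$, to showing $n(n-1)f_{n-1,\,n}(x_n)\geqslant \tfrac{A}{2}n^{2/3}$, and both control the gap between $f_{m,\,n}(x_n)$ and its linearization $f'_{m,\,n}(0)x_n$ with the same ingredients: concavity, the derivative-comparison lemma, $\alpha_{m,\,n}\leqslant n^2/r^2$, the cubic lower bound on $\tanh$, and $c_r\geqslant\alpha_{m,\,n}/r$. The only difference is bookkeeping: you sum additive decrements of the ratio $g_m$ of order $r^{-2}$, totalling $O(n^{-8/3})$, whereas the paper bounds $\alpha_{m,\,n}f_{m,\,n}(x_n)$ uniformly by $\delta_n=8An^{-4/3}$ and iterates the multiplicative factor $(1-\delta_n^2/3)^{n-1}$.
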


\begin{proof}
Fix $0\leqslant m\leqslant n-2$ and write $r=n-m$. Since $f_{m,\,n}$ is
concave for $x\geqslant0$ and vanishes at $0$,
\[
0\leqslant f_{m,\,n}(x_{n})\leqslant f'_{m,\,n}(0)x_{n}.
\]
By \cref{lem:binary-euler-derivative-comparison},
$f'_{m,\,n}(0)\leqslant8r f'_{n-1,\,n}(0)/n^{2}$. Using also
$\alpha_{m,\,n}\leqslant n^{2}/r^{2}$, we obtain
\[
\alpha_{m,\,n}f_{m,\,n}(x_{n})
\leqslant\frac{8}{r}\,f'_{n-1,\,n}(0)x_{n}.
\]
Since
\(
x_{n}=\frac{A n^{2/3}}{f_{n,\,n}'(0)}
=\frac{A n^{2/3}}{n(n-1)f'_{n-1,\,n}(0)},
\)
and $r\geqslant2$,
\[
\alpha_{m,\,n}f_{m,\,n}(x_{n})
\leqslant\frac{4A n^{2/3}}{n(n-1)}\leqslant8A n^{-4/3}\eqqcolon\delta_{n}.
\]
Thus $\delta_{n}\to0$, and for all large enough $n$ we have
$\delta_{n}\leqslant1$. On $[0,\,\delta_{n}]$,
\[
\tanh(u)\geqslant u-\frac{u^{3}}{3}\geqslant u\left(1-\frac{\delta_{n}^{2}}{3}\right).
\]
Applying this estimate in \eqref{eq:binary-independent-euler-recursion} at
$x_{n}$ yields
\begin{align*}
f_{m+1,\,n}(x_{n})
&\geqslant\left(1-\frac{1}{r}\right)f_{m,\,n}(x_{n})
+\frac{1}{r}\left(1-\frac{\delta_{n}^{2}}{3}\right)
\alpha_{m,\,n}f_{m,\,n}(x_{n})\\
&=\left(c_{r}-\frac{\delta_{n}^{2}}{3}\frac{\alpha_{m,\,n}}{r}\right)f_{m,\,n}(x_{n}).
\end{align*}
Since
\(
c_{r}-\frac{\delta_{n}^{2}}{3}\frac{\alpha_{m,\,n}}{r}
-\left(1-\frac{\delta_{n}^{2}}{3}\right)c_{r}
=\frac{\delta_{n}^{2}}{3}\left(1-\frac{1}{r}\right)\geqslant0,
\)
we obtain
\[
f_{m+1,\,n}(x_{n})
\geqslant\left(1-\frac{\delta_{n}^{2}}{3}\right)c_{r}f_{m,\,n}(x_{n}).
\]
Iterating from $m=0$ to $m=n-2$ gives
\[
f_{n-1,\,n}(x_{n})
\geqslant\left(1-\frac{\delta_{n}^{2}}{3}\right)^{n-1}
\left(\prod_{r=2}^{n}c_{r}\right)x_{n}
=\left(1-\frac{\delta_{n}^{2}}{3}\right)^{n-1}f'_{n-1,\,n}(0)x_{n}.
\]
Since $n\delta_{n}^{2}\to0$, the prefactor is at least $1/2$ for all
large enough $n$. Therefore
\[
n(n-1)f_{n-1,\,n}(x_{n})\geqslant\frac{A}{2}n^{2/3}.
\]
Finally,
$f_{n,\,n}(x)=\tanh\!\left(n(n-1)f_{n-1,\,n}(x)\right)$, and
$1-\tanh(u)\leqslant2e^{-2u}$ for $u\geqslant0$. Hence
\[
1-f_{n,\,n}(x_{n})
\leqslant2\exp\!\left(-2n(n-1)f_{n-1,\,n}(x_{n})\right)
\leqslant2e^{-A n^{2/3}}.
\]
\end{proof}

\begin{lem}
\label{lem:binary-error-slope-equivalence}
There exist constants $c,\,C>0$ such that, for all large enough $n$,
\[
\frac{c}{f_{n,\,n}'(0)}\leqslant\mathscr{E}_{n,\,1}
\leqslant\frac{Cn^{2/3}}{f_{n,\,n}'(0)}.
\]
In particular,
$\log\mathscr{E}_{n,\,1}=-\log f_{n,\,n}'(0)+O(\log n)$.
\end{lem}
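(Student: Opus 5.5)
We start from the representation of \cref{lem:binary-euler-structure}: $\mathscr{E}_{n,1}=2\,\mathbb{E}[(1-f_{n}(X))\mathbbm{1}_{\{X>0\}}]$, where $f_n=f_{n,n}$ is odd, increasing and concave on $[0,\infty)$, and takes values in $[0,1]$ there. The lower and upper bounds come from two complementary facts. Concavity gives the linear upper envelope $f_n(x)\leqslant f_n'(0)x$ near the origin. \Cref{lem:binary-transition-saturation} gives near-saturation once $x\geqslant x_n=An^{2/3}/f_n'(0)$. Since $f_n'(0)=\exp((2\pi/\sqrt3+o(1))n^{2/3})$ by \cref{lem:binary-euler-slope-asymptotic}, both $1/f_n'(0)$ and $x_n$ tend to zero.

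\emph{Lower bound.} On $[0,\,1/(2f_n'(0))]$, concavity with $f_n(0)=0$ gives $f_n(x)\leqslant f_n'(0)x\leqslant 1/2$, so $1-f_n(x)\geqslant 1/2$. Because $f_n'(0)\to\infty$, for large $n$ this interval lies inside $[0,1]$, where $\varphi\geqslant\varphi(1)$. Hence
\[
\mathscr{E}_{n,1}\geqslant 2\cdot\tfrac12\cdot\varphi(1)\cdot\frac{1}{2f_n'(0)},
\]
which is the lower bound with $c=\varphi(1)/2$.

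\emph{Upper bound.} Split the half-line at $x_n$.
\begin{itemize}[leftmargin=*]
\item On $[0,x_n]$, use $1-f_n\leqslant1$ and $\varphi\leqslant(2\pi)^{-1/2}$. The contribution is at most $2x_n(2\pi)^{-1/2}=C'An^{2/3}/f_n'(0)$.
\item On $(x_n,\infty)$, $f_n$ is increasing, so $1-f_n(x)\leqslant 1-f_n(x_n)\leqslant 2e^{-An^{2/3}}$ by \cref{lem:binary-transition-saturation}. The contribution is at most $2e^{-An^{2/3}}$.
\end{itemize}
Since $A>2\pi/\sqrt3$ was fixed, \cref{lem:binary-euler-slope-asymptotic} gives $e^{-An^{2/3}}=o(1/f_n'(0))$. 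So the second contribution is eventually at most $1/f_n'(0)\leqslant n^{2/3}/f_n'(0)$, and adding the two contributions gives the upper bound with some constant $C$.

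\emph{Conclusion.} Taking logarithms of the two-sided bound gives $\log\mathscr{E}_{n,1}=-\log f_n'(0)+O(\log n)$, since $\log c$, $\log C$ and $\tfrac23\log n$ are all $O(\log n)$.

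The one step needing care is the comparison $e^{-An^{2/3}}\ll 1/f_n'(0)$. It relies on having chosen $A$ strictly larger than the limit in \cref{lem:binary-euler-slope-asymptotic}, and it must be applied only for $n$ large enough that \cref{lem:binary-transition-saturation} holds. Everything else follows directly from concavity and monotonicity.
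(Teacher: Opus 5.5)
Your proposal is correct and follows the paper's proof essentially step for step. The lower bound comes from concavity on $[0,1/(2f_{n,n}'(0))]$ with $c=\varphi(1)/2$. The upper bound splits at $x_n$, invokes \cref{lem:binary-transition-saturation}, and absorbs $e^{-An^{2/3}}$ into $n^{2/3}/f_{n,n}'(0)$ using $A>2\pi/\sqrt{3}$ and \cref{lem:binary-euler-slope-asymptotic}; your tail bound $2e^{-An^{2/3}}$, obtained from $\mathbb{P}(X>x_n)\leqslant 1/2$, is marginally tighter than the paper's $4e^{-An^{2/3}}$ but changes nothing.
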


\begin{proof}
We use \cref{lem:binary-euler-structure}, which gives the formula for
$\mathscr{E}_{n,\,1}$ and the monotonicity and concavity of $f_{n,\,n}$
for $x\geqslant0$.

For the lower bound, concavity and $f_{n,\,n}(0)=0$ give
$f_{n,\,n}(x)\leqslant f_{n,\,n}'(0)x$ for $x\geqslant0$. Set
$a_{n}=1/(2f_{n,\,n}'(0))$. Then $1-f_{n,\,n}(x)\geqslant1/2$ on
$[0,\,a_{n}]$, and
\[
\mathscr{E}_{n,\,1}
=2\mathbb{E}_{X\sim\mu}\bigl[(1-f_{n,\,n}(X))\mathbbm{1}_{\{X>0\}}\bigr]
\geqslant\int_{0}^{a_{n}}\varphi(x)\,\mathrm{d}x.
\]
By \cref{lem:binary-euler-slope-asymptotic}, $f_{n,\,n}'(0)\to\infty$, so
$a_{n}\to0$. For large enough $n$, $a_{n}\leqslant1$, hence
\[
\mathscr{E}_{n,\,1}\geqslant\frac{\varphi(1)}{2f_{n,\,n}'(0)}.
\]

For the upper bound, $f_{n,\,n}$ is increasing and
$0\leqslant1-f_{n,\,n}(x)\leqslant1$ for $x\geqslant0$. Thus
\[
(1-f_{n,\,n}(X))\mathbbm{1}_{\{X>0\}}
\leqslant\mathbbm{1}_{\{0\leqslant X\leqslant x_{n}\}}
+(1-f_{n,\,n}(x_{n}))\mathbbm{1}_{\{X\geqslant x_{n}\}}.
\]
Taking expectations and using \cref{lem:binary-transition-saturation},
\[
\mathscr{E}_{n,\,1}
\leqslant2\mu([0,\,x_{n}])+4e^{-A n^{2/3}}
\leqslant2\varphi(0)\frac{A n^{2/3}}{f_{n,\,n}'(0)}+4e^{-A n^{2/3}}.
\]
Since $A>2\pi/\sqrt{3}$, \cref{lem:binary-euler-slope-asymptotic} implies
$e^{-A n^{2/3}}\leqslant n^{2/3}/f_{n,\,n}'(0)$ for all large enough $n$.
This proves the upper bound, and the logarithmic equivalence follows.
\end{proof}

\begin{proof}[Proof of the asymptotic for $\mathscr{E}_{n,\,1}$ in \cref{prop:binary-integration-asymptotics}]
By \cref{lem:binary-error-slope-equivalence},
$\log\mathscr{E}_{n,\,1}=-\log f_{n,\,n}'(0)+O(\log n)$. By
\cref{lem:binary-euler-slope-asymptotic},
$\log f_{n,\,n}'(0)\sim(2\pi/\sqrt{3})n^{2/3}$. Therefore
\[
\log\mathscr{E}_{n,\,1}\sim-\frac{2\pi}{\sqrt{3}}n^{2/3}.
\]
\end{proof}

\subsubsection{Numerical verification}

Numerically, $\mathscr{E}_{n,\,1}$ decreases stretched-exponentially in
$n$, whereas $\mathscr{E}_{1,\,k}$ decreases only polynomially in $k$.
Moreover, the asymptotic equivalent $2/\sqrt{\pi k}$ is accurate already for
$\mathscr{E}_{1,\,k}$, while
$\exp(-(2\pi/\sqrt{3})n^{2/3})$ captures the decay rate of
$\mathscr{E}_{n,\,1}$ up to a multiplicative constant
(\cref{fig:binary-asymptotics-numerics}).
\begin{figure}[t]
\centering
\includegraphics[width=1\linewidth]{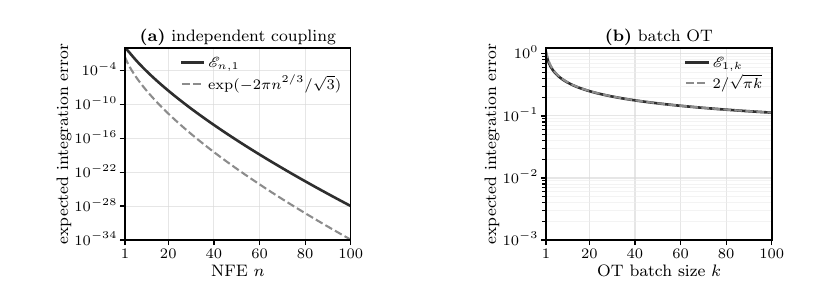}
\caption{Numerical verification of the two asymptotic regimes in
\cref{prop:binary-integration-asymptotics}.}
\label{fig:binary-asymptotics-numerics}
\end{figure}

\section{Experimental details}
\label{app:experimental-details}

This appendix gives the implementation details needed to reproduce the experiments.

\subsection{Small-scale experiments}

These experiments were run on CPU on an Apple M2 MacBook
Air with 8 CPU cores and 16\,GB RAM, using macOS~26.1, Python~3.9.6, and PyTorch~2.4.1.

\subsubsection[Flow map induced by the expected batch OT flow]{Flow map induced by $\phi^{\overline{\pi}_{k}}$ (\cref{fig:expected-batch-ot-flow-cells})}\label{app:velocity-compute}
For a given $x\in\mathbb{R}^{2}$, we approximate $\phi^{\overline{\pi}_{k}}(x)$
by integrating the flow ODE with $50$ uniform second-order Runge--Kutta steps.
When $k=1$, we use the closed-form expression for $\mu\otimes\nu$ in \cref{lem:gaussian-discrete-velocity-posterior}.
When $k>1$, we approximate $u_{t}^{\overline{\pi}_{k}}$
by Monte Carlo over $R=2{,}000$ common random batches.

\subsubsection[Posterior concentration vs. batch size and dimension]{Posterior concentration vs. $k$ and $d$ (\cref{fig:posterior-concentration})}
We consider $M=100$ target atoms in $\nu$. When varying the OT batch size $k$, we keep one atom cloud fixed in dimension
$d=20$. When varying the dimension, we draw a new atom cloud uniformly in $[-1,\,1]^{d}$ for each
value of $d$. Each curve
averages $2{,}000$ explicit Euler trajectories with $100$ function evaluations.
We approximate $u_{t}^{\overline{\pi}_{k}}$ as in \cref{app:velocity-compute}.
The largest standard error over sampled trajectories is below \(0.006\)
on the plotted \(0\)--\(1\) scale, so standard-error bands are not drawn.

\subsubsection[Two-atom case]{Two-atom case (\cref{fig:binary-error-contours})}
In the two-atom case, we do not estimate the expected batch OT velocity by
Monte Carlo. We use the closed-form expression for the minibatch OT assignment
probability $q_{k}$ from \cref{lem:binary-assignment-probability}, insert it into the closed-form
conditional mean in \cref{lem:binary-conditional-mean}, and evaluate
$\mathscr{E}_{n,\,k}$ using the one-dimensional integral representation in
\cref{lem:binary-euler-structure}. Thus the computation is closed form up to an integral
quadrature, which enables the precise curves reported in
\cref{fig:binary-error-contours}.

\subsection{Larger-scale experiments}

These experiments were run on several machines running AlmaLinux~9.7, each equipped with
an Intel Core i7-14700K CPU, 62\,GiB RAM, and a single NVIDIA RTX~4000 Ada Generation GPU with 20\,GB
VRAM, using Python~3.9.25 and PyTorch~2.5.0 with CUDA~12.4 and cuDNN~9.1.
Their wall-clock time is reported in~\cref{tab:experiment-wall-time}.

\begin{table}[ht]
\caption{Total wall-clock time for the experiments, including exploratory runs.}
\label{tab:experiment-wall-time}
\centering
\begin{tabular}{@{}llr@{}}
\toprule
\textbf{Task} & \textbf{Dataset} & \textbf{Wall-clock time} \\
\midrule
Training & CIFAR-10 & 4{,}884.3\,h \\
         & SVHN     & 2{,}151.0\,h \\
\addlinespace[2pt]
Inference \& FID & CIFAR-10 & 957.8\,h \\
                 & SVHN     & 248.9\,h \\
\addlinespace[2pt]
Estimation of $\mathscr{E}_{n,\,k}$ & Synthetic & $\approx$ 540\,h \\
\midrule
\multicolumn{2}{@{}r}{Total} & $\approx$ 8{,}780\,h \\
\bottomrule
\end{tabular}
\end{table}

\subsubsection[Rates on expected batch OT cost and plan]{Rates on expected batch OT cost and plan (\cref{fig:expected-batch-ot-cost-plan-rates})}
To avoid using stochastic semidiscrete solvers to compute $W_{2}^{2}(\mu,\,\nu)$, we follow the idea of
\citet[Section~4]{hundrieser}: we first discretize $\mu$ by a large empirical measure,
and then choose the target weights so that the nearest-neighbor projection is optimal for this
empirical reference problem. Specifically, fix $N_{\mathrm{ref}}=10^8$, draw
$X_{1},\ldots,X_{N_{\mathrm{ref}}}\sim\mu$ independently, and let
$i^{\star}(x)\in\argmin_{1\leqslant i\leqslant M}\lVert x-v_{i}\rVert^{2}$. We set
\[
\mu_{\mathrm{ref}}=\frac{1}{N_{\mathrm{ref}}}\sum_{\ell=1}^{N_{\mathrm{ref}}}\updelta_{X_{\ell}},
\qquad
\widehat{a}_{i}=\frac{1}{N_{\mathrm{ref}}}\sum_{\ell=1}^{N_{\mathrm{ref}}}\mathbbm{1}_{\{i^{\star}(X_{\ell})=i\}},
\qquad
\nu=\sum_{i=1}^{M}\widehat{a}_{i}\updelta_{v_{i}}.
\]
By construction, the map $T^{\star}:X\mapsto v_{i^{\star}(X)}$ pushes
$\mu_{\mathrm{ref}}$ forward to $\nu$. Since each $v_{i^{\star}(X_{\ell})}$
is a nearest neighbor of $X_{\ell}$, this map is optimal, and if $N_{\mathrm{ref}}$ is large enough,
\[
W_{2}^{2}(\mu,\,\nu)
\approx
W_{2}^{2}(\mu_{\mathrm{ref}},\,\nu)
=
\frac{1}{N_{\mathrm{ref}}}\sum_{\ell=1}^{N_{\mathrm{ref}}}\lVert X_{\ell}-v_{i^{\star}(X_{\ell})}\rVert^{2}.
\]
For each Monte Carlo repetition $r\in\{1,\ldots,2000\}$ and each
$k\in\{2, \lfloor2^{1.25}\rfloor, \lfloor2^{1.5}\rfloor, \ldots, 2^{15}\}$,
we draw batches $(\mathbf{x}_{k}^{(r)},\,\mathbf{y}_{k}^{(r)})\sim\mu^{\otimes k}\otimes\nu^{\otimes k}$. We then
compute the OT plan $\widehat{\pi}_{k}^{(r)}$ between $\widehat{\mu}_{k}$ and $\widehat{\nu}_{k}$, and store
$W_{2}^{2}(\widehat{\mu}_{k}^{(r)},\,\widehat{\nu}_{k}^{(r)})$ and $\mathbb{E}_{(X,\,Y)\sim\widehat{\pi}_{k}^{(r)}}[\lVert Y - T^{\star}(X) \rVert^{2}]$.

\subsubsection[Multi-atom synthetic case]{Multi-atom synthetic case (\cref{fig:nfe-vs-batch-size})}
Atoms are sampled once uniformly in $[-1,\,1]^{20}$ and then kept fixed.
For each pair $(n,\,k)$, we consider the target velocity field $u_{t}^{\overline{\pi}_{k}}$, starting
from $10^4$ independent samples $X\sim\mathcal{N}(0,\,I_{20})$. We
compare a coarse explicit Euler integration with $n$ function evaluations
to a second-order Runge--Kutta reference integration with $200$ function evaluations. We report
the empirical mean error
\[
\widehat{\mathscr{E}}_{n,\,k}
=\frac{1}{10^4}\sum_{r=1}^{10^4}
\lVert f_{n}^{\overline{\pi}_{k}}(X_{r})-\widetilde{f}_{200}^{\overline{\pi}_{k}}(X_{r})\rVert,
\]
where $\widetilde{f}_{200}^{\overline{\pi}_{k}}$ denotes the Runge--Kutta reference map.
We approximate $u_{t}^{\overline{\pi}_{k}}$ as in \cref{app:velocity-compute}.
The error bars in \cref{fig:nfe-vs-batch-size} are standard errors over the sampled initial
conditions.

\subsubsection[Expected batch OT cost on image datasets]{Expected batch OT cost on image datasets (\cref{fig:image-cost-fid-vs-batch-size})}
For each \(k\in\{1,\,2,\,4,\,8,\ldots,8192\}\), we estimate the expected batch OT cost by averaging over
\(B_k=\lceil\frac{2^{20}}{k}\rceil\)
independent OT problems of size \(k\). Thus, for every \(k\), the total number of sampled pairs is
kept close to \(2^{20}\). We report the empirical mean of the empirical OT costs over
these \(B_k\) batches, where each batch matches \(k\) i.i.d. Gaussian samples in \(\mathbb{R}^{3072}\)
to \(k\) samples drawn uniformly from the training set.
In the left panel of \cref{fig:image-cost-fid-vs-batch-size} standard errors over
these \(B_k\) independent OT batches are below plotting resolution.

\subsubsection[Image generation experiments]{Image generation experiments (\cref{fig:image-cost-fid-vs-batch-size})}  \label{sec:comp_tradeoff}

The flow model is the U-Net from the TorchCFM library \citep{tong} 
and the FID is computed using \texttt{clean-fid} \citep{cleanfid}. 
See \cref{tab:flow-model-details} for more details.

We measure the computational tradeoff between increasing OT batch
size $k$ during training and using more function evaluations at inference.
On CIFAR-10, we time the wall-clock time of one training
step as a function of the OT batch size $k$, and the wall-clock time
of one U-Net evaluation during sampling. For $k=1$, we use the independent
coupling, obtained by pairing a Gaussian noise batch with a CIFAR-10
data batch at random, without solving OT. For $k>1$, we construct
pairs by solving exact OT between empirical batches of size $k$,
using POT \citep{pot}. Each timed training step includes pair construction,
linear interpolation, U-Net forward pass, loss computation, backward
pass, gradient clipping, Adam update, scheduler step, and EMA update.

Up to $k=1024$, training time for CIFAR-10 is nearly
unchanged and increases by about $3\%$ relative to the independent coupling. At $k=8192$, training
becomes $55\%$ more expensive. One U-Net function evaluation
takes $0.861 \pm 0.0002$ ms per image. Therefore, on our hardware, for a $400{,}000$-step run,
if $N$ function evaluations are saved at inference, the extra training
cost is amortized after roughly $\frac{45.5}{N}$ million generated
images.

Larger OT batch sizes are associated with lower training loss and lower FID during training (\cref{fig:cifar-training-curves}).
We display a rolling mean with window $w=100$ for clarity,
which does not change the ordering of the curves.

\subsection{Software licenses}
The experiments were run using \texttt{POT}~(\cite{pot}, MIT license),
\texttt{torchcfm}~1.0.7 (\cite{tong}, MIT), \texttt{torchdiffeq}~0.2.5 (\cite{torchdiffeq}, MIT),
\texttt{torchdyn}~1.0.6 (\cite{torchdyn}, Apache-2.0), \texttt{torchvision}~0.20.0 (\cite{torchvision}, BSD-3-Clause),
\texttt{numpy}~2.0.2 (\cite{numpy}, BSD-style), \texttt{scipy}~1.13.1 (\cite{scipy}, BSD-style),
and \texttt{clean-fid}~0.1.35 (\cite{cleanfid}, MIT).

\begin{table}[p]
\centering
\caption{
CIFAR-10 wall-clock training cost as a function of OT batch size.
Costs assume $400{,}000$ training steps on one GPU. Values are mean $\pm$ standard error.
}
\small
\begin{tabular}{rrrr}
\toprule
$k$ & Step time (ms) & Training cost (GPU-h) & Extra cost vs. $k=1$ (GPU-h) \\
\midrule
1    & $180.79 \pm 0.03$ & $20.09 \pm 0.00$ & $0.00$ \\
2    & $186.33 \pm 0.04$ & $20.70 \pm 0.00$ & $0.62 \pm 0.01$ \\
4    & $184.07 \pm 0.01$ & $20.45 \pm 0.00$ & $0.36 \pm 0.00$ \\
8    & $182.83 \pm 0.15$ & $20.31 \pm 0.02$ & $0.23 \pm 0.02$ \\
16   & $182.37 \pm 0.14$ & $20.26 \pm 0.02$ & $0.18 \pm 0.02$ \\
32   & $182.24 \pm 0.03$ & $20.25 \pm 0.00$ & $0.16 \pm 0.00$ \\
64   & $181.94 \pm 0.03$ & $20.22 \pm 0.00$ & $0.13 \pm 0.00$ \\
128  & $181.81 \pm 0.01$ & $20.20 \pm 0.00$ & $0.11 \pm 0.00$ \\
256  & $182.09 \pm 0.02$ & $20.23 \pm 0.00$ & $0.14 \pm 0.00$ \\
512  & $183.06 \pm 0.02$ & $20.34 \pm 0.00$ & $0.25 \pm 0.00$ \\
1024 & $185.80 \pm 0.02$ & $20.64 \pm 0.00$ & $0.56 \pm 0.00$ \\
2048 & $195.89 \pm 0.18$ & $21.77 \pm 0.02$ & $1.68 \pm 0.02$ \\
4096 & $221.50 \pm 0.25$ & $24.61 \pm 0.03$ & $4.52 \pm 0.03$ \\
8192 & $278.81 \pm 0.17$ & $30.98 \pm 0.02$ & $10.89 \pm 0.02$ \\
\bottomrule
\end{tabular}
\label{tab:training-inference-wall-time}
\end{table}

\begin{table}[p]
\caption{Flow model training and FID evaluation protocol.}
\label{tab:flow-model-details}
\centering
\begin{tabularx}{0.9\linewidth}{@{}c l Y@{}}
\toprule
& \textbf{Setting} & \textbf{Value} \\
\midrule
\multirow{16}{*}{\rotatebox[origin=c]{90}{\textsc{Training}}}
& Architecture & U-Net from \texttt{torchcfm} \citep{tong} \\
& Input size & $3\times 32\times 32$ \\
& Base channels & 128 \\
& Residual blocks per level & 2 \\
& Channel multipliers & $[1,2,2,2]$ \\
& Attention heads & 4 \\
& Head channels & 64 \\
& Attention resolution & 16 \\
& Dropout & 0.1 \\
& OT batch size & $k \in \{1,\,2,\,4,\,8,\ldots,8192\}$ \\
& Optimization minibatch size & 64 \\
& Optimizer & Adam, learning rate $2\times 10^{-4}$ \\
& Learning-rate schedule & Linear warmup for 5{,}000 steps, then constant \\
& Gradient clipping & Global gradient norm clipped to 1.0 \\
& Exponential moving average & Decay 0.9999 \\
& Other fixed hyperparameters & $\sigma = 0$, total training steps $= 400{,}000$ \\
\midrule
\multirow{6}{*}{\rotatebox[origin=c]{90}{\textsc{FID eval.}}}
& Implementation & \texttt{clean-fid} \citep{cleanfid} \\
& Sampler & Forward Euler \\
& Discretization & NFE for CIFAR-10: \(\{10,20,50,100\}\); for SVHN: \(\{5,10,50,100\}\) \\
& Number of generated samples & 50{,}000 \\
& FID batch size & 1024 \\
& Image conversion & Clamp to $[-1,1]$ and convert to $[0,255]$ \\
\bottomrule
\end{tabularx}
\end{table}

\begin{table}[p]
\caption{CIFAR-10 FID at the final checkpoint. Values are mean $\pm$ empirical standard deviation over six training seeds.}
\centering
\small
\setlength{\tabcolsep}{4pt}
\begin{tabular}{@{}rcccc@{}}
\toprule
$k$ & NFE 10 & NFE 20 & NFE 50 & NFE 100 \\
\midrule
1    & $14.69 \pm 0.27$ & $9.05 \pm 0.14$ & $6.28 \pm 0.10$ & $5.34 \pm 0.10$ \\
2    & $14.17 \pm 0.24$ & $9.06 \pm 0.12$ & $6.40 \pm 0.09$ & $5.46 \pm 0.08$ \\
4    & $13.89 \pm 0.24$ & $9.14 \pm 0.19$ & $6.50 \pm 0.15$ & $5.56 \pm 0.13$ \\
8    & $13.57 \pm 0.11$ & $9.14 \pm 0.08$ & $6.52 \pm 0.07$ & $5.59 \pm 0.07$ \\
16   & $13.39 \pm 0.12$ & $9.14 \pm 0.13$ & $6.57 \pm 0.11$ & $5.63 \pm 0.10$ \\
32   & $13.42 \pm 0.22$ & $9.24 \pm 0.14$ & $6.64 \pm 0.07$ & $5.68 \pm 0.05$ \\
64   & $13.14 \pm 0.14$ & $9.11 \pm 0.11$ & $6.53 \pm 0.10$ & $5.58 \pm 0.09$ \\
128  & $13.18 \pm 0.16$ & $9.20 \pm 0.14$ & $6.59 \pm 0.10$ & $5.63 \pm 0.08$ \\
256  & $12.92 \pm 0.16$ & $9.09 \pm 0.17$ & $6.53 \pm 0.16$ & $5.57 \pm 0.14$ \\
512  & $12.76 \pm 0.23$ & $9.03 \pm 0.19$ & $6.51 \pm 0.12$ & $5.54 \pm 0.09$ \\
1024 & $12.67 \pm 0.20$ & $9.04 \pm 0.21$ & $6.50 \pm 0.17$ & $5.50 \pm 0.14$ \\
2048 & $12.41 \pm 0.15$ & $8.95 \pm 0.12$ & $6.43 \pm 0.10$ & $5.46 \pm 0.07$ \\
4096 & $12.31 \pm 0.16$ & $8.86 \pm 0.13$ & $6.39 \pm 0.12$ & $5.42 \pm 0.11$ \\
8192 & $12.06 \pm 0.12$ & $8.66 \pm 0.11$ & $6.24 \pm 0.09$ & $5.28 \pm 0.08$ \\
\bottomrule
\end{tabular}
\end{table}

\begin{table}[p]
\caption{SVHN FID at the final checkpoint. Values are mean $\pm$ empirical standard deviation over six training seeds.}
\centering
\small
\setlength{\tabcolsep}{4pt}
\begin{tabular}{@{}rcccc@{}}
\toprule
$k$ & NFE 5 & NFE 10 & NFE 50 & NFE 100 \\
\midrule
1    & $60.99 \pm 0.89$ & $31.34 \pm 0.54$ & $5.67 \pm 0.14$ & $3.42 \pm 0.12$ \\
2    & $59.89 \pm 0.44$ & $31.49 \pm 0.31$ & $5.76 \pm 0.05$ & $3.43 \pm 0.03$ \\
4    & $57.18 \pm 0.73$ & $30.92 \pm 0.57$ & $5.69 \pm 0.21$ & $3.40 \pm 0.12$ \\
8    & $55.59 \pm 0.59$ & $30.34 \pm 0.33$ & $5.65 \pm 0.12$ & $3.40 \pm 0.09$ \\
16   & $54.08 \pm 0.44$ & $29.99 \pm 0.33$ & $5.73 \pm 0.13$ & $3.47 \pm 0.06$ \\
32   & $53.32 \pm 0.89$ & $29.91 \pm 0.78$ & $5.77 \pm 0.28$ & $3.48 \pm 0.17$ \\
64   & $52.15 \pm 0.74$ & $29.61 \pm 0.50$ & $5.80 \pm 0.17$ & $3.50 \pm 0.08$ \\
128  & $51.83 \pm 0.92$ & $29.98 \pm 0.78$ & $6.02 \pm 0.21$ & $3.68 \pm 0.12$ \\
256  & $50.93 \pm 0.59$ & $29.47 \pm 0.43$ & $5.99 \pm 0.18$ & $3.69 \pm 0.13$ \\
512  & $49.83 \pm 0.77$ & $29.16 \pm 0.66$ & $6.15 \pm 0.24$ & $3.82 \pm 0.16$ \\
1024 & $49.82 \pm 0.40$ & $29.64 \pm 0.35$ & $6.52 \pm 0.14$ & $4.09 \pm 0.10$ \\
2048 & $49.54 \pm 0.64$ & $29.84 \pm 0.69$ & $6.80 \pm 0.23$ & $4.39 \pm 0.14$ \\
4096 & $49.81 \pm 0.46$ & $30.05 \pm 0.55$ & $7.13 \pm 0.23$ & $4.68 \pm 0.18$ \\
8192 & $50.10 \pm 0.66$ & $30.19 \pm 0.60$ & $7.06 \pm 0.33$ & $4.59 \pm 0.21$ \\
\bottomrule
\end{tabular}
\end{table}

\begin{figure}[p]
\centering
\includegraphics[width=1\linewidth]{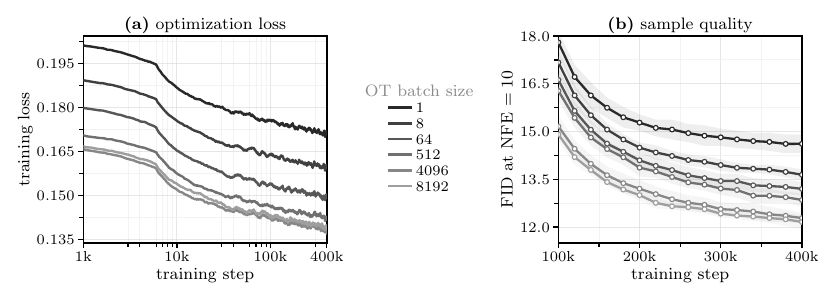}
\caption{Training curves on CIFAR-10 for a representative subset of OT batch
sizes; training loss and FID at fixed $\mathrm{NFE}=10$ versus optimization
step. Larger values of $k$ achieve lower loss and lower FID during training.
Loss curves show rolling means over available runs; FID bands show $\pm1$
empirical standard deviation over seeds.}
\label{fig:cifar-training-curves}
\end{figure}

\clearpage

\begin{figure}[p]
\centering
\includegraphics[width=\textwidth,height=0.92\textheight,keepaspectratio]{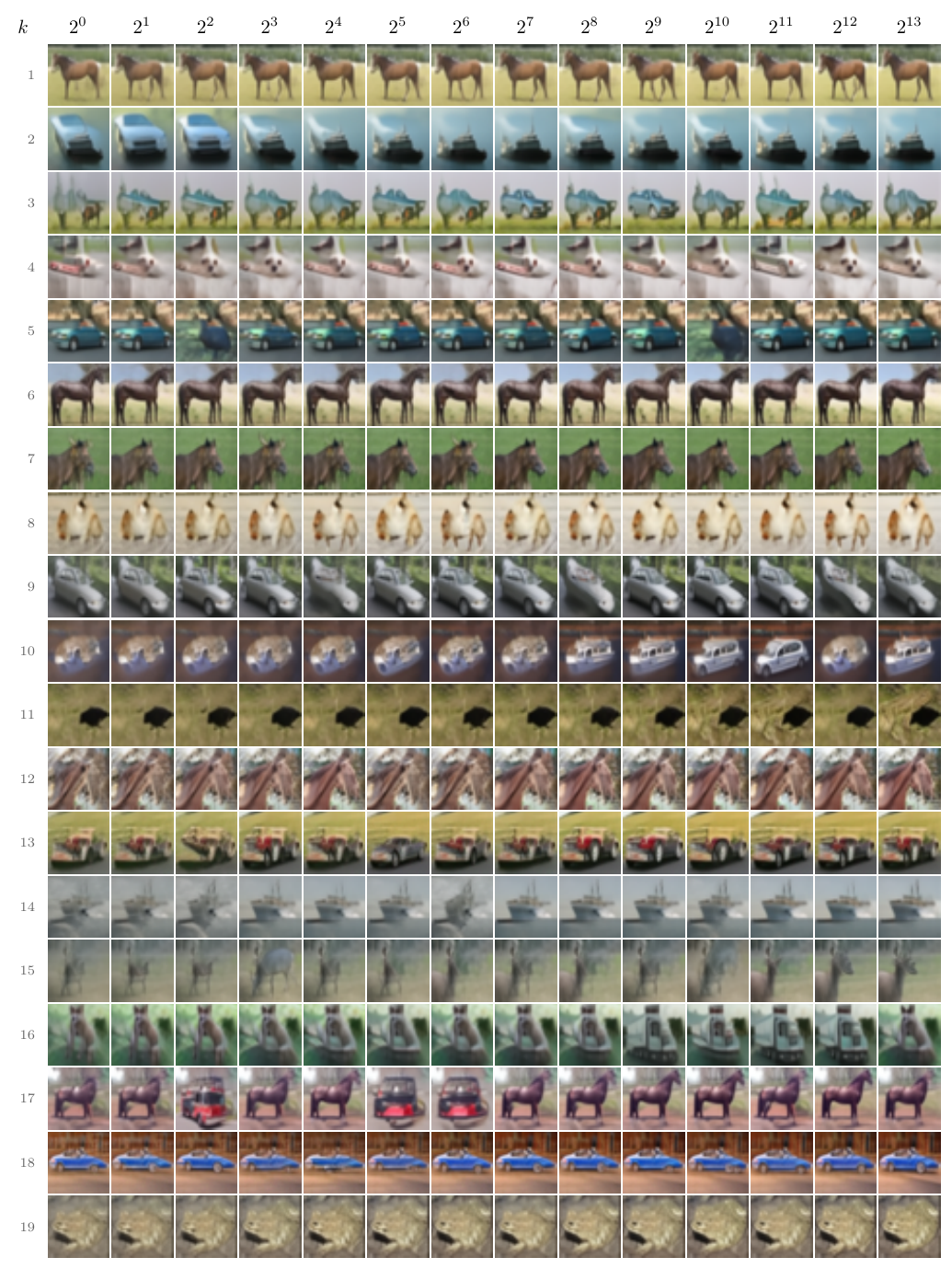}
\caption{Generated CIFAR-10 samples for $k\in\{1,2,4,\ldots,8192\}$ at $\mathrm{NFE}=10$.}\label{fig:cifar-nfe10}
\end{figure}

\clearpage
\begin{figure}[p]
\centering
\includegraphics[width=\textwidth,height=0.92\textheight,keepaspectratio]{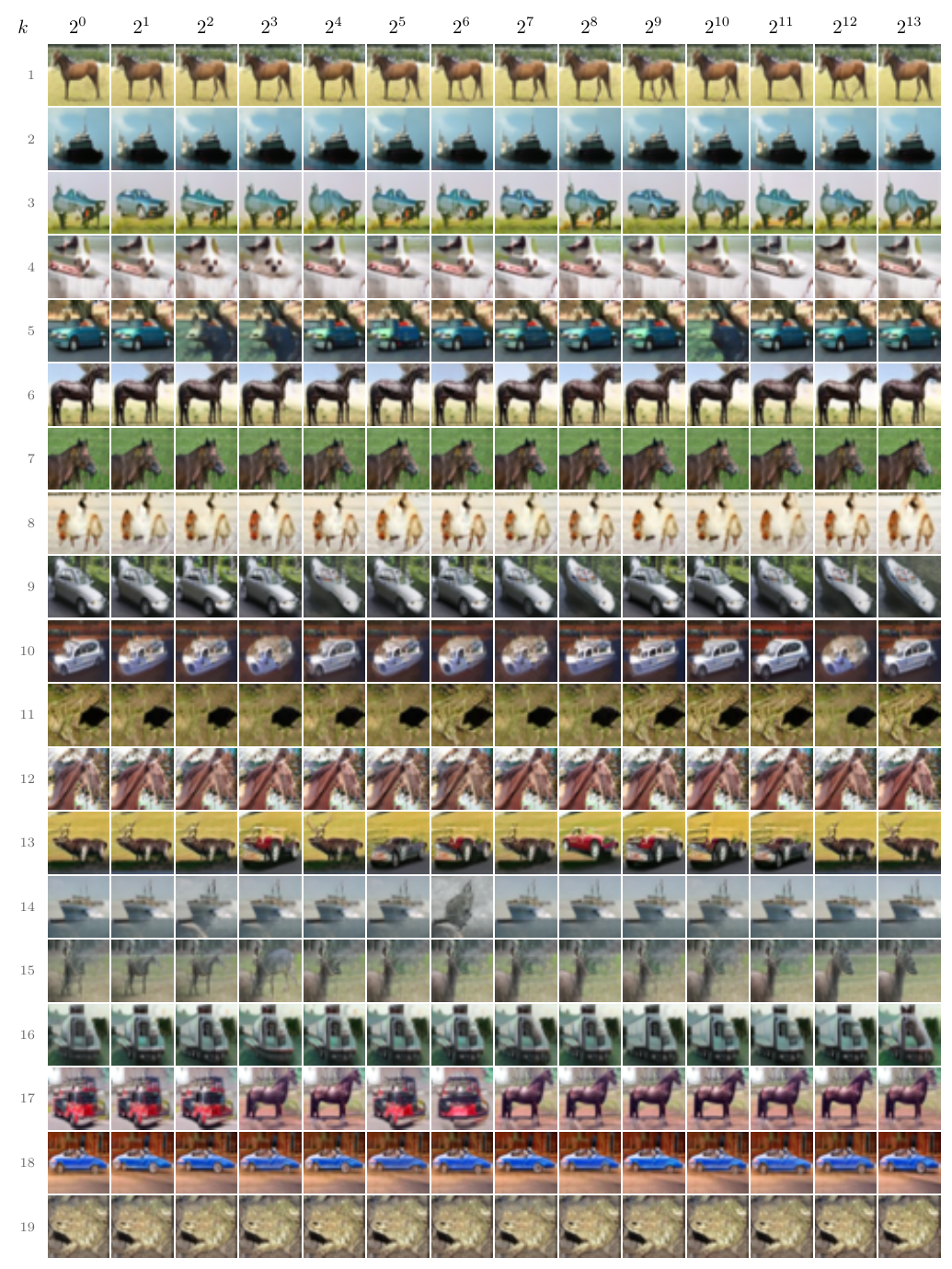}
\caption{Generated CIFAR-10 samples for $k\in\{1,2,4,\ldots,8192\}$ at $\mathrm{NFE}=100$.}\label{fig:cifar-nfe100}
\end{figure}

\clearpage
\begin{figure}[p]
\centering
\includegraphics[width=\textwidth,height=0.92\textheight,keepaspectratio]{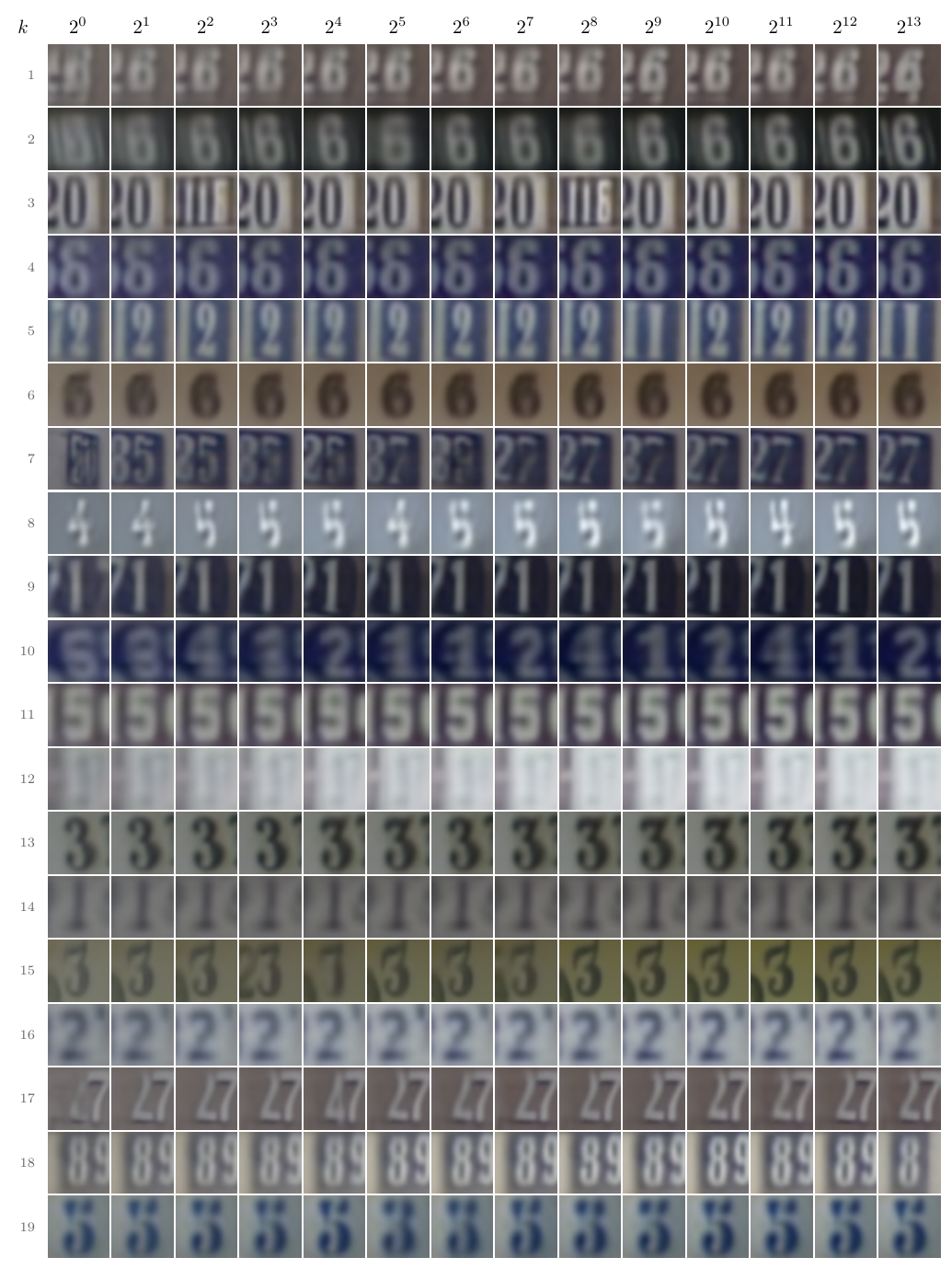}
\caption{Generated SVHN samples for $k\in\{1,2,4,\ldots,8192\}$ at $\mathrm{NFE}=5$.}\label{fig:svhn-nfe5}
\end{figure}

\clearpage
\begin{figure}[p]
\centering
\includegraphics[width=\textwidth,height=0.92\textheight,keepaspectratio]{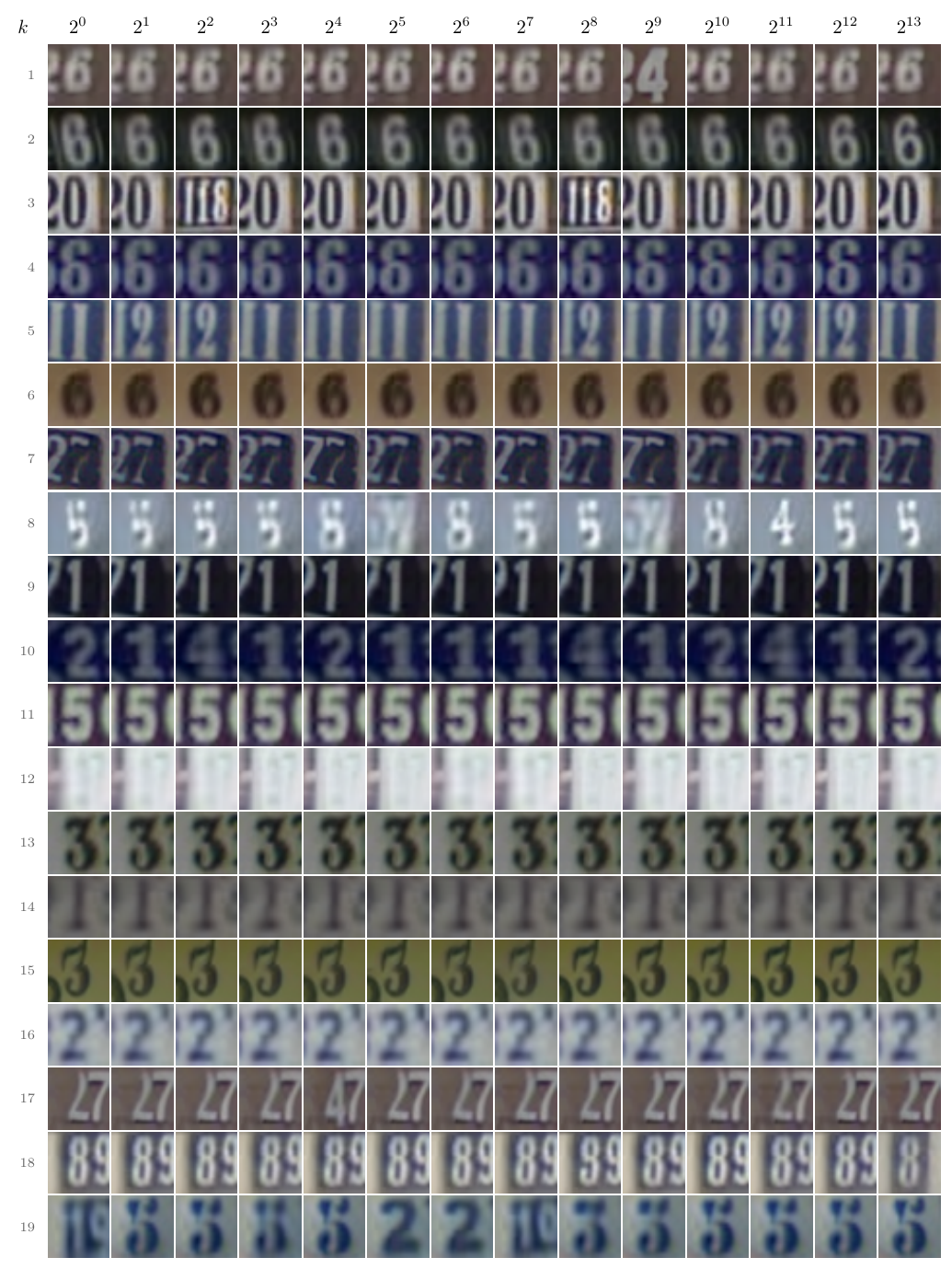}
\caption{Generated SVHN samples for $k\in\{1,2,4,\ldots,8192\}$ at $\mathrm{NFE}=100$.}\label{fig:svhn-nfe100}
\end{figure}

\end{document}